%% file: main.tex
\documentclass{article}

\usepackage{microtype}
\usepackage{graphicx}
\usepackage{subcaption}
\usepackage[linesnumbered,ruled,vlined]{algorithm2e}
\usepackage{booktabs}
\usepackage{enumitem}
\usepackage{multirow}

\usepackage{longtable}
\usepackage{makecell}
\usepackage[table]{xcolor}
\usepackage{pdflscape}

\usepackage{iclr2027_conference,times}
\usepackage{hyperref}

\usepackage{amsmath}
\usepackage{amssymb}
\usepackage{mathtools}
\usepackage{amsthm}
\usepackage{wrapfig}
\usepackage{etoc}
\usepackage{booktabs}
\usepackage{longtable}
\usepackage{multirow}
\usepackage{makecell}
\usepackage{pdflscape}

\usepackage{needspace}
\usepackage{setspace}
\usepackage{bbm}
\usepackage{titletoc}
\usepackage{float}
\usepackage[capitalize,noabbrev]{cleveref}

\theoremstyle{plain}
\newtheorem{theorem}{Theorem}[section]

\newtheorem{lemma}[theorem]{Lemma}
\newtheorem{corollary}[theorem]{Corollary}
\theoremstyle{definition}
\newtheorem{definition}[theorem]{Definition}
\newtheorem{assumption}[theorem]{Assumption}
\theoremstyle{remark}

\usepackage[textsize=tiny]{todonotes}
\usepackage{graphics}

\allowdisplaybreaks

\title{Distributed Learning with Selective State Space Models: Architecture-Aware Convergence Analysis}

\author{%
  Adam Piaseczny \\
  Purdue University \\
  \texttt{apiasecz@purdue.edu} \\
  \And
  Md Kamran Chowdhury Shisher \\
  University of Colorado Colorado Springs \\
  \texttt{mshisher@uccs.edu} \\
  \AND
  Shiqiang Wang \\
  University of Exeter \\
  \texttt{shiqiang.wang@ieee.org} \\
  \And
  Christopher G. Brinton \\
  Purdue University \\
  \texttt{cgb@purdue.edu} \\
}
\iclrfinalcopy
\begin{document}
\maketitle
\lhead{Preprint. Under review.}

\begin{abstract}
Modern state space models (SSMs), such as Mamba2, provide a compelling alternative to transformers by combining linear-time sequence modeling with recurrent state-space dynamics. However, the behavior of SSMs in distributed learning settings remains poorly understood. In particular, the existing standard federated learning methods are largely architecture-agnostic, and do not account for the stability, selectivity, and state-space parameterization that characterize modern selective SSMs. To address this, we derive architecture-aware gradient and smoothness bounds for single- and multi-layer selective SSMs, and convergence bounds for FedAvg and FedProx, characterizing how recurrent stability, input-dependent discretization, and state projection norms affect federated optimization. We then numerically validate the single-layer bounds on sequences generated by a teacher SSM, using a learner that follows the analyzed recurrence. We use this analysis to formulate expectations about the effects of local training and client heterogeneity, and examine these expectations by comparing nine federated learning algorithms on Mamba2 language modeling across six text domains. These experiments illustrate how SSM-specific bounds can provide a basis for interpreting the behavior of practical federated learning algorithms.
\end{abstract}

\input{section_files/introduction}
\input{section_files/related_work}
\input{section_files/system_model}

\input{section_files/theoretical_analysis}

\input{section_files/experiments}

\section{Conclusion}
In this work, we studied distributed learning with selective SSMs through architecture-aware theory, deriving bounds that connect federated optimization to recurrent stability, input-dependent discretization, projection norms, client drift, and heterogeneity. We then validated the derived bounds and provided experiments that demonstrated their relevance and practical applications.  

\subsection*{AI use statement}
In this work, we used generative AI tools for: assisting in the writing of proofs, formulating mathematical claims (specifically, refining author-derived mathematical statements in response to feedback on proof correctness), proposing or refining hypotheses, designing or providing feedback on research methodology or experiments, implementing methods, supporting qualitative and thematic data analysis and interpreting results. 

We have not used generative AI tools for: helping develop theoretical models or conceptual frameworks, generating synthetic data sets, providing critical ingredients for proving mathematical claims, assisting with translation, or cleaning and reformatting a dataset. Additionally, we used generative AI tools for: creating or modifying scientific figures or images (limited to suggestions for figure titles and labels), editing a research paper to improve readability, and suggesting a structure for a research paper.

Our use of generative AI was three-fold. First, generative AI tools were used to provide feedback on theoretical arguments, hypotheses, methodological choices, and interpretations of experimental results. For all proofs, theorems, corollaries, and other mathematical statements in this work, the underlying claims and arguments were first derived and formalized. The resulting statements and proofs were subsequently provided to an AI agent for feedback on their correctness, logical consistency, and the validity of the stated claims. The agent's feedback was independently evaluated, and any resulting revisions were made manually. Similarly, generative AI tools were used to check the consistency of our experimental analyses and interpretations with the reported results. All final analyses, interpretations, and claims were determined by the authors.

Second, AI agents assisted with the implementation of the code used in our experiments. While some experiments use synthetic data generated by deterministic code, AI agents were only utilized to assist with writing the code, not generate the synthetic data. All AI-assisted code was manually inspected for correctness and subsequently evaluated using validation tests before being used in the reported experiments.

Third, we used large language models to improve the presentation of the manuscript, including wording, readability, organization, section flow, and textual elements such as figure titles. The scientific figures and images themselves were not generated or modified using generative AI (though their titles and labels were adjusted with suggestions from AI models). All suggested edits were reviewed by the authors and incorporated manually where appropriate.

We have reviewed all AI-assisted work and take responsibility for the final content of this work, including text, claims, code, and other artifacts produced with the aid of generative AI.

\subsection*{Reproducibility Statement}
We provide complete proofs for the theoretical results presented in this work. For the empirical results, Appendix~\ref{appendix:detailed_exp_setup} provides detailed descriptions of the experimental setups, including the datasets and algorithms used, hyperparameter tuning procedures, and hardware used to conduct the experiments. We additionally provide our codebase containing implementations of the methods used to obtain the reported results, together with instructions for running the experiments as supplementary material.

\bibliographystyle{unsrtnat}
\bibliography{references}

\newpage
\appendix
\onecolumn
\clearpage

\input{appendix}

\clearpage

\end{document}

%% file: section_files/introduction.tex
\section{Introduction}\label{sec:introduction}
The modern generative AI landscape is largely dominated by transformer architecture \citep{NIPS2017_3f5ee243}, with the majority of the state-of-the-art large language models (LLMs) employing them
\citep{agarwal2025gpt, team2026gemma, qiu2026design, team2026kimi}. Despite this widespread success, transformers face critical limitations in certain scenarios, often driven by the core efficiency bottleneck of their self-attention mechanism, scaling quadratically as $\mathcal O(L^2)$ with sequence length $L$ in both time and memory complexity \citep{tay2022efficient}. To address this inefficiency, a variety of alternative mechanisms and architectures have been proposed, including sparse attention \citep{child2019generating, zaheer2020big}, linear attention approximations \citep{pmlr-v119-katharopoulos20a, choromanski2021rethinking}, and modernized recurrent frameworks \citep{peng-etal-2023-rwkv}. Among these innovations, state space models (SSMs) \citep{gu2022efficiently} have emerged as one of the most promising new paradigms. These modern SSMs, such as Mamba \citep{gu2024mamba}, overcome the quadratic bottleneck by operating with a strictly linear-time complexity, $\mathcal O(L)$, during training, and a constant-sized latent state, $\mathcal O(1)$.

Driven by these computational advantages, the SSM architecture has undergone rapid development, achieving competitive performance at a reduced computational cost compared to transformers. Architectural innovations, such as the Structured State Space (S4) sequence model, HiPPO initialization \citep{gu2022efficiently}, and selective mechanisms \citep{gu2024mamba} have enabled SSMs to dynamically filter information and capture and retain long-range dependencies. Most recently, SSMs have demonstrated capacity for achieving state-of-the-art language modeling performance at scale \citep{lahoti2026mamba, zuo2024falcon, blakeman2025nemotron, team2024jamba}.

\textbf{Research Gap.} 
Despite these advancements, the deployment of SSMs in \textit{distributed} learning environments, where training is conducted across a set of agents, remains largely unexplored. While some studies have considered the intersection of SSMs with federated learning (FL) \citep{mcmahan2017communication}, most of these frameworks are \textit{architecture-agnostic}. Conversely, existing SSM-specific distributed learning studies either focus on classical, non-selective SSM settings \citep{11099331}, or use SSMs for specialized personalized-FL applications \citep{11195747}, leaving the convergence behavior of modern selective SSMs largely uncharacterized. Developing a foundational understanding of selective SSM convergence behavior in distributed settings can enable architecture-aware parameter tuning, protocol selection, and other control tasks for training performance optimization, uncovering another advantage of SSMs over transformer LLM approaches that rely on architecture-agnostic optimization tools~\citep{fang2026federated,soro2026ls}.
We thus pose the following question:

\begin{itemize}[topsep=0pt,itemsep=4pt,parsep=0pt, partopsep=0pt, leftmargin=*]
\item[] \textbf{\textit{How can we leverage the structural properties of modern selective SSMs to derive architecture-aware convergence bounds that rigorously characterize training behavior in terms of core state space parameters for distributed and federated learning techniques?}}
\end{itemize}

\textbf{Contributions.} 
In answering this question, our main contributions are summarized as follows:

\begin{itemize}[topsep=0pt,itemsep=4pt,parsep=0pt, partopsep=0pt, leftmargin=*]
    \item \textbf{Convergence analysis for distributed selective SSMs:} We provide an architecture-aware gradient norm bound for selective SSMs, revealing the impact of SSM parameters like core state-space parameter blocks. Then, we use this bound to establish the convergence rate of popular FL algorithms training selective SSMs, showing dependence on standard distributed learning factors like smoothness as well as architecture-specific parameters like block-wise spectral-norm envelopes.
    To the best of our knowledge, our work is the first to provide an architecture-aware convergence rate of FL algorithms for selective SSMs.
    \item \textbf{SSM theory validation and theory-motivated algorithm comparison}: We validate our theoretical results using teacher-model derived sequences and single-layer selective SSMs, observing no gradient-bound violations, increased client disagreement as $\rho$ approaches one, and increased drift with more local steps. We use the theoretical takeaways to rank federated algorithms on heterogeneous-domain datasets. Our results show that these rankings highly correlate with validation performance, supporting the practical relevance of the SSM-specific quantities identified by our analysis.

\end{itemize}

%% file: section_files/related_work.tex
\section{Related Work}\label{sec:related_work}

\textbf{State Space Models (SSMs).} SSMs originate from classical control theory concepts, where they model dynamic systems through the evolution of latent states over time~\citep{kalman1960new}. The modern SSMs were introduced with the pivotal Structured State Space (S4)~\citep{gu2022efficiently} and its variants~\citep{NEURIPS2022_e9a32fad, hasani2023liquid, NEURIPS2020_102f0bb6}, and leverage structured parameterizations to efficiently capture long-range dependencies. Selective SSMs introduced input-dependent state-space parameters to dynamically filter information~\citep{gu2024mamba} while the Structured State Space Duality (SSD) established connections with linear attention and enabled efficient implementations~\citep{dao2024transformers}, enabling better expressivity, training stability, and hardware-aware optimizations~\citep{NEURIPS2024_c7f795dc, kim2026ssdi, lahoti2026mamba}. These advances have driven growing adoption across language modeling~\citep{DBLP:journals/corr/abs-2406-07887, blakeman2025nemotron, he-etal-2025-densessm, glorioso2024zamba, ren2025samba}, computer vision~\citep{li2024mamba, zhu2024vision, xie2025pvmamba}, and other sequence modeling applications~\citep{ma2025hybridna}.

\textbf{Distributed Learning with SSMs.} Distributed learning enables model training across computational nodes, with federated learning (FL) addressing settings where data remain distributed among clients. Since the introduction of a foundational federated learning algorithm -  FedAvg~\citep{mcmahan2017communication} - numerous methods have been proposed to address statistical heterogeneity, communication costs, and client drift \citep{MLSYS2020_1f5fe839, pmlr-v119-karimireddy20a, wang2020tackling, reddi2021adaptive, acar2021federated, li2021model}. However, despite the growing adoption of SSMs, theoretical understanding of their behavior in these settings remains limited. The existing SSM-related approaches address specific challenges: FedAlign \citep{11099331} tackles the problem of models converging to different parameter basins in federated system identification, while FedSSM \citep{11195747} uses SSMs as a meta-learning tool within personalized FL. Related works also explore merging operations on hidden states, contexts, or representations \citep{liu2025picaso, pioro2024state}, rather than weight-space aggregation of independently trained selective SSMs. As a result, there remains an open question of how selective SSM dynamics affect optimization and aggregation guarantees in distributed learning settings. This motivates our study of connecting the architectural properties of selective SSMs to their behavior in distributed training.

%% file: section_files/system_model.tex
\section{System Model}\label{sec:system_model}
\begin{figure}[t]
    \centering
    \begin{subfigure}[b]{0.46\textwidth}
        \centering
        \includegraphics[width=\textwidth]{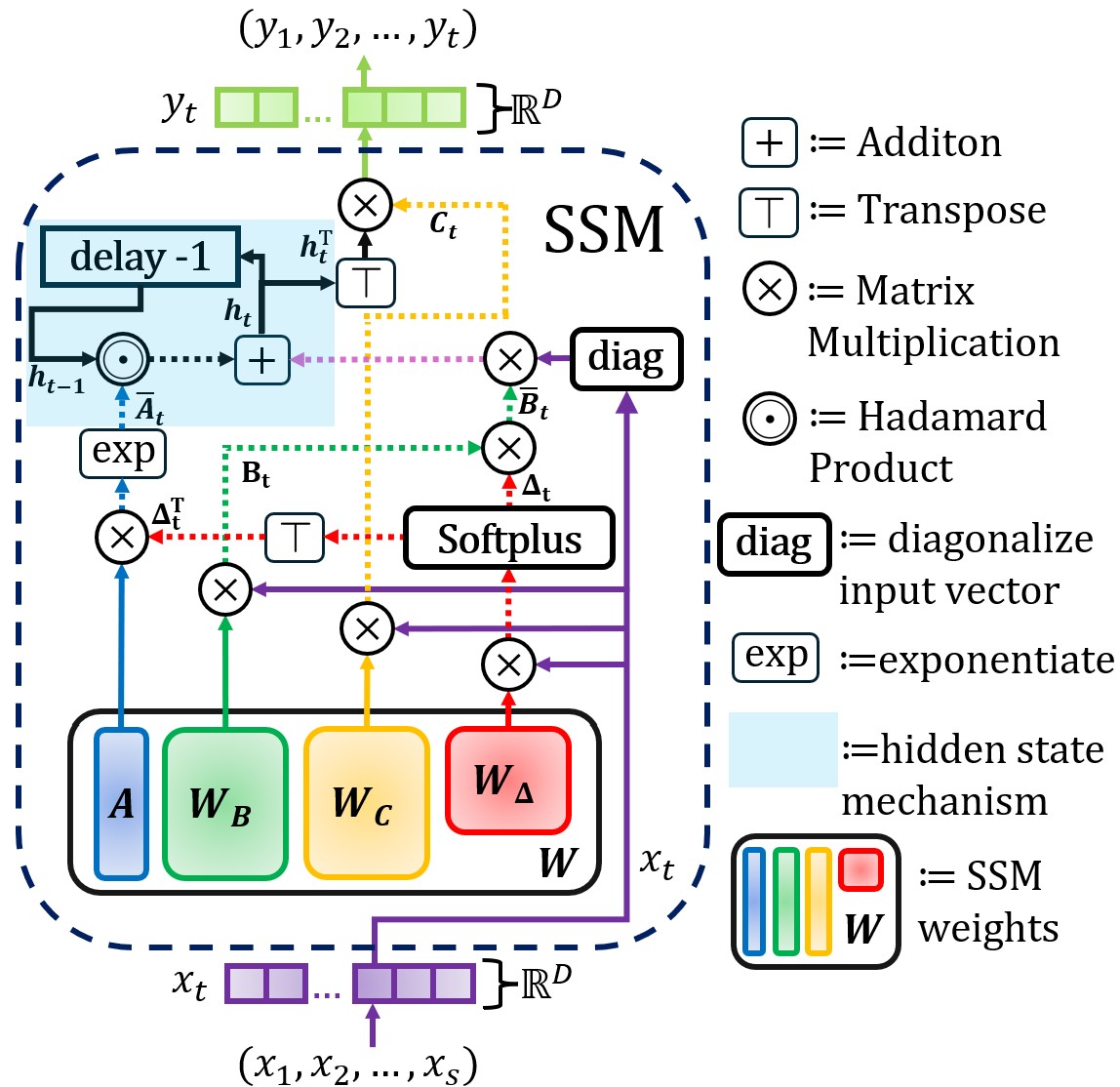}
        \caption{Selective SSM layer}
        \label{fig:ssm_layer_illustration}
    \end{subfigure}
    \hfill
    \begin{subfigure}[b]{0.53\textwidth}
        \centering
        \includegraphics[width=\textwidth]{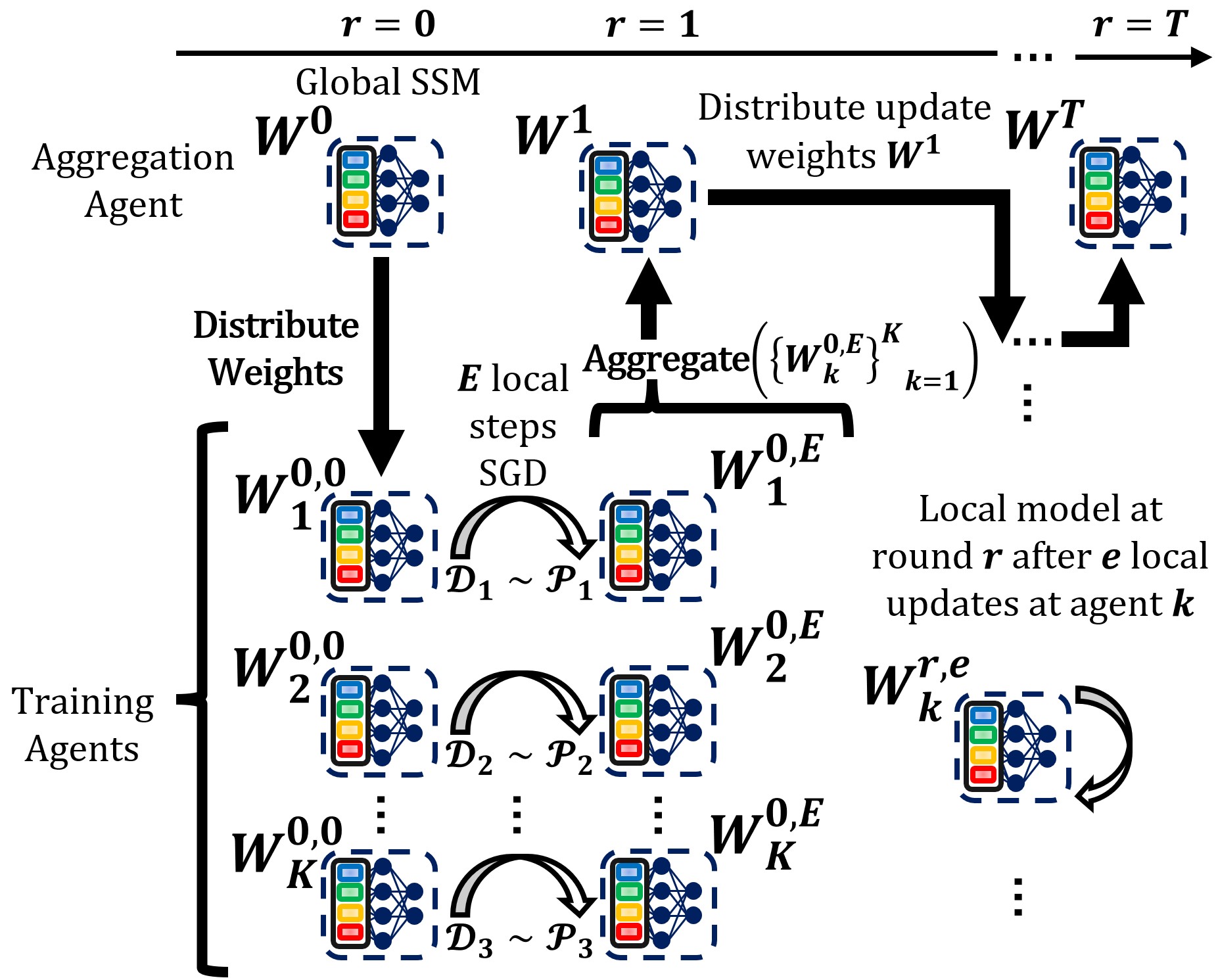}
        \caption{Distributed SSM training}
        \label{fig:distributed_learning_illustrated}
    \end{subfigure}
    \caption{(a) A selective SSM layer generates input-dependent discretization parameters from $x_t$ and uses them to update the hidden state $h_t$ and output $y_t$. (b) Federated SSM training proceeds over communication rounds: the server broadcasts $W^r$, clients perform $E$ local updates on local datasets $\mathcal D_k$, and the resulting models are aggregated into $W^{r+1}$. }
    \label{fig:overall_system_model}
\end{figure}

\textbf{State Space Model.} We consider a decentralized learning environment operating over a global selective state space model (SSM) parameterized by $W = \left(W_B, W_C, W_\Delta, A\right)$ where $W_\Delta \in \mathbb R^{D\times D}$, $W_B \in \mathbb R^{N\times D}$, $W_C \in \mathbb R^{N\times D}$, and $A\in \mathbb R^{N \times 1}$, with $A$ representing diagonal entries of a structured matrix $\mathcal A = \text{diag}(A)$. The SSM architecture is inspired by classical control theory concepts, in particular modeling dynamic systems through the evolution of latent states over time. These systems map a continuous input signal $x(t) \in \mathbb R$ to an output sequence $y(t) \in \mathbb R$ through an evolving, compressed latent state $h(t) \in \mathbb R^{N}$ using linear time-invariant (LTI) differential equations:

\begin{subequations}
\label{eq:system}
\noindent
\begin{minipage}{0.5\textwidth}
  \begin{equation}
     \textstyle h'(t) = \mathcal{A} h(t) + Bx(t) \label{eq:state}
  \end{equation}
\end{minipage}%
\begin{minipage}{0.5\textwidth}
  \begin{equation}
     \textstyle y(t) = C h(t) \label{eq:output}
  \end{equation}
\end{minipage}
\end{subequations}

However, since neural networks process discrete sequences, these continuous-time SSM dynamics are mapped to a discrete recurrence using a discretization rule. Under this discrete formulation, the model maps an input sequence $x = (x_1, \dots, x_S)$ where $x_t \in \mathbb R^{D \times 1}$ to a sequence of outputs $y_t \in \mathbb R^{D \times 1}$ through a sequence of compressed latent states $h_t \in \mathbb R^{N\times D}$. While earlier foundational models utilized various discretization rules \citep{gu2022efficiently}, we adopt the first-order exponential-Euler discretization used in the more recent architectures \citep{lahoti2026mamba}. This discretization scheme retains the exponential state transition while applying a first-order approximation to the input contribution. We analyze the discrete model defined by this recurrence.

In addition to this discrete foundation, modern SSMs introduce a selective mechanism that enables dynamic filtering of relevant information across the input sequence context. The selective SSMs achieve this by making the state-space and time-scale parameters depend on the current input. Specifically, given the model weights $W$, the input-dependent SSM quantities corresponding to an input token $x_t$ at the sequence position $t$ are defined as:
\begin{itemize}[leftmargin=*, noitemsep, topsep=2pt]
    \item \textbf{Time-scale parameter:} $\Delta_t = \text{Softplus}(W_\Delta x_t)$, where $\Delta_t \in \mathbb{R}^{D \times 1}$, acts as a gating mechanism, controlling the discretization step size to determine importance of the current token.
    \item \textbf{State transition matrix:} $\bar{A}_t = \exp(A \Delta_t^\top)$, where $\bar{A}_t \in \mathbb{R}^{N \times D}$, determines the retention and decay of the historical context, managing the system's memory across the sequence.
    \item \textbf{Input matrix:} $\bar{B}_t := B_t \Delta_t^\top = (W_B x_t)(\operatorname{Softplus}(W_\Delta x_t))^\top$, where $\bar{B}_t \in \mathbb{R}^{N \times D}$, and $B_t := W_B x_t$. Under the adopted exponential-Euler discretization, this matrix modulates how strongly the current input vector influences the newly updated latent state.
    \item \textbf{Output matrix:} $C_t = W_C x_t$, where $C_t \in \mathbb{R}^{N \times 1}$, linearly projects the compressed latent state back into the target output space.
\end{itemize}

Given these dynamically generated matrices, the discrete hidden state sequence $h_t$ and the corresponding output $y_t$ at each sequence step are computed according to the following recurrence:

\begin{subequations}
\vspace{-2mm}
\noindent
\begin{minipage}{0.5\textwidth}
  \begin{equation}
     \textstyle h_t = \bar{A}_t \odot h_{t-1} + \bar{B}_t \text{diag}(x_t) \label{ssm_dynamics_h_appendix}
  \end{equation}
\end{minipage}%
\begin{minipage}{0.5\textwidth}
  \begin{equation}
     \textstyle y_t = h_t^\top C_t, \label{eq:ssm_dynamics_y_appendix}
  \end{equation}
\end{minipage}
\end{subequations}
where $h_0 = \mathbf{0}_{N\times D}$, the exponential and Softplus functions are applied element-wise, and $\odot$ denotes the Hadamard product.

With the architecture specified above, we next define the local optimization objective. Let $\xi=(x, y^*)$ denote an input-target sequence pair, where $x=(x_1, \dots, x_S)$ is the input sequence  and $y^* = (y_1^*, \dots, y_S^*)$ is the corresponding target sequence. The sequence loss is then the average token loss, defined as:
\begin{equation}\label{eq:global_loss_definition}
    \textstyle \mathcal L^S\left(W;\xi\right) := \frac{1}{S}\sum_{t=1}^S \ell_t\left(y_t(W;x), y_t^*\right)
\end{equation}
where $y_t(W;x)$ is the model prediction at position $t$ parameterized by $W$, and $\ell_t$ is some differentiable loss function. When the sample $\xi$ is fixed, we write $\mathcal L^S(W)$ and $\ell_t(y_t)$ for notational simplicity. Figure \ref{fig:ssm_layer_illustration} illustrates the resulting architecture, the parameter transformations, and state recurrence.

\textbf{Distributed Optimization Environment.} Given the selective SSM setup established above, we consider the deployment of this architecture in a decentralized learning setting. In this framework, a central server initializes a global selective SSM model parameterized by weights $W^0$, which is collaboratively trained across a network of $K$ clients. Each client $k \in \{1, \dots, K\}$ possesses a local dataset $\mathcal D_k = \left\{\xi_{k,i} \sim \mathcal P_k\right\}_{i=1}^{|\mathcal D_k|}$, where $\mathcal P_k$ denotes the local data distribution of client $k$ and each sample $\xi_{k,i} = (x_{k,i},y_{k,i}^*)$ represents a discrete input sequence and its corresponding target sequence. For autoregressive next-token prediction, the target at each sequence position satisfies $y^*_t = x_{t+1}$. 

Given this setup, the framework aims to construct a global model that incorporates the combined knowledge of these diverse local distributions. Formally, the global objective is to find the optimal global model parameters $W^*$ that minimize the aggregated expected sequence modeling loss:
\begin{equation}\label{eq:init_global_objective}
    \textstyle W^* = \mathop{\arg\min}_{W} f(W), \quad  \textstyle \text{where} \quad f(W) =  \sum_{k=1}^K p_k \mathcal{L}_k(W)
\end{equation}
where $p_k$ are positive weights satisfying $\sum_{k=1}^Kp_k=1$, and $\mathcal{L}_k(W) = \mathbb{E}_{\xi \sim \mathcal P_k}\left[\mathcal{L}^S(W;\xi)\right]$ is the expected sequence loss for client $k$ over its local distribution.

To solve this optimization problem, the framework employs a decentralized training procedure initialized with the global model $W^0$. The optimization then proceeds over discrete communication rounds $r \in \mathcal T := \left\{0, 1, \dots, T-1\right\}$. At each round $r$, the server shares the current global parameters $W^r$ with the participating clients. Each client $k$ initializes its local model as $W_k^{r, 0} = W^r$ and performs $E$ local steps of stochastic gradient descent (SGD) with step size $\eta$ to minimize a local objective $F_k(W)$. The specific formulation of $F_k(W)$ depends on the distributed learning algorithm employed. For instance, in the classical FedAvg algorithm \citep{mcmahan2017communication}, the local objective of client $k$ is the expected sequence loss, i.e., $F_k(W) := \mathcal L_k(W)$. In contrast, FedProx \citep{MLSYS2020_1f5fe839} modifies the local objective as $F_k^r(W) := \mathcal L_k(W) + \frac{\mu}{2}\left\|W-W^r\right\|^2_F$, with $\mu > 0$, to mitigate local divergence caused by data heterogeneity. 

Through this local optimization step, the features of different distributions $\mathcal P_k$ are implicitly embedded into the updated local weights, resulting in a set of specialized models $W_k^{r, E}$. Following the completion of the $E$ local training steps, the server combines these localized models into a unified global representation using a generalized aggregation function:  $W^{r+1} = \text{Aggregate}\left(\left\{W_k^{r, E}\right\}_{k=1}^K; W^r\right)$.
In standard distributed baselines such as FedAvg and FedProx, this aggregation is implemented as a simple data-weighted linear average: 
$\textstyle W^{r+1} = \sum_{k=1}^K \frac{|\mathcal{D}_k|}{\sum_{j=1}^K |\mathcal{D}_j|}W_k^{r, E}$.
By repeating this process over $T$ communication rounds, the framework iteratively refines the global model toward an optimal parameter set $W^*$ by aggregating information learned across the participating clients. Figure~\ref{fig:distributed_learning_illustrated} illustrates this iterative training process.

%% file: section_files/theoretical_analysis.tex
\section{Theoretical Analysis}\label{sec:theoretical_analysis}
Given this decentralized SSM setup, we consider distributed training procedures that update a global model through repeated local adaptation and server-side aggregation. These procedures induce a global optimization trajectory $\{W^r\}_{r=0}^{T}$ over the objective defined in \eqref{eq:init_global_objective}. Since $f$ is generally nonconvex, we characterize convergence through the standard expected gradient stationarity criterion $\frac{1}{T}\sum_{r=0}^{T-1} \mathbb E\left\|\nabla f(W^r)\right\|_F^2.$

The central problem is to derive convergence bounds that explicitly capture the structure of selective SSMs, rather than relying solely on architecture-agnostic smoothness, variance, and heterogeneity constants. In particular, the recurrent state update induce dependencies between the local optimization trajectory and SSM-specific quantities including the stability of the discretized transition matrices, the input-dependent time scale parameter $\Delta_t=\mathrm{Softplus}(W_\Delta x_t)$, and the norms of the state-space projection parameters. Our convergence analysis characterizes how these SSM-specific quantities control gradient growth, client drift, and aggregation error. 

As such, in the remainder of this section, we establish theoretical guarantees for federated training of selective SSMs. We first derive an explicit gradient norm bound for a single-layer selective SSM and use it to characterize federated optimization in terms of SSM-specific parameters. We provide convergence guarantees for FedAvg and FedProx, and relate additional federated baselines to the resulting stability, parameter-sensitivity, and distribution-shift quantities. The main text highlights the key results and their implications, while full proofs, supplementary corollaries, and additional baseline analyses are provided in Appendices~\ref{appendix:gradient_norm_bound_proof}, \ref{appendix:fed_algs_convergence_bounds_proofs}, \ref{appendix:multi_layer_SSM_proofs},~\ref{sec:supplementary_corollaries},~\ref{appendix:additonal_distributed_learning_frameworks}.

\subsection{Gradient Norm Bound for Selective SSMs}\label{subsec:ssm_gradient_bounds}

The main technical challenge lies in controlling the parameter sensitivity induced by the input-dependent SSM recurrence. The loss gradient depends on the model parameters both directly through the output projection and indirectly through their influence on the recurrent hidden states. By bounding these two contributions separately and subsequently combining them, we derive a sequence-wise gradient bound that is independent of the sequence length.

We first introduce the notation used in our analysis. An input-target pair $\xi$ is admissible if $\xi \in \mathcal Z$, where $\mathcal Z$ is a fixed, parameter-independent measurable set of input-target sequences satisfying $\operatorname{Pr}_{\xi \sim \mathcal P_k}\left(\xi \in \mathcal Z\right) = 1$ for all clients $k$. All conditions below hold uniformly for admissible sequences, finite lengths $S \geq 1$, and token indices $t \leq S$, with finite constants independent of $S$, $\xi$, and $t$. Next, we state the necessary definitions and assumptions on the loss and the input sequence.

\begin{definition}[SSM Stability]
\label{def:ssm_stability}
For $\gamma, \delta > 0$, define $\mathcal S_{\gamma, \delta}$ as the set of parameter tuples $W$ satisfying $A_n \leq -\gamma$ for every $n \in \{1,\dots,N\}$, where $A_n$ is the $n$-th element of the transition parameter $A$, and $\left[\operatorname{Softplus}(W_\Delta x_t)\right]_d \geq \delta$ for every feature coordinate $d \in \{1,\dots,D\}$ and admissible input token. Consequently, every entry of $\bar A_t$ satisfies: $\left(\bar{A}_t\right)_{n,d} \leq \rho := \exp{(-\gamma\delta)} < 1$.
\end{definition}

\begin{assumption}[Output loss regularity]
\label{assum:lipschitz_cont_loss}
The token loss $\ell_t(\cdot, y_t^*)$ is twice continuously differentiable with respect to its prediction argument. For the parameter tuple $W$ under consideration, there exist finite constants $L_\ell,H_\ell>0$ such that, for every admissible $\xi$, the prediction $y=y_t(W;x)$ satisfies
\begin{equation}
    \textstyle \|\nabla_y\ell_t(y, y_t^*)\|_2\leq L_\ell, \qquad  \|\nabla_y^2\ell_t(y,y_t^*)\|_{\mathrm{op}}\leq H_\ell.
\end{equation}
\end{assumption}
\vspace{-3mm}

Note that for softmax cross-entropy over logits, the Hessian bound holds globally.

\begin{assumption}[Bounded Inputs]
\label{assum:bounded_inputs}
There exists $L_x > 0$ such that $\forall t$, $\|x_t\|_2 \leq L_x$ $\forall$ admissible $\xi$.  
\end{assumption}

\begingroup
\setlength{\topsep}{0.5pt}
\setlength{\partopsep}{0pt}
\setlength{\parsep}{0pt}
\setlength{\itemsep}{0pt}
\setlength{\parskip}{2.5pt} 
\setlength{\abovedisplayskip}{2pt}
\setlength{\belowdisplayskip}{2pt}
\setlength{\abovedisplayshortskip}{1pt}
\setlength{\belowdisplayshortskip}{1pt}
\begin{theorem}[SSM Gradient Bound] \label{thm:ssm_grad_bound} Suppose Assumptions~\ref{assum:lipschitz_cont_loss} and \ref{assum:bounded_inputs} hold for a parameter tuple $W \in \mathcal S_{\gamma, \delta}$. For a single-layer SSM defined in Section~\ref{sec:system_model}, the following bound holds for every finite integer $S \geq 1$ and every admissible length-$S$ sequence pair $\xi \in \mathcal Z$:
\begin{equation}
    \textstyle \left\|\nabla_W \mathcal L^S(W;\xi)\right\|_F \leq \mathcal G_{\text{SSM}}(W) := L_\ell L_x\left(K_B K_\Delta \frac{\sqrt{ND}}{1-\rho} + \|W_C\|_2 \frac{K_W}{1-\rho}\right)
\end{equation}
with
\begin{align}
    \textstyle K_\Delta & \textstyle := \left\|W_\Delta\right\|_2L_x + \sqrt{D}\log(2), ~~~~
    K_B := \left\|W_B\right\|_2 L_x^2, \\
    \textstyle K_W & \textstyle := ND K_BK_{\Delta} \left(K_\Delta + \sqrt{D} \left\|A\right\|_2L_x\right)
    \left(\frac{\rho}{1-\rho}\right) + L_x^2 \sqrt{N} K_\Delta + L_x \sqrt{D} K_B
\end{align}
where $\|\cdot\|_F$ is the Euclidean norm of the block Frobenius norms.
\end{theorem}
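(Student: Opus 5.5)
We will bound each parameter block's gradient separately and then combine them through the block Frobenius norm. Since $\mathcal L^S=\frac1S\sum_t\ell_t$, it suffices to show $\|\nabla_W \ell_t(y_t)\|_F\le \mathcal G_{\text{SSM}}(W)$ for every $t$, uniformly in $t$; averaging then preserves the bound. By the chain rule, $\nabla_W\ell_t = (\partial y_t/\partial W)^\top\nabla_y\ell_t$, and Assumption~\ref{assum:lipschitz_cont_loss} gives $\|\nabla_y\ell_t\|_2\le L_\ell$. The problem therefore reduces to bounding the Jacobian of $y_t=h_t^\top C_t$ with respect to each block. This Jacobian splits into a direct part through $C_t=W_Cx_t$, which affects only $W_C$, and an indirect part through $h_t$, which affects $W_B,W_\Delta,A$.

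\textbf{Step 1 (hidden state bound).} Unroll the recurrence \eqref{ssm_dynamics_h_appendix} as $h_t=\sum_{s\le t}\big(\prod_{u=s+1}^t\bar A_u\big)\odot \bar B_s\operatorname{diag}(x_s)$. Definition~\ref{def:ssm_stability} gives entrywise $\bar A_u\le\rho$. Using $\operatorname{Softplus}(z)\le |z|+\log 2$, we get $\|\Delta_s\|_2\le \|W_\Delta\|_2L_x+\sqrt D\log 2=K_\Delta$, and hence $\|\bar B_s\operatorname{diag}(x_s)\|_F\le \|W_B\|_2L_x\cdot K_\Delta\cdot L_x=K_BK_\Delta$. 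Summing the geometric series yields $\|h_t\|_F\le K_BK_\Delta/(1-\rho)$, independently of $t$. The direct $W_C$ gradient is $\partial y_t/\partial W_C\sim h_t^\top(\cdot)x_t$. Bounding it with appropriate Frobenius-to-operator conversions produces the term $L_\ell L_x K_BK_\Delta\sqrt{ND}/(1-\rho)$; the factor $\sqrt{ND}$ comes from passing between entrywise and Frobenius norms of the $N\times D$ state.

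\textbf{Step 2 (sensitivity of $h_t$).} Differentiating the recurrence gives $\partial h_t=\bar A_t\odot\partial h_{t-1}+(\partial\bar A_t)\odot h_{t-1}+\partial(\bar B_t\operatorname{diag}(x_t))$. This is again a contraction-driven linear recurrence, so $\|\partial h_t\|\le \frac{1}{1-\rho}\sup_s\|\text{forcing}_s\|$. It remains to bound the forcing term for each block.
\begin{itemize}
\item For $W_B$, the forcing is $(\cdot\,x_s)\Delta_s^\top\operatorname{diag}(x_s)$, bounded by $L_x^2\sqrt N K_\Delta$ after Frobenius conversion.
\item For $W_\Delta$, two contributions appear. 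The first comes through $\bar B$: since $\operatorname{Softplus}'\le1$, its contribution is $\lesssim L_x\sqrt D K_B$.
\item The second $W_\Delta$ contribution comes through $\bar A$. There $\partial\bar A_s=\bar A_s\odot(A\,\partial\Delta_s^\top)$, with magnitude at most $\rho\|A\|_2L_x$, multiplied by $\|h_{s-1}\|\le K_BK_\Delta/(1-\rho)$.
\item For $A$, the forcing is $\bar A_s\odot(\cdot\,\Delta_s^\top)\odot h_{s-1}$, giving $\rho K_\Delta\cdot K_BK_\Delta/(1-\rho)$.
\end{itemize}
Collecting these with dimension factors $ND$ reproduces $K_W$: the first summand $NDK_BK_\Delta(K_\Delta+\sqrt D\|A\|_2L_x)\frac{\rho}{1-\rho}$ gathers the $\bar A$-mediated terms, and the last two summands are the $\bar B$-mediated ones. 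Finally, the indirect gradient is $C_t^\top\partial h_t$ with $\|C_t\|\le\|W_C\|_2L_x$, which gives $L_\ell L_x\|W_C\|_2K_W/(1-\rho)$.

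\textbf{Step 3 (combining).} Apply the triangle inequality across the direct and indirect parts (the block Frobenius norm is bounded by the sum of the norms of the two parts), which gives the stated $\mathcal G_{\text{SSM}}$. The main obstacle is Step 2. Because of the Hadamard structure, the sensitivity recurrence is entrywise rather than matrix-valued, and the conversions between entrywise, Frobenius, and operator norms of the rank-one pieces $B_t\Delta_t^\top$ and $A\Delta_t^\top$ must be tracked carefully so that the loose $ND$ and $\sqrt D$ factors land exactly as in $K_W$. A further subtlety is that the product $\prod\bar A_u$ appears twice, once in $h_{s-1}$ and once in the propagation of $\partial h$. This nested double geometric sum is exactly what produces the $\rho/(1-\rho)$ factor inside $K_W$ together with the outer $1/(1-\rho)$. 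I would first carry out the scalar case $N=D=1$ to fix these constants, then lift to general $N,D$ via $\|M\|_F\le\sqrt{ND}\max|M_{nd}|$.
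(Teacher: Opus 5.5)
Your proposal is correct and follows essentially the paper's proof. Both arguments use the chain-rule reduction $\|\nabla_W\ell_t\|_F\le L_\ell L_x\big(\|h_t\|_F+\|W_C\|_2\|\partial h_t/\partial W\|_F\big)$, a geometric-sum bound on $h_t$, and the differentiated recurrence contracted by $\|\bar A_t\|_{\max}\le\rho$, whose forcing splits into an $\bar A$-mediated part proportional to $\rho\|h_{t-1}\|_F$ and a $\bar B$-mediated part $L_x^2\sqrt N K_\Delta+L_x\sqrt D K_B$.

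The one difference is how you bound Hadamard products. You use the max-norm, which gives the sharper $\|h_t\|_F\le K_BK_\Delta/(1-\rho)$ and $\bar A$-forcing $\rho(K_\Delta+\|A\|_2L_x)\|h_{t-1}\|_F$. So your argument does not produce the $\sqrt{ND}$, $ND$ and $\sqrt D$ factors; they only enter when you weaken your bounds using $1\le\sqrt D\le\sqrt{ND}$. In the paper these factors arise from using $\|X\odot Y\|_F\le\|X\|_F\|Y\|_F$ together with $\|\bar A_t\|_F\le\sqrt{ND}\rho$ and $\Phi_{j,t}\le\sqrt{ND}\rho^{t-j}$.
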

\endgroup

\textit{Discussion.} The key technical step is to bound both the hidden states and their parameter Jacobians under input-dependent transitions. The stability condition $\rho=\exp(-\gamma\delta)<1$ allows us to reduce the recurrent contributions to geometric sums, yielding an explicit parameter-gradient bound rather than requiring such a bound as an assumption. The resulting architecture-aware bound is independent of the sequence length while retaining explicit dependence on $\rho$ and the core selective SSM parameter blocks $W_\Delta$, $W_B$, $W_C$, and $A$. The complete proof is provided in Appendix~\ref{appendix:gradient_norm_bound_proof}. Appendix~\ref{appendix:multi_layer_SSM_proofs} extends the analysis to the multi-layer setting by combining stability with layerwise input-output sensitivities, resulting in the bound $\mathcal G_{\mathrm{SSM}}^{(Q)}(W)$ stated in Theorem~\ref{thm:ssm_multi_layer_grad_bound}. Together, these results provide architecture-specific control of the gradient norm required for the subsequent federated convergence analysis.

\subsection{Convergence Analysis}\label{subsec:theoretical_convergence_standard_federated_baselines}
To characterize federated optimization with selective SSMs, we introduce a local parameter neighborhood and impose the stochastic-gradient conditions required for the convergence analysis.

\begingroup
\setlength{\topsep}{0pt} 
\setlength{\partopsep}{0pt}
\setlength{\parsep}{0pt}
\setlength{\itemsep}{0pt}
\setlength{\parskip}{2.5pt}
\setlength{\abovedisplayskip}{2pt}
\setlength{\belowdisplayskip}{2pt}
\setlength{\abovedisplayshortskip}{1pt}
\setlength{\belowdisplayshortskip}{1pt}
\begin{definition}[$R$-Local Neighborhood]\label{def:r_local_neighborhood} For a reference model $\widetilde W$ and radius $R > 0$, define $\mathcal N_R(\widetilde W) := \{W:\|W-\widetilde W\|_F\leq R\}$, where the $\|\cdot\|_F$ is defined for a parameter tuple $W$ as $\|W\|_F:=\left(\|{W_\Delta} \|_F^2+\|{W_B} \|_F^2+\|{W_C} \|_F^2+\|A \|_F^2\right)^{1/2}$.
\end{definition}
\endgroup

In the following convergence analysis, the regularity conditions of Assumption~\ref{assum:lipschitz_cont_loss} are assumed to hold uniformly over each neighborhood $\mathcal N_R(W^r)$. In particular, the same finite constants $L_\ell$ and $H_\ell$ apply to every $W\in\mathcal N_R(W^r)$, every admissible input-target pair, and every token position. The stability margins $\gamma,\delta$ and input bound $L_x$ are likewise fixed throughout the neighborhood.

\begingroup
\setlength{\topsep}{1pt}
\setlength{\partopsep}{0pt}
\setlength{\parsep}{0pt}
\setlength{\itemsep}{0pt}
\setlength{\abovedisplayskip}{2pt}
\setlength{\belowdisplayskip}{2pt}
\setlength{\abovedisplayshortskip}{1pt}
\setlength{\belowdisplayshortskip}{1pt}
\begin{assumption}[Unbiased stochastic gradients]
\label{assum:unbiased_stochastic_gradients}
For every client $k$ and every stable local model neighborhood $\mathcal N_R(W^r)\subseteq\mathcal S_{\gamma,\delta}$, define
\begin{equation}
    g_k(W;\xi) := \nabla_W\mathcal L^S(W;\xi)
\end{equation}
as the stochastic sequence-level gradient for a sampled sequence
$\xi\sim \mathcal P_k$. Then, $\forall~W\in\mathcal N_R(W^r)$
\begin{equation}
    \mathbb E_{\xi\sim \mathcal P_k}\left[g_k(W;\xi)\right] = \nabla_W\mathcal L_k(W)
\end{equation}
holds. At each local update, conditional on all preceding training randomness, the mini-batch consists of $\Xi$ independent sequences drawn from $\mathcal P_k$. The mini-batch gradient is their average. Batches drawn by different clients at the same local step are conditionally independent.
\end{assumption}
\endgroup

\begingroup
\setlength{\topsep}{1pt}
\setlength{\partopsep}{0pt}
\setlength{\parsep}{0pt}
\setlength{\itemsep}{0pt}
\setlength{\parskip}{2.5pt}
\setlength{\abovedisplayskip}{2pt}
\setlength{\belowdisplayskip}{2pt}
\setlength{\abovedisplayshortskip}{1pt}
\setlength{\belowdisplayshortskip}{1pt}
\begin{lemma}[Local SSM smoothness]
\label{lem:ssm_local_smoothness}
Let $W^r$ be a fixed reference parameter tuple and let $0<R<\infty$ satisfy $\mathcal N_R(W^r)\subseteq\mathcal S_{\gamma,\delta}$. Suppose Assumptions~\ref{assum:lipschitz_cont_loss}, ~\ref{assum:bounded_inputs}, and ~\ref{assum:unbiased_stochastic_gradients} hold, with finite and differentiable expected local objectives $\mathcal L_k$ on this neighborhood. Furthermore, let the bounds in Assumption~\ref{assum:lipschitz_cont_loss} hold with the same finite constants $L_\ell$ and $H_\ell$ $\forall$ $W\in\mathcal N_R(W^r)$, every admissible input-target sequence pair $\xi\in\mathcal Z$, and every token position, independently of sequence length.

Then, for every client $k$, $\mathcal L_k$ is $L_{\mathrm{SSM},R}$-smooth on $\mathcal N_R(W^r)$. Specifically, for all
$W,V\in\mathcal N_R(W^r)$,
\begin{equation}
    \|\nabla_W\mathcal L_k(W)
      -\nabla_W\mathcal L_k(V)\|_F
    \leq
    L_{\mathrm{SSM},R}\|W-V\|_F,
\end{equation}
where $L_{\mathrm{SSM},R}$ is defined
in~\eqref{eq:L_ssm_definition}.
\end{lemma}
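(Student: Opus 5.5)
We prove a per-sample Lipschitz bound on the gradient and then pass it through the expectation. Concretely, we show that for every admissible $\xi\in\mathcal Z$, every $S\ge1$, and every $W,V\in\mathcal N_R(W^r)$,
\[
\|\nabla_W\mathcal L^S(W;\xi)-\nabla_W\mathcal L^S(V;\xi)\|_F\le L_{\mathrm{SSM},R}\|W-V\|_F,
\]
with a constant independent of $S$, $\xi$ and $t$. Assumption~\ref{assum:unbiased_stochastic_gradients} gives $\nabla\mathcal L_k(W)=\mathbb E_\xi[g_k(W;\xi)]$, and since $\Pr(\xi\in\mathcal Z)=1$, Jensen's inequality transfers the per-sample bound to $\mathcal L_k$.

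For the per-sample bound we use the segment $W_s=V+s(W-V)$, which stays in $\mathcal N_R(W^r)\subseteq\mathcal S_{\gamma,\delta}$ by convexity of the ball. It then suffices to bound the operator norm of the Hessian $\nabla^2_W\mathcal L^S(W_s;\xi)$ uniformly on the neighborhood. By the chain rule,
\[
\nabla^2_W\ell_t(y_t)=J_t^\top\nabla^2_y\ell_t\,J_t+\sum_i[\nabla_y\ell_t]_i\nabla^2_W y_{t,i},
\qquad J_t=\partial y_t/\partial W .
\]
Assumption~\ref{assum:lipschitz_cont_loss}, applied with uniform constants on the neighborhood, bounds the first term by $H_\ell\|J_t\|^2$ and the second by $L_\ell\|\nabla^2_Wy_t\|$. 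Averaging over $t$, we need $S$-independent bounds on the first and second parameter derivatives of $y_t=h_t^\top C_t$.

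First-order quantities come from the proof of Theorem~\ref{thm:ssm_grad_bound}. It gives $\|h_t\|\le \sqrt{ND}K_BK_\Delta\cdot(\text{const})/(1-\rho)$ and Jacobian bounds of order $K_W/(1-\rho)$, with norms of $W_B,W_\Delta,W_C,A$ replaced by their suprema over the ball; for instance $\|W_B\|_2\le\|W_B^r\|_2+R$. This yields $\|J_t\|\le \mathcal G_{\text{SSM}}$-type constants.

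The second derivatives are the main obstacle. Differentiating $h_t=\bar A_t\odot h_{t-1}+\bar B_t\operatorname{diag}(x_t)$ twice gives a recurrence
\[
\partial^2h_t=\bar A_t\odot\partial^2h_{t-1}+(\text{forcing}_t),
\]
where the forcing collects cross terms $\partial\bar A_t\odot\partial h_{t-1}$, $\partial^2\bar A_t\odot h_{t-1}$ and $\partial^2\bar B_t$. Each of these is bounded using three facts:
\begin{itemize}
\item $|\mathrm{Softplus}'|\le1$ and $|\mathrm{Softplus}''|\le1/4$;
\item $\partial\bar A_t=\bar A_t\odot(\cdot)$, which carries the factor $\rho$;
\item the previous first-order bounds on $h_{t-1}$ and $\partial h_{t-1}$.
\end{itemize}
The delicate point is that $\partial_{A}\bar A_t$ and $\partial_{W_\Delta}\bar A_t$ contain factors like $\Delta_t$ and $A$ multiplying $\bar A_t$. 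Unrolling a product $\prod_{s}\bar A_s$ differentiated once yields sums like $\sum_j j\rho^{j}=\rho/(1-\rho)^2$, and differentiated twice yields $\sum_j j^2\rho^j\le 2\rho/(1-\rho)^3$. We keep these geometric-moment sums explicit so the constant stays finite and $S$-independent. This is where the elementwise (Hadamard) structure helps: each entry evolves as a scalar recurrence, so we can work entrywise and sum over the $ND$ entries.

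Collecting the pieces:
\[
\|\nabla^2_W y_t\|\le \|\partial^2h_t\|\,\|C_t\|+2\|\partial h_t\|\,\|\partial C_t\|,
\qquad\text{with }\partial^2C_t=0 \text{ since } C_t \text{ is linear in } W_C.
\]
Finally we set
\[
L_{\mathrm{SSM},R}:=H_\ell\sup\|J_t\|^2+L_\ell\sup\|\nabla^2_Wy_t\|,
\]
with suprema over $\mathcal N_R(W^r)$, which gives the constant of~\eqref{eq:L_ssm_definition}. The mean value inequality then finishes the proof.
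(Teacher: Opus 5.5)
Your proposal is correct and follows the paper's proof essentially step for step. It uses a per-sequence Hessian bound through the second-order chain rule, a recurrence $D^2h_t=\bar A_t\odot D^2h_{t-1}+(\text{forcing})$ contracted by $\|\bar A_t\|_{\max}\le\rho$ with the forcing controlled by $|\mathrm{Softplus}'|\le 1$, $|\mathrm{Softplus}''|\le 1/4$ and the first-order envelopes $H_R$, $J_R$, then integration of the Hessian along the segment in the convex ball and Jensen via Assumption~\ref{assum:unbiased_stochastic_gradients}.

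Two small remarks. First, the explicit moment sums $\sum_j j\rho^j$ and $\sum_j j^2\rho^j$ are not needed, because $J_R$ already carries $\rho/(1-\rho)^2$ and the second-order recurrence supplies the extra factor $1/(1-\rho)$. Second, to land on exactly the constant in \eqref{eq:L_ssm_definition}, substitute the explicit bounds $\|Dy_t[U]\|_2\le L_x(H_R+M_CJ_R)$ and $\|D^2y_t[U,V]\|_2\le L_x(M_C\,\overline{K}+2J_R)$, where $\overline{K}$ is the forcing bound divided by $1-\rho$, rather than defining $L_{\mathrm{SSM},R}$ as a supremum.
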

\endgroup

With these conditions in place, we characterize the convergence of FedAvg for selective SSMs. The corresponding analysis for FedProx and supplementary corollaries are available in Appendix~\ref{appendix:fed_algs_convergence_bounds_proofs},~\ref{sec:supplementary_corollaries}.

\begingroup
\setlength{\topsep}{0.5pt} 
\setlength{\partopsep}{0pt}
\setlength{\parsep}{0pt}
\setlength{\itemsep}{0pt}
\setlength{\parskip}{2.5pt} 
\setlength{\abovedisplayskip}{2pt}
\setlength{\belowdisplayskip}{2pt}
\setlength{\abovedisplayshortskip}{1pt}
\setlength{\belowdisplayshortskip}{1pt}
\begin{corollary}[Single-layer FedAvg convergence rate] \label{cor:supp_single_fedavg}
Under Assumptions~\ref{assum:lipschitz_cont_loss}-~\ref{assum:unbiased_stochastic_gradients}, the conditions specified in Definitions~\ref{def:ssm_stability}-~\ref{def:r_local_neighborhood}, and the conditions of
Theorem~\ref{thm:fedavg_ssm_convergence}, let $M_B,M_C,M_\Delta,$ and $M_A$ denote uniform upper bounds on $\left\|W_B\right\|_2,\left\|W_C\right\|_2,\left\|W_\Delta\right\|_2,$ and $\left\|A\right\|_2$, respectively, over the $R$-local neighborhoods $\mathcal N_R(W^r)$, and define
\begin{equation*}
    M_R := (1 + M_B)(1 + M_C)(1 + M_\Delta)(1 + M_\Delta + M_A).
\end{equation*}
We further define $\bar{\mathcal P}:=\sum_{k=1}^K p_k\mathcal P_k$, $\mathcal H_{\mathrm{TV}} := \sum_{k=1}^K p_k \operatorname{TV}(\mathcal P_k,\bar{\mathcal P})^2,$ where $\operatorname{TV}$ denotes the total variation distance. Under these conditions, for FedAvg applied to the single-layer selective SSM defined in Section~\ref{sec:system_model}, the global optimization trajectory $\{W^r\}_{r=0}^T$ satisfies
\begin{equation*} 
    \frac{1}{T}\sum_{r=0}^{T-1} \mathbb E \left\|\nabla f(W^r)\right\|_F^2 = \mathcal O\left(\frac{f\left(W^0\right)-f\left(W^*\right)}{\eta ET} + \frac{\eta M_R^4}{\Xi(1-\rho)^8} +  \frac{\eta^2 M_R^6(E-1) }{(1-\rho)^{12}}\left(\frac{1}{\Xi}+E\mathcal H_{\mathrm{TV}}\right)\right).
\end{equation*}
The quantities $N,D,L_x,L_\ell,$ and $H_\ell$ affect only the implicit constant and do not alter the displayed scaling with the optimization parameters, parameter envelopes, stability, or sequence length.
\end{corollary}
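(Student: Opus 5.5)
The plan is to specialize the general FedAvg bound of Theorem~\ref{thm:fedavg_ssm_convergence} (which I take to be of the standard local-SGD form $\frac{f(W^0)-f(W^*)}{\eta E T} + \eta L_{\mathrm{SSM},R}\sigma^2/\Xi + \eta^2 L_{\mathrm{SSM},R}^2 (E-1)(\sigma^2/\Xi + E\zeta^2)$, with $\sigma^2$ the per-sequence gradient variance and $\zeta^2$ the client heterogeneity). The work is then to bound each architecture-dependent constant in terms of the envelopes $M_B,M_C,M_\Delta,M_A$ and $\rho$, and to track powers of $M_R$ and $(1-\rho)^{-1}$.

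\textbf{Step 1: uniform gradient envelope.}
Apply Theorem~\ref{thm:ssm_grad_bound} at every $W\in\mathcal N_R(W^r)$. With $\|W_\bullet\|_2\le M_\bullet$, we get $K_\Delta\lesssim 1+M_\Delta$ and $K_B\lesssim M_B$. The dominant term of $K_W$ is $K_BK_\Delta(K_\Delta+\|A\|_2)\rho/(1-\rho)\lesssim M_R/(1-\rho)$. Hence
$$\mathcal G_{\mathrm{SSM}}\lesssim \frac{M_B(1+M_\Delta)}{1-\rho}+\frac{M_C M_R}{(1-\rho)^2}\lesssim \frac{M_R}{(1-\rho)^2},$$
where $N,D,L_x,L_\ell$ are absorbed into the implicit constant.

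\textbf{Step 2: variance and heterogeneity.}
Since $\|g_k\|_F\le\mathcal G$ almost surely on $\mathcal Z$, we have $\sigma^2\le \mathcal G^2\lesssim M_R^2/(1-\rho)^4$. For heterogeneity, write $\nabla\mathcal L_k-\nabla f=\int g\,d(\mathcal P_k-\bar{\mathcal P})$. A bounded integrand gives $\|\nabla \mathcal L_k-\nabla f\|_F\le 2\mathcal G\operatorname{TV}(\mathcal P_k,\bar{\mathcal P})$. Therefore $\zeta^2=\sum_k p_k\|\nabla\mathcal L_k-\nabla f\|^2\lesssim \mathcal G^2\mathcal H_{\mathrm{TV}}$.

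\textbf{Step 3: smoothness constant.}
I need to show that $L_{\mathrm{SSM},R}$ from \eqref{eq:L_ssm_definition} satisfies $L_{\mathrm{SSM},R}\lesssim M_R^2/(1-\rho)^4$. The smoothness bound is $H_\ell\|J\|^2+L_\ell\|\partial^2 y\|$, where $J$ is the Jacobian of the outputs. The term $\|J\|^2$ already scales like $\mathcal G^2$. For the second-order state sensitivities, I would differentiate the recurrence twice. This produces double geometric sums of the form $\sum_{s}\,(t-s)^2\rho^{t-s}$, each contributing an extra factor of $(1-\rho)^{-1}$ per derivative order, and each order brings at most one more multiplicative envelope factor bounded by $M_R$. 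Collecting terms gives the claimed bound.

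\textbf{Step 4: substitution.}
Substituting into the general bound:
\begin{itemize}
\item the variance term becomes $\eta L\sigma^2/\Xi\lesssim \eta M_R^4/(\Xi(1-\rho)^8)$, which matches the statement if $L\lesssim M_R^2/(1-\rho)^4$;
\item the drift term becomes $\eta^2L^2(E-1)\sigma^2\lesssim \eta^2 M_R^{6}(1-\rho)^{-12}$, interpreting the exponent pattern as $L^2\mathcal G^2$ with $L\lesssim M_R^2(1-\rho)^{-4}$ and $\mathcal G^2\lesssim M_R^2(1-\rho)^{-4}$. This requires $L\mathcal G\lesssim M_R^3/(1-\rho)^6$, consistent with the above.
\end{itemize}
If the theorem's drift term instead carries a different power of $L$, I will adjust the intermediate envelopes accordingly. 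The $E\mathcal H_{\mathrm{TV}}$ factor comes from Step 2.

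\textbf{Main obstacle.}
Step 3 is the expected hard part: the second derivative through the input-dependent $\bar A_t=\exp(A\Delta_t^\top)$, with cross terms between $A$ and $W_\Delta$. The risk is that a naive product yields higher powers of the envelopes than $M_R^2$. My first attempt would be to bound each mixed second derivative of $\bar A_t$ entrywise by $\rho\,\mathrm{poly}$ in $(K_\Delta,\|A\|_2)$, and to absorb the polynomial factors using the $(1+M_\bullet)$ structure of $M_R$.
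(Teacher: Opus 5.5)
Your strategy is the paper's: show $G_{\mathrm{SSM},R}\lesssim M_R/(1-\rho)^2$ and $L_{\mathrm{SSM},R}\lesssim M_R^2/(1-\rho)^4$, then substitute into Theorem~\ref{thm:fedavg_ssm_convergence}, using $\sum_k p_k^2\le 1$ and grouping the two $(E-1)$ terms. This gives $LG^2\lesssim M_R^4(1-\rho)^{-8}$ and $L^2G^2\lesssim M_R^6(1-\rho)^{-12}$. Your Step 2 only re-derives Lemmas~\ref{lem:ssm_variance_bound} and~\ref{lem:ssm_gamma_tv_bound}, which the theorem already contains.

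Since this corollary is pure envelope bookkeeping, however, two steps need repair. In Step 1, the inequality $M_CM_R/(1-\rho)^2\lesssim M_R/(1-\rho)^2$ is false: $M_C$ is unbounded, and the implicit constant may depend only on $N,D,L_x,L_\ell,H_\ell$. The factor $1+M_C$ was spent twice, because you bounded $K_W$ by $M_R/(1-\rho)$ before multiplying by $M_C$. Write $a=1+M_B$, $c=1+M_C$, $d=1+M_\Delta$, $e=1+M_\Delta+M_A$, so that $M_R=acde$. Instead bound $M_{K_W}\lesssim ade/(1-\rho)$, absorbing the non-recurrent terms $\lesssim a+d$ via $1-\rho\le 1$. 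Then $H_R+M_CJ_R\lesssim ad/(1-\rho)+M_C\,ade/(1-\rho)^2\lesssim M_R/(1-\rho)^2$.

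Step 3 is left as a plan, and the route you sketch is aimed at the wrong object. You propose to re-differentiate the recurrence and sum $\sum_s(t-s)^2\rho^{t-s}$. But what must scale as $M_R^2/(1-\rho)^4$ is the specific constant $L_{\mathrm{SSM},R}$ that appears in the theorem's bound and its step-size conditions. A fresh Hessian estimate from a different derivation would not by itself control that constant.

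The constant is explicit in \eqref{eq:L_ssm_definition}, and bounding it term by term removes the obstacle you anticipate. We have $M_{A_{1,R}}\lesssim e$, $M_{A_{2,R}}\lesssim e^2$, and $M_{B_{2,R}}\lesssim a$, so
\begin{align*}
M_{H_{2,R}}&\lesssim\frac{1}{1-\rho}\Bigl(e^2\,\frac{ad}{1-\rho}+e\,\frac{ade}{(1-\rho)^2}+a\Bigr)\lesssim\frac{ade^2}{(1-\rho)^3},\\
L_{\mathrm{SSM},R}&\lesssim\frac{M_R^2}{(1-\rho)^4}+\frac{M_C\,ade^2}{(1-\rho)^3}+\frac{ade}{(1-\rho)^2}\lesssim\frac{M_R^2}{(1-\rho)^4}.
\end{align*}
The last step uses $a,c,d\ge 1$, hence $acde^2\le M_R^2$. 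The $A$--$W_\Delta$ cross terms cost exactly one extra factor of $e$, which $M_R^2$ already supplies, so no higher envelope powers arise. With these two fixes, your Step 4 coincides with the paper's proof.
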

\endgroup

\textit{Proof sketch and discussion.} Standard FedAvg convergence bounds assume smoothness and bounds on stochastic gradient variance and client-gradient disagreement. In contrast, our analysis derives these quantities from the selective SSM recurrence. We first bound the hidden states and their parameter derivatives to obtain the gradient bound. The second-order analysis then establishes local smoothness, while the remaining lemmas translate the gradient bound into bounds on stochastic variance and client disagreement. Together, these results yield a convergence rate whose model dependence is explicit in the parameter envelopes $M_B,M_C,M_\Delta,M_A$ and the stability factor $\rho$, with dimensions and input and loss regularity constants fixed. As a result, the corollary identifies which SSM parameters enter the convergence guarantee without requiring a separately assumed smoothness or gradient-dissimilarity constant for the model.

The derived bound distinguishes sequence length from recurrent memory: the gradient and smoothness estimates contain no explicit sequence-length factor, but increase as the recurrent stability factor $\rho$ approaches one, allowing past state information to persist for longer. This dependence is stronger in the additional local-drift terms, which contain $M_R^6/(1-\rho)^{12}$, as opposed to the baseline stochastic term, which contains $M_R^4/(1-\rho)^8$. Consequently, the bound assigns a larger penalty to performing multiple local updates when $\rho$ is close to one. However, $\rho$ alone does not determine this penalty: even for the same $\rho$ and client distributions, larger parameter envelopes increase the coefficient multiplying $\mathcal H_{\mathrm{TV}}$. The analysis therefore separates the amount of distribution mismatch from the influence of the SSM parameters on its contribution to the convergence bound. This heterogeneity contribution vanishes at $E=1$, but does not decrease with batch size when $E>1$. Our synthetic experiments observe increasing client disagreement as $\rho$ approaches one, supporting this qualitative connection without establishing the worst-case exponents.

Furthermore, the radius $R$ determines how the broadcast model’s weight norms are enlarged to obtain bounds valid throughout local training: each block satisfies $\|W_X\|_2\leq\|W_X^r\|_2+R$. As such, allowing greater client displacement increases the parameter envelopes entering the convergence bound, even when $\rho$ remains unchanged. Additional local steps can therefore affect the guarantee through both the explicit dependence on $E$ and the larger envelopes needed to cover the resulting trajectories. These trajectories must also remain within a neighborhood satisfying $\mathcal N_R(W^r)\subseteq\mathcal S_{\gamma,\delta}$. The FedProx extension in Appendix~\ref{appendix:fed_algs_convergence_bounds_proofs} demonstrates how restricting client movement changes the guarantee: its effective radius $R_\mu\leq R$ permits smaller envelopes $M_X^\mu=M_X-R+R_\mu$, reducing the SSM-dependent factors multiplying noise and heterogeneity. However, the proximal term also increases local smoothness by \(\mu\). The resulting bound therefore exposes a trade-off between tighter control of the SSM parameters and the additional smoothness cost of proximal regularization.

%% file: section_files/experiments.tex
\section{Experiments}\label{sec:experiments}

In this section, we empirically evaluate the derived bounds and their implications for distributed SSM training in two parts: (i) direct evaluation of gradient and smoothness bounds using sequences generated by frozen SSM teachers, and (ii) Mamba2 language modeling experiments with nine federated learning algorithms. In the latter, we compare theory-motivated expectations with observed performance illustrating the practical utility of the derived bounds. Details on our experimental setup are included in Appendix~\ref{appendix:detailed_exp_setup}, with supplementary experimental results available in Appendix~\ref{appendix:additional_experimental_results}.

\textbf{Validation of Derived Bounds.} We consider autoregressive token sequences generated by a frozen one-layer selective SSM parameterized by $W=\left(W_\Delta,W_B,W_C,A\right)$ analyzed in Theorem~\ref{thm:ssm_grad_bound}. This setup follows the formulation of Section~\ref{sec:system_model} exactly, with the teacher SSM generating one-hot token ($L_x = 1$) sequences of length $S=64$ for different combinations of $D,N$ and stability limits $\rho$, yielding $72$ distinct datasets. For each such dataset, we train a learner SSM on $4096$ sequences using projected SGD with softmax cross-entropy loss ($L_\ell=\sqrt2$), with the model updates inside $\mathcal S_{\gamma,\delta}$. We check test-sequence gradients at five learner and one teacher checkpoints against the theorem's $\mathcal G_{\mathrm{SSM}}$. We also use $\mathcal G^{\mathrm{tight}}_{\mathrm{SSM}}$, tightened via realized stability, one-hot input bounds, and finite-history sums. We also evaluate the smoothness bound and examine the drift and disagreement quantities entering the FedAvg and FedProx analysis. Appendix~\ref{appendix:synthetic_results} provides additional validation results. 

\begin{table}[h]
\centering
\scriptsize
\caption{\textbf{Gradient and smoothness bounds on the synthetic bank: 1,411,776 evaluations, no violations detected.} Maximum utilization is the largest measured-to-derived ratio, with $a/b$ denoting configured/tightened constants. Evaluations count test sequences at each checkpoint (first row), sequence-checkpoint pairs (curvature rows), or consecutive-checkpoint pairs (trajectory rows).}
\label{tab:synthetic_bounds}
\begin{tabular}{llrrr}
\toprule
Statement & Inequality checked & Evaluations & Violations & Max.\ utilization \\
\midrule
Thm.~\ref{thm:ssm_grad_bound} & $\|\nabla\mathcal{L}\|_F\le\mathcal{G}_{\mathrm{SSM}}$, per sequence & 1,327,104 & 0 / 0 & $1.0\times10^{-3}$ / $8.3\times10^{-3}$ \\
Lemma~\ref{lem:ssm_local_smoothness} & $\widehat{\|\nabla^2\mathcal L\|_2}\le L_{\mathrm{SSM},0}$ & 41,472 & 0 & $7.0\times10^{-5}$ \\
Lemma~\ref{lem:ssm_local_smoothness} & finite-difference ratio $\le L_{\mathrm{SSM},0.001}$ & 41,472 & 0 & $8.2\times10^{-5}$ \\
Lemma~\ref{lem:ssm_local_smoothness} & $\|\nabla\mathcal{L}(W_a)-\nabla\mathcal{L}(W_b)\|_F\le L_{\mathrm{SSM},R}\|W_a-W_b\|_F$ & 864 & 0 & $1.0\times10^{-5}$ \\
Lemma~\ref{lem:local_lipschit_SSM_grad_bound} & $|\mathcal{G}_{\mathrm{SSM}}(W_a)-\mathcal{G}_{\mathrm{SSM}}(W_b)|\le L_{\mathcal{G}_\mathrm{SSM},R}\|W_a-W_b\|_F$ & 864 & 0 & $0.42$ \\
\bottomrule
\end{tabular}
\end{table}

\begin{figure}[t]
\centering
\begin{subfigure}[t]{0.28\linewidth}
  \centering\includegraphics[width=\linewidth]{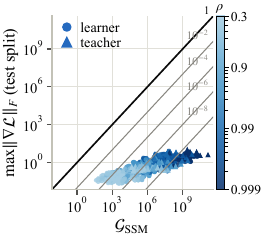}
  \caption{Gradient versus bound}\label{fig:synthetic_a}
\end{subfigure}\hfill
\begin{subfigure}[t]{0.225\linewidth}
  \centering\includegraphics[width=\linewidth]{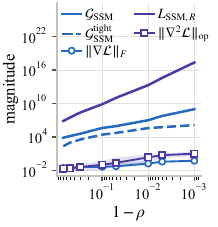}
  \caption{Growth with memory}\label{fig:synthetic_b}
\end{subfigure}\hfill
\begin{subfigure}[t]{0.225\linewidth}
  \centering\includegraphics[width=\linewidth]{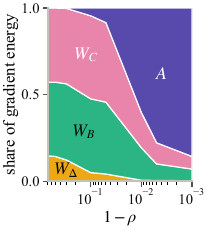}
  \caption{Grad. energy by $W_X$}\label{fig:synthetic_c}
\end{subfigure}\hfill
\begin{subfigure}[t]{0.25\linewidth}
  \centering\includegraphics[width=\linewidth]{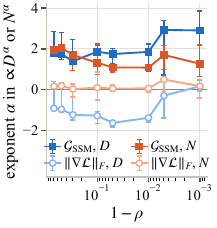}
  \caption{Exponents in $D$ and $N$}\label{fig:synthetic_d}
\end{subfigure}
\caption{\textbf{Single-layer bounds on the SSM teacher-generated datasets.} (\subref{fig:synthetic_a})~Across 1,296 learner and teacher parameter points, every measured per-sequence gradient norm lies below the configured $\mathcal G_{\mathrm{SSM}}$. (\subref{fig:synthetic_b})~At training end, $\mathcal G_{\mathrm{SSM}}$ and $L_{\mathrm{SSM},R}$ scale as $(1-\rho)^{-1.8}$ and $(1-\rho)^{-3.6}$, versus $(1-\rho)^{-0.5}$ and $(1-\rho)^{-1.0}$ for measured gradient and estimated Hessian norms. (\subref{fig:synthetic_c})~As $\rho\to1$, gradient energy shifts from $W_B,W_C$ toward the transition block $A$. (\subref{fig:synthetic_d})~One-factor fits estimate the power-law exponent $\alpha$ for the derived (filled) and measured (open) gradient norms as vocabulary size $D$ (blue) or state size $N$ (orange) varies, with bootstrap 95\% intervals.}
\label{fig:synthetic}
\vspace{-5mm}
\end{figure}

\textbf{Results and Discussion.} No violations were detected in 1,411,776 gradient, smoothness and bound-variation checks (Figure~\ref{fig:synthetic_a}, Table~\ref{tab:synthetic_bounds}), or in 103,200 federated checks across 1,200 FedAvg/FedProx runs on 72 teacher datasets. The federated runs explore step sizes beyond the theorems' range. The results also reveal informative patterns.

\textit{Memory amplifies gradient and curvature sensitivity.} Across all generated datasets, the measured gradient and estimated Hessian norms grow approximately as $(1-\rho)^{-0.5}$ and $(1-\rho)^{-1.0}$, respectively, while their bounds grow as $(1-\rho)^{-1.8}$ and $(1-\rho)^{-3.6}$, as seen in Figure~\ref{fig:synthetic_b}. Although the bounds grow more steeply, the curvature exponent is twice the gradient exponent for both the measured quantities and the certified constants. These trends are consistent with the recurrent-sensitivity structure of Theorem~\ref{thm:ssm_grad_bound} and Lemma~\ref{lem:ssm_local_smoothness}. The shared sensitivity factor $L_x(H_R+M_CJ_R)$ enters the gradient bound linearly and the first term of the smoothness bound quadratically in~\eqref{eq:L_ssm_definition}. The approximately doubled fitted curvature exponent mirrors this linear-versus-quadratic dependence, although the observed growth is substantially milder than that of the bounds.

\textit{Recurrent sensitivity and block-wise gradients.} The recurrent-Jacobian term, which propagates parameter perturbations through the transition $\bar A_t$, carries at least $99\%$ of $\mathcal G_{\mathrm{SSM}}$ at every $\rho$. The measured gradient energy (the share of $\|\nabla_W\mathcal L\|_F^2$ carried by each block) shifts from $W_B,W_C$ ($86\%$ at $\rho=0.3$) to $A$ ($85\%$ at $\rho=0.999$, Figure~\ref{fig:synthetic_c}). Dimension fits (Figure~\ref{fig:synthetic_d}) both vary with $\rho$ and give bound exponents of $1.4$-$2.9$ in $D$ and $1.1$-$2.0$ in $N$, reflecting the $N\,D^{2}$-type factors in $K_W$.

Appendix~\ref{appendix:synthetic_results} examines the quantities entering the convergence analysis. The observed $E^{0.87}$ growth of the $\|W_k^{r,E}-W^r\|_F$ drift is consistent with the linear accumulation bound $\eta E\mathcal G_{\mathrm{SSM},R}$ used in the FedAvg analysis. Increasing client disagreement as $\rho\to1$ likewise agrees qualitatively with the SSM-sensitivity factor $\mathcal G_{\mathrm{SSM},R}^2$ in Lemma~\ref{lem:ssm_gamma_tv_bound}, which bounds disagreement by $4\mathcal G_{\mathrm{SSM},R}^2\mathcal H_{\mathrm{TV}}$. Across the bank, variation in log-utilization $\log_{10}(\|\nabla\mathcal L\|_F/\mathcal G_{\mathrm{SSM}})$ is dominated by the memory factor $\rho/(1-\rho)^2$. Its signed covariance shares are $93\%$ for log-bound variance and $102\%$ for log-utilization variance, the latter partly offset by measured gradient growth. Dimension and norm factors contribute less (Figure~\ref{fig:app_drivers}). This identifies the main source of variation in tightness, not absolute looseness.

\textit{Limitations and outlook.} The experiments did demonstrate some limitations of our derived bounds, mostly how conservative their values are in absolute terms, and how parameter-envelope growth with radius $R$ over $T$ rounds and saturation of the sequence-level total-variation metric affect the looseness. Nevertheless, the observations agree qualitatively with the analysis's stability, sensitivity, and drift dependencies: no violations were detected, gradients and curvature grow with memory, blockwise gradients shift toward $A$, and drift grows with local steps. The recurrent factor $\rho/(1-\rho)^2$ dominates variation in tightness across the bank, with smaller contributions from projection and input factors. As such, they point to concrete future improvements: data-dependent constants that use the realized stability margin and the finite sequence length, envelopes that track the actual local trajectories rather than the worst case over the whole radius $R$, and a token-level dissimilarity that would make the heterogeneity term sensitive to the data partition. The Mamba2 experiments that follow examine whether the same drift and heterogeneity mechanisms help interpret practical federated training.

\textbf{Practical Federated Optimization.} We train randomly initialized Mamba2~\citep{dao2024transformers} models of depth $4$ and $16$ across $K=4$ clients for $T=50$ rounds with $E\in\{64,128,256\}$ local steps and heterogeneity $\beta\in\{0,0.7,1\}$ on six RedPajama~\citep{together2023redpajama} domains under four domain assignments ($\beta=0$ gives every client the same domain mix, $\beta=1$ one domain each). Nine algorithms are compared over $72$ configurations and three seeds using mean last-ten-round validation perplexity, which also selects their hyperparameters  (see Appendix~\ref{appendix:detailed_exp_setup} for details).

\textbf{Theory-Guided Expectations.} An update rule affects radius $R$ (in $\mathcal G_{\mathrm{SSM},R},L_{\mathrm{SSM},R}$), a heterogeneity term $\propto\eta^2L_{\mathrm{SSM},R}^2\mathcal G_{\mathrm{SSM},R}^2E(E-1)\mathcal H_{\mathrm{TV}}$, and the server step to $W^{r+1}$. For SGD, FedProx bounds drift by $R_\mu\leq\eta E\mathcal G_{\mathrm{SSM},R}$ (Theorem~\ref{cor:fedprox_ssm_convergence} in Appendix~\ref{appendix:fed_algs_convergence_bounds_proofs}), at the cost of $\mu$ in $L_{\mathrm{prox},R_\mu}$, motivating benefits from tuned proximal regularization. SCAFFOLD and FedDyn can reduce corrected client disagreement when corrections track client-gradient differences, though benefits may be smaller when disagreement is small. FedExP, FedAdam and DiLoCo rescale server updates and may enlarge neighborhoods, while the norm-preserving FedAlign and small-perturbation FedSAM may behave like FedAvg, without a shared convergence guarantee (Appendix~\ref{appendix:additonal_distributed_learning_frameworks}). Table~\ref{tab:practical_ranking} summarizes these expectations for the AdamW-based Mamba2 experiments motivated by the insights provided by our analysis. 

\begin{table}[t]
\centering
\footnotesize
\setlength{\tabcolsep}{3.5pt}
\begin{tabular}{llcccc}
\toprule
& & \multicolumn{2}{c}{Expected} & \multicolumn{2}{c}{$\Delta$PPL\,(\%)} \\
\cmidrule(lr){3-4}\cmidrule(lr){5-6}
Method & Theory-linked mechanism & $\beta=0$ & $\beta=1$ & $\beta=0$ & $\beta=1$ \\
\midrule
FedProx~\citep{MLSYS2020_1f5fe839} & radius, $R_\mu\leq R$ & $-$ & $-$ & $-17.0$ & $-34.4$\\
SCAFFOLD~\citep{pmlr-v119-karimireddy20a} & corrected disagreement & $+$ & $-$ & $+4.0$  & $-19.4$ \\
FedDyn~\citep{acar2021federated} & heterogeneity; radius (proximal $\alpha$) & $-$ & $-$ & $-2.2$ & $-35.4$ \\
FedAlign~\citep{11099331} & state-coordinate alignment & $=$ & $=$ & $-0.3$ & $-1.2$\\
FedSAM~\citep{pmlr-v162-qu22a} & expanded evaluation neighborhood & $=$ & $=$ & $-0.4$  & $-1.7$ \\
FedAvg~\citep{mcmahan2017communication} & reference & & & $0.0$ & $0.0$ \\
FedExP~\citep{jhunjhunwala2023fedexp} & adaptive extrapolation, $\eta_g\geq1$ & $=$ & $+$ & $-0.4$  & $+14.6$  \\
DiLoCo-style~\citep{douillard2024diloco} & server step, outer momentum & $+$ & $+$ & $+2.0$ & $+1.5$ \\
FedAdam~\citep{reddi2021adaptive} & server step $\alpha_s/\tau_s$ & $+$ & $+$ & $+17.2$  & $+0.6$ \\
\bottomrule
\end{tabular}
\caption{\textbf{Theorem-motivated expectations versus observed behavior on Mamba2.} $-$/$=$/$+$ denote expected lower/similar/higher validation perplexity relative to FedAvg, motivated by the listed mechanisms. $\Delta$PPL is the mean percentage change across the $24$ configurations at each $\beta$. }
\label{tab:practical_ranking}
\vspace{-6mm}
\end{table}

\textbf{Results and Discussion.} Across all $72$ configurations and all three random seeds, the Pearson correlation between the expected performance direction (encoded as $-1$, $0$, and $+1$ for lower, similar, and higher perplexity, respectively) and the observed log-perplexity change relative to FedAvg is $0.71$, increasing to $0.81$ when restricted to $\beta = 1.0$. For the two heterogeneity regimes $\beta\in\{0,1\}$, the percentage changes in validation perplexity relative to FedAvg reported in Table~\ref{tab:practical_ranking}, averaged over the $24$ configurations at each $\beta$ and all three seeds, agree with $13$ of the $16$ expected lower/similar/higher outcomes. FedProx's improvement over FedAvg is consistent with the potential benefit of tighter SSM envelopes identified in Theorem~\ref{cor:fedprox_ssm_convergence} in Appendix~\ref{appendix:fed_algs_convergence_bounds_proofs}. SCAFFOLD's improvement in $23/24$ configurations at $\beta=1$, compared with deterioration in $22/24$ at $\beta=0$, aligns with the disagreement-correction interpretation of the FedAvg bound developed in Appendix~\ref{appendix:additonal_distributed_learning_frameworks}. Together, these results support using the theorem's displacement and disagreement terms to interpret the observed dependence of algorithm performance on client heterogeneity.

%% file: appendix.tex
\section*{Appendix}

\startcontents[sections]
\printcontents[sections]{l}{1}{\setcounter{tocdepth}{2}}
\clearpage

\section{Selective SSM Gradient Bound Proof}\label{appendix:gradient_norm_bound_proof}
In this section, we derive an upper bound on the gradient norm of the loss function for some admissible input sequence processed by a single-layer selective state space model (SSM). 

Fix a parameter tuple $W \in \mathcal S_{\gamma, \delta}$ satisfying the assumptions of Theorem~\ref{thm:ssm_grad_bound}. Let $S \geq 1$ be an arbitrary integer and $\xi = (x, y^*) \in \mathcal Z$ an arbitrary admissible sequence pair of length $S$. We use the SSM dynamics from Section~\ref{sec:system_model} and the notational conventions from section~\ref{subsec:ssm_gradient_bounds}.

Given these definitions, we examine the sensitivity of the loss with respect to the learnable model parameter tuple
\begin{equation}
    W := (W_\Delta, W_B, W_C, A).
\end{equation}

Throughout the proof, gradients with respect to $W$ are understood as block gradients, i.e., 
\begin{equation}
    \|\nabla_W f\|_F := \left(\|\nabla_{W_\Delta} f\|_F^2+\|\nabla_{W_B} f\|_F^2+\|\nabla_{W_C} f\|_F^2+\|\nabla_A f\|_F^2\right)^{1/2}.
\end{equation}

Next, let $\theta = \operatorname{vec}(W)$ concatenate the vectorized parameter blocks. Scalar gradients are column vectors after vectorization:
\begin{equation*}
    \nabla_\theta f = \operatorname{vec}\left(\nabla_W f\right)
\end{equation*}
For a vector- or matrix-valued function $F$, $\partial F/\partial W$ denotes its flattened block Jacobian, with output entries indexing rows and scalar parameters indexing columns:
\begin{equation*}
    \left\|\frac{\partial F}{\partial W} \right\|_F^2 := \sum_{\alpha}\left\|\frac{\partial F}{\partial \theta_{\alpha}}\right\|_F^2
\end{equation*}
In product-rule expressions, products and transposes involving matrix-valued derivatives are applied separately to each scalar-parameter slice before flattening. Since $h_t$ does not depend on $W_C$, its $W_C$-Jacobian block is zero.

For clarity, in this section we omit the client index $k$ and focus on bounding the gradient norm for a single client processing a sequence of length $S$.

For the fixed sample $\xi$, using the simplified notation introduced in Section~\ref{sec:system_model}, the sequence loss is:
\begin{equation*}
    \mathcal L^S(W) = \frac{1}{S}\sum_{t=1}^S \ell_t(y_t)
\end{equation*}
where $\ell_t(y_t):=\ell_t(y_t(W;x), y^*_t)$ represents the instantaneous loss between the predicted and target tokens at the $t$-th place in the sequence $S$, and we omit the dependence on a fixed sample $\xi$. 

The goal of this derivation is to bound the norm of the gradient $\nabla_W \mathcal L^S(W)$. By the linearity of the gradient operator, it then follows that:
\begin{equation}
    \nabla_W \mathcal L^S(W) = \frac{1}{S} \sum_{t=1}^S \nabla_W \ell_t(y_t)
\end{equation}

We then apply the chain rule to the token loss $\ell_t(y_t)$, where $\nabla_y\ell_t(y_t)$ is a column vector. Noting that $y_t$ depends on $W$ both directly through the output projection vector $C_t$ and indirectly through the hidden state $h_t$, we obtain: 
\begin{align}
    \nabla_\theta\ell_t(y_t) &= \left(\frac{\partial y_t}{\partial W}\right)^\top\nabla_y\ell_t(y_t) \nonumber\\ &= \left(\frac{\partial(h_t^\top C_t)}{\partial W}
    \right)^\top\nabla_y\ell_t(y_t) \nonumber\\ &= \left(\frac{\partial h_t^\top}{\partial W}C_t + h_t^\top\frac{\partial C_t}{\partial W}\right)^\top\nabla_y\ell_t(y_t).
\end{align}
Here, the product-rule terms inside the parentheses are formed separately for each scalar-parameter slice and constructed as the columns of the output Jacobian before taking the outer transpose. The first term contributes only to the $W_\Delta$, $W_B$, and $A$ blocks, while the second term contributes only to the $W_C$ block.

Under the described block-gradient convention, the schematic chain-rule decomposition gives:
\begin{equation}
    \|\nabla_W\ell_t(y_t)\|_F = \|\operatorname{vec}(\nabla_W\ell_t(y_t))\|_2 = \|\nabla_\theta\ell_t(y_t)\|_2  =  \left\|\left(\frac{\partial h_t^\top}{\partial W}C_t + h_t^\top\frac{\partial C_t}{\partial W}\right)^\top\nabla_y\ell_t(y_t)\right\|_2.
\end{equation}

Next, applying the matrix-vector norm inequality to each scalar-parameter slice and summing the squared bounds gives:
\begin{equation} 
    \left\|\frac{\partial h_t^\top}{\partial W}C_t\right\|_F \leq \left\|\frac{\partial h_t}{\partial W}\right\|_F \|C_t\|_2.
\end{equation}

Using this bound, the triangle inequality, and the fact that $\|M^\top v\|_2\leq \|M\|_F\|v\|_2$, we get:
\begin{align}
    \|\nabla_W\ell_t(y_t)\|_F &\leq \left\|\left(\frac{\partial h_t^\top}{\partial W}C_t\right)^\top\nabla_y\ell_t(y_t)\right\|_2 +\left\|\left(\frac{\partial C_t}{\partial W}\right)^\top h_t\nabla_y\ell_t(y_t)\right\|_2 \nonumber \\ &\leq \|\nabla_y\ell_t(y_t)\|_2 \left\|\frac{\partial h_t}{\partial W}\right\|_F \|C_t\|_2 + \left\|\left(\frac{\partial C_t}{\partial W}\right)^\top h_t\nabla_y\ell_t(y_t) \right\|_2.
    \label{eq:norm_ineq_first_appendix}
\end{align}

Next, utilizing Assumptions \ref{assum:lipschitz_cont_loss} and \ref{assum:bounded_inputs}, we bound the two terms of this expression separately.

First, since $C_t = W_Cx_t$, we have:
\begin{equation}
    \|C_t\|_2 \leq \|W_C\|_2\|x_t\|_2 \leq \|W_C\|_2L_x
\end{equation}
Therefore,
\begin{equation}\label{eq:first_part_init_bound}
    \left\|\nabla_y\ell_t(y_t)\right\|_2
    \left\|\frac{\partial h_t}{\partial W}\right\|_F
    \|C_t\|_2
    \leq
    L_\ell L_x\|W_C\|_2
    \left\|\frac{\partial h_t}{\partial W}\right\|_F 
\end{equation}

For the second term, since $h_t$ does not depend on $W_C$ and $x_t$ is fixed, equation~\eqref{eq:ssm_dynamics_y_appendix} gives:
\begin{equation}
    \nabla_{W_C}\ell_t(y_t) = h_t\nabla_y\ell_t(y_t)x_t^\top.
\end{equation}

Furthermore, since $C_t=W_Cx_t$, the Jacobian $\partial C_t/\partial W$ is zero in the $W_\Delta$, $W_B$, and $A$ parameter blocks, and nonzero only for the $W_C$ block. As a result, $(\partial C_t/\partial W)^\top h_t\nabla_y\ell_t(y_t)$ contains the vectorized $W_C$-block gradient and zeros in all other parameter blocks, yielding:
\begin{align}
    \left\|\left(\frac{\partial C_t}{\partial W}\right)^\top h_t\nabla_y\ell_t(y_t) \right\|_2 &= \|\nabla_{W_C}\ell_t(y_t)\|_F = \|h_t\nabla_y\ell_t(y_t)x_t^\top\|_F.
\end{align}

This can be bounded as:
\begin{align}
    \left\| h_t\nabla_y\ell_t(y_t)x_t^\top\right\|_F &= \left\|h_t\nabla_y\ell_t(y_t)\right\|_2\|x_t\|_2  \leq \left\|h_t\nabla_y\ell_t(y_t)\right\|_2L_x \\ &\leq \left\|h_t\right\|_F\left\|\nabla_y\ell_t(y_t)\right\|_2L_x \leq L_xL_\ell \left\|h_t\right\|_F
\end{align}

By combining this result with \eqref{eq:first_part_init_bound}, we can rewrite inequality  \eqref{eq:norm_ineq_first_appendix} as:
\begin{equation}
    \left \|\nabla_W \ell_t(y_t)\right\|_F \leq L_\ell L_x\left(\left\|h_t\right\|_F+\left\|W_C\right\|_2\left\| \frac{\partial h_t}{\partial W}\right\|_F\right)
\end{equation}

Next, for the remaining two terms, we first evaluate the bound on the hidden state matrix $h_t \in \mathbb{R}^{N \times D}$. Given an initialization of $h_0 = \mathbf{0}$ (which is standard when processing independent training sequences), unrolling the recurrence from equation \eqref{ssm_dynamics_h_appendix} yields:
\begin{equation}
    h_t = \sum_{j=1}^t \left( \bigodot_{k=j+1}^t \bar{A}_k \right) \odot \left( \bar{B}_j \operatorname{diag}(x_j) \right)
\end{equation}
where we define the empty cumulative Hadamard product as the all-ones matrix,  $\bigodot_{k=t+1}^t \bar{A}_k = \mathbf{1}_{N \times D}$. Then, taking the Frobenius norm of both sides and applying the triangle inequality, we get:
\begin{equation}
    \|h_t\|_F \leq \sum_{j=1}^t \left\| \left( \bigodot_{k=j+1}^t \bar{A}_k \right) \odot \left( \bar{B}_j \operatorname{diag}(x_j) \right) \right\|_F = \sum_{j=1}^t \left\| \exp\left(A\sum_{k=j+1}^t\Delta_k^\top\right) \odot \left( \bar{B}_j \operatorname{diag}(x_j) \right) \right\|_F
\end{equation}
Utilizing the property of Hadamard product with Frobenius norms, the fact that $\left\|\bar{B}_j \operatorname{diag}(x_j)\right\|_F \leq \left\|  \bar{B}_j \right\|_2 \left\|\operatorname{diag}(x_j)\right\|_F \leq \left\|  \bar{B}_j \right\|_2L_x$, and the definition of $\bar{B}_t$, it follows that:
\begin{equation}
    \left\| \exp\left(A\sum_{k=j+1}^t\Delta_k^\top\right) \odot \left( \bar{B}_j \operatorname{diag}(x_j) \right) \right\|_F \leq \left\| \exp\left(A\sum_{k=j+1}^t\Delta_k^\top\right)\right\|_F \left\|  W_Bx_j \Delta_j^\top \right\|_2 L_x 
\end{equation}
Then:
\begin{equation}
    \left\|  W_Bx_j \Delta_j^\top \right\|_2 L_x  \leq \left\|  W_B\right\|_2\left\|x_j\right\|_2 \left\|\Delta_j^\top \right\|_2 L_x \leq \left\|  W_B\right\|_2 \left\|\text{Softplus}\left(W_\Delta x_j\right)\right\|_2 L_x^2
\end{equation}
And since $\text{Softplus}(a) \leq |a|+\log(2)$ so $\left\|\text{Softplus}(\mathbf{x})\right\|_2 \leq \left\||\mathbf{x}| + \log(2)\mathbf{1}\right\|_2\leq \|\mathbf{x}\|_2+ \sqrt{D}\log(2)$, it follows that $\left\|\text{Softplus}\left(W_\Delta x_j\right)\right\|_2 \leq \left\|\left|W_\Delta x_j \right|+ \log(2)\textbf{1}\right\|_2$, and from the triangle inequality, this results in $\left\|\left|W_\Delta x_j\right| + \log(2)\textbf{1}\right\|_2 \leq \left\|W_\Delta\right\|_2L_x + \sqrt{D}\log(2)$

So the bound becomes:
\begin{equation}
    \left\|  W_Bx_j \Delta_j^\top \right\|_2 L_x  \leq \left\|W_B\right\|_2 L_x^2 \left(\left\|W_\Delta\right\|_2L_x + \sqrt{D}\log(2)\right)
\end{equation}

Since these terms are independent of $j$, it follows that:
\begin{equation}
    \|h_t\|_F \leq \left\|W_B\right\|_2 L_x^2 \left(\left\|W_\Delta\right\|_2L_x + \sqrt{D}\log(2)\right) \sum_{j=1}^t\left\| \exp\left(A\sum_{k=j+1}^t\text{Softplus}\left(W_\Delta x_k\right)^\top\right)\right\|_F
\end{equation}

Next, the term $\frac{\partial h_t}{\partial W}$ can be expanded from equation \eqref{ssm_dynamics_h_appendix} as:
\begin{equation}
    \frac{\partial h_t}{\partial W} = \frac{\partial \bar A_t}{\partial W} \odot h_{t-1} + \bar{A}_t\odot \frac{\partial h_{t-1}}{\partial W} + \frac{\partial \bar B_t}{\partial W} \operatorname{diag}(x_t)
\end{equation}
We bound each term individually, using the fact that:
\begin{equation}
    \left\|\frac{\partial h_t}{\partial W}\right\|_F \leq \left\|\frac{\partial h_t}{\partial W_\Delta}\right\|_F + \left\|\frac{\partial h_t}{\partial W_B}\right\|_F + \left\|\frac{\partial h_t}{\partial W_C}\right\|_F + \left\|\frac{\partial h_t}{\partial A}\right\|_F
\end{equation}
where the $W_C$-dependent term is zero.

By the triangle inequality:
\begin{equation}
    \left\|\frac{\partial h_t}{\partial W}\right\|_F \leq \left\|\frac{\partial \bar A_t}{\partial W} \odot h_{t-1}\right\|_F + \left\|\bar{A}_t\odot \frac{\partial h_{t-1}}{\partial W}\right\|_F + \left\|\frac{\partial \bar B_t}{\partial W} \operatorname{diag}(x_t)\right\|_F
\end{equation}
Then, we bound each term. First:
\begin{align}
    \left\|\frac{\partial \bar B_t}{\partial W} \operatorname{diag}(x_t)\right\|_F &\leq \left\|\frac{\partial \bar B_t}{\partial W}\right\|_F\left\|\operatorname{diag}(x_t)\right\|_F \leq L_x \left(\left\|\frac{\partial \bar B_t}{\partial W_B}\right\|_F+\left\|\frac{\partial \bar B_t}{\partial W_\Delta}\right\|_F\right) \nonumber \\
    & \leq L_x\left(\sqrt{N}\left\|x_t\right\|_2\left\|\Delta_t\right\|_2 + \left\|W_B x_t\right\|_2 \left\|\frac{\partial \Delta_t}{\partial W_\Delta}\right\|_F \right) \nonumber \\
    & \leq L_x\left(\sqrt{N}L_x\left(\left\|W_\Delta\right\|_2L_x + \sqrt{D}\log(2)\right) + \sqrt{D}\left\|W_B\right\|_2 L_x^2 \right) \label{eq:B_t_bound},
\end{align}
where we used the fact that $\operatorname{Softplus}'(z)= \sigma(z) \leq 1$, so since $\Delta_t=\operatorname{Softplus}(W_\Delta x_t)$, it is true that $\left\|\frac{\partial \Delta_t}{\partial W_\Delta}\right\|_F \leq \sqrt D \|x_t\|_2 \leq \sqrt D L_x.$
    
Next:
\begin{align}
    \left\|\frac{\partial \bar A_t}{\partial W} \odot h_{t-1}\right\|_F &\leq \left\|\frac{\partial \bar A_t}{\partial W} \right\|_F\left\| h_{t-1}\right\|_F \leq \left( \left\|\frac{\partial \bar A_t}{\partial A} \right\|_F+ \left\|\frac{\partial \bar A_t}{\partial W_\Delta} \right\|_F\right)\left\| h_{t-1}\right\|_F\nonumber \\ &\leq \left(\left\|\bar{A}_t\right\|_F\left\|\Delta_t\right\|_2+ \left\|\frac{\partial \bar A_t}{\partial \Delta_t} \right\|_F\left\|\frac{\partial \Delta_t}{\partial W_\Delta} \right\|_F\right)\left\| h_{t-1}\right\|_F \nonumber \\ &\leq \left(\left\|\bar{A}_t\right\|_F\left\|\Delta_t\right\|_2+ \sqrt{D}\left\|A\right\|_2 \left\|\bar{A}_t\right\|_F L_x \right)\left\| h_{t-1}\right\|_F \nonumber \\ &\leq \left\| h_{t-1}\right\|_F\left\|\bar{A}_t\right\|_F \left(\left(\left\|W_\Delta\right\|_2L_x + \sqrt{D}\log(2)\right)+\sqrt{D}\left\|A\right\|_2 L_x\right)
\end{align}

Finally, by definition of $\|M\|_{\max}:=\max_{i,j}|M_{ij}|$, it follows that:
\begin{align}
     \left\|\bar{A}_t\odot \frac{\partial h_{t-1}}{\partial W}\right\|_F \leq  \left\|\bar{A}_t\right\|_{\max}\left\|\frac{\partial h_{t-1}}{\partial W}\right\|_F
\end{align}

Combining all the terms together, we get the following recurrence:
\begin{align}
    \left\|\frac{\partial h_t}{\partial W}\right\|_F &\leq \left\|W_B\right\|_2 L_x^2 \left(\left\|W_\Delta\right\|_2L_x + \sqrt{D}\log(2)\right) \sum_{j=1}^{t-1}\left\| \exp\left(A\sum_{k=j+1}^{t-1}\text{Softplus}\left(W_\Delta x_k\right)^\top\right)\right\|_F \nonumber \\ &\times \left\|\bar{A}_t\right\|_F \left( \left(\left\|W_\Delta\right\|_2L_x + \sqrt{D}\log(2)\right)+\sqrt{D}\left\|A\right\|_2 L_x\right) \nonumber \\ &+ L_x\left(\sqrt{N}L_x\left(\left\|W_\Delta\right\|_2L_x + \sqrt{D}\log(2)\right) + \sqrt{D}\left\|W_B\right\|_2 L_x^2 \right) + \left\|\bar{A}_t\right\|_{\max}\left\|\frac{\partial h_{t-1}}{\partial W}\right\|_F 
\end{align}

Next, define:
\begin{align}
    K_\Delta &:= \left\|W_\Delta\right\|_2L_x + \sqrt{D}\log(2) \\
    K_B &:= \left\|W_B\right\|_2 L_x^2 \\
    \Phi_{j,t} &:= \left\| \exp\left(A\sum_{k=j+1}^t\text{Softplus}\left(W_\Delta x_k\right)^\top\right)\right\|_F
\end{align}

So the recurrence becomes:
\begin{align}
    \left\|\frac{\partial h_t}{\partial W}\right\|_F \leq & \left\|\bar{A}_t\right\|_F K_BK_\Delta\left(K_\Delta + \sqrt{D}\left\|A\right\|_2L_x\right) \sum_{j=1}^{t-1} \Phi_{j,t-1} + L_x^2\sqrt{N}K_\Delta + L_x\sqrt{D}K_B \nonumber \\ &+ \left\|\bar{A}_t\right\|_{\max}\left\|\frac{\partial h_{t-1}}{\partial W}\right\|_F 
\end{align}

Since $h_0 = \mathbf{0}$ is parameter-independent, $\partial h_0 / \partial W = 0$, iterating the inequality gives:
\begin{align}
    \left\|\frac{\partial h_t}{\partial W}\right\|_F \leq \sum_{m=1}^t\Bigg[&\left\|\bar{A}_m\right\|_F K_BK_\Delta\left(K_\Delta + \sqrt{D}\left\|A\right\|_2L_x\right) \left(\sum_{j=1}^{m-1} \Phi_{j,m-1}\right) \nonumber \\ &+ L_x^2\sqrt{N}K_\Delta + L_x\sqrt{D}K_B\Bigg]\prod_{n=m+1}^t\left\|\bar{A}_n\right\|_{\max},
\end{align}
where empty sums are calculated as zero and empty scalar products are calculated as one.

Next, by Definition~\ref{def:ssm_stability}, each entry of $A\Delta_t^\top$ is at most $-\gamma\delta$. Since $\bar A_t=\exp(A\Delta_t^\top)$ element-wise, it follows that:
\begin{equation}
    \left\|\bar{A}_t\right\|_{\max} \leq \exp{\left(-\gamma\delta\right)}:= \rho < 1
\end{equation}
as well as
\begin{equation}
    \left\|\bar{A}_t\right\|_{F} \leq \sqrt{ND}\rho
\end{equation}
and in addition, by the definition of $\Phi_{j, t}$, the following inequality holds:
\begin{equation}
    \Phi_{j, m-1} \leq \sqrt{ND}\rho^{m-1-j}
\end{equation}
As a result, the inner summation becomes a simple geometric series:
\begin{equation}
    \sum_{j=1}^{m-1}\Phi_{j, m-1} \leq \sqrt{ND}\sum_{j=1}^{m-1}\rho^{m-1-j} = \sqrt{ND}\sum_{r=0}^{m-2}\rho^r \leq \frac{\sqrt{ND}}{1-\rho}
\end{equation}
As such, we can group all time-independent terms under a new constant:
\begin{equation}
    K_W := ND K_BK_{\Delta} \left(K_\Delta + \sqrt{D} \left\|A\right\|_2L_x\right)
    \left(\frac{\rho}{1-\rho}\right) + L_x^2 \sqrt{N} K_\Delta + L_x \sqrt{D} K_B
\end{equation}
reducing the recurrence to:
\begin{equation}
     \left\|\frac{\partial h_t}{\partial W}\right\|_F \leq K_W\sum_{m=1}^t \rho^{t-m} \leq \frac{K_W}{1-\rho}
\end{equation}

Finally, combining everything, we get the final bound on the gradient of loss at time $t$:
\begin{equation}
    \left \|\nabla_W \ell_t(y_t)\right\|_F \leq L_\ell L_x\left(K_BK_\Delta\frac{\sqrt{ND}}{1-\rho}+\left\|W_C\right\|_2\frac{K_W}{1-\rho}\right)
\end{equation}

Since $\nabla_W \mathcal L^S(W)=\frac{1}{S}\sum_{t=1}^S\nabla_W\ell_t(y_t)$, by the triangle inequality the same bound holds for the averaged empirical loss:
\begin{equation}
    \left \|\nabla_W \mathcal L^S(W)\right\|_F \leq L_\ell L_x\left(K_BK_\Delta\frac{\sqrt{ND}}{1-\rho}+\left\|W_C\right\|_2\frac{K_W}{1-\rho}\right) = \mathcal G_{\text{SSM}}(W)
\end{equation}

Since $S \geq 1$ and the admissible length-$S$ sequence pairs $\xi \in \mathcal Z$ were arbitrary, and the right-hand side of the inequality is independent of $S$ and $\xi$, the bound holds uniformly over all such lengths and admissible sequence pairs for the fixed parameter tuple $W$, proving Theorem~\ref{thm:ssm_grad_bound}. \qed

\section{Convergence of Federated SSM Baselines}\label{appendix:fed_algs_convergence_bounds_proofs}

In this section, we first define and prove a set of lemmas required to instantiate standard federated optimization convergence results in terms of selective SSM parameters. We then provide the proof of Theorem~\ref{thm:fedavg_ssm_convergence}, and state the corresponding FedProx theorem.

\subsection{Auxiliary Lemmas for SSM Federated Convergence}\label{appendix:aux_lemmas_ssm_federated_convergence}
Rather than proving the FedAvg or FedProx convergence bounds directly, we first show how the generic smoothness, gradient second-moment, stochastic variance, and client heterogeneity quantities appearing in those bounds can be replaced by SSM-specific constants. In particular, we establish local control of the SSM gradient bound in Lemma~\ref{lem:local_lipschit_SSM_grad_bound}, prove the local smoothness constant $L_{\mathrm{SSM},R}$ in Lemma~\ref{lem:ssm_local_smoothness}, and derive the architecture-dependent quantities $G_{\mathrm{SSM},R}^2$, $\sigma_{\mathrm{SSM},R,\Xi}^2$, and $\Gamma_{\mathrm{SSM},R}^2$ in Lemmas~\ref{lem:ssm_fedavg_g_constant}, \ref{lem:ssm_variance_bound}, and \ref{lem:ssm_gamma_tv_bound}, respectively. These lemmas are then used in the remainder of this section to state the final convergence bounds for FedAvg and FedProx distributed learning baselines.

\subsubsection{Local Lipschitz Continuity of the SSM Gradient Bound Lemma and Proof}

\begin{lemma}[Local Lipschitz Continuity of the SSM Gradient Bound]\label{lem:local_lipschit_SSM_grad_bound}
Let $W^r$ be some fixed reference parameter tuple and let $0 < R < \infty$ be a finite radius such that $\mathcal N_R(W^r)\subseteq\mathcal S_{\gamma,\delta}$. Suppose Assumptions~\ref{assum:lipschitz_cont_loss} and~\ref{assum:bounded_inputs} hold with the neighborhood-uniform constants specified in Section~\ref{subsec:theoretical_convergence_standard_federated_baselines}.

Let $\mathcal G_{\mathrm{SSM}}(W)$ be the bounding function defined in Theorem~\ref{thm:ssm_grad_bound}. Then $\mathcal G_{\mathrm{SSM}}$ is Lipschitz continuous on $\mathcal N_R(W^r)$. 

Specifically, with $N,D,L_\ell,L_x,\gamma,\delta$ fixed, there
exists a finite constant
$L_{\mathcal G_{\mathrm{SSM},R}}>0$, depending only
on $R$ and the block norms
$\|W_\Delta^r\|_2$, $\|W_B^r\|_2$,
$\|W_C^r\|_2$, and $\|A^r\|_2$, such that
for all $W,V\in\mathcal N_R(W^r)$,
\begin{equation}
    |\mathcal G_{\mathrm{SSM}}(W)
      -\mathcal G_{\mathrm{SSM}}(V)|
    \leq
    L_{\mathcal G_{\mathrm{SSM},R}}\|W-V\|_F.
\end{equation}
\end{lemma}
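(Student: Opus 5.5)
Looking at $\mathcal G_{\mathrm{SSM}}(W)$ in Theorem~\ref{thm:ssm_grad_bound}, with $N,D,L_\ell,L_x,\gamma,\delta$ (hence $\rho$) fixed, it is a polynomial in the four scalar quantities $b:=\|W_B\|_2$, $c:=\|W_C\|_2$, $s:=\|W_\Delta\|_2$ and $a:=\|A\|_2$. Indeed, $K_\Delta=sL_x+\sqrt D\log 2$ and $K_B=bL_x^2$ are affine. $K_W$ is a polynomial of total degree three in $(b,s,a)$ with nonnegative coefficients involving $\rho/(1-\rho)$. So $\mathcal G_{\mathrm{SSM}}(W)=P(b,c,s,a)$ for a fixed polynomial $P$ with nonnegative coefficients and total degree four; the top-degree terms are of the form $c\,b\,s\,(s+a)$. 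The plan is to factor $W\mapsto\mathcal G_{\mathrm{SSM}}(W)$ as the map $W\mapsto(b,c,s,a)$ followed by $P$. I would then bound the Lipschitz constants of each piece on the neighborhood and multiply them.

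\textbf{Step 1 (norm map is 1-Lipschitz).} For each block $X$, the reverse triangle inequality and $\|\cdot\|_2\le\|\cdot\|_F$ give $|\|W_X\|_2-\|V_X\|_2|\le\|W_X-V_X\|_2\le\|W_X-V_X\|_F$. Summing the squares gives $\|(b,c,s,a)(W)-(b,c,s,a)(V)\|_2\le\|W-V\|_F$ in the block Frobenius norm of Definition~\ref{def:r_local_neighborhood}.

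\textbf{Step 2 (envelopes on the ball).} For $W\in\mathcal N_R(W^r)$, each block obeys $\|W_X\|_2\le\|W_X^r\|_2+R=:M_X$. Hence the image of the neighborhood lies in the box $\mathcal B=\prod_X[0,M_X]$, which is convex.

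\textbf{Step 3 (polynomial is Lipschitz on the box).} I will apply the mean value theorem to $P$ along the segment joining the two image points, which stays inside $\mathcal B$. This gives $|P(u)-P(v)|\le\sup_{\mathcal B}\|\nabla P\|_2\,\|u-v\|_2$. Because $P$ has nonnegative coefficients, each partial derivative is nondecreasing in every argument, so the supremum is attained at the corner $(M_B,M_C,M_\Delta,M_A)$.

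I will compute the partials explicitly. For example, $\partial_c P=L_\ell L_x K_W/(1-\rho)$. The partial $\partial_b P$ collects $L_x^2$ times the coefficient of $K_B$ in both terms, namely $\sqrt{ND}K_\Delta/(1-\rho)$ and $c\,[ND K_\Delta(K_\Delta+\sqrt D aL_x)\rho/(1-\rho)+L_x\sqrt D]/(1-\rho)$. The partials $\partial_s P$ and $\partial_a P$ are obtained similarly. Evaluating these at the corner defines $L_{\mathcal G_{\mathrm{SSM},R}}:=\|\nabla P(M_B,M_C,M_\Delta,M_A)\|_2$, which depends only on $R$ and the reference block norms, as claimed.

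\textbf{Where care is needed.} The only subtle point is that $\mathcal G_{\mathrm{SSM}}$ depends on $W$ solely through the block spectral norms, which are not differentiable. Factoring through Step 1 avoids any differentiation of $\|\cdot\|_2$, and the segment in Step 3 is taken in the image box rather than in parameter space. Convexity of the box is therefore what makes the argument work. One also has to check that $\rho$ stays fixed across the neighborhood; this holds because $\mathcal N_R(W^r)\subseteq\mathcal S_{\gamma,\delta}$ with $\gamma,\delta$ fixed, and $\rho$ enters only as a constant. The rest is routine bookkeeping of the explicit partials.
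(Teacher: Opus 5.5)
Your proof is correct, but it takes a different route from the paper's. The paper never passes to $\mathbb R^4$. It treats each scalar factor ($K_\Delta$, $K_B$, $\|W_C\|_2$, $\|A\|_2$) as Lipschitz with respect to the full $\|W-V\|_F$. It then builds up $P_1=K_BK_\Delta$, $P_2=K_\Delta+\sqrt D\|A\|_2L_x$, $P_3=P_1P_2$, $K_W$ and $P_4=\|W_C\|_2K_W$ using the sum rule and the product rule $L_{fg}=M_fL_g+M_gL_f$, with the same envelopes $M_X=\|W_X^r\|_2+R$ that you use.

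Every factor is a polynomial in $(b,c,s,a)$ with nonnegative coefficients. Unwinding the paper's recursion therefore gives exactly $\sum_i\partial_iP(M_B,M_C,M_\Delta,M_A)=\|\nabla P(M_B,M_C,M_\Delta,M_A)\|_1$, whereas your constant is $\|\nabla P(M_B,M_C,M_\Delta,M_A)\|_2$. The two proofs evaluate the same gradient at the same corner. Yours is never larger and can be up to a factor $2$ smaller.

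The gain comes from your Step 1, which keeps the Pythagorean splitting of $\|W-V\|_F$ across the four blocks. The paper bounds each $|\|W_X\|_2-\|V_X\|_2|$ by the whole $\|W-V\|_F$ and loses that splitting.

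The paper's bookkeeping has its own advantages. It needs no mean-value or convexity argument. It works for arbitrary bounded Lipschitz factors, not only polynomials. It also produces intermediate envelopes ($M_{K_\Delta}$, $M_{K_B}$, $M_{K_W}$) that are reused verbatim in $G_{\mathrm{SSM},R}$ and $L_{\mathrm{SSM},R}$.

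Both arguments carry the same slack from the box envelope. A point of $\mathcal N_R(W^r)$ cannot have all four blocks at $\|W_X^r\|_2+R$ simultaneously, so the corner is never actually attained.
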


\textbf{Proof of Lemma~\ref{lem:local_lipschit_SSM_grad_bound}:} Fix the reference tuple $W^r$ and the radius $R$. Throughout this proof, $N,D,L_\ell,L_x,\gamma,\delta$ are constant as $W$ and $V$ vary over the neighborhood $\mathcal N_R(W^r)$. In particular, $\rho=\exp(-\gamma\delta)$ is the same for both
parameter tuples.

First, by Theorem~\ref{thm:ssm_grad_bound}, $\mathcal G_{\text{SSM}}(W)$ has an explicit definition:
\begin{equation}
    \mathcal G_{\text{SSM}}(W) = L_\ell L_x\left(K_BK_\Delta\frac{\sqrt{ND}}{1-\rho}+\left\|W_C\right\|_2\frac{K_W}{1-\rho}\right),
\end{equation}
where $W$ is the tuple of selective SSM weights $(W_\Delta, W_B,W_C,A)$.

Since $W=(W_\Delta, W_B, W_C, A)$ is a parameter tuple, given another parameter tuple $V = (V_\Delta, V_B, V_C, V_A)$ we use the block Frobenius norm:
\begin{equation}
    \|W-V\|_F^2:=\|W_\Delta-V_\Delta\|_F^2 + \|W_B-V_B\|_F^2+\|W_C-V_C\|_F^2+\|A-V_A\|_F^2.
\end{equation}
For each parameter block $X \in \{\Delta, B, C, A\}$, let $W_X$ denote the corresponding block of $W$, and let $W_X^r$ denote the corresponding block of the central model $W^r$ (for notational uniformity, we write $W_A:=A$ and $W_A^r:=A^r$). Given the central model $W^r$, we evaluate the maximum possible value of a norm of each parameter matrix within the neighborhood $\mathcal N_R(W^r)$. Since $W\in\mathcal N_R(W^r)$, each block satisfies
\begin{equation} 
    \|W_X-W_X^r\|_F \leq \|W-W^r\|_F \leq R.
\end{equation}

Therefore, by the triangle inequality and the fact that $\|\cdot\|_2\leq \|\cdot\|_F$,
\begin{equation}\label{eq:block_neighborhood_bound}
    \|W_X\|_2 \leq \|W_X^r\|_2+\|W_X-W_X^r\|_2 \leq \|W_X^r\|_2+\|W_X-W_X^r\|_F \leq \|W_X^r\|_2+R.
\end{equation}

Next, define:
\begin{align}
    M_\Delta &:= \left\|W_\Delta^r\right\|_2+R \label{eq:m_delta} \\ 
    M_B &:= \left\|W_B^r\right\|_2+R \label{eq:m_B_init} \\
    M_C &:= \left\|W_C^r\right\|_2+R \label{eq:m_C} \\
    M_A &:= \left\|A^r\right\|_2+R \label{eq:m_A}
\end{align}
Then, by \eqref{eq:block_neighborhood_bound}, it is true that for all individual matrices $W_B, W_C, W_\Delta, A$, the following holds:
\begin{align}
    \left\|W_\Delta\right\|_2  &\leq M_\Delta \\
    \left\|W_B\right\|_2 &\leq M_B \\
    \left\|W_C\right\|_2  &\leq M_C \\
    \left\|A\right\|_2  &\leq M_A 
\end{align}
The same bounds also hold for the corresponding blocks of any $V\in\mathcal N_R(W^r)$.

We further define:
\begin{align}
    M_{K_\Delta} &:= M_{\Delta} L_x + \sqrt{D}\log(2) \label{eq:m_k_delta} \\
    M_{K_B} &:= M_BL_x^2 \label{eq:m_k_b}
\end{align}

Next, we evaluate the Lipschitz constants for individual components of the expression for $\mathcal G_{\text{SSM}}(W)$. We'll utilize the property that: $\left|\left\|X\right\|_2-\left\|Y\right\|_2\right| \leq \left\|X-Y\right\|_2 \leq \left\|X-Y\right\|_F$.

First:
\begin{align}
    \left|K_\Delta(W)-K_\Delta(V)\right| &\leq L_x\left|\left\|W_\Delta\right\|_2-\left\|V_\Delta\right\|_2\right| \leq L_x\left\|W_\Delta-V_\Delta\right\|_F \leq  L_x \left\|W-V\right\|_F \\
    \left|K_B(W)-K_B(V)\right| &\leq L_x^2\left|\left\|W_B\right\|_2-\left\|V_B\right\|_2\right| \leq L_x^2\left\|W_B-V_B\right\|_F \leq L_x^2 \left\|W-V\right\|_F  
\end{align}
So the Lipschitz constants for the base functions are:
\begin{align}
    L_{K_\Delta} &= L_x \\
    L_{K_B} &= L_x^2 \\
    L_C &= 1 \text{ (for the term } \left\|W_C\right\|_2 \text{)} \\
    L_A &= 1 \text{ (for the term } \left\|A\right\|_2 \text{)}
\end{align}

Next, by the product rule for Lipschitz functions, if $|f| \leq M_f$ and $|g| \leq M_g$ and are Lipschitz with constants $L_f$ and $L_g$ on $\mathcal N_R(W^r)$, then $fg$ is Lipschitz with constant
\begin{equation}
    L_{fg} = M_fL_g + M_gL_f.
\end{equation}

Given that, let
\begin{equation}
    P_1(W) := K_B(W)K_\Delta(W).
\end{equation}
Then
\begin{equation}
    L_{P_1} = M_{K_B} L_{K_\Delta} + M_{K_\Delta} L_{K_B} = M_{K_B} L_x + M_{K_\Delta} L_x^2,
\end{equation}
and 
\begin{equation}
    M_{P_1} = M_{K_B} M_{K_\Delta}.
\end{equation}

Next, define
\begin{equation}
    P_2(W) := K_\Delta(W) + \sqrt{D}\|A\|_2L_x.
\end{equation}
Then, by the sum rule of Lipschitz functions:
\begin{equation}
    L_{P_2} = L_{K_\Delta} + \sqrt{D} L_A L_x = L_x + \sqrt{D}L_x,
\end{equation}
and
\begin{equation}
    M_{P_2} = M_{K_\Delta} + \sqrt{D}M_AL_x.
\end{equation}

Continuing, define
\begin{equation}
    P_3(W) := P_1(W)P_2(W).
\end{equation}
Then
\begin{equation}
    L_{P_3} = M_{P_1}L_{P_2} + M_{P_2}L_{P_1},
\end{equation}
and
\begin{equation}
    M_{P_3} = M_{P_1}M_{P_2}.
\end{equation}

Given these definitions, and using the definition of $K_W$ from Theorem~\ref{thm:ssm_grad_bound}, a valid Lipschitz constant for $K_W$ is:
\begin{equation}
    L_{K_W} = ND\frac{\rho}{1-\rho}L_{P_3} + L_x^3\sqrt{N} + L_x^3\sqrt{D},
\end{equation}
and the upper bound for $K_W(W)$ is:
\begin{equation}\label{eq:M_kw}
    M_{K_W} = ND\frac{\rho}{1-\rho}M_{P_3} + L_x^2\sqrt{N}M_{K_\Delta} + L_x\sqrt{D}M_{K_B}.
\end{equation}

Finally, define
\begin{equation}
    P_4(W) := \|W_C\|_2 K_W(W).
\end{equation}
Since $\|W_C\|_2$ is bounded by $M_C$ and has Lipschitz constant $L_C = 1$, the product rule gives
\begin{equation}
    L_{P_4} = M_C L_{K_W} + M_{K_W}.
\end{equation}

Combining everything together and using linearity, we get the final Lipschitz constant for $\mathcal G_{\text{SSM}}(W)$:
\begin{equation}
    L_{\mathcal G_{\mathrm{SSM},R}} =L_\ell L_x\left(\frac{\sqrt{ND}}{1-\rho}L_{P_1} + \frac{1}{1-\rho}L_{P_4}\right).
\end{equation}

Because $R < \infty$, the block norms of $W^r$ are finite, and $\rho < 1$, all constants $M_{K_\Delta}$, $M_{K_B}$, $M_{P_1}$, $M_{P_2}$, $M_{P_3}$, $M_{K_W}$, $L_{P_1}$, $L_{P_2}$, $L_{P_3}$, $L_{K_W}$, and $L_{P_4}$ are finite. As such, $L_{\mathcal G_{\mathrm{SSM},R}} < \infty$, and $\forall W, V \in \mathcal N_R(W^r)$,
\begin{equation}
    |\mathcal G_{\text{SSM}}(W)-\mathcal G_{\text{SSM}}(V)| \leq L_{\mathcal G_{\mathrm{SSM},R}}\|W-V\|_F.
\end{equation}

This proves that $\mathcal G_{\text{SSM}}(W)$ is locally Lipschitz continuous on $\mathcal N_R(W^r)$. \qed

\subsubsection{Local SSM smoothness Lemma Proof}
\textbf{Proof of Lemma~\ref{lem:ssm_local_smoothness}:} Fix the reference tuple $W^r$ and the radius $R$ from the lemma. Let $\xi = (x,y^*) \in \mathcal Z$ be an arbitrary admissible sequence pair of finite length $S \geq 1$. The inputs and targets are held fixed when differentiating with respect to the model parameters. For this fixed sequence pair, we use a simplified notation for $h_t(W;x)$, $y_t(W;x)$, and $\ell_t(y_t(W;x),y_t^*)$ as $h_t(W)$, $y_t(W)$, and $\ell_t(y_t(W))$, respectively.

Throughout the proof, $L_\ell, H_\ell, L_x, \gamma, \delta$ are fixed uniformly over $\mathcal N_R(W^r)$, with $\rho = \exp(-\gamma\delta) < 1$. The estimates below hold for every $W \in \mathcal N_R(W^r)$, every finite sequence length $S \geq 1$, every admissible length-$S$ pair $\xi \in \mathcal Z$, and every token position $t\leq S$, with constants independent of $S$, $\xi$, and $t$.

We use the parameter envelopes from \eqref{eq:m_delta}-\eqref{eq:m_A} and the induced bounds from~\eqref{eq:m_k_b},~\eqref{eq:m_k_delta}, and~\eqref{eq:M_kw}. 

For convenience, we define:
\begin{align}
    H_R &:= M_{K_B} M_{K_\Delta}\frac{\sqrt{ND}}{1-\rho} 
    \\ J_R &:= \frac{M_{K_W}}{1-\rho}
\end{align}

By the definition of $M_X$ as an upper-bound and from the Proof of Theorem~\ref{thm:ssm_grad_bound}, we know the following hold:
\begin{align}
    \left\|h_t(W)\right\|_F &\leq K_B(W)K_\Delta(W)\frac{\sqrt{ND}}{1-\rho} \leq H_R \\
    \left\|\frac{\partial h_t(W)}{\partial W}\right\|_F &\leq \frac{K_W(W)}{1-\rho }\leq J_R
\end{align}

Given this setup, we begin by bounding $D^2 h_t$ where $D$ is the Fréchet derivative defined:
\begin{align*}
    DF(W)[U] &:= \frac{d}{d\epsilon} F(W+\epsilon U)\Bigg|_{\epsilon=0} \\
    D^2F(W)[U,V] &:= \frac{\partial^2}{\partial \epsilon \partial \zeta} F(W+\epsilon U+\zeta V)\Bigg|_{\epsilon=\zeta=0}
\end{align*}
with $U, V$ being unit perturbations. We also utilize the operator norm defined $\|D\Psi(W)\|_{\mathrm{op}} := \sup_{\|U\|_F=1}\|D \Psi(W)[U]\|_F$. Note that after flattening the input and output tensors, this corresponds to the usual spectral norm of the corresponding Jacobian matrix. Parameter Hessians are represented as matrices using the vectorization of parameter blocks introduced in the single-layer proof. Their operator norms are taken with respect to the corresponding Euclidean norm, which equals the block Frobenius norm before vectorization.

Let $U = (U_\Delta,U_B, U_C,U_A)$ and $V=(V_\Delta,V_B,V_C,V_A)$ be arbitrary block perturbations satisfying 
\begin{equation}
    \|U\|_F=\|V\|_F=1.
\end{equation}
Each perturbation block therefore has Frobenius norm at most one. Since $\Delta_t$ depends only on $W_\Delta$, we write $D\Delta_t(W)[U_\Delta]$ as shorthand for $D\Delta_t(W)[(U_\Delta,0,0,0)]$, and use the analogous convention for its second derivative.

By the property of the operator norm (with $\Psi(W) = h_t(W)$), 
\begin{equation}
    \left\|D h_t(W)[U]\right \|_{F} \leq \|D h_t(W)\|_{\mathrm{op}}\|U\|_F = \|D h_t(W)\|_{\mathrm{op}} \leq \left\|\frac{\partial h_t(W)}{\partial W}\right\|_F \leq J_R.
\end{equation}
The same argument gives
\begin{equation}
    \|Dh_t(W)[V]\|_F\leq J_R.
\end{equation}
and therefore it follows that
\begin{equation}
    \|Dh_t(W)[U]\|_F\leq J_R, \qquad \|Dh_t(W)[V]\|_F\leq J_R.
\end{equation}

Next, let $b_t(W) := \bar{B}_t(W)\operatorname{diag}(x_t)$, then the hidden state recurrence equation becomes:
\begin{equation}
    h_t(W) = \bar{A}_t(W) \odot h_{t-1}(W)+b_t(W),
\end{equation}
and taking the first Fréchet derivative in direction $U$ gives:
\begin{equation}
    D h_t(W)[U] = D\bar A_t(W)[U]\odot h_{t-1}(W)+ \bar A_t(W)\odot Dh_{t-1}(W)[U] + Db_t(W)[U].
\end{equation}
Taking a second Fréchet derivative in direction $V$ gives
\begin{align}
    D^2h_t(W)[U,V]
    =&\ D^2\bar A_t(W)[U,V]\odot h_{t-1}(W) \nonumber\\
    &+D\bar A_t(W)[U]\odot Dh_{t-1}(W)[V] \nonumber\\
    &+D\bar A_t(W)[V]\odot Dh_{t-1}(W)[U] \nonumber\\
    &+\bar A_t(W)\odot D^2h_{t-1}(W)[U,V] \nonumber\\
    &+D^2b_t(W)[U,V].
\end{align}

Next, taking the Frobenius norms, and by the triangle inequality, it follows that
\begin{align}
    \|D^2h_t(W)[U,V]\|_F
    \leq &
    \|D^2\bar A_t(W)[U,V]\odot h_{t-1}(W)\|_F \nonumber\\
    &+
    \|D\bar A_t(W)[U]\odot Dh_{t-1}(W)[V]\|_F \nonumber\\
    &+
    \|D\bar A_t(W)[V]\odot Dh_{t-1}(W)[U]\|_F \nonumber\\
    &+
    \|\bar A_t(W)\odot D^2h_{t-1}(W)[U,V]\|_F \nonumber\\
    &+
    \|D^2b_t(W)[U,V]\|_F.
\end{align}

Using $\|X\odot Y\|_F\leq \|X\|_F\|Y\|_F$ and
$\|X\odot Y\|_F\leq \|X\|_{\max}\|Y\|_F$, we obtain:
\begin{align}
    \|D^2h_t(W)[U,V]\|_F
    \leq&
    \|D^2\bar A_t(W)[U,V]\|_F\|h_{t-1}(W)\|_F \nonumber\\
    &+
    \|D\bar A_t(W)[U]\|_F\|Dh_{t-1}(W)[V]\|_F \nonumber\\
    &+
    \|D\bar A_t(W)[V]\|_F\|Dh_{t-1}(W)[U]\|_F \nonumber\\
    &+
    \|\bar A_t(W)\|_{\max}\|D^2h_{t-1}(W)[U,V]\|_F \nonumber\\
    &+
    \|D^2b_t(W)[U,V]\|_F.
\end{align}

Then, we already derived the bounds:
\begin{align*}
    \left\|h_{t-1}(W)\right\|_F \leq H_R \qquad
    \|Dh_{t-1}(W)[U]\|_F \leq J_R \qquad 
    \|Dh_{t-1}(W)[V]\|_F \leq J_R,
\end{align*}
and by Definition~\ref{def:ssm_stability}:
\begin{align*}
    \left\|\bar{A}_t(W)\right\|_{\max} \leq \rho,
\end{align*}
so the bound becomes:
\begin{align}\label{eq:D_squared_ht_bound_init}
    \|D^2h_t(W)[U,V]\|_F
    \leq&
    H_R\|D^2\bar A_t(W)[U,V]\|_F \nonumber\\
    &+
    J_R\|D\bar A_t(W)[U]\|_F
    +
    J_R\|D\bar A_t(W)[V]\|_F \nonumber\\
    &+
    \rho\|D^2h_{t-1}(W)[U,V]\|_F
    +
    \|D^2b_t(W)[U,V]\|_F.
\end{align}

Next, we derive the three missing bounds in~\eqref{eq:D_squared_ht_bound_init}:
\begin{align}
    \left\|D \bar{A}_t(W)[U]\right\|_F, \quad \left\|D^2 \bar{A}_t(W)[U,V]\right\|_F, \quad \left\|D^2 b_t(W)[U,V ]\right\|_F
\end{align}
noting that the same upper bound applies to
$\|D\bar A_t(W)[U]\|_F$ and $\|D\bar A_t(W)[V]\|_F$.

We first derive bounds on the intermediate terms that comprise $\bar{A}_t$, and $\bar{B}_t$, starting with $\Delta_t(W)$.

From the definition of $\Delta_t(W)$:
\begin{equation}
    D \Delta_t(W)[U_\Delta] = \operatorname{diag}\left(\operatorname{Softplus}'(W_\Delta x_t)\right) U_\Delta x_t,
\end{equation}
and since $0 \leq \operatorname{Softplus}'(z) \leq 1$, it follows that:
\begin{equation}
    \left\|D \Delta_t(W)[U_\Delta]\right\|_2 \leq \left\|U_\Delta x_t \right\|_2 \leq \|U_\Delta\|_F\|x_t\|_2 \leq L_x.
\end{equation}
Similarly, 
\begin{equation}
    D^2 \Delta_t(W)[U_\Delta,V_\Delta] = \operatorname{Softplus}''(W_\Delta x_t) \odot (U_\Delta x_t) \odot (V_\Delta x_t).
\end{equation}
Since $0 \leq \operatorname{Softplus}''(z)\leq 1/4 $, the bound becomes:
\begin{equation}
    \left\|D^2 \Delta_t(W)[U_\Delta,V_\Delta]\right\|_2 \leq \frac{1}{4} \left\|(U_\Delta x_t)\odot (V_\Delta x_t)
    \right\|_2\nonumber \leq \frac{1}{4}
    \|U_\Delta x_t\|_2
    \|V_\Delta x_t\|_2 \leq
    \frac{1}{4} L_x^2.
\end{equation}

These bounds allow us to bound $D \bar{A}_t(W)[U]$. We define an intermediate matrix
\begin{equation}
    Z_t(W) := A \Delta_t(W)^\top,
\end{equation}
and calculate its derivative
\begin{equation}
    D Z_t(W)[U] = U_A \Delta_t(W)^\top + A\left(D \Delta_t(W)[U_\Delta]\right)^\top.
\end{equation}
Next, using the property of Frobenius norm for outer product $\|xy^\top\|_F=\|x\|_2\|y\|_2$, it follows that:
\begin{equation}
    \left\|D Z_t(W)[U] \right\|_F \leq \|U_A\|_2\|\Delta_t(W)\|_2 + \|A\|_2 \left\|D \Delta_t(W)[U_\Delta]\right\|_2 \leq M_{K_\Delta} + M_A L_x.
\end{equation}
This allows us to calculate:
\begin{align}
    \left\|D \bar{A}_t(W)[U]\right\|_F &= \left\|\bar{A}_t(W) \odot D Z_t(W)[U]\right\|_F \nonumber \\ &\leq \left\|\bar{A}_t(W)\right\|_{\max}\left\| D Z_t(W)[U]\right\|_F \leq \rho\left(M_{K_\Delta} + M_A L_x\right) := K_{A_{1,R}}
\end{align}

Next, we bound $\left\|D^2 \bar{A}_t(W)[U,V]\right\|_F$ by differentiating $D \bar{A}_t(W)[U]$ in direction $V$:
\begin{equation}
    D^2 \bar{A}_t(W)[U,V] = \bar{A}_t(W) \odot \left(DZ_t(W)[V] \odot D Z_t(W)[U] + D^2 Z_t(W)[U,V]\right).
\end{equation}
To do that we first bound:
\begin{equation}
    D^2 Z_t(W)[U,V] = U_A\left(D \Delta_t(W)[V_\Delta]\right)^\top + V_A\left(D\Delta_t(W)[U_\Delta]\right)^\top+ A \left(D^2 \Delta_t(W)[U_\Delta,V_\Delta]\right)^\top.
\end{equation}
We then apply previously derived bounds to the individual components:
\begin{align}
    \left\|D^2 Z_t(W)[U,V]\right\|_F \leq&  \|U_A\|_2\left\|D \Delta_t(W)[V_\Delta]\right\|_2 \nonumber \\ &+ \|V_A\|_2\left\|D\Delta_t(W)[U_\Delta]\right\|_2+ \|A\|_2 \left\|D^2 \Delta_t(W)[U_\Delta,V_\Delta]\right\|_2 \nonumber \\ \leq& 2 L_x+ \frac{1}{4}M_A L_x^2.
\end{align}
Plugging the result back in, we get the second component bound:
\begin{align}
    \left\|D^2 \bar{A}_t(W)[U,V]\right\|_F &\leq \left\|\bar{A}_t(W)\right\|_{\max} \left(\left\|DZ_t(W)[V]\right\|_F \left\| D Z_t(W)[U]\right\|_F + \left\|D^2 Z_t(W)[U,V]\right\|_F\right) \nonumber \\ &\leq \rho\left(\left(M_{K_{\Delta}}+M_AL_x\right)^2 + 2L_x + \frac{1}{4}M_AL_x^2\right) := K_{A_{2,R}}
\end{align}

Finally, we bound the third term $D^2 b_t(W)$. From definition, $b_t(W) = \bar{B}_t(W) \operatorname{diag}(x_t)$, where $\bar{B}_t(W) = (W_Bx_t)\Delta_t(W)^\top$. We first compute the second derivative of $\bar B_t(W)$:
\begin{align}
    &D^2 \bar{B}_t(W)[U,V] \nonumber  \\ &= (U_Bx_t)\left(D\Delta_t(W)[V_\Delta]\right)^\top + (V_Bx_t)(D\Delta_t(W)[U_\Delta])^\top + (W_Bx_t)(D^2\Delta_t(W)[U_\Delta,V_\Delta])^\top,
\end{align}
and using the previously derived bounds, it follows that:
\begin{equation}
    \left\|D^2 \bar{B}_t(W)[U,V]\right\|_F \leq 2L_x^2+ \frac{1}{4}M_B L_x^3.
\end{equation}
With that, we get:
\begin{align}
    \left\|D^2 b_t(W)[U,V]\right\|_F &\leq \left\|D^2 \bar{B}_t(W)[U,V]\right\|_F \left\|\operatorname{diag}(x_t)\right\|_F  \nonumber \\ &\leq \left(2L_x^2+ \frac{1}{4}M_B L_x^3\right) L_x = 2L_x^3+ \frac{1}{4}M_B L_x^4 := K_{B_{2,R}}
\end{align}

Given these derived bounds, the recurrence becomes:
\begin{equation}
    \left\|D^2h_t(W)[U,V]\right\|_F \leq \rho  \left\|D^2h_{t-1}(W)[U,V]\right\|_F + K_{A_{2,R}} H_R + 2K_{A_{1,R}}J_R + K_{B_{2,R}}.
\end{equation}
Since, as mentioned in the proof of~\ref{thm:ssm_grad_bound}, the initial state is initialized as $h_0=0$, we have:
\begin{equation}
    D^2 h_0(W)[U,V] = 0.
\end{equation}
As such, unrolling the recurrence we get:
\begin{align}
    \left\|D^2h_t(W)[U,V]\right\|_F &\leq \left(K_{A_{2,R}} H_R + 2K_{A_{1,R}}J_R  + K_{B_{2,R}}\right)\sum_{m=0}^{t-1}\rho^m \nonumber \\ &\leq 
    \frac{K_{A_{2,R}} H_R + 2K_{A_{1,R}}J_R + K_{B_{2,R}}}{1-\rho} := K_{H_{2,R}}
\end{align}
Therefore, for all $W\in\mathcal N_R(W^r)$ and all unit block perturbations
$U,V$,
\begin{equation}
    \left\|D^2h_t(W)[U,V]\right\|_F
    \leq K_{H_{2,R}}.
\end{equation}

Next, we use this to derive the first and second Fréchet derivatives of the output:
\begin{equation}
    y_t(W) = h_t(W)^\top(W_Cx_t).
\end{equation}
The first derivative is:
\begin{equation}
    D y_t(W)[U] = D h_t(W)[U]^\top C_t(W)+h_t(W)^\top U_C x_t,
\end{equation}
resulting in a bound
\begin{align}
    \left\|D y_t(W)[U]\right\|_2 &\leq \left\|D h_t(W)[U]\right\|_F\left\|W_C\right\|_2\|x_t\|_2 + \|h_t(W)\|_F\|U_Cx_t\|_2 \nonumber \\ &\leq J_RM_CL_x+H_RL_x:= K_{Y_{1,R}}.
\end{align}

Taking the Fréchet derivative in direction $V$, we get:
\begin{align}
    D^2 y_t(W)[U,V] = D^2h_t(W)[U,V]^\top C_t(W)+ D h_t(W)[U]^\top V_C x_t + D h_t(W)[V]^\top U_Cx_t,
\end{align}
where we also use the fact that $D^2(W_Cx_t)=0$. 
Then:
\begin{align}
    \left\|D^2 y_t(W)[U,V]\right\|_2 \leq& \|D^2 h_t(W)[U,V]\|_F \|(W_Cx_t)\|_2 \nonumber \\ &+ \|D h_t(W)[U]\|_F \|(V_Cx_t)\|_2 + \|D h_t(W)[V]\|_F \|(U_Cx_t)\|_2 \nonumber \\ \leq& K_{H_{2,R}}M_CL_x+2J_RL_x := K_{Y_{2,R}}.
\end{align}

Finally, consider the token loss $\ell_t(y_t(W))$. By the
second-order chain rule:
\begin{align}
    D^2_W \ell_t(y_t(W))[U,V] = D^2_y\ell_t(y_t)[Dy_t(W)[U], Dy_t(W)[V]] + D_y\ell_t(y_t) [D^2y_t(W)[U,V]],
\end{align}
where we use the notation $D^2_y\ell_t(y_t)[a,b] = a^\top \nabla_y^2 \ell_t(y_t)b$.

By Assumption~\ref{assum:lipschitz_cont_loss}, and utilizing all derived bounds, we get:
\begin{align}
    \left|D_W^2 \ell_t(y_t(W))[U,V]\right| & \leq \left|D_y^2\ell_t(y_t)[Dy_t(W)[U],Dy_t(W)[V]]\right| +\left|D_y\ell_t(y_t)[D^2y_t(W)[U,V]]\right| \nonumber \\ &\leq \|\nabla_y^2\ell_t(y_t)\|_{\mathrm{op}} \|Dy_t(W)[U]\|_2\|Dy_t(W)[V]\|_2 + \|\nabla_y\ell_t(y_t)\|_2\|D^2y_t(W)[U,V]\|_2 \nonumber \\ &\leq H_\ell K_{Y_{1,R}}^2 + L_\ell K_{Y_{2,R}} := L_{\mathrm{SSM},R}.
\end{align}

Substituting the definitions, we get:
\begin{align}\label{eq:L_ssm_definition}
    L_{\mathrm{SSM},R} =& H_\ell L_x^2\left(H_R + M_C J_R\right)^2 + L_\ell L_x \left(M_CK_{H_{2,R}} + 2J_R\right) \nonumber\\
    =& H_\ell L_x^2 \left(\frac{M_{K_B}M_{K_\Delta}\sqrt{ND} + M_C M_{K_W}}{1-\rho}\right)^2 \nonumber \\ &+ L_\ell L_x \Bigg[\frac{M_C}{1-\rho} \Bigg(\rho\left((M_{K_\Delta}+M_AL_x)^2 + 2L_x + \frac{1}{4}M_AL_x^2\right)\frac{M_{K_B}M_{K_\Delta}\sqrt{ND}}{1-\rho} \nonumber \\  & + 2\rho(M_{K_\Delta}+M_AL_x)\frac{M_{K_W}}{1-\rho} + 2L_x^3+\frac{1}{4}M_BL_x^4\Bigg) + \frac{2M_{K_W}}{1-\rho}\Bigg].
\end{align}

Since $U,V$ were arbitrary unit block perturbations, the previous bound implies
\begin{equation} 
    \left\|\nabla_W^2\ell_t(y_t(W))\right\|_{\mathrm{op}} \leq L_{\mathrm{SSM},R}.
\end{equation}

Finally, the sequence loss was defined in~\eqref{eq:global_loss_definition} as
\begin{equation*}
    \mathcal L^S(W) = \frac{1}{S}\sum_{t=1}^S \ell_t(y_t(W)),
\end{equation*}
using the simplified notation, where $\mathcal L^S(W) := \mathcal L^S(W;\xi)$ and $\ell_t(y_t(W)) = \ell_t(y_t(W;x), y_t^*)$, and we further incorporate $W$ to emphasize $y_t$'s dependence on the current weights. 

As a result,
\begin{align}
    \|\nabla_W^2\mathcal L^S(W)\|_{\mathrm{op}} &= \left\|\frac{1}{S}\sum_{t=1}^S\nabla_W^2\ell_t(y_t(W))\right\|_{\mathrm{op}} \nonumber \\ &\leq \frac{1}{S}\sum_{t=1}^S \|\nabla_W^2\ell_t(y_t(W))\|_{\mathrm{op}} \nonumber \\ & \leq L_{\mathrm{SSM},R}.\label{eq:appendix_hessian_bound_L_SSM_R_on_op_norm}
\end{align}

Then, for an client $k$, the local expected objective is
\begin{equation}
    \mathcal L_k(W) = \mathbb E_{\xi \sim\mathcal P_k}\left[\mathcal L^S(W;\xi)\right].
\end{equation}

For any $W,V \in \mathcal N_R(W^r)$, convexity of the neighborhood implies that
\begin{equation}
    W_\alpha: = V + \alpha(W-V), \qquad \alpha \in [0,1],
\end{equation}
lies in $\mathcal N_R(W^r)$. Applying the fundamental theorem of calculus to the fixed-sequence gradient and using the Hessian bound from~\eqref{eq:appendix_hessian_bound_L_SSM_R_on_op_norm} gives
\begin{align}
    \left\|\nabla_W\mathcal L^S(W;\xi)-\nabla_W\mathcal L^S(V;\xi)\right\|_F &= \left\|\operatorname{vec}\left(\nabla_W\mathcal L^S(W;\xi)-\nabla_W\mathcal L^S(V;\xi)\right)\right\|_2 \nonumber \\ &= \left\|\int_0^1 \nabla_W^2 \mathcal L^S(W_\alpha;\xi)\operatorname{vec}(W-V)d\alpha\right\|_2 \nonumber \\ & \overset{(a)}{\leq} \int_0^1 \left\|\nabla_W^2 \mathcal L^S(W_\alpha;\xi)\operatorname{vec}(W-V)\right\|_2 d\alpha \nonumber \\ & \overset{(b)}{\leq}\int_0^1 \left\|\nabla_W^2 \mathcal L^S(W_\alpha;\xi)\right\|_{\mathrm{op}}\left\|W-V\right\|_F d\alpha \nonumber \\ & \overset{(c)}{\leq}\int_0^1 L_{\mathrm{SSM},R}\|W-V\|_F\,d\alpha = L_{\mathrm{SSM},R}\|W-V\|_F.
\end{align}
where:
\begin{itemize}
    \item $(a)$ follows from the triangle inequality for an integral $\left\|\int v\right\| \leq \int \|v\|$.
    \item $(b)$ follows from the operator-norm inequality: $\|Hd\|_2 \leq \|H\|_{\mathrm{op}}\|d\|_2$ and the fact that $\left\|\operatorname{vec}(W-V)\right\|_2 = \left\|W-V\right\|_F$.
    \item $(c)$ follows from~\eqref{eq:appendix_hessian_bound_L_SSM_R_on_op_norm}.
\end{itemize}
This inequality holds uniformly over all admissible sequence pairs and finite sequence lengths.

Finally, by Assumption~\ref{assum:unbiased_stochastic_gradients} and Jensen's inequality,
\begin{align}
    \|\nabla_W\mathcal L_k(W) - \nabla_W\mathcal L_k(V)\|_F & =\left\|\mathbb E_{\xi\sim\mathcal P_k}\left[\nabla_W\mathcal L^S(W;\xi) - \nabla_W\mathcal L^S(V;\xi)\right] \right\|_F \nonumber \\
    &\leq \mathbb E_{\xi\sim\mathcal P_k}\left[\|\nabla_W\mathcal L^S(W;\xi) - \nabla_W\mathcal L^S(V;\xi)\|_F\right] \nonumber \\
    &\leq L_{\mathrm{SSM},R}\|W-V\|_F.
\end{align}

Therefore, $\mathcal L_k$ is $L_{\mathrm{SSM},R}$-smooth on
$\mathcal N_R(W^r)$. \qed

\subsubsection{SSM loss gradient second-moment constant Lemma and Proof}

In the following two lemmas, we consider some arbitrary client data distribution $\mathcal P$ and write $\mathcal L(W) = \mathbb E_{\xi \sim\mathcal P}\left[\mathcal L^S(W;
\xi)\right]$. Expectations written as $\mathbb E_\xi$ are taken with respect to the distribution $\mathcal P$.

\begin{lemma}[SSM loss gradient second-moment constant]\label{lem:ssm_fedavg_g_constant} 
Suppose the Assumptions~\ref{assum:lipschitz_cont_loss} and \ref{assum:bounded_inputs} hold. Let $\mathcal N_R(W^r) \subseteq \mathcal S_{\gamma,\delta}$ be a stable local model neighborhood around the round-$r$ broadcast model $W^r$. Let:
\begin{equation}
    g(W;\xi):=\nabla_W\mathcal L^S(W;\xi)
\end{equation}
denote the stochastic sequence-level gradient for a sampled input-target sequence pair $\xi = (x_{1:S}, y_{1:S}^*)$. Then, for every $W \in \mathcal N_R(W^r)$,
\begin{equation}
    \mathbb E_{\xi} \left\|g(W;\xi)\right\|_F^2 \leq G_{\mathrm{SSM},R}^2,
\end{equation}
where
\begin{equation}
    G_{\mathrm{SSM},R} := L_\ell L_x \left( M_{K_B}M_{K_\Delta}\frac{\sqrt{ND}}{1-\rho} + M_C\frac{M_{K_W}}{1-\rho}\right),
\end{equation}
where $M_{K_B}, M_{K_\Delta}, M_C, \text{ and } M_{K_W}$ are defined in~\eqref{eq:m_k_b},~\eqref{eq:m_k_delta},~\eqref{eq:m_C}, and ~\eqref{eq:M_kw} respectively.
\end{lemma}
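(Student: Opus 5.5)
The plan is to reduce the second-moment statement to a deterministic, pointwise bound and then take expectations. Fix $W\in\mathcal N_R(W^r)\subseteq\mathcal S_{\gamma,\delta}$. By hypothesis, Assumptions~\ref{assum:lipschitz_cont_loss} and~\ref{assum:bounded_inputs} hold at $W$ with the neighborhood-uniform constants $L_\ell,L_x$, and the stability margins $\gamma,\delta$ (hence $\rho$) are fixed. So Theorem~\ref{thm:ssm_grad_bound} applies at $W$. It gives, for every admissible $\xi\in\mathcal Z$ of any finite length $S$,
\begin{equation*}
\|g(W;\xi)\|_F=\|\nabla_W\mathcal L^S(W;\xi)\|_F\le \mathcal G_{\mathrm{SSM}}(W).
\end{equation*}

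Next, bound $\mathcal G_{\mathrm{SSM}}(W)$ uniformly over the neighborhood, using the envelopes from the proof of Lemma~\ref{lem:local_lipschit_SSM_grad_bound}. By \eqref{eq:block_neighborhood_bound}, every block satisfies $\|W_X\|_2\le M_X$ for $X\in\{\Delta,B,C,A\}$. Each of $K_\Delta$, $K_B$ and $K_W$ is a polynomial in the block spectral norms with nonnegative coefficients, since $\rho/(1-\rho)>0$, and is therefore monotone nondecreasing in each norm. This gives
\begin{equation*}
K_\Delta(W)\le M_{K_\Delta},\qquad K_B(W)\le M_{K_B},\qquad K_W(W)\le M_{K_W}.
\end{equation*}
For $K_W$, this requires substituting $M_{K_B}$, $M_{K_\Delta}$ and $M_A$ into each factor of the product $K_BK_\Delta(K_\Delta+\sqrt D\|A\|_2L_x)$ and into the two additive terms, which reproduces \eqref{eq:M_kw}. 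Combining these with $\|W_C\|_2\le M_C$ gives $\mathcal G_{\mathrm{SSM}}(W)\le G_{\mathrm{SSM},R}$.

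Finally, since $\Pr_{\xi\sim\mathcal P}(\xi\in\mathcal Z)=1$, the bound $\|g(W;\xi)\|_F^2\le G_{\mathrm{SSM},R}^2$ holds almost surely. Taking expectations, which are well defined because the integrand is bounded, yields $\mathbb E_\xi\|g(W;\xi)\|_F^2\le G_{\mathrm{SSM},R}^2$. The bound is uniform in $W$ because the envelopes do not depend on $W$.

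The only step needing care is the monotonicity step. One must check that every occurrence of a block norm in $\mathcal G_{\mathrm{SSM}}$ enters with a nonnegative coefficient, so that replacing each norm by its envelope cannot decrease the expression. One must also check that the same $\rho$ is valid at every point of the neighborhood, which holds because the neighborhood lies inside $\mathcal S_{\gamma,\delta}$. Both checks are routine inspections of the explicit formula.
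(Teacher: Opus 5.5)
Your proposal is correct and follows the paper's proof: apply Theorem~\ref{thm:ssm_grad_bound} pointwise at each $W\in\mathcal N_R(W^r)$, replace $K_B$, $K_\Delta$, $K_W$ and $\|W_C\|_2$ by their neighborhood envelopes, square, and take the expectation. You also make explicit two points the paper uses only implicitly: that the envelope substitution is valid because each expression has nonnegative coefficients, and that $\Pr(\xi\in\mathcal Z)=1$, so the pointwise bound holds almost surely before expectations are taken.
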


\textbf{Proof of Lemma~\ref{lem:ssm_fedavg_g_constant}:} By Theorem~\ref{thm:ssm_grad_bound}, under the assumptions of the lemma, for every $W \in \mathcal N_R(W^r)$, every finite integer $S \geq 1$, and every admissible length-$S$ sequence pair $\xi \in \mathcal Z$, we have:
\begin{align}
    \left\|\nabla_W \mathcal L^S(W;\xi)\right\|_F \leq L_\ell L_x\left(K_BK_\Delta\frac{\sqrt{ND}}{1-\rho} + \|W_C\|_2\frac{K_W}{1-\rho}\right). \nonumber
\end{align}
Since $g(W;\xi) = \nabla_W\mathcal L^S(W;\xi)$, it follows that:
\begin{equation}
    \|g(W;\xi)\|_F \leq L_\ell L_x\left(K_BK_\Delta\frac{\sqrt{ND}}{1-\rho} + \|W_C\|_2\frac{K_W}{1-\rho}\right).
\end{equation}
Restricting to $W \in \mathcal N_R(W^r)$, by the local neighborhood bounds defined previously (~\eqref{eq:m_k_b},~\eqref{eq:m_k_delta},~\eqref{eq:m_C}, and ~\eqref{eq:M_kw}), 
\begin{align}
     K_B\leq M_{K_B},
    \qquad
    K_\Delta\leq M_{K_\Delta},
    \qquad
    K_W\leq M_{K_W},
    \qquad
    \|W_C\|_2\leq M_C.
\end{align}

Therefore, 
\begin{equation}\label{eq:g_ssm_definition}
    \|g(W;\xi)\|_F \leq L_\ell L_x \left( M_{K_B}M_{K_\Delta}\frac{\sqrt{ND}}{1-\rho} + M_C\frac{M_{K_W}}{1-\rho}\right) = G_{\mathrm{SSM},R}.
\end{equation}
Since both values are non-negative, squaring both sides gives:
\begin{equation}
    \|g(W;\xi)\|_F^2 \leq G_{\mathrm{SSM},R}^2 = L_\ell^2 L_x^2 \left( M_{K_B}M_{K_\Delta}\frac{\sqrt{ND}}{1-\rho} + M_C\frac{M_{K_W}}{1-\rho}\right)^2
\end{equation}

Finally, taking expectation over the sampled sequence $\xi$ preserves the bound:
\begin{equation}
    \mathbb E_\xi\left[\|g(W;\xi)\|_F^2\right] \leq G_{\mathrm{SSM},R}^2
\end{equation}
proving the lemma. \qed

\subsubsection{SSM stochastic gradient variance bound Lemma and Proof}
\begin{lemma}[SSM stochastic gradient variance bound]
\label{lem:ssm_variance_bound}
Suppose the assumptions of Lemma~\ref{lem:ssm_fedavg_g_constant} and Assumption~\ref{assum:unbiased_stochastic_gradients} hold. Let
\begin{equation*}
    \mathcal L(W):=\mathbb E_{\xi}\left[\mathcal L^S(W;\xi)\right]
\end{equation*}
denote the expected sequence loss over the local data distribution, and let
\begin{equation*}
    g(W;\xi):=\nabla_W\mathcal L^S(W;\xi)
\end{equation*}
denote the single-sequence stochastic gradient. For a mini-batch
$\Xi_\xi=\{\xi_1,\dots,\xi_\Xi\}$ of $\Xi$ independent sampled sequences, define
\begin{equation}
    g_\Xi(W;\Xi_\xi) := \frac{1}{\Xi}\sum_{j=1}^\Xi g(W;\xi_j).
\end{equation}
Then, for every $W\in\mathcal N_R(W^r)$ it follows that
\begin{equation}
    \mathbb E_{\Xi_\xi}\left\|g_\Xi(W;\Xi_\xi)-\nabla_W\mathcal L(W)\right\|_F^2 \leq \sigma_{\mathrm{SSM},R,\Xi}^2,
\end{equation}
where
\begin{equation}
   \sigma_{\mathrm{SSM},R,\Xi}^2 := \frac{1}{\Xi}\sup_{W\in\mathcal N_R(W^r)}\left(\mathbb E_{\xi}\|g(W;\xi)\|_F^2 - \|\nabla_W\mathcal L(W)\|_F^2\right). 
\end{equation}
Moreover, by Lemma~\ref{lem:ssm_fedavg_g_constant},
\begin{equation}
    \sigma_{\mathrm{SSM},R,\Xi}^2 \leq \frac{G_{\mathrm{SSM},R}^2}{\Xi}.
\end{equation}
\end{lemma}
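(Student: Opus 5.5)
The plan is to use the standard variance identity for an average of independent, unbiased samples, and then apply the uniform second-moment bound from Lemma~\ref{lem:ssm_fedavg_g_constant}. Fix $W\in\mathcal N_R(W^r)$ and write $\bar g:=\nabla_W\mathcal L(W)$. By Assumption~\ref{assum:unbiased_stochastic_gradients}, the single-sequence gradients $g(W;\xi_j)$ are i.i.d.\ with mean $\bar g$. They are also square-integrable, since Theorem~\ref{thm:ssm_grad_bound} bounds them almost surely on $\mathcal Z$.

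First, vectorize the blocks, so that $\|\cdot\|_F$ becomes a Euclidean norm and inner products are well defined. Then expand
\begin{equation*}
    \Big\|\frac{1}{\Xi}\sum_{j=1}^\Xi \big(g(W;\xi_j)-\bar g\big)\Big\|_F^2
\end{equation*}
into diagonal and cross terms. For $j\neq j'$, the cross terms $\mathbb E\langle g(W;\xi_j)-\bar g,\,g(W;\xi_{j'})-\bar g\rangle$ vanish. This follows from independence, which lets the expectation factorize into the product of two zero-mean vectors.

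The expected squared norm of the batch deviation therefore equals $\frac{1}{\Xi}\mathbb E_\xi\|g(W;\xi)-\bar g\|_F^2$. Expanding the remaining square and using unbiasedness once more gives $\frac{1}{\Xi}\big(\mathbb E_\xi\|g(W;\xi)\|_F^2-\|\bar g\|_F^2\big)$. This quantity is at most the supremum over $\mathcal N_R(W^r)$, which is exactly $\sigma_{\mathrm{SSM},R,\Xi}^2$.

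For the second claim, drop the nonnegative term $\|\nabla_W\mathcal L(W)\|_F^2$. Then invoke Lemma~\ref{lem:ssm_fedavg_g_constant}, which gives $\mathbb E_\xi\|g(W;\xi)\|_F^2\le G_{\mathrm{SSM},R}^2$ uniformly over the neighborhood. The supremum inherits this bound, which yields $\sigma_{\mathrm{SSM},R,\Xi}^2\le G_{\mathrm{SSM},R}^2/\Xi$.

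There is no real obstacle. The only points needing care are technical. The expectations must be finite, which holds because the gradient bound is uniform in $S$ and $\xi$. The supremum must be finite as well, so that $\sigma^2_{\mathrm{SSM},R,\Xi}$ is well defined; this again follows from the uniform bound $G_{\mathrm{SSM},R}^2$.
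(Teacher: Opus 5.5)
Your proposal is correct and matches the paper's proof. The paper likewise uses independence and unbiasedness to obtain the $1/\Xi$ mini-batch variance identity, rewrites $\mathbb E_\xi\|g(W;\xi)-\nabla_W\mathcal L(W)\|_F^2$ as $\mathbb E_\xi\|g(W;\xi)\|_F^2-\|\nabla_W\mathcal L(W)\|_F^2$, takes the supremum over $\mathcal N_R(W^r)$, and then drops the nonnegative term and invokes Lemma~\ref{lem:ssm_fedavg_g_constant}; your cross-term expansion and your remarks on square-integrability and finiteness of the supremum spell out details the paper leaves implicit.
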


\textbf{Proof of Lemma~\ref{lem:ssm_variance_bound}:} By unbiasedness and independence of the sampled sequences, the standard mini-batch variance identity gives:
\begin{align*}
    \mathbb E_{\Xi_\xi}\left[\left\|g_\Xi(W;\Xi_\xi)-\nabla_W\mathcal L(W)\right\|_F^2\right] = \frac{1}{\Xi}\mathbb E_{\xi}\left[\left\|g(W;\xi)-\nabla_W\mathcal L(W)\right\|_F^2\right].
\end{align*}
Using the identity
\begin{equation}
    \mathbb E_{\xi}\left[\left\|g(W;\xi)-\nabla_W\mathcal L(W)\right\|_F^2\right] = \mathbb E_{\xi}\|g(W;\xi)\|_F^2 - \|\nabla_W\mathcal L(W)\|_F^2,
\end{equation}
and taking the supremum over $W\in\mathcal N_R(W^r)$, we obtain
\begin{equation}
    \mathbb E_{\Xi_\xi}\left[\left\|g_\Xi(W;\Xi_\xi)-\nabla_W\mathcal L(W)\right\|_F^2\right] \leq \sigma_{\mathrm{SSM},R,\Xi}^2.
\end{equation}

Finally, Lemma~\ref{lem:ssm_fedavg_g_constant} gives
\begin{equation*}
    \mathbb E_{\xi}\|g(W;\xi)\|_F^2 \leq G_{\mathrm{SSM},R}^2,
\end{equation*}
and since $\|\nabla_W\mathcal L(W)\|_F^2\geq 0$,
\begin{equation}
    \sigma_{\mathrm{SSM},R,\Xi}^2 \leq \frac{G_{\mathrm{SSM},R}^2}{\Xi}.
\end{equation}
This proves the claim. \qed

\subsubsection{SSM client heterogeneity bound under total variation Lemma and Proof}
\begin{lemma}[SSM client heterogeneity bound under total variation]\label{lem:ssm_gamma_tv_bound}
Suppose the assumptions of Lemma~\ref{lem:ssm_fedavg_g_constant} and Assumption~\ref{assum:unbiased_stochastic_gradients} hold. Let
\begin{equation}
    f(W) := \sum_{k=1}^K p_k \mathcal L_k(W)
\end{equation}
be the global objective, where $p_k \geq 0$ and $\sum_{k=1}^K p_k=1$. Let $\mathcal P_k$ denote the local data distribution of client $k$, and define the global mixture distribution as:
\begin{equation}
    \bar{\mathcal P} := \sum_{k=1}^K p_k \mathcal P_k.
\end{equation}
Define the local client heterogeneity constant
\begin{equation}
    \Gamma_{\mathrm{SSM},R}^2:=\sup_{W \in \mathcal N_R(W^r)}\sum_{k=1}^K p_k \left\|\nabla_W \mathcal L_k(W) - \nabla_W f(W) \right\|_F^2.
\end{equation}
Then:
\begin{equation}
    \Gamma_{\mathrm{SSM},R}^2\leq 4G_{\mathrm{SSM},R}^2 \sum_{k=1}^K p_k \operatorname{TV}(\mathcal{P}_k, \bar{\mathcal P})^2,
\end{equation}
where
\begin{equation}
    \operatorname{TV}(P,Q):=\sup_A |P(A)-Q(A)|.
\end{equation}
\end{lemma}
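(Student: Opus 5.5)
Our plan is to write each client-gradient deviation as an integral of the single-sequence gradient against a signed measure of small total mass, and then use the uniform pointwise bound from Theorem~\ref{thm:ssm_grad_bound} (as localized in Lemma~\ref{lem:ssm_fedavg_g_constant}).

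Fix $W\in\mathcal N_R(W^r)$. By Assumption~\ref{assum:unbiased_stochastic_gradients}, $\nabla_W\mathcal L_k(W)=\mathbb E_{\xi\sim\mathcal P_k}[g(W;\xi)]$, where $g(W;\xi)=\nabla_W\mathcal L^S(W;\xi)$. Since $f=\sum_k p_k\mathcal L_k$, linearity of the gradient and of integration gives
\begin{equation*}
\nabla_W f(W)=\sum_{k}p_k\,\mathbb E_{\mathcal P_k}[g]=\mathbb E_{\xi\sim\bar{\mathcal P}}[g(W;\xi)].
\end{equation*}
Differentiation under the expectation is justified because $g$ is uniformly bounded. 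Hence
\begin{equation*}
\nabla_W\mathcal L_k(W)-\nabla_W f(W)=\int g(W;\xi)\,d(\mathcal P_k-\bar{\mathcal P})(\xi).
\end{equation*}

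The core step is to bound the norm of this vector integral. By Lemma~\ref{lem:ssm_fedavg_g_constant}, $\|g(W;\xi)\|_F\le G_{\mathrm{SSM},R}$ for every admissible $\xi\in\mathcal Z$. Since $\mathcal Z$ has full measure under every $\mathcal P_k$, and hence under $\bar{\mathcal P}$, we may restrict the integral to $\mathcal Z$. Moving the norm inside the integral gives
\begin{equation*}
\Big\|\int g\,d\nu\Big\|_F\le\int\|g\|_F\,d|\nu|,\qquad \nu:=\mathcal P_k-\bar{\mathcal P},
\end{equation*}
where $|\nu|$ is the total variation measure of $\nu$. Its total mass is $|\nu|(\mathcal Z)=2\operatorname{TV}(\mathcal P_k,\bar{\mathcal P})$, by the Hahn decomposition: the positive and negative parts each have mass equal to the supremum of $|P(A)-Q(A)|$. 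Therefore
\begin{equation*}
\|\nabla_W\mathcal L_k(W)-\nabla_W f(W)\|_F\le 2G_{\mathrm{SSM},R}\operatorname{TV}(\mathcal P_k,\bar{\mathcal P}).
\end{equation*}
This step is the main obstacle: it requires the uniform sup bound holding on the common support set, the identity $\|\nu\|=2\operatorname{TV}$ under the stated convention, and measurability of $\xi\mapsto g(W;\xi)$. Sequences of variable length $S$ are handled because the bound from Theorem~\ref{thm:ssm_grad_bound} is independent of $S$.

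Finally, square the bound, multiply by $p_k$, and sum over $k$ to get
\begin{equation*}
\sum_k p_k\|\nabla_W\mathcal L_k(W)-\nabla_W f(W)\|_F^2\le 4G_{\mathrm{SSM},R}^2\sum_k p_k\operatorname{TV}(\mathcal P_k,\bar{\mathcal P})^2.
\end{equation*}
The right-hand side does not depend on $W$, so taking the supremum over $W\in\mathcal N_R(W^r)$ yields the claim.
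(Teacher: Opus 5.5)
Your proof is correct and follows the same route as the paper. You use the unbiasedness assumption to write $\nabla_W\mathcal L_k(W)-\nabla_W f(W)=\mathbb E_{\mathcal P_k}[g]-\mathbb E_{\bar{\mathcal P}}[g]$, bound this by $2G_{\mathrm{SSM},R}\operatorname{TV}(\mathcal P_k,\bar{\mathcal P})$ using the pointwise gradient bound on the full-measure set $\mathcal Z$, and then square, weight by $p_k$, sum, and take the supremum.

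The only real difference is that you derive the total-variation inequality explicitly via the Hahn decomposition, $|\nu|(\mathcal Z)=2\operatorname{TV}$, whereas the paper cites it as a standard fact. Your remark about differentiating under the expectation is unnecessary, since the unbiasedness assumption already gives $\nabla_W\mathcal L_k=\mathbb E_{\mathcal P_k}[g]$ directly.
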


\textbf{Proof of Lemma~\ref{lem:ssm_gamma_tv_bound}:} Recall that for each client $k$, the local objective is:
\begin{equation*}
    \mathcal L_k(W) =  \mathbb E_{\xi \sim \mathcal P_k}\left[\mathcal L^S(W;\xi)\right],
\end{equation*}
and, by Assumption~\ref{assum:unbiased_stochastic_gradients},
\begin{align}
    \nabla_W\mathcal L_k(W) = \mathbb E_{\xi\sim \mathcal P_k}\left[g(W;\xi)\right].
\end{align}
Similarly, since
\begin{equation*}
    f(W) := \sum_{k=1}^K p_k \mathcal L_k(W),
\end{equation*}
we have
\begin{equation}
    \nabla_W f(W) = \sum_{k=1}^Kp_k\nabla_W\mathcal L_k(W) = \sum_{k=1}^Kp_k\mathbb E_{\xi\sim \mathcal P_k}\left[g(W;\xi)\right].
\end{equation}

Next, by the definition of the mixture distribution $\bar{\mathcal P}$, we can write this as:
\begin{equation}
     \nabla_W f(W) = \mathbb E_{\xi \sim \bar{\mathcal P}}\left[g(W;\xi)\right].
\end{equation}
Therefore, it follows that
\begin{equation}
    \nabla_W\mathcal L_k(W)-\nabla_W f(W) = \mathbb E_{\xi\sim \mathcal P_k}[g(W;\xi)] - \mathbb E_{\xi\sim \bar{\mathcal P}}[g(W;\xi)].
\end{equation}

By the pointwise bound~\eqref{eq:g_ssm_definition} established in the proof of Lemma~\ref{lem:ssm_fedavg_g_constant}, for every $W \in \mathcal N_R(W^r)$ and every admissible sequence pair $\xi \in \mathcal Z$,
\begin{equation*}
    \left\|g(W;\xi)\right\|_F \leq G_{\mathrm{SSM},R}.
\end{equation*}
Since $\mathcal P_k(\mathcal Z) = \bar{\mathcal P}(\mathcal Z )= 1$, the total-variation inequality for bounded vector-valued functions gives:
\begin{equation}
    \left\|\mathbb E_{\xi\sim \mathcal P_k}[g(W;\xi)] - \mathbb E_{\xi\sim \bar{\mathcal P}}[g(W;\xi)]\right\|_F \leq 2G_{\mathrm{SSM},R}\operatorname{TV}(\mathcal P_k,\bar{\mathcal P}).
\end{equation}
So:
\begin{equation}
    \left\|\nabla_W\mathcal L_k(W)-\nabla_W f(W)\right\|_F^2 \leq 4G_{\mathrm{SSM},R}^2\operatorname{TV}(\mathcal P_k,\bar{\mathcal P})^2.
\end{equation}

Multiplying both sides by $p_k$ and summing over all clients gives:
\begin{equation}
    \sum_{k=1}^K p_k \left\|\nabla_W\mathcal L_k(W)-\nabla_W f(W) \right\|_F^2 \leq 4G_{\mathrm{SSM},R}^2\sum_{k=1}^K p_k \operatorname{TV}(\mathcal P_k,\bar{\mathcal P})^2.
\end{equation} 

Since the right-hand side of this inequality is independent of $W$, taking the supremum over $W \in \mathcal N_R(W^r)$ we get:
\begin{equation}
    \Gamma_{\mathrm{SSM},R}^2 := \sup_{W \in \mathcal N_{R}(W^r)} \sum_{k=1}^K p_k\left\|\nabla_W \mathcal L_k(W)-\nabla_W f(W)\right\|_F^2 \leq 4G_{\mathrm{SSM},R}^2\sum_{k=1}^Kp_k \operatorname{TV}(\mathcal P_k,\bar{\mathcal P})^2.
\end{equation}

This proves the SSM client heterogeneity bound. \qed

\subsection{Theorem~\ref{thm:fedavg_ssm_convergence}: FedAvg SSM Convergence and Proof}

\begin{theorem}[FedAvg SSM Convergence]
\label{thm:fedavg_ssm_convergence}
Consider a distributed learning framework established in Section~\ref{sec:system_model}, with the objective
\begin{equation*}
    f(W) := \sum_{k=1}^K p_k\mathcal L_k(W), \qquad p_k\geq0, \qquad \sum_{k=1}^K p_k=1,
\end{equation*}
where $W$ parameterizes a single-layer selective SSM defined in the same section and each $\mathcal L_k$ is finite and differentiable on the considered neighborhoods.

Let $W^0$ be deterministic with $f(W^0) < \infty$ and suppose $f$ is bounded below by a finite constant $f^*$. 

Consider full-participation FedAvg distributed training algorithm applied to $f(W)$ for $T \geq 1$ communication rounds, $E \geq 1$ local steps per round, $\Xi \geq 1$ mini-batch size, and constant deterministic local step size $\eta > 0$, with the global model evolution described as:
\begin{align*}
    W_k^{r,0} &= W^r, \\ W_k^{r,e+1} &= W_k^{r,e}-\eta g_{k,\Xi}^{r,e}~~ \text{ for }~~ e \in \{0, \dots, E-1\}, \\ W^{r+1} &= \sum_{k=1}^K p_k W_k^{r,E}
\end{align*}
where $g_{k,\Xi}^{r,e}$ is the average of $\Xi$ sequence-level gradients evaluated at $W_k^{r,e}$ with sampling and conditional unbiasedness, and $E, \Xi, T \in \mathbb{Z}_{>0}$.

Suppose this model evolution satisfies, almost surely:
\begin{equation*}
    W_{k}^{r,e} \in \mathcal N_{R}(W^r) \subseteq \mathcal{S}_{\gamma,\delta},~~~~ \left\|W_X^r\right\|_2 + R \leq M_X ~~\forall ~~ X \in \{\Delta,B,C,A\},
\end{equation*}
for every client $k$ and local step $0 \leq e \leq E$, where $W_A:=A$, and $R,\gamma,\delta>0$ and the finite deterministic envelopes $M_X$ are common across rounds. Let Assumptions~\ref{assum:lipschitz_cont_loss}-~\ref{assum:unbiased_stochastic_gradients} hold uniformly on these neighborhoods.

Define $\rho := \exp(-\gamma\delta)$ and auxiliary SSM-specific quantities as follows:
\begin{align*}
    M_{K_\Delta} & := M_\Delta L_x + \sqrt D\log 2, \\
    M_{K_B} & := M_B L_x^2 ,\\
    M_{K_W} & := ND M_{K_B}M_{K_\Delta}\left(M_{K_\Delta} + \sqrt D M_A L_x\right)\frac{\rho}{1-\rho} + L_x^2\sqrt N M_{K_\Delta}   + L_x\sqrt D M_{K_B}, \\ H_R & := \frac{M_{K_B}M_{K_\Delta}\sqrt{ND}}{1-\rho}, \qquad
    J_R := \frac{M_{K_W}}{1-\rho}, \\
    M_{A_{1,R}} & := \rho(M_{K_\Delta} + M_A L_x), \\
    M_{A_{2,R}} & := \rho\left[(M_{K_\Delta} + M_A L_x)^2 + 2L_x + \frac{1}{4}M_A L_x^2\right], \\
    M_{B_{2,R}} & := 2L_x^3 + \frac{1}{4}M_B L_x^4,\\
    M_{H_{2,R}} & := \frac{M_{A_{2,R}}H_R + 2M_{A_{1,R}}J_R + M_{B_{2,R}}}{1-\rho}.
\end{align*}
These quantities determine the common SSM-specific smoothness
and gradient bounds:
\begin{align*}
    L_{\mathrm{SSM},R} & := H_\ell L_x^2(H_R + M_C J_R)^2 + L_\ell L_x(M_C M_{H_{2,R}} + 2J_R), \\
    G_{\mathrm{SSM},R} & := L_\ell L_x(H_R + M_CJ_R). 
\end{align*}
For the client distributions, further define
\begin{equation*}
    \bar{\mathcal P} :=\sum_{k=1}^K p_k\mathcal P_k, \qquad \mathcal H_{\mathrm{TV}} := \sum_{k=1}^K p_k \operatorname{TV}\left(\mathcal P_k,\bar{\mathcal P}\right)^2,
\end{equation*}
where $\operatorname{TV}(P,Q) := \sup_{\mathcal A} \left|P(\mathcal A)-Q(\mathcal A)\right|$ is the total variation distance, with the supremum taken over measurable sets $\mathcal A$.

Then, for any constant deterministic step size $\eta > 0$ satisfying
\begin{equation*}
    \eta E L_{\mathrm{SSM},R}\leq\frac{1}{2}, \qquad 12\eta^2L_{\mathrm{SSM},R}^2E(E-1) \leq 1,
\end{equation*}
the following holds:
\begin{align*}
    \frac{1}{T}\sum_{r=0}^{T-1} \mathbb E\|\nabla f(W^r)\|_F^2 \leq& \frac{4\left(f(W^0)-f^*\right)}{\eta ET}
    + \frac{4\eta L_{\mathrm{SSM},R} G_{\mathrm{SSM},R}^2}{\Xi} \sum_{k=1}^K p_k^2 \nonumber \\ 
    &+ \frac{6\eta^2 L_{\mathrm{SSM},R}^2(E-1)G_{\mathrm{SSM},R}^2}{\Xi} + 48\eta^2 L_{\mathrm{SSM},R}^2 E(E-1) G_{\mathrm{SSM},R}^2\mathcal H_{\mathrm{TV}}.
\end{align*}
\end{theorem}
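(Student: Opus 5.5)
The proof follows the standard local-SGD descent analysis. The generic constants are replaced by the SSM-specific ones already established: smoothness $L_{\mathrm{SSM},R}$ from Lemma~\ref{lem:ssm_local_smoothness}, the pointwise gradient bound $G_{\mathrm{SSM},R}$ from Lemma~\ref{lem:ssm_fedavg_g_constant}, the variance bound $G_{\mathrm{SSM},R}^2/\Xi$ from Lemma~\ref{lem:ssm_variance_bound}, and the heterogeneity bound $4G_{\mathrm{SSM},R}^2\mathcal H_{\mathrm{TV}}$ from Lemma~\ref{lem:ssm_gamma_tv_bound}. Since all iterates $W_k^{r,e}$ lie almost surely in $\mathcal N_R(W^r)$, and $W^{r+1}$ is a convex combination of them (hence also in this convex neighborhood), every smoothness and gradient estimate applies along the trajectory with round-independent envelopes $M_X$. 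Write the round update as $W^{r+1}-W^r=-\eta\sum_{e=0}^{E-1}\sum_k p_k g_{k,\Xi}^{r,e}$.

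\textbf{Step 1 (one-round descent).} Apply the $L_{\mathrm{SSM},R}$-smoothness of $f=\sum_k p_k\mathcal L_k$ on $\mathcal N_R(W^r)$. This gives the quadratic upper bound
\[
f(W^{r+1})\le f(W^r)+\langle\nabla f(W^r),W^{r+1}-W^r\rangle+\tfrac{L_{\mathrm{SSM},R}}{2}\|W^{r+1}-W^r\|_F^2 .
\]
Then split $g_{k,\Xi}^{r,e}=\nabla\mathcal L_k(W_k^{r,e})+\zeta_k^{r,e}$, where $\zeta$ is the zero-mean noise term. For the inner-product term, condition on the past so the noise drops out, and use $-\langle a,b\rangle=\tfrac12(\|a-b\|^2-\|a\|^2-\|b\|^2)$ with $a=\nabla f(W^r)$ and $b=\sum_kp_k\nabla\mathcal L_k(W_k^{r,e})$. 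The mismatch $\|a-b\|^2\le L_{\mathrm{SSM},R}^2\sum_kp_k\|W_k^{r,e}-W^r\|_F^2$ follows from Jensen and smoothness of each $\mathcal L_k$.

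For the quadratic term, separate the noise sum from the mean sum. By conditional independence across clients and martingale structure across local steps, the noise contributes $\eta^2E\sum_kp_k^2G_{\mathrm{SSM},R}^2/\Xi$; this is where the $\sum_kp_k^2$ factor comes from. The mean part is absorbed into $-\|b\|^2$ using $\eta EL_{\mathrm{SSM},R}\le 1/2$.

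\textbf{Step 2 (client drift).} Bound $\mathbb E\|W_k^{r,e}-W^r\|_F^2$ for $e\le E-1$. Expand $W_k^{r,e}-W^r=-\eta\sum_{s<e}(\nabla\mathcal L_k(W_k^{r,s})+\zeta_k^{r,s})$ and decompose $\nabla\mathcal L_k(W_k^{r,s})$ as
\[
\bigl[\nabla\mathcal L_k(W_k^{r,s})-\nabla\mathcal L_k(W^r)\bigr]+\bigl[\nabla\mathcal L_k(W^r)-\nabla f(W^r)\bigr]+\nabla f(W^r).
\]
The three pieces are controlled respectively by smoothness (a recursive drift term), by heterogeneity (giving $\Gamma^2$, hence $4G^2\mathcal H_{\mathrm{TV}}$ after averaging over $k$), and by the full gradient. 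This yields a bound of the form $\eta^2 e\,G^2/\Xi+3\eta^2e^2(L^2\cdot\text{drift}+\Gamma^2+\|\nabla f\|^2)$. Summing over $e$ and using $\sum_{e<E}e\le E(E-1)/2$ together with $12\eta^2L^2E(E-1)\le1$ lets the recursive term be absorbed. What remains is
\[
\tfrac1E\textstyle\sum_e\sum_kp_k\mathbb E\|W_k^{r,e}-W^r\|^2\lesssim \eta^2(E-1)G^2/\Xi+\eta^2E(E-1)(\Gamma^2+\|\nabla f(W^r)\|^2).
\]
The $\|\nabla f\|^2$ part is absorbed into the descent term by the same step-size condition, which produces the factor $1/4$ loss in the leading term.

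\textbf{Step 3 (telescoping).} Combine the two steps to get $\tfrac{\eta E}{4}\mathbb E\|\nabla f(W^r)\|^2\le \mathbb E[f(W^r)-f(W^{r+1})]+(\text{noise}+\text{drift})$. Sum over $r$, use $f(W^T)\ge f^*$, divide by $\eta ET/4$, and substitute Lemma~\ref{lem:ssm_gamma_tv_bound} to replace $\Gamma^2$ by $4G^2\mathcal H_{\mathrm{TV}}$.

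The main obstacle is the constant bookkeeping in Step 2. The drift and $\|\nabla f\|^2$ terms must be absorbed with exactly the stated conditions so that the final coefficients come out as $6$ and $48$. This needs care with the Young/Jensen splits (a factor $3$ from the three-way split, $2$ from separating noise, and $4$ from the TV lemma) and with using the bound $\|\nabla\mathcal L_k\|\le G$ only where permitted. A secondary subtlety is that smoothness is only local. It must be applied along segments lying inside $\mathcal N_R(W^r)$, which holds by convexity of the ball and the almost-sure containment hypothesis.
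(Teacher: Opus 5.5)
Your proposal is correct. Its final step matches the paper exactly: substitute $L=L_{\mathrm{SSM},R}$, $\sigma^2=G_{\mathrm{SSM},R}^2/\Xi$ and $\Gamma^2=4G_{\mathrm{SSM},R}^2\mathcal H_{\mathrm{TV}}$, with $W^{r+1}$ and all segments kept inside $\mathcal N_R(W^r)$ by convexity.

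The difference is that the paper never re-derives the local-SGD inequality. It imports the generic nonconvex FedAvg bound of \cite{wang2020tackling}, specialized in \eqref{eq:generic_fedavg_bound_appendix} with coefficients $4,4,6,12$. The constants $6$ and $48$ in the statement come only from that citation.

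Your Steps 1--3 are instead essentially the paper's own FedProx derivation (proof of Theorem~\ref{cor:fedprox_ssm_convergence}) run with $\mu=0$. That derivation in fact yields the sharper constants $3,3,3,24$. Your route costs more bookkeeping, but it is self-contained. It also makes explicit that smoothness is used only at points and on segments inside the ball, whereas the citation tacitly assumes that an argument written under global smoothness transfers to the local setting. It also means you do not need to hit $6$ and $48$ exactly; any smaller constants imply the statement.

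Two details of your Step 2 should be adjusted, though neither affects the conclusion.
\begin{itemize}
\item The noise cannot be split off with coefficient $1$, because the partial sums $\sum_{s<e}\nabla\mathcal L_k(W_k^{r,s})$ depend on earlier noise. Use a Young split (factor $2$) or the three-way split of the FedProx proof.
\item Keep the recursive term in the form $6\eta^2eL^2\sum_{s<e}D_s$, where $D_e:=\sum_kp_k\mathbb E\|W_k^{r,e}-W^r\|_F^2$, so that summing over $e$ and absorbing is straightforward.
\end{itemize}

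Carrying this through, $12\eta^2L^2E(E-1)\le1$ gives
$\frac1E\sum_{e<E}D_e\le\frac43\eta^2(E-1)\sigma^2+\frac83\eta^2E(E-1)\bigl(\Gamma^2+\|\nabla f(W^r)\|_F^2\bigr)$.
The gradient part then uses up at most $\eta E/9$ of the $\eta E/2$ descent. Dividing by $\eta E/4$ gives coefficients $4,4,\frac83,\frac{64}{3}$, all within the stated $4,4,6,48$.
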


\textbf{Proof of Theorem~\ref{thm:fedavg_ssm_convergence}:} We start from a standard nonconvex FedAvg convergence bound applicable to the setting considered here. Previous works establish a general bound (see Supplemental Material~C.6, Equation~(96) of \cite{wang2020tackling}) for smooth local objectives, unbiased stochastic gradients with bounded variance, and bounded client gradient dissimilarity.

For a setting with full client participation, $E$ local SGD steps per client, and objective and aggregation weights $p_k$, the bound in \cite{wang2020tackling} takes the following form:
\begin{equation}
    \frac{1}{T}\sum_{r=0}^{T-1} \mathbb E\|\nabla f(W^r)\|_F^2 \leq \frac{4\left(f(W^0)-f^*\right)}{\eta ET} + 4\eta L\sigma^2\sum_{k=1}^Kp_k^2 + 6\eta^2 L^2 (E-1)\sigma^2 + 12\eta^2L^2E(E-1)\Gamma^2,
    \label{eq:generic_fedavg_bound_appendix}
\end{equation}
whenever $\eta>0$ satisfies
\begin{equation*}
    \eta LE\leq\frac{1}{2}, \qquad 12\eta^2L^2E(E-1) \leq 1.
\end{equation*}
Here, $L$ is a common smoothness constant, $\sigma^2$ bounds the conditional variance of each mini-batch gradient estimator, and $\Gamma^2$ bounds the weighted client gradient disagreement:
\begin{equation}
    \sum_{k=1}^K p_k \|\nabla\mathcal L_k(W)-\nabla f(W)\|_F^2 \leq \Gamma^2.
\end{equation}
Note that the inequality~\eqref{eq:generic_fedavg_bound_appendix} specializes the result in \cite{wang2020tackling} applied to full client participation, $E$ local SGD steps at every client, and aggregation weights equal to the objective weights $p_k$.

We apply this bound to the selective SSM setting by substituting the constants established in our lemmas. First,  Lemma~\ref{lem:ssm_local_smoothness}, evaluated with the common envelopes $M_X$, gives
\begin{equation}
    L=L_{\mathrm{SSM},R},
\end{equation}
with $L_{\mathrm{SSM},R}$ defined in Theorem~\ref{thm:fedavg_ssm_convergence}.
Since each neighborhood $\mathcal N_R(W^r)$ is convex and contains all local iterates in round $r$, it also contains their weighted averages and the segment joining $W^r$ to $W^{r+1}$. As such, all smoothness inequalities used in the convergence argument hold with the common constant $L_{\mathrm{SSM},R}$.

Next, Lemma~\ref{lem:ssm_fedavg_g_constant} derives the common stochastic gradient second-moment bound $G_{\mathrm{SSM},R}^2$. Together with the sampling conditions in Assumption~\ref{assum:unbiased_stochastic_gradients}, Lemma~\ref{lem:ssm_variance_bound}  establishes the choice
\begin{equation}
    \sigma^2 = \frac{G_{\mathrm{SSM},R}^2}{\Xi}.
\end{equation}
Finally, Lemma~\ref{lem:ssm_gamma_tv_bound} establishes the common heterogeneity bound
\begin{equation}
    \Gamma^2 = 4G_{\mathrm{SSM},R}^2 \sum_{k=1}^Kp_k \operatorname{TV}(\mathcal P_k,\bar{\mathcal P})^2 = 4G_{\mathrm{SSM},R}^2\mathcal H_{\mathrm{TV}}.
\end{equation}

Substituting these quantities into
\eqref{eq:generic_fedavg_bound_appendix}, we obtain the inequality stated in Theorem~\ref{thm:fedavg_ssm_convergence} whenever $\eta>0$ satisfies the stated conditions with $L=L_{\mathrm{SSM},R}$. \qed

\subsection{Theorem~\ref{cor:fedprox_ssm_convergence}: FedProx SSM Convergence and Proof}
The FedProx algorithm modifies the local client objective by adding a proximal penalty around the round-$r$ broadcast model $W^r$:
\begin{equation}
    F_k^r(W) := \mathcal L_k(W) + \frac{\mu}{2}\left\|W-W^r\right\|_F^2,\qquad \mu>0,
\end{equation}
thus changing the local optimization dynamics. 

In this section we introduce a theorem that quantifies the convergence behavior of a distributed learning framework that utilizes the FedProx with selective State Space Models, and provide a corresponding proof.

\begin{theorem}[FedProx convergence for selective SSMs]\label{cor:fedprox_ssm_convergence}
Consider a distributed learning framework established in Section~\ref{sec:system_model}, with the objective
\begin{equation*}
    f(W) := \sum_{k=1}^K p_k\mathcal L_k(W), \qquad p_k\geq0, \qquad \sum_{k=1}^K p_k=1,
\end{equation*}
where $W$ parameterizes a single-layer selective SSM defined in the same section and each $\mathcal L_k$ is finite and differentiable on the considered neighborhoods.

Let $W^0$ be deterministic with $f(W^0) < \infty$ and suppose $f$ is bounded below by a finite constant $f^*$. 

Consider full-participation FedProx distributed training algorithm applied to $f(W)$ for $T \geq 1$ communication rounds, $E \geq 1$ local steps per round, $\Xi \geq 1$ mini-batch size, constant deterministic local step size $\eta > 0$, and constant deterministic proximal coefficient $\mu > 0$, with the global model evolution described as:
\begin{align*}
    W_k^{r,0} &= W^r, \\ W_k^{r,e+1} &= W_k^{r,e}-\eta \left(g_{k,\Xi}^{r,e}+ \mu\left(W_{k}^{r,e}-W^r\right)\right)~~ \text{ for }~~ e \in \{0, \dots, E-1\}, \\ W^{r+1} &= \sum_{k=1}^K p_k W_k^{r,E}
\end{align*}
where $g_{k,\Xi}^{r,e}$ is the average of $\Xi$ sequence-level gradients evaluated at $W_k^{r,e}$ with sampling and conditional unbiasedness, and $E, \Xi, T \in \mathbb{Z}_{>0}$.

Suppose that, almost surely for every round $r < T$:
\begin{equation}
    W_{k}^{r,e} \in \mathcal N_{R}(W^r) \subseteq \mathcal{S}_{\gamma,\delta},~~~~ \left\|W_X^r\right\|_2 + R \leq M_X ~~\forall ~~ X \in \{\Delta,B,C,A\},
\end{equation}
for every client $k$ and local step $0 \leq e \leq E$, where $W_A:=A$, and $R,\gamma,\delta>0$ and the finite deterministic envelopes $M_X$ are common across rounds. Let Assumptions~\ref{assum:lipschitz_cont_loss}-~\ref{assum:unbiased_stochastic_gradients} hold uniformly on these neighborhoods.

Define $\rho := \exp(-\gamma\delta)$. Assume $0<\eta\mu\leq 1$, and define the effective radius $R_\mu$ by
\begin{equation*}
    R_\mu := \min\left\{R,\eta G_{\mathrm{SSM},R} \sum_{j=0}^{E-1}(1-\eta\mu)^j\right\} = \min\left\{R,\frac{G_{\mathrm{SSM},R}}{\mu}\left[1-(1-\eta\mu)^E\right]\right\},
\end{equation*}
where $G_{\mathrm{SSM},R}$ is defined in Theorem~\ref{thm:fedavg_ssm_convergence}, with the corresponding FedProx parameter envelopes defined as:
\begin{equation*}
    M_X^\mu := M_X - R + R_\mu  ~~\forall ~~ X \in \{\Delta,B,C,A\}.
\end{equation*}

The corresponding auxiliary SSM-specific quantities are then defined as:
\begin{align*}
    M_{K_\Delta}^\mu & := M_\Delta^\mu L_x + \sqrt D\log 2, \\
    M_{K_B}^\mu & := M_B^\mu L_x^2 ,\\
    M_{K_W}^\mu & := ND M_{K_B}^\mu M_{K_\Delta}^\mu\left(M_{K_\Delta}^\mu + \sqrt D M_A^\mu L_x\right)\frac{\rho}{1-\rho} + L_x^2\sqrt N M_{K_\Delta}^\mu   + L_x\sqrt D M_{K_B}^\mu, \\ 
    H_{R_\mu} & := \frac{M_{K_B}^\mu M_{K_\Delta}^\mu \sqrt{ND}}{1-\rho}, \qquad
    J_{R_\mu} := \frac{M_{K_W}^\mu}{1-\rho}, \\
    M_{A_{1,R}}^\mu & := \rho(M_{K_\Delta}^\mu + M_A^\mu L_x), \\
    M_{A_{2,R}}^\mu & := \rho\left[(M_{K_\Delta}^\mu + M_A^\mu L_x)^2 + 2L_x + \frac{1}{4}M_A^\mu L_x^2\right], \\
    M_{B_{2,R}}^\mu & := 2L_x^3 + \frac{1}{4}M_B^\mu L_x^4,\\
    M_{H_{2,R}}^\mu & := \frac{M_{A_{2,R}}^\mu H_{R_\mu} + 2M_{A_{1,R}}^\mu J_{R_\mu} + M_{B_{2,R}}^\mu}{1-\rho}.
\end{align*}

Let:
\begin{align*}
    L_{\mathrm{SSM},R_\mu} & := H_\ell L_x^2(H_{R_\mu} + M_C^\mu J_{R_\mu} )^2 + L_\ell L_x(M_C^\mu M^\mu_{H_{2,R}} + 2J_{R_\mu} ), \\
    L_{\mathrm{prox},R_\mu} & := L_{\mathrm{SSM},R_\mu}+\mu. \\
    G_{\mathrm{SSM},R_\mu} & := L_\ell L_x(H_{R_\mu}  + M_C^\mu J_{R_\mu} ), \\
\end{align*}
denote common deterministic SSM-specific smoothness
and gradient bounds constants evaluated on $\mathcal N_{R_\mu}(W^r)$. 

For the client distributions, further define
\begin{equation*}
    \bar{\mathcal P} :=\sum_{k=1}^K p_k\mathcal P_k, \qquad \mathcal H_{\mathrm{TV}} := \sum_{k=1}^K p_k \operatorname{TV}\left(\mathcal P_k,\bar{\mathcal P}\right)^2,
\end{equation*}
where $\operatorname{TV}(P,Q) := \sup_{\mathcal A} \left|P(\mathcal A)-Q(\mathcal A)\right|$ is the total variation distance, with the supremum taken over measurable sets $\mathcal A$.

Then, for any constant deterministic step size $\eta > 0$ satisfying
\begin{equation*}
    \eta E L_{\mathrm{SSM},R_\mu} \leq \frac{1}{2}, \qquad 12\eta^2L_{\mathrm{prox},R_\mu}^2E(E-1) \leq 1, 
\end{equation*}
the following bound holds:
\begin{align*}
    \frac{1}{T}\sum_{r=0}^{T-1} \mathbb E\left\|\nabla f(W^r)\right\|_F^2 \leq& \frac{3\left(f(W^0)-f^*\right)}{\eta ET} + \frac{3\eta L_{\mathrm{SSM},R_\mu} G_{\mathrm{SSM},R_\mu}^2}{\Xi} \sum_{k=1}^K p_k^2 \\ 
    &+ \frac{3\eta^2L_{\mathrm{prox},R_\mu}^2 (E-1)G_{\mathrm{SSM},R_\mu}^2}{\Xi} + 24\eta^2L_{\mathrm{prox},R_\mu}^2E(E-1)G_{\mathrm{SSM},R_\mu}^2\mathcal H_{\mathrm{TV}}.
\end{align*}
Moreover, almost surely, for every round $r < T$, client $k$,
and local step $0\leq e\leq E$,
\begin{equation*}
    \left\|W_k^{r,e}-W^r\right\|_F \leq \min\left\{R,\frac{G_{\mathrm{SSM},R}}{\mu}\left[1-(1-\eta\mu)^e\right]\right\} \leq R_\mu.
\end{equation*}
Consequently, all local iterates and the aggregated model $W^{r+1}$ lie in $\mathcal N_{R_\mu}(W^r)\subseteq\mathcal S_{\gamma,\delta}$.
\end{theorem}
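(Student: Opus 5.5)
The proof has two parts: a deterministic displacement bound that justifies the shrunken radius $R_\mu$, and a reduction to a generic nonconvex bound for FedProx-type local solvers with the SSM constants substituted, mirroring the proof of Theorem~\ref{thm:fedavg_ssm_convergence}.

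\textbf{Step 1: displacement bound.} Fix a round $r$ and a client $k$, and set $d_e := W_k^{r,e}-W^r$. The FedProx update gives
\begin{equation*}
d_{e+1}=(1-\eta\mu)d_e-\eta g_{k,\Xi}^{r,e}, \qquad d_0=0.
\end{equation*}
\begin{itemize}
\item By hypothesis every iterate lies in $\mathcal N_R(W^r)\subseteq\mathcal S_{\gamma,\delta}$, so the pointwise bound \eqref{eq:g_ssm_definition} from Lemma~\ref{lem:ssm_fedavg_g_constant} applies to each sequence gradient.
\item The mini-batch average then satisfies $\|g_{k,\Xi}^{r,e}\|_F\le G_{\mathrm{SSM},R}$ almost surely.
\item Since $0<\eta\mu\le 1$, we have $1-\eta\mu\ge0$, so the triangle inequality gives $\|d_{e+1}\|_F\le(1-\eta\mu)\|d_e\|_F+\eta G_{\mathrm{SSM},R}$.
\item Unrolling yields $\|d_e\|_F\le \eta G_{\mathrm{SSM},R}\sum_{j=0}^{e-1}(1-\eta\mu)^j=\frac{G_{\mathrm{SSM},R}}{\mu}[1-(1-\eta\mu)^e]$.
\item Taking the minimum with the assumed bound $R$ gives the claimed inequality. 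It is monotone in $e$, so it is at most $R_\mu$.
\end{itemize}
The ball $\mathcal N_{R_\mu}(W^r)$ is convex, so the aggregate $W^{r+1}=\sum_k p_kW_k^{r,E}$ and the segment from $W^r$ to $W^{r+1}$ also lie in it. This ball is contained in $\mathcal N_R(W^r)\subseteq\mathcal S_{\gamma,\delta}$.

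\textbf{Step 2: SSM constants on $\mathcal N_{R_\mu}(W^r)$.} Repeating \eqref{eq:block_neighborhood_bound} with radius $R_\mu$ gives
\begin{equation*}
\|W_X\|_2\le\|W_X^r\|_2+R_\mu\le M_X-R+R_\mu=M_X^\mu .
\end{equation*}
So the envelopes $M_X^\mu$ are valid on the shrunken ball. Rerunning the following lemmas verbatim with $M_X$ replaced by $M_X^\mu$ produces the needed constants:
\begin{itemize}
\item Lemma~\ref{lem:ssm_local_smoothness} gives smoothness $L_{\mathrm{SSM},R_\mu}$.
\item Lemma~\ref{lem:ssm_fedavg_g_constant} gives the gradient bound $G_{\mathrm{SSM},R_\mu}$.
\item Lemma~\ref{lem:ssm_variance_bound} gives variance $\sigma^2=G_{\mathrm{SSM},R_\mu}^2/\Xi$.
\item Lemma~\ref{lem:ssm_gamma_tv_bound} gives heterogeneity $\Gamma^2=4G_{\mathrm{SSM},R_\mu}^2\mathcal H_{\mathrm{TV}}$.
\end{itemize}
Their proofs use only the block envelopes and stability, which still hold. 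The proximal local objective $F_k^r$ adds $\frac{\mu}{2}\|W-W^r\|_F^2$, whose Hessian is $\mu I$. Hence $F_k^r$ is $L_{\mathrm{prox},R_\mu}=L_{\mathrm{SSM},R_\mu}+\mu$ smooth, and its stochastic gradient has the same variance as that of $\mathcal L_k$.

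\textbf{Step 3: generic bound and the main obstacle.} I would invoke the general nonconvex analysis of \cite{wang2020tackling}, which treats FedProx as a local solver with update coefficients $a_j=(1-\eta\mu)^{E-1-j}$. All clients run the same $E$ and $\mu$, so the coefficient vectors coincide. This has two consequences:
\begin{itemize}
\item The aggregation weights equal $p_k$, so the surrogate objective is $f$ itself and there is no objective inconsistency.
\item The effective step is $\eta\|a\|_1\le\eta E$.
\end{itemize}
Substituting the Step~2 constants into that bound yields the stated inequality. There, $L_{\mathrm{SSM},R_\mu}$ appears in the global descent terms and $L_{\mathrm{prox},R_\mu}$ in the client-drift terms.

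I expect the main obstacle to be matching the generic FedProx result exactly: its constants $3$ and $24$, and its step conditions stated in terms of $E$ rather than $\|a\|_1$. My plan is to check that $\|a\|_1\le E$ and $\|a\|_2^2\le E$, so that conditions in terms of the coefficient vector are implied by the stated ones. Where the generic result uses $\|a\|_1$ in the leading term, I would use $\|a\|_1\ge 1$ together with the normalization of \cite{wang2020tackling} to keep the $\frac{1}{\eta ET}$ form. I would also confirm that every smoothness inequality in that proof is evaluated only at points inside $\mathcal N_{R_\mu}(W^r)$, which Step~1 guarantees.
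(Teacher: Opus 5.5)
Your Steps~1 and~2 match the paper's proof essentially line for line. The displacement recursion and its unrolling are the same, as are the reduced envelopes $M_X^\mu$ and the substituted constants $L_{\mathrm{SSM},R_\mu}$, $L_{\mathrm{prox},R_\mu}$, $\sigma^2=G_{\mathrm{SSM},R_\mu}^2/\Xi$ and $\Gamma^2=4G_{\mathrm{SSM},R_\mu}^2\mathcal H_{\mathrm{TV}}$.

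The gap is Step~3: the general local-solver theorem of \cite{wang2020tackling} cannot produce the stated inequality. In that framework the proximal term is absorbed into the coefficient vector $a_j=(1-\eta\mu)^{E-1-j}$, so only stochastic gradients of $\mathcal L_k$ appear. This has three consequences:
\begin{itemize}
\item The only smoothness constant that enters is that of the $\mathcal L_k$. $L_{\mathrm{prox},R_\mu}$ never appears, contrary to what you anticipate.
\item The constants $3$ and $24$ do not come from any cited result. They are produced by the paper's own derivation.
\item Decisively, per-round progress is measured with the effective step $\eta\|a\|_1$, so the optimization term there is proportional to $(f(W^0)-f^*)/(\eta\|a\|_1T)$.
\end{itemize}
Since $\|a\|_1=[1-(1-\eta\mu)^E]/(\eta\mu)\le E$, this term is at least $(f(W^0)-f^*)/(\eta ET)$. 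It is larger by a factor up to $E$; at $\eta\mu=1$ one has $\|a\|_1=1$. Your proposed repair via $\|a\|_1\ge 1$ points the wrong way. It only gives $1/(\eta\|a\|_1T)\le 1/(\eta T)$, which is weaker than the stated $1/(\eta ET)$ rate. Substituting constants into that theorem therefore yields a different, weaker bound, not Theorem~\ref{cor:fedprox_ssm_convergence}.

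The missing idea is to measure progress against $E\nabla f(W^r)=\sum_e\sum_k p_k\nabla\mathcal L_k(W^r)$ rather than $\|a\|_1\nabla f(W^r)$, and to charge the proximal pull-back to the client-drift error. The paper proceeds as follows:
\begin{enumerate}
\item It writes $W^{r+1}-W^r=-\eta\sum_e\sum_kp_k[g_{k,\Xi}^{r,e}+\mu(W_k^{r,e}-W^r)]$, applies the descent lemma for $f$ on $\mathcal N_{R_\mu}(W^r)$, and uses the polarization identity.
\item It controls the residual by $\|\nabla\mathcal L_k(W)+\mu(W-W^r)-\nabla\mathcal L_k(W^r)\|_F\le L_{\mathrm{prox},R_\mu}\|W-W^r\|_F$. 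This is exactly where $L_{\mathrm{prox},R_\mu}$ enters, and why the leading term keeps $E$.
\item It proves a self-contained local-drift bound. It unrolls $W_k^{r,e}-W^r$ into $-\eta e\nabla\mathcal L_k(W^r)$, a martingale noise sum, and a residual sum. It then applies $(a+b+c)^2\le 3(a^2+b^2+c^2)$, Cauchy--Schwarz and the heterogeneity bound at $W^r$, and averages over $e$.
\item It closes the drift recursion with $12\eta^2L_{\mathrm{prox},R_\mu}^2E(E-1)\le 1$, obtaining $\frac1E\sum_e\sum_kp_k\mathbb E_r\|W_k^{r,e}-W^r\|_F^2\le 2\eta^2(E-1)\sigma^2+4\eta^2E(E-1)\bigl(\|\nabla f(W^r)\|_F^2+4G_{\mathrm{SSM},R_\mu}^2\mathcal H_{\mathrm{TV}}\bigr)$.
\item The condition $\eta EL_{\mathrm{SSM},R_\mu}\le\frac12$ makes the squared-direction term nonpositive. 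The second step-size condition turns the gradient coefficient into $-\eta E/3$, and telescoping gives exactly the constants $3,3,3,24$.
\end{enumerate}
Your proposal needs this derivation, or an equivalent direct one, in place of the citation.
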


\textbf{Proof of Theorem~\ref{cor:fedprox_ssm_convergence}:} The FedProx framework modifies the local update as:
\begin{equation}
    W_{k}^{r,e+1} = W_{k}^{r,e} - \eta \left(g_{k,\Xi}^{r,e} + \mu(W_{k}^{r,e}-W^r)\right), \qquad W_k^{r,0}=W^r.
\end{equation}

Subtracting the broadcast model $W^r$ gives
\begin{equation}
    W_{k}^{r,e+1}-W^r = (1-\eta\mu)(W_{k}^{r,e}-W^r) - \eta g_{k,\Xi}^{r,e}
\end{equation}

Taking Frobenius norms, using $0<\eta\mu\leq 1$, and by the triangle inequality,
\begin{equation}
    \left\|W_{k}^{r,e+1}-W^r\right\|_F \leq (1-\eta\mu)\left\|W_{k}^{r,e}-W^r\right\|_F + \eta\left\|g_{k,\Xi}^{r,e}\right\|_F.
\end{equation}
By the bound~\eqref{eq:g_ssm_definition} established in the proof of Lemma~\ref{lem:ssm_fedavg_g_constant}, every admissible sequence gradient satisfies $\left\|g_k(W;\xi)\right\|_F\leq G_{\mathrm{SSM},R}$ for all $W\in \mathcal{N}_R(W^r)$. As such, the mini-batch gradients satisfy:
\begin{align}
    \left\|g_{k,\Xi}^{r,e}\right\|_F = \left\|\frac{1}{\Xi}\sum_{i=1}^{\Xi}g_k(W^{r,e}_{k};\xi^{r,e}_{k,i})\right\|_F \leq\frac{1}{\Xi}\sum_{i=1}^{\Xi} \left\|g_k(W^{r,e}_{k};\xi^{r,e}_{k,i})\right\|_F \leq G_{\mathrm{SSM},R}
\end{align}
Therefore:
\begin{equation}
    \left\|W_{k}^{r,e+1}-W^r\right\|_F \leq (1-\eta\mu)\left\|W_{k}^{r,e}-W^r\right\|_F + \eta G_{\mathrm{SSM},R}.
\end{equation}
Unrolling the recursion from $W_k^{r,0}=W^r$ for every $0\leq e\leq E$, we get:
\begin{equation}
    \left\|W_{k}^{r,e}-W^r\right\|_F \leq \eta G_{\mathrm{SSM},R}\sum_{e'=0}^{e-1}(1-\eta\mu)^{e'} = \frac{G_{\mathrm{SSM},R}}{\mu}\left[1-(1-\eta\mu)^e\right] 
\end{equation}
Combining this with the established neighborhood bounds $\left\|W_{k}^{r,e}-W^r\right\|_F\leq R$, we get:
\begin{equation}
    \left\|W_{k}^{r,e}-W^r\right\|_F \leq \min{\left(R,\frac{G_{\mathrm{SSM},R}}{\mu}\left[1-(1-\eta\mu)^e\right] \right)}\leq R_\mu,
\end{equation}
and since $p_k\geq 0$ and $\sum_k p_k =1$, it follows that:
\begin{equation}
    \left\|W^{r+1}-W^r\right\|_F =  \left\|\sum_{k=1}^{K}p_k\left(W_{k}^{r,E}-W^r\right)\right\|_F \leq  \sum_{k=1}^{K}p_k\left\|W_{k}^{r,E}-W^r\right\|_F \leq R_\mu.
\end{equation}
This shows that all local iterates as well as $W^{r+1}$ belong to $\mathcal N_{R_\mu}(W^r) \subseteq \mathcal S_{\gamma, \delta}$. Since this neighborhood is convex, the segments used below in the smoothness arguments also remain inside it.

Next, for every $W \in \mathcal N_{R_\mu}(W^r)$ and parameter block $X\in\{\Delta,B,C,A\}$, 
\begin{equation}
    \left\|W_X\right\|_2 \leq \|W_X^r\|_2 + R_\mu \leq M_X - R + R_\mu =: M_X^\mu.
\end{equation}
These $M_X^\mu$ are uniform across rounds and deterministic. Let $L_{\mathrm{SSM},R_\mu}$ and $G_{\mathrm{SSM},R_\mu}$ denote the SSM smoothness and stochastic-gradient constants constructed using $M_X^\mu$, with the stability, input, and loss-regularity constants remaining unchanged. In particular:
\begin{align}
    M_{K_\Delta}^\mu & := M_\Delta^\mu L_x + \sqrt D\log 2, \\
    M_{K_B}^\mu & := M_B^\mu L_x^2 ,\\
    M_{K_W}^\mu & := ND M_{K_B}^\mu M_{K_\Delta}^\mu \left(M_{K_\Delta}^\mu + \sqrt D M_A^\mu L_x\right)\frac{\rho}{1-\rho} + L_x^2\sqrt N M_{K_\Delta}^\mu + L_x\sqrt D M_{K_B}^\mu, \\ 
    H_{R_\mu} & := \frac{M_{K_B}^\mu M_{K_\Delta}^\mu \sqrt{ND}}{1-\rho}, \qquad J_{R_\mu} := \frac{M_{K_W}^\mu}{1-\rho}, \\
    M_{A_{1,R}}^\mu & := \rho(M_{K_\Delta}^\mu + M_A^\mu L_x), \\
    M_{A_{2,R}}^\mu & := \rho\left[(M_{K_\Delta}^\mu + M_A^\mu L_x)^2 + 2L_x + \frac{1}{4}M_A^\mu L_x^2\right], \\
    M_{B_{2,R}}^\mu & := 2L_x^3 + \frac{1}{4}M_B^\mu L_x^4,\\
    M_{H_{2,R}}^\mu & := \frac{M_{A_{2,R}}^\mu H_{R_\mu} + 2M_{A_{1,R}}^\mu J_{R_\mu} + M_{B_{2,R}}^\mu}{1-\rho}, \\
    L_{\mathrm{SSM},R_\mu} & := H_\ell L_x^2(H_{R_\mu} + M_C^\mu J_{R_\mu})^2 + L_\ell L_x(M_C^\mu M_{H_{2,R}}^\mu + 2J_{R_\mu} ), \label{eq:fedprox_L_SSM_R}\\
    G_{\mathrm{SSM},R_\mu} & := L_\ell L_x(H_{R_\mu}  + M_C^\mu J_{R_\mu}), \label{eq:fedprox_G_SSM_R}
\end{align}
and in addition define:
\begin{equation}
    L_{\mathrm{prox},R_\mu} = L_{\mathrm{SSM},R_\mu}+\mu\label{eq:fedprox_L_prox_R}
\end{equation}

On this neighborhood, Lemma~\ref{lem:ssm_local_smoothness} implies that each $\mathcal L_k$ is $L_{\mathrm{SSM},R_\mu}$-smooth.

Then, for any $W,V\in\mathcal N_{R_\mu}(W^r)$, the gradients
of the local proximal objective satisfy:
\begin{align}
    &\biggl\|\bigl(\nabla\mathcal L_k(W)+\mu(W-W^r)\bigr) -\bigl(\nabla\mathcal L_k(V)+\mu(V-W^r)\bigr)\biggr\|_F \nonumber \\
    & \qquad \leq \|\nabla\mathcal L_k(W)-\nabla\mathcal L_k(V)\|_F + \mu\|W-V\|_F \nonumber \\ 
    & \qquad \leq  \left(L_{\mathrm{SSM},R_\mu}+\mu\right)\left\|W-V\right\|_F \nonumber \\ & \qquad = L_{\mathrm{prox},R_\mu}\|W-V\|_F.\label{eq:prox_smoothness_equation}
\end{align}
Therefore, the local proximal objective $\mathcal L_k(W)+\frac{\mu}{2}\|W - W^r\|_F^2$ is $L_{\mathrm{prox},R_\mu}$-smooth on $\mathcal N_{R_\mu}(W^r)$. Moreover, since $f = \sum_kp_k\mathcal L_k$ with $p_k\geq 0$ and $\sum_k p_k = 1$, the global objective $f$ remains $L_{\mathrm{SSM},R_\mu}$-smooth on the same neighborhood.

Next, define the following common deterministic variance constant:
\begin{equation}
    \sigma^2_{\mathrm{SSM},R_\mu} := \frac{G_{\mathrm{SSM},R_\mu}^2}{\Xi}. \label{eq:prox_sigma_Gamma_mu_bounds}
\end{equation}
By Lemma~\ref{lem:ssm_gamma_tv_bound}, for every round $r<T$,
\begin{equation}
    \sup_{W\in\mathcal N_{R_\mu}(W^r)} \sum_k p_k \|\nabla\mathcal L_k(W)-\nabla f(W)\|_F^2 \leq 4G_{\mathrm{SSM},R_\mu}^2\mathcal H_{\mathrm{TV}}. \label{eq:fedprox_uniform_heterogeneity}
\end{equation}
The right-hand side provides a common deterministic heterogeneity bound across rounds.

Next, fix a round $r < T$ and let $\mathbb E_r[\cdot]$ denote an expectation conditional on the training history at the beginning of that round, with $W^r$ being fixed under this conditioning. 

We begin by unrolling the local update and subtracting the starting weights:
\begin{align}
    W_k^{r,e} - W^r &= -\eta \sum_{j=0}^{e-1}\left[g^{r,j}_{k,\Xi}+\mu\left(W^{r,j}_k-W^r\right)\right] \nonumber \\ &= -\eta \sum_{j=0}^{e-1}\left[g^{r,j}_{k,\Xi}+\mu\left(W^{r,j}_k-W^r\right)\right] \nonumber \\ &\quad+ \eta\sum_{j=0}^{e-1}\left(\nabla \mathcal L_k\left(W^r\right)-\nabla \mathcal L_k\left(W^{r,j}_k\right)\right) - \eta\sum_{j=0}^{e-1}\left(\nabla \mathcal L_k\left(W^r\right)-\nabla \mathcal L_k\left(W^{r,j}_k\right)\right) \nonumber \\
    &= - \eta e\nabla \mathcal L_k\left(W^r\right) - \eta\sum_{j=0}^{e-1}\left[g^{r,j}_{k,\Xi}-\nabla \mathcal L_k\left(W^{r,j}_k\right)\right] \nonumber \\ 
    &\quad-\eta \sum_{j=0}^{e-1}\left[\mu\left(W^{r,j}_k-W^r\right)-\nabla \mathcal L_k\left(W^r\right) + \nabla \mathcal L_k\left(W^{r,j}_k\right)\right].\label{eq:fedprox_proof_unrolled_local_model_update}
\end{align}

Next, when evaluating the norms of the individual quantities on the right of~\eqref{eq:fedprox_proof_unrolled_local_model_update}, by Lemma~\ref{lem:ssm_variance_bound}, the following holds
\begin{equation}
    \mathbb E_r \left[\left\|\sum_{j=0}^{e-1}\left(g_{k, \Xi}^{r,j}-\nabla \mathcal L_k\left(W_k^{r,j}\right)\right)\right\|_F^2\right] \leq e\sigma_{\mathrm{SSM},R_\mu}^2\label{eq:fedprox_proof_diff_stochastic_grad_grad}
\end{equation}
for all $0 \leq e \leq E$. 

Then, by substituting $W_{k}^{r,j}$ and $W^r$ into~\eqref{eq:prox_smoothness_equation}, it follows that:
\begin{equation}
    \left\|\mu\left(W^{r,j}_k-W^r\right)-\nabla \mathcal L_k\left(W^r\right) + \nabla \mathcal L_k\left(W^{r,j}_k\right)\right\|_F \leq L_{\mathrm{prox},R_\mu}\left\|W^{r,j}_k-W^r\right\|_F
\end{equation}

Taking the norm of~\eqref{eq:fedprox_proof_unrolled_local_model_update}, applying the squared-normed inequality for a sum of three terms as well as the Cauchy-Schwarz inequality, and substituting these results, we get:
\begin{align}
    \mathbb E_r\left[\left\|W_{k}^{r,e}-W^r\right\|_F^2\right] &\leq 3\eta^2 e^2 \left\|\nabla \mathcal L_k\left(W^r\right)\right\|_F^2 + 3\eta^2 e\sigma_{\mathrm{SSM},R_\mu}^2 \nonumber \\ &\quad + 3\eta^2 e L_{\mathrm{prox},R_\mu}^2 \sum_{j=0}^{e-1}\mathbb E_r\left[\left\|W^{r,j}_k-W^r\right\|_F^2\right].\label{eq:fedprox_proof_cond_exp_model_diff_bound_init}
\end{align}

By multiplying~\eqref{eq:fedprox_proof_cond_exp_model_diff_bound_init} by $p_k$ and summing over all clients, we get:
\begin{align}
    \sum_{k=1}^K p_k\mathbb E_r\left[\left\|W_{k}^{r,e}-W^r\right\|_F^2\right] &\leq 3\eta^2 e\sigma_{\mathrm{SSM},R_\mu}^2 + 3\eta^2 e^2 \sum_{k=1}^K p_k\left\|\nabla \mathcal L_k\left(W^r\right)\right\|_F^2  \nonumber \\ &\quad + 3\eta^2 e L_{\mathrm{prox},R_\mu}^2 \sum_{j=0}^{e-1}\sum_{k=1}^K p_k\mathbb E_r\left[\left\|W^{r,j}_k-W^r\right\|_F^2\right].\label{eq:fedprox_proof_cond_exp_model_diff_bound_clients_averaged}
\end{align}

Next, note that:
\begin{align}
    \sum_{k=1}^K p_k\left\|\nabla \mathcal L_k\left(W^r\right)\right\|_F^2 &= \sum_{k=1}^K p_k\left\|\nabla \mathcal L_k\left(W^r\right) +\nabla f(W^r)-\nabla f(W^r)\right\|_F^2 \nonumber \\ 
    &\leq \left\|\nabla f(W^r)\right\|_F^2+ \sum_{k=1}^K p_k\left\|\nabla \mathcal L_k\left(W^r\right)-\nabla f(W^r)\right\|_F^2 \nonumber \\
    &\leq \left\|\nabla f(W^r)\right\|_F^2 + 4G_{\mathrm{SSM},R_\mu}^2\mathcal H_{\mathrm{TV}}.
\end{align}
where the last inequality follows from~\eqref{eq:fedprox_uniform_heterogeneity}. Substituting this result into~\eqref{eq:fedprox_proof_cond_exp_model_diff_bound_clients_averaged}, we get:
\begin{align}
    \sum_{k=1}^K p_k\mathbb E_r\left[\left\|W_{k}^{r,e}-W^r\right\|_F^2\right] &\leq 3\eta^2 e\sigma_{\mathrm{SSM},R_\mu}^2 + 3\eta^2 e^2\left(\left\|\nabla f(W^r)\right\|_F^2 + 4G_{\mathrm{SSM},R_\mu}^2\mathcal H_{\mathrm{TV}}\right)
    \nonumber \\ &\quad + 3\eta^2 e L_{\mathrm{prox},R_\mu}^2 \sum_{j=0}^{e-1}\sum_{k=1}^K p_k\mathbb E_r\left[\left\|W^{r,j}_k-W^r\right\|_F^2\right].\label{eq:fedprox_proof_cond_exp_model_diff_bound_clients_averaged_with_global_gradient}
\end{align}

Averaging~\eqref{eq:fedprox_proof_cond_exp_model_diff_bound_clients_averaged_with_global_gradient} over $e=0, \dots, E-1$, where we utilize $\frac{1}{E}\sum_{e=0}^{E-1}e = \frac{E-1}{2}$ and $\frac{1}{E}\sum_{e=0}^{E-1}e^2 \leq E(E-1)$, yields:
\begin{align}
    \frac{1}{E}\sum_{e=1}^{E-1}\sum_{k=1}^K p_k\mathbb E_r\left[\left\|W_{k}^{r,e}-W^r\right\|_F^2\right] &\leq \frac{3}{2}\eta^2(E-1)\sigma_{\mathrm{SSM},R_\mu}^2 \nonumber \\ &\quad + 3\eta^2E(E-1)\left(\left\|\nabla f(W^r)\right\|_F^2 + 4G_{\mathrm{SSM},R_\mu}^2\mathcal H_{\mathrm{TV}}\right)
    \nonumber \\ &\quad + 3\eta^2 E(E-1) L_{\mathrm{prox},R_\mu}^2 \left[\frac{1}{E}\sum_{e=0}^{E-1}\sum_{k=1}^K p_k\mathbb E_r\left[\left\|W^{r,e}_k-W^r\right\|_F^2\right]\right],\label{eq:fedprox_proof_cond_exp_model_diff_bound_clients_averaged_with_global_gradient_rounds_averaged}
\end{align}
where for the last term, we used:
\begin{align}
    \frac{1}{E}\sum_{e=0}^{E-1}e\sum_{j=0}^{e-1}\sum_{k=1}^K p_k\mathbb E_r\left[\left\|W^{r,j}_k-W^r\right\|_F^2\right] \leq E(E-1) \left[\frac{1}{E}\sum_{e=0}^{E-1}\sum_{k=1}^K p_k\mathbb E_r\left[\left\|W^{r,e}_k-W^r\right\|_F^2\right] \right]
\end{align}

Rearranging the terms in~\eqref{eq:fedprox_proof_cond_exp_model_diff_bound_clients_averaged_with_global_gradient_rounds_averaged} and dividing out by $1-3\eta^2 E(E-1) L_{\mathrm{prox},R_\mu}^2$, yields:
\begin{align}
    \frac{1}{E}\sum_{e=1}^{E-1}\sum_{k=1}^K p_k\mathbb E_r\left[\left\|W_{k}^{r,e}-W^r\right\|_F^2\right] &\leq \frac{1}{1-3\eta^2 E(E-1) L_{\mathrm{prox},R_\mu}^2}\Bigg(\frac{3}{2}\eta^2(E-1)\sigma_{\mathrm{SSM},R_\mu}^2 \nonumber \\ &\quad + 3\eta^2E(E-1)\left(\left\|\nabla f(W^r)\right\|_F^2 + 4G_{\mathrm{SSM},R_\mu}^2\mathcal H_{\mathrm{TV}}\right)\Bigg).\label{eq:fedprox_proof_inequality_difference_rearanged}
\end{align}
Next, note that by the step-size condition $12\eta^2 E(E-1) L_{\mathrm{prox},R_\mu}^2 \leq 1$, it is true that $3\eta^2 E(E-1) L_{\mathrm{prox},R_\mu}^2 \leq \frac{1}{4}$, and therefore $\frac{1}{1-3\eta^2 E(E-1) L_{\mathrm{prox},R_\mu}^2} \leq \frac{4}{3}$. Substituting that result into~\eqref{eq:fedprox_proof_inequality_difference_rearanged} yields:
\begin{align}
    \frac{1}{E}\sum_{e=1}^{E-1}\sum_{k=1}^K p_k\mathbb E_r\left[\left\|W_{k}^{r,e}-W^r\right\|_F^2\right] &\leq 2\eta^2(E-1)\sigma_{\mathrm{SSM},R_\mu}^2 \nonumber \\ &\quad + 4\eta^2E(E-1)\left(\left\|\nabla f(W^r)\right\|_F^2 + 4G_{\mathrm{SSM},R_\mu}^2\mathcal H_{\mathrm{TV}}\right).\label{eq:fedprox_proof_final_inequality_local_model_differnce}
\end{align}

Next, we evaluate $W^{r+1}-W^r$. By the update rule:
\begin{equation}
    W^{r+1}-W^r = -\eta \sum_{e=0}^{E-1}\sum_{k=1}^K p_k\left[g^{r,e}_{k,\Xi}+\mu\left(W^{r,e}_k-W^r\right)\right].
\end{equation}

Since the segment joining $W^r$ and $W^{r+1}$ lies in $\mathcal N_{R_\mu}(W^r)$, by the smoothness of $f$ established in Lemma~\ref{lem:ssm_local_smoothness}, it is true that:
\begin{align}
    \mathbb E_r\left[f(W^{r+1})\right] \leq& f(W^r) \nonumber \\ &-\eta\left\langle \nabla f\left(W^r\right), \mathbb E_r \left[\sum_{e=0}^{E-1}\sum_{k=1}^K p_k \left(\nabla \mathcal L_k\left(W^{r,e}_k\right)+\mu \left(W_{k}^{r,e}-W^r\right)\right)\right]\right\rangle_F \nonumber \\ &+ \frac{\eta^2L_{\mathrm{SSM},R_\mu}}{2}\mathbb E_r\left[\left\|\sum_{e=0}^{E-1}\sum_{k=1}^Kp_k\left(g^{r,e}_{k,\Xi}+\mu\left(W^{r,e}_k-W^r\right)\right)\right\|_F^2\right].\label{eq:fedprox_proof_expected_next_round_loss_init}
\end{align}
Here, the tower property and because of conditional unbiasedness, the stochastic gradients can be replaced by local gradients inside the expected inner product. 

Next, note that:
\begin{align}
   \sum_{e=0}^{E-1}\sum_{k=1}^Kp_k\left(g^{r,e}_{k,\Xi}+\mu\left(W^{r,e}_k-W^r\right)\right) &= \sum_{e=0}^{E-1}\sum_{k=1}^Kp_k\left(g^{r,e}_{k,\Xi}+\mu\left(W^{r,e}_k-W^r\right)+\nabla \mathcal L_k\left(W_{k}^{r,e}\right)-\nabla \mathcal L_k\left(W_{k}^{r,e}\right)\right) \nonumber \\ &= \sum_{e=0}^{E-1}\sum_{k=1}^Kp_k\left(\nabla \mathcal L_k\left(W_{k}^{r,e}\right)+\mu\left(W^{r,e}_k-W^r\right)\right) \nonumber \\ &\quad+\sum_{e=0}^{E-1}\sum_{k=1}^Kp_k\left(g^{r,e}_{k,\Xi}-\nabla \mathcal L_k\left(W_{k}^{r,e}\right)\right).
\end{align}
When taking norms, for a sum of two matrices, the squared norm is at most twice the sum of their squared norms. As such, it follows that:
\begin{align}
    \mathbb E_r\left[\left\|\sum_{e=0}^{E-1}\sum_{k=1}^Kp_k\left(g^{r,e}_{k,\Xi}+\mu\left(W^{r,e}_k-W^r\right)\right)\right\|_F^2\right] &\leq 2\mathbb E_r\left[\left\|\sum_{e=0}^{E-1}\sum_{k=1}^Kp_k\left(g^{r,e}_{k,\Xi}-\nabla \mathcal L_k\left(W_{k}^{r,e}\right)\right)\right\|_F^2\right] \nonumber \\ &\quad + 2\mathbb E_r\left[\left\|\sum_{e=0}^{E-1}\sum_{k=1}^Kp_k\left(\nabla \mathcal L_k\left(W_{k}^{r,e}\right)+\mu\left(W^{r,e}_k-W^r\right)\right)\right\|_F^2\right]
\end{align}

Next, the centered gradient-noise differences have conditional mean zero given the complete history before each local step. Conditional independence across clients eliminates the same-step cross terms, while conditional unbiasedness and the tower property eliminate the cross terms between different local steps. Applying Lemma~\ref{lem:ssm_variance_bound} therefore gives
\begin{equation}
    \mathbb E_r\left[\left\|\sum_{e=0}^{E-1}\sum_{k=1}^Kp_k\left(g^{r,e}_{k,\Xi}-\nabla \mathcal L_k\left(W_{k}^{r,e}\right)\right)\right\|_F^2\right] \leq \sum_{e=0}^{E-1}\sum_{k=1}^K p_k^2  \sigma_{\mathrm{SSM},R_\mu}^2 =E\sigma_{\mathrm{SSM},R_\mu}^2\sum_{k=1}^Kp_k^2,
\end{equation}
and as a result:
\begin{align}
    \mathbb E_r\left[\left\|\sum_{e=0}^{E-1}\sum_{k=1}^Kp_k\left(g^{r,e}_{k,\Xi}+\mu\left(W^{r,e}_k-W^r\right)\right)\right\|_F^2\right] &\leq 2E\sigma_{\mathrm{SSM},R_\mu}^2\sum_{k=1}^Kp_k^2 \nonumber \\ &\quad + 2\mathbb E_r\left[\left\|\sum_{e=0}^{E-1}\sum_{k=1}^Kp_k\left(\nabla \mathcal L_k\left(W_{k}^{r,e}\right)+\mu\left(W^{r,e}_k-W^r\right)\right)\right\|_F^2\right]
\end{align}

When it comes to the inner-product term, we note that, since $f(W^r) = \sum_{k=1}^Kp_k \mathcal L_{k}(W^r)$, it is true that:
\begin{equation}
    E \nabla f(W^r)=\sum_{e=0}^{E-1}\sum_{k=1}^K p_k \nabla \mathcal L_k(W^r).
\end{equation}

Using this result, and the $-2\langle U,V\rangle_F=\|U-V\|_F^2-\|U\|_F^2-\|V\|_F^2$ equality, the following holds:
\begin{align}
    -&\eta\left\langle \nabla f\left(W^r\right), \mathbb E_r \left[\sum_{e=0}^{E-1}\sum_{k=1}^K p_k \left(\nabla \mathcal L_k\left(W^{r,e}_k\right)+\mu \left(W_{k}^{r,e}-W^r\right)\right)\right]\right\rangle_F \\ &= -\frac{\eta E}{2}\left\|\nabla f(W^r)\right\|_F^2 \nonumber \\ &-\frac{\eta}{2E}\mathbb E_r \left[\left\|\sum_{e=0}^{E-1}\sum_{k=1}^K p_k \left(\nabla \mathcal L_k\left(W^{r,e}_k\right)+\mu \left(W_{k}^{r,e}-W^r\right)\right)\right\|_F^2\right] \nonumber \\ &+\frac{\eta}{2E}\mathbb E_r \left[\left\|\sum_{e=0}^{E-1}\sum_{k=1}^K p_k \left(\nabla \mathcal L_k\left(W^{r,e}_k\right)+\mu \left(W_{k}^{r,e}-W^r\right)-\nabla \mathcal L_k\left(W^r\right)\right)\right\|_F^2\right],
\end{align}
where the $E$-terms appeared from evaluating the inner-product with $E \nabla f(W^r)$ and dividing out by $E$.

Then, the positive term is bounded as:
\begin{align}
    \mathbb E_r &\left[\left\|\sum_{e=0}^{E-1}\sum_{k=1}^K p_k \left(\nabla \mathcal L_k\left(W^{r,e}_k\right)+\mu \left(W_{k}^{r,e}-W^r\right)-\nabla \mathcal L_k\left(W^r\right)\right)\right\|_F^2\right] \nonumber \\
    &\leq E\sum_{e=0}^{E-1}\sum_{k=1}^Kp_k \mathbb E_r \left[\left\|\nabla \mathcal L_k\left(W^{r,e}_k\right)+\mu \left(W_{k}^{r,e}-W^r\right)-\nabla \mathcal L_k\left(W^r\right)\right\|_F^2\right] \nonumber \\ &\leq E L_{\mathrm{prox},R_\mu}^2 \sum_{e=0}^{E-1}\sum_{k=1}^Kp_k \mathbb E_r\left[\left\|W_k^{r,e}-W^r\right\|_F^2\right]
\end{align}
where the first inequality is true by Jensen's inequality with weights $p_k/E$, and the second inequality follows from~\eqref{eq:prox_smoothness_equation}.

Substituting these results back into~\eqref{eq:fedprox_proof_expected_next_round_loss_init}, we obtain:
\begin{align}
    \mathbb E_r\left[f\left(W^{r+1}\right)\right] \leq& f\left(W^r\right) -\frac{\eta E}{2}\left\|\nabla f(W^r)\right\|_F^2 \nonumber \\
    &+ \left(\eta^2 L_{\mathrm{SSM},R_\mu}-\frac{\eta}{2E} \right)\mathbb E_r \left[\left\|\sum_{e=0}^{E-1}\sum_{k=1}^K p_k \left(\nabla \mathcal L_k\left(W^{r,e}_k\right)+\mu \left(W_{k}^{r,e}-W^r\right)\right)\right\|_F^2\right] \nonumber \\
    &+ \frac{\eta L_{\mathrm{prox},R_\mu}^2 }{2}\sum_{e=0}^{E-1} \sum_{k=1}^K p_k\mathbb E_r\left[\left\|W_k^{r,e}-W^r\right\|_F^2\right] \nonumber \\ 
    &+ \eta^2 EL_{\mathrm{SSM},R_\mu} \sigma_{\mathrm{SSM},R_\mu}^2\sum_{k=1}^Kp_k^2.
\end{align}

The step-size condition $\eta E L_{\mathrm{SSM},R_\mu}\leq 1/2$ implies that $\eta^2 L_{\mathrm{SSM},R_\mu}-\frac{\eta}{2E} \leq 0$, and therefore the term multiplied by this factor is never positive. As such, the following holds:
\begin{align}
    \mathbb E_r\left[f\left(W^{r+1}\right)\right] \leq& f\left(W^r\right) -\frac{\eta E}{2}\left\|\nabla f(W^r)\right\|_F^2 \nonumber \\
    &+ \frac{\eta L_{\mathrm{prox},R_\mu}^2 }{2} \sum_{e=0}^{E-1} \sum_{k=1}^{K}  p_k \mathbb E_r\left[\left\|W_k^{r,e}-W^r\right\|_F^2\right] \nonumber \\ 
    &+ \eta^2 EL_{\mathrm{SSM},R_\mu} \sigma_{\mathrm{SSM},R_\mu}^2\sum_{k=1}^Kp_k^2.
\end{align}

Substituting the result from~\eqref{eq:fedprox_proof_final_inequality_local_model_differnce} into this inequality, we get:
\begin{align}
    \mathbb E_r\left[f\left(W^{r+1}\right)\right] \leq& f\left(W^r\right) \nonumber \\ &+ \left(2\eta^3E^2(E-1)L_{\mathrm{prox},R_\mu}^2 -\frac{\eta E}{2}\right)\left\|\nabla f(W^r)\right\|_F^2 \nonumber \\
    &+ \eta^3 E(E-1) L_{\mathrm{prox},R_\mu}^2  \sigma_{\mathrm{SSM},R_\mu}^2 \nonumber \\ &+ 8\eta^3E^2(E-1)L_{\mathrm{prox},R_\mu}^2G_{\mathrm{SSM},R_\mu}^2\mathcal H_{\mathrm{TV}}\nonumber  \\ 
    &+ \eta^2 EL_{\mathrm{SSM},R_\mu} \sigma_{\mathrm{SSM},R_\mu}^2\sum_{k=1}^Kp_k^2.\label{eq:fedprox_proof_one_round_descent_init}
\end{align}

Next, by the second step size condition $12\eta^2L_{\mathrm{prox},R_\mu}^2E(E-1) \leq 1$ it follows that:
\begin{align}
    2\eta^3E^2(E-1)L_{\mathrm{prox},R_\mu}^2 -\frac{\eta E}{2} &= \eta E\left(2\eta^2 E(E-1)L_{\mathrm{prox},R_\mu}^2 - \frac{1}{2}\right) \nonumber \\ &\leq \eta E \left(\frac{1}{6}-\frac{1}{2}\right) = -\frac{\eta E}{3}.
\end{align}
Substituting in~\eqref{eq:fedprox_proof_one_round_descent_init}, we get:
\begin{align}
    \mathbb E_r\left[f\left(W^{r+1}\right)\right] \leq& f\left(W^r\right) \nonumber \\ &-\frac{\eta E}{3}\left\|\nabla f(W^r)\right\|_F^2 \nonumber \\
    &+ \eta^3 E(E-1) L_{\mathrm{prox},R_\mu}^2  \sigma_{\mathrm{SSM},R_\mu}^2 \nonumber \\ &+ 8\eta^3E^2(E-1)L_{\mathrm{prox},R_\mu}^2G_{\mathrm{SSM},R_\mu}^2\mathcal H_{\mathrm{TV}}\nonumber  \\ 
    &+ \eta^2 EL_{\mathrm{SSM},R_\mu} \sigma_{\mathrm{SSM},R_\mu}^2\sum_{k=1}^Kp_k^2.\label{eq:fedprox_proof_one_round_descent}
\end{align}

Taking total expectations in~ \eqref{eq:fedprox_proof_one_round_descent}, rearranging, and summing over $r=0,\ldots,T-1$ makes the objective differences telescope. Using $\mathbb E[f(W^T)]\geq f^*$ and dividing by $\eta ET/3$, we obtain:
\begin{align}
    \frac{1}{T}\sum_{r=0}^{T-1}\mathbb E\left\|\nabla f(W^r)\right\|_F^2 \leq & \frac{3\left(f(W^0)-f^*\right)}{\eta ET} + 3\eta L_{\mathrm{SSM},R_\mu} \sigma_{\mathrm{SSM},R_\mu}^2\sum_{k=1}^Kp_k^2 \nonumber \\
    & + 3\eta^2 (E-1) L_{\mathrm{prox},R_\mu}^2  \sigma_{\mathrm{SSM},R_\mu}^2 + 24\eta^2E(E-1)L_{\mathrm{prox},R_\mu}^2G_{\mathrm{SSM},R_\mu}^2\mathcal H_{\mathrm{TV}}.\label{eq:fedprox_proof_final_inequality_with_sigma}
\end{align}

Finally, substituting $\sigma_{\mathrm{SSM},R_\mu}^2
=G_{\mathrm{SSM},R_\mu}^2/\Xi$ (which follows from Lemma~\ref{lem:ssm_variance_bound}) into~\eqref{eq:fedprox_proof_final_inequality_with_sigma}, gives the stated bound, completing the proof. \qed

\section{Multi-Layer Selective SSM proofs}\label{appendix:multi_layer_SSM_proofs}
In this section we derive the corresponding Gradient, FedAvg, and FedProx convergence bounds for a Selective SSM with of an arbitrary finite depth $Q$, extending our derivations in Sections~\ref{appendix:gradient_norm_bound_proof}, and~\ref{appendix:fed_algs_convergence_bounds_proofs} to a multi-layer selective State Space Model (SSM) case. 

\subsection{Multi-Layer Selective SSM Gradient Bound Proof}\label{appendix:multi_layer_selective_ssm}

Similar to the single-layer setting, we first derive an upper bound on the gradient norm of the loss function for an arbitrary sequence of length $S$ processed by a selective state space model (SSM) composed of $Q$ sequential selective SSM layers. The analysis once again utilizes definitions from Section~\ref{sec:system_model} and an input sequence $x = (x_1, \dots, x_S)$ where $x_t \in \mathbb R^{D\times 1}$. 

The multi-layer selective SSM consists of $Q$ layers arranged sequentially. Each layer $q \in \left\{1,\dots,Q\right\}$ is parameterized by the learnable parameter tuple
\begin{equation}
    W^{(q)} = \left(W_\Delta^{(q)}, W_B^{(q)}, W_C^{(q)}, A^{(q)}\right),
 \end{equation}
where $W_\Delta^{(q)} \in \mathbb R^{D\times D}, W_B^{(q)}\in \mathbb R^{N\times D}, W_C^{(q)}\in \mathbb R^{N\times D}, A^{(q)}\in \mathbb R^{N\times 1}$. Note that we therefore consider a multi-layer SSM where all of its $Q$ layers have the same input/output dimension $D$ and state dimension $N$. 

The full set of model parameters is then given by:
\begin{equation}
    W = \left(W^{(1)}, \dots, W^{(Q)}\right).
\end{equation}

In this multi-layer formulation, we denote the input to layer $q$ at time step $t$ by $x_t^{(q)}$, where $x_t^{(1)} = x_t$ and
\begin{equation}
    x_t^{(q+1)} = y_t^{(q)}
\end{equation}
with $y_t^{(q)}$ denoting the output of layer $q$.

Given this formulation, each layer $q$ of the multi-layer selective SSM operates according to the following dynamics:
\begin{align}
    \Delta_t^{(q)} &= \text{Softplus}\left(W_\Delta^{(q)}x_t^{(q)}\right) , \\
    \bar{A}^{(q)}_t &= \exp\left(A^{(q)}\left(\Delta_t^{(q)}\right)^\top\right), \\
    \bar{B}_t^{(q)} &= \left(W_B^{(q)}x_t^{(q)}\right)\left(\text{Softplus}\left(W_\Delta^{(q)}x_t^{(q)}\right)\right)^\top, \\
    h_t^{(q)} &= \bar{A}^{(q)}_t \odot h_{t-1}^{(q)} + \bar{B}_t^{(q)} \text{diag}\left(x_t^{(q)}\right), \label{eq:hidden_state_definition_multi_layer}\\
    y_t^{(q)} &= \left(h_t^{(q)}\right)^\top W_C^{(q)}x_t^{(q)} = \left(h_t^{(q)}\right)^\top C_t^{(q)} 
\end{align}
where the hidden state of each layer $q$ is initialized as $h_0^{(q)} = \mathbf{0}_{N\times D}$, just as in the single-layer setting. 

Finally, the empirical loss for processing a sequence of length $S$ for such defined multi-layer selective SSM is:
\begin{equation}
    \mathcal L^{S,(Q)}(W;\xi) = \frac{1}{S} \sum_{t=1}^S \ell_t\left(y_t^{(Q)}\right) = \frac{1}{S} \sum_{t=1}^S \ell_t\left(x_t^{(Q+1)}\right).
\end{equation}
Here, $\xi=(x,y^*)$ is an admissible input-target pair of length $S\geq 1$, and $\ell_t\left(y_t^{(Q)}\right)$ abbreviates $\ell_t(y_t^{(Q)}(W;x),y_t^*)$. For fixed $\xi$, we write $\mathcal L^{S,(Q)}(W)$ in place of $\mathcal L^{S,(Q)}(W;\xi)$.

In this multi-layer setting, the full gradient is the
block tuple:
\begin{equation}
    \nabla_W\mathcal L^{S,(Q)}(W) = \left(\nabla_{W^{(1)}}\mathcal L^{S,(Q)}(W), \dots, \nabla_{W^{(Q)}}\mathcal L^{S,(Q)}(W)\right).
\end{equation}

Similar to the single-layer proof, in the multi-layer formulation, gradients with respect to the full parameter tuple $W = \left(W^{(1)}, \dots, W^{(Q)}\right)$ are understood as block gradients across layers. As such, for any function $f$,
\begin{equation}\label{eq:def_func_gradient}
    \left\|\nabla_W f\right\|_F^2 := \sum_{q=1}^Q \left\|\nabla_{W^{(q)}} f\right\|_F^2,
\end{equation}
where, for each layer $q$, the gradient is further decomposed into parameter blocks as:
\begin{equation}
    \left\|\nabla_{W^{(q)}} f\right\|_F^2 := \left\|\nabla_{W_\Delta^{(q)}} f\right\|_F^2 + \left\|\nabla_{W_B^{(q)}} f\right\|_F^2 + \left\|\nabla_{W_C^{(q)}} f\right\|_F^2 + \left\|\nabla_{A^{(q)}} f\right\|_F^2.
\end{equation}

As in the single-layer proof, parameter derivatives with respect to $W^{(q)}$ are understood as flattened block Jacobians, with rows representing the output coordinates and columns corresponding to the scalar parameters of layer $q$. We use the same vectorization and scalar-parameter-slice conventions for products and transposes. 

For this multi-layer formulation, we introduce the following new assumption and definition, that extend the single-layer analysis assumptions to the multi-layer setting:

\begin{assumption}[Bounded Layer Inputs]\label{assum:bounded_layer_inuts}
For every layer $q\in \{1, \dots, Q\}$ of a multi-layer selective SSM of depth $Q$ parameterized by a fixed set of weights $W=\left(W^{(1)},\dots, W^{(Q)}\right)$, there exists a constant $L_{x,q} > 0$ such that:
\begin{equation}
    \left\|x_t^{(q)}\left(W;\xi\right)\right\|_2 \leq L_{x,q} \text{ } \forall t\in \{1, \dots, S\}.
\end{equation}
This holds for every finite integer length $S \geq 1$, every admissible input-target pair $\xi \in \mathcal Z$, and every token index $t \in \{1, \dots, S\}$, with finite constants independent of $S$, $\xi$, and $t$.
\end{assumption}

\begin{definition}[Multi-layer SSM Stability]\label{def:multilayer_ssm_stability}
For each layer $q \in \left\{1, \dots, Q\right\}$, let $\gamma_q > 0$ and $\delta_q > 0$. We define ${\mathcal S}_{\boldsymbol{\gamma}, \boldsymbol{\delta}}^{(Q)}$ as follows:
\begin{align}
   {\mathcal S}_{\boldsymbol{\gamma}, \boldsymbol{\delta}}^{(Q)} := \bigg\{ &W \bigg | A_n^{{(q)}} \leq -\gamma_q, \left[\operatorname{Softplus}\left(W_{\Delta}^{(q)}x_{t}^{(q)}\left(W;\xi\right)\right)\right]_d \geq \delta_q \nonumber \\ &\forall q\in\{1,\dots,Q\}, \forall n\in\{1,\dots,N\} ,\forall d\in\{1,\dots,D\}, \forall t\in\{1,\dots,S\}, \nonumber \\ &\text{finite integer } S \geq 1, \text{and every admissible length}-S \text{ pair } \xi \in \mathcal Z \bigg\},\label{eq:S_Q_gamma_delta}
\end{align}
where $\boldsymbol{\gamma} = \left(\gamma_1, \dots, \gamma_Q\right)$ and $\boldsymbol{\delta} = \left(\delta_1, \dots, \delta_Q\right)$

We then say that the full parameter tuple $W$ is stable with these margins if $W \in{\mathcal S}_{\boldsymbol{\gamma}, \boldsymbol{\delta}}^{(Q)}$.

For each layer $q$, we further define:
\begin{equation}\label{eq:rho_q}
    \rho_q := \exp(-\gamma_q\delta_q) < 1.
\end{equation}
\end{definition}

As a result of Definition~\ref{def:multilayer_ssm_stability}, for a layer $q$, it follows that:
\begin{equation}\label{eq:rho_q_relationship_A_max}
    \left\|\bar A_t^{(q)}\right\|_{\max} \leq \rho_q.
\end{equation}

Given such setup, we state the following theorem:

\begin{theorem}[Multi-Layer SSM Gradient Bound] \label{thm:ssm_multi_layer_grad_bound}
Let $W = \left(W^{(1)}, \dots, W^{(Q)}\right)$ be a full parameter tuple satisfying Assumptions~\ref{assum:lipschitz_cont_loss} and~\ref{assum:bounded_layer_inuts}, with $W \in{\mathcal S}_{\boldsymbol{\gamma}, \boldsymbol{\delta}}^{(Q)}$, that parametrizes a multi-layer selective SSM model of finite integer depth $Q \geq 1$. Then, for such $W$, for every finite integer $S \geq 1$ and every admissible input-target pair $\xi \in \mathcal Z$ of length $S$, the following bound holds:
\begin{equation}
    \textstyle \left\|\nabla_W \mathcal L^{S,(Q)}(W;\xi)\right\|_F \leq  L_\ell \left  (\sum_{q=1}^Q L_{x,q}^2\left(H_{q} + \left\|W_C^{(q)}\right\|_2J_{q}\right)^2 \prod _{q'=q+1}^Q \kappa_{q'}^2\right)^{1/2} := \mathcal G_{\text{SSM}}^{(Q)}(W)
\end{equation}
with
\begin{align}
    K_{\Delta, q} &:= \left\|W_\Delta^{(q)}\right\|_2L_{x, q} + \sqrt{D}\log(2), 
    ~~~~ K_{B,q} := \left\|W_B^{(q)}\right\|_2 L_{x,q}^2, \\ 
    K_{W,q} &:= ND K_{B,q}K_{\Delta,q} \left(K_{\Delta,q} + \sqrt{D} \left\|A^{(q)}\right\|_2L_{x,q}\right)
    \left(\frac{\rho_q}{1-\rho_q}\right) + L_{x,q}^2 \sqrt{N} K_{\Delta,q} + L_{x,q} \sqrt{D} K_{B,q},\\
    H_q &:= \frac{K_{B,q} K_{\Delta,q}
    \sqrt{ND}}{1-\rho_q} , ~~~~ J_{q}:= \frac{K_{W,q}}{1-\rho_q}, ~~~~ \kappa_{q}:=\left\|W_C^{(q)}\right\|_2\left(H_{{q}}+ \frac{L_{x,q}\Lambda_{x,q}}{1-\rho_{q}}\right), \\ 
    \Lambda_{x,q} &:= \rho_{q} \left\| A^{(q)}\right\|_2 \left\|W_\Delta^{(q)}\right\|_2 H_{{q}} + \left\|W_B^{(q)}\right\|_2  L_{x,q} \left(2K_{\Delta,q} + L_{x,q}\left\|W_\Delta^{(q)}\right\|_2\right).
\end{align}
\end{theorem}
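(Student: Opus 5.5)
The plan is a layerwise backpropagation through depth, reusing the single-layer estimates from the proof of Theorem~\ref{thm:ssm_grad_bound} inside each layer. First, for fixed admissible $\xi$, we write the gradient with respect to layer-$q$ parameters by the chain rule through the later layers. Layer $q$'s parameters affect the loss only via its outputs $y_t^{(q)}=x_t^{(q+1)}$, and these are propagated through layers $q+1,\dots,Q$. Hence
\begin{equation*}
\nabla_{W^{(q)}}\ell_t = \sum_{s\le t}\Big(\frac{\partial y_s^{(q)}}{\partial W^{(q)}}\Big)^\top \Big(\frac{\partial x_t^{(Q+1)}}{\partial x_s^{(q+1)}}\Big)^\top \nabla_y\ell_t .
\end{equation*}
Since layers act causally in time, cross-time sensitivities must be handled. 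The plan is to bound each layer's input-to-output sensitivity operator in an aggregated, time-summed sense by a scalar $\kappa_{q'}$. Then the upstream factor for layer $q$ is $\prod_{q'=q+1}^Q \kappa_{q'}$.

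Second, the within-layer parameter sensitivity is obtained directly from the single-layer proof with $L_x$ replaced by $L_{x,q}$ via Assumption~\ref{assum:bounded_layer_inuts} and $\rho$ by $\rho_q$ via \eqref{eq:rho_q_relationship_A_max}. The earlier estimates $\|h_t^{(q)}\|_F\le H_q$ and $\|\partial h_t^{(q)}/\partial W^{(q)}\|_F\le J_q$ give a layer-$q$ parameter Jacobian of $y_t^{(q)}$ bounded by $L_{x,q}(H_q+\|W_C^{(q)}\|_2J_q)$. This reproduces exactly the single-layer bound when $Q=1$. Here, the input $x_t^{(q)}$ is treated as fixed when differentiating with respect to $W^{(q)}$, since it depends only on earlier layers.

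Third, we bound the input sensitivity $\kappa_{q}$. Differentiating \eqref{eq:hidden_state_definition_multi_layer} with respect to the input gives a recurrence for $\partial h_t^{(q)}/\partial x^{(q)}$. Its forcing term is built from three contributions. The first is the derivative of $\bar A_t^{(q)}$ through $\Delta_t^{(q)}$, contributing $\rho_q\|A^{(q)}\|_2\|W_\Delta^{(q)}\|_2H_q$. The second is the derivative of $\bar B_t^{(q)}$, with both the $W_B^{(q)}x$ and Softplus factors contributing. The third is the $\operatorname{diag}(x_t^{(q)})$ factor. Together these give $\|W_B^{(q)}\|_2L_{x,q}(2K_{\Delta,q}+L_{x,q}\|W_\Delta^{(q)}\|_2)$, that is, $\Lambda_{x,q}$. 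Unrolling with contraction $\rho_q$ gives $\Lambda_{x,q}/(1-\rho_q)$. Combining this with the output map $y=h^\top W_Cx$ via the product rule yields $\|W_C^{(q)}\|_2(H_q+L_{x,q}\Lambda_{x,q}/(1-\rho_q))=\kappa_q$.

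Finally, we combine the pieces. For each $q$ we bound $\|\nabla_{W^{(q)}}\ell_t\|_F\le L_\ell L_{x,q}(H_q+\|W_C^{(q)}\|_2J_q)\prod_{q'>q}\kappa_{q'}$. Summing squares over $q$ using \eqref{eq:def_func_gradient} and averaging over $t$ by the triangle inequality then gives $\mathcal G^{(Q)}_{\mathrm{SSM}}(W)$. The main obstacle is the temporal coupling in the first and third steps. The true upstream Jacobian mixes all earlier positions, so a naive product of per-step bounds loses the $(1-\rho)^{-1}$ structure or picks up an $S$ factor. I would first try to define $\kappa_q$ as a bound on the operator norm of the whole sequence map with respect to an $\ell_\infty$-in-time, $\ell_2$-in-feature norm, with the geometric sum absorbing the time convolution. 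I would then check carefully that the parameter-sensitivity sum over $s\le t$ is already accounted for in $J_q$ rather than counted twice.
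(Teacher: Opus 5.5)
Your proposal is correct and takes essentially the paper's route. The chain rule through later layers gives nested time sums of products of input--output Jacobians $\mathbf{K}^{(q')}_{t,j}$, each applied to the full own-layer Jacobian $\partial y^{(q)}_{j}/\partial W^{(q)}$, which is bounded by $L_{x,q}(H_q+\|W_C^{(q)}\|_2J_q)$, so nothing is double counted; the paper controls these sums exactly via your $\ell_\infty$-in-time device, namely the uniform row-sum bound $\max_t\sum_{j\le t}\|\mathbf{K}^{(q')}_{t,j}\|_2\le\kappa_{q'}$ derived from $\|\mathbf{K}^{(q')}_{t,j}\|_2\le\|W_C^{(q')}\|_2L_{x,q'}\Lambda_{x,q'}\rho_{q'}^{t-j}+\mathbbm{1}_{j=t}\|W_C^{(q')}\|_2H_{q'}$ and applied innermost-first, with the remaining per-layer, sum-of-squares, and token-averaging steps matching yours.
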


\textbf{Proof of Theorem~\ref{thm:ssm_multi_layer_grad_bound}}: Fix an arbitrary full parameter tuple $W$ satisfying Assumptions~\ref{assum:lipschitz_cont_loss} and~\ref{assum:bounded_layer_inuts} such that $W \in{\mathcal S}_{\boldsymbol{\gamma}, \boldsymbol{\delta}}^{(Q)}$. Let $\xi \in \mathcal Z$ be some fixed admissible sequence pair of length $S$ ($S \geq 1$ is some fixed finite integer). 

By the definition of a gradient with respect to a tuple in equation~\eqref{eq:def_func_gradient}, it follows that:
\begin{equation}\label{eq:desired_bound_on_seq_S}
    \left\|\nabla_W \mathcal L^{S, (Q)}(W)\right\|_F = \left(\sum_{q=1}^Q \left\|\nabla_{W^{(q)}} \mathcal L^{S, (Q)}(W)\right\|_F^2\right)^{1/2}
\end{equation}

As a result, to derive a bound on the gradient norm with respect to the full model parameter tuple $W$, it is sufficient to derive the bound on:
\begin{equation}
    \left\|\nabla_{W^{(q)}} \mathcal L^{S, (Q)}(W)\right\|_F^2
\end{equation}
for an arbitrary layer $q\in\{1,\dots,Q\}$, and combine the resulting layer-wise bounds. 

This gradient can be expanded as:
\begin{equation}
    \nabla_{W^{(q)}} \mathcal L^{S, (Q)}(W) = \nabla_{W^{(q)}}\left(\frac{1}{S}\sum_{t=1}^S \ell_t\left(y_t^{(Q)}\right)\right) = \frac{1}{S}\sum_{t=1}^S\nabla_{W^{(q)}} \ell_t\left(y_{t}^{(Q)}\right)
\end{equation}

Applying the chain rule again, we get: 
\begin{equation}\label{eq:multi_layer_loss_gradient_expression}
    \operatorname{vec}\left(\nabla_{W^{(q)}}\mathcal L^{S,(Q)}(W)\right) =\frac{1}{S}\sum_{t=1}^S\left(\frac{\partial y_t^{(Q)}}{\partial W^{(q)}}\right)^\top\nabla_y\ell_t\left(y_t^{(Q)}\right).
\end{equation}

For $Q\geq2$, we first consider the case $q=Q-1$ and focus on $\partial y_t^{(Q)}/\partial W^{(q)}$, since $\nabla_y\ell_t\left(y_t^{(Q)}\right)$ is independent of $q$. By the product rule and using the SSM dynamics, it follows that:
\begin{equation}\label{eq:multi_layer_dyQ_dWq_less_1}
    \frac{\partial y_t^{(Q)}}{\partial W^{(Q-1)}} = \frac{\partial \left(h_t^{(Q)}\right)^\top}{\partial W^{{(Q-1)}}}C_t^{(Q)}+\left(h_t^{(Q)}\right)^\top \frac{\partial C_t^{(Q)}}{\partial W^{(Q-1)}}
\end{equation}

Next, we evaluate~\eqref{eq:multi_layer_dyQ_dWq_less_1} coordinate-wise, as this avoids any ambiguity associated with calculating the matrix-valued derivatives with respect to the parameter tuples. Let $w^{(Q-1)}$ denote an arbitrary scalar coordinate of $W^{(Q-1)}$, and let $d \in \{1, \dots, D\}$ denote an arbitrary coordinate of $y^{(Q)}_t$. Then:
\begin{equation}
    y_{t,d}^{(Q)} = \sum_{n=1}^N h_{t,n,d}^{(Q)} C_{t,n}^{(Q)},
\end{equation}
and by the product rule:
\begin{equation}\label{eq:multi_layer_dyQ_dWq_less_1_coordinate}
    \frac{\partial y_{t,d}^{(Q)}}{\partial w^{(Q-1)}} = \sum_{n=1}^N \left[C_{t,n}^{(Q)} \frac{\partial h_{t,n,d}^{(Q)}}{\partial w^{(Q-1)}} + h_{t,n,d}^{(Q)}\frac{\partial C_{t,n}^{(Q)}}{\partial w^{(Q-1)}}\right]
\end{equation}

Next, since the weight $w^{(Q-1)}$ is not a parameter of layer $Q$, it influences $h_t^{(Q)}$ at any coordinate only through the inputs to layer $Q$ (i.e. $h_t^{(Q)}$ is a function of all  $x^{(Q)}_1, \dots, x^{(Q)}_j, \dots, x^{(Q)}_t$, which depend on the parameters of the previous layer through $x^{(Q)}_j=y^{(Q-1)}_j$). As a result:
\begin{equation}
    \frac{\partial h_{t,n,d}^{(Q)}}{\partial w^{(Q-1)}} = \sum_{j=1}^t\sum_{d'=1}^D \frac{\partial h_{t,n,d}^{(Q)}}{\partial x_{j,d'}^{(Q)}} \frac{\partial x_{j,d'}^{(Q)}}{\partial w^{(Q-1)}}.
\end{equation}

Furthermore, since $C_{t,n}^{(Q)} = \sum_{d'=1}^D \left(W_C^{(Q)}\right)_{n, d'} x_{t, d'}^{(Q)}$, and $W_C^{(Q)}$ is independent of any parameter of $W^{(Q-1)}$, it follows that:
\begin{equation}
    \frac{\partial C_{t,n}^{(Q)}}{\partial w^{(Q-1)}} = \sum_{d'=1}^D \left(W_C^{(Q)}\right)_{n, d'} \frac{\partial x_{t, d'}^{(Q)}}{\partial w^{(Q-1)}}
\end{equation}

Substituting these expressions into~\eqref{eq:multi_layer_dyQ_dWq_less_1_coordinate} results in:
\begin{align}
    \frac{\partial y_{t,d}^{(Q)}}{\partial w^{(Q-1)}} &= \sum_{n=1}^N \left[C_{t,n}^{(Q)}\sum_{j=1}^t\sum_{d'=1}^D \frac{\partial h_{t,n,d}^{(Q)}}{\partial x_{j,d'}^{(Q)}} \frac{\partial x_{j,d'}^{(Q)}}{\partial w^{(Q-1)}} + h_{t,n,d}^{(Q)}\sum_{d'=1}^D \left(W_C^{(Q)}\right)_{n, d'} \frac{\partial x_{t, d'}^{(Q)}}{\partial w^{(Q-1)}}\right] \nonumber \\ &= \sum_{n=1}^N \left[C_{t,n}^{(Q)}\sum_{j=1}^t\sum_{d'=1}^D \frac{\partial h_{t,n,d}^{(Q)}}{\partial x_{j,d'}^{(Q)}} \frac{\partial x_{j,d'}^{(Q)}}{\partial w^{(Q-1)}} + h_{t,n,d}^{(Q)}\sum_{j=1}^t \sum_{d'=1}^D \mathbbm{1}_{j=t} \left(W_C^{(Q)}\right)_{n, d'} \frac{\partial x_{j, d'}^{(Q)}}{\partial w^{(Q-1)}}\right] \nonumber \\ &= \sum_{j=1}^t \sum_{d'=1}^D\left[\sum_{n=1}^N \left( C_{t,n}^{(Q)}\frac{\partial h_{t,n,d}^{(Q)}}{\partial x_{j,d'}^{(Q)}} + \mathbbm{1}_{j=t} h_{t, n, d}^{(Q)} \left(W_C^{(Q)}\right)_{n, d'}\right) \right]\frac{\partial x_{j, d'}^{(Q)}}{\partial w^{(Q-1)}}
\end{align}
where $\mathbbm{1}_{j=t}$ is $1$ if $j =t$ and $0$ otherwise. 

Next, we note that:
\begin{align}
    \frac{\partial y_{t, d}^{(Q)}}{ \partial x_{j, d'}^{(Q)}} &=   \frac{\partial \left(\sum_{n=1}^N h_{t, n, d}^{(Q)} C_{t, n}^{(Q)}\right)}{ \partial x_{j, d'}^{(Q)}} =  \sum_{n=1}^N \frac{\partial \left(h_{t, n, d}^{(Q)} C_{t, n}^{(Q)}\right)}{\partial x_{j, d'}^{(Q)}} = \sum_{n=1}^N \left[C_{t, n}^{(Q)} \frac{\partial h_{t, n, d}^{(Q)}}{\partial x_{j, d'}^{(Q)}} + h_{t, n, d}^{(Q)} \frac{\partial C_{t, n}^{(Q)}}{\partial x_{j, d'}^{(Q)}}\right] \nonumber \\ &= \sum_{n=1}^N \left[C_{t, n}^{(Q)} \frac{\partial h_{t, n, d}^{(Q)}}{\partial x_{j, d'}^{(Q)}} + h_{t, n, d}^{(Q)} \frac{\partial \left(\sum_{d''=1}^D \left(W_C^{(Q)}\right)_{n, d''} x_{t, d''}^{(Q)}\right)}{\partial x_{j, d'}^{(Q)}}\right] \nonumber \\ &= \sum_{n=1}^N \left( C_{t,n}^{(Q)}\frac{\partial h_{t,n,d}^{(Q)}}{\partial x_{j,d'}^{(Q)}} + \mathbbm{1}_{j=t} h_{t, n, d}^{(Q)} \left(W_C^{(Q)}\right)_{n, d'}\right) 
\end{align}
where we use the fact that if $j \neq t$, 
\begin{equation}
    \frac{\partial \left(\sum_{d''=1}^D \left(W_C^{(Q)}\right)_{n, d''} x_{t, d''}^{(Q)}\right)}{\partial x_{j, d'}^{(Q)}} = 0
\end{equation}
and if $j=t$, 
\begin{equation}
    \frac{\partial \left(\sum_{d''=1}^D \left(W_C^{(Q)}\right)_{n, d''} x_{j, d''}^{(Q)}\right)}{\partial x_{j, d'}^{(Q)}} =  \left(W_C^{(Q)}\right)_{n, d'}
\end{equation}

As a result, it follows that:
\begin{equation}
    \frac{\partial y_{t,d}^{(Q)}}{\partial w^{(Q-1)}} = \sum_{j=1}^t \sum_{d'=1}^D\frac{\partial y_{t, d}^{(Q)}}{ \partial x_{j, d'}^{(Q)}} \frac{\partial x_{j, d'}^{(Q)}}{\partial w^{(Q-1)}}
\end{equation}
and finally, since $x_{j, d'}^{(Q)} = y_{j, d'}^{(Q-1)}$, 
\begin{equation}\label{eq:recursion_multi_layer_coordinate_less_1}
    \frac{\partial y_{t,d}^{(Q)}}{\partial w^{(Q-1)}} = \sum_{j=1}^t \sum_{d'=1}^D\frac{\partial y_{t, d}^{(Q)}}{ \partial x_{j, d'}^{(Q)}} \frac{\partial y_{j, d'}^{(Q-1)}}{\partial w^{(Q-1)}}
\end{equation}

For $Q \geq 3$, the same coordinate-wise chain rule argument applies to $w^{(Q-2)}$. That is, since $w^{(Q-2)}$ is not a parameter of layer $Q$, it influences $y^{(Q)}_t$ only through the inputs sequence $x_1^{(Q)}, \dots, x_t^{(Q)}$. As such, it follows that:
\begin{equation}\label{eq:recursion_multi_layer_coordinate_less_2}
    \frac{\partial y_{t,d}^{(Q)}}{\partial w^{(Q-2)}} = \sum_{j=1}^t \sum_{d'=1}^D\frac{\partial y_{t, d}^{(Q)}}{ \partial x_{j, d'}^{(Q)}} \frac{\partial y_{j, d'}^{(Q-1)}}{\partial w^{(Q-2)}}
\end{equation}

Furthermore, using the same argument applied to layer $Q-1$, we can say that:
\begin{equation}
    \frac{\partial y_{j, d'}^{(Q-1)}}{\partial w^{(Q-2)}} = \sum_{j'=1}^j \sum_{d''=1}^D\frac{\partial y_{j, d'}^{(Q-1)}}{ \partial x_{j', d''}^{(Q-1)}} \frac{\partial y_{j', d''}^{(Q-2)}}{\partial w^{(Q-2)}}.
\end{equation}

Substituting this result into~\eqref{eq:recursion_multi_layer_coordinate_less_2}, we obtain:
\begin{equation}
    \frac{\partial y_{t,d}^{(Q)}}{\partial w^{(Q-2)}} = \sum_{j=1}^t \sum_{d'=1}^D\frac{\partial y_{t, d}^{(Q)}}{ \partial x_{j, d'}^{(Q)}} \sum_{j'=1}^j \sum_{d''=1}^D\frac{\partial y_{j, d'}^{(Q-1)}}{ \partial x_{j', d''}^{(Q-1)}} \frac{\partial y_{j', d''}^{(Q-2)}}{\partial w^{(Q-2)}} = \sum_{j=1}^t\sum_{j'=1}^{j} \sum_{d'=1}^D\sum_{d''=1}^D \frac{\partial y_{t,d}^{(Q)}}{\partial x_{j,d'}^{(Q)}} \frac{\partial y_{j,d'}^{(Q-1)}}{\partial x_{j',d''}^{(Q-1)}} \frac{\partial y_{j',d''}^{(Q-2)}}{\partial w^{(Q-2)}}
\end{equation}

This result means that, compared to the case of $w^{(Q-1)}$, the derivative with respect to $w^{(Q-2)}$ contains one additional input-output derivative, which corresponds to its propagation through layer $Q-1$. Applying this argument recursively, we derive the final derivative of the $d$-th coordinate of the final output with respect to some parameter $w^{(q)}$ of the layer $q\in\{1, \dots, Q-1\}$ as:
\begin{equation}\label{eq:multi_layer_derived_recurrence_coordinate}
    \frac{\partial y_{t,d}^{(Q)}}{\partial w^{(q)}} = \sum_{j_{Q-1}=1}^{j_Q}\cdots\sum_{j_q=1}^{j_{q+1}}\sum_{d_{Q-1}=1}^D\cdots\sum_{d_q=1}^D\left[\prod_{q'=q+1}^{Q}\frac{\partial y_{j_{q'},d_{q'}}^{({q'})}}{\partial x_{j_{{q'}-1},d_{{q'}-1}}^{(q')}}\right] \frac{\partial y_{j_q,d_q}^{(q)}}{\partial w^{(q)}},
\end{equation}
where we also have $j_Q=t$, $d_Q=d$, and $j_q\leq j_{q+1}\leq\cdots\leq j_{Q-1}\leq j_Q=t$.

We next collect the coordinate derivatives in~\eqref{eq:multi_layer_derived_recurrence_coordinate} into matrices, which transforms sums over feature coordinates into matrix products, while the sums over time indices remain explicit. 

For each layer $q$, define the input-output Jacobian matrix as:
\begin{equation}
    \mathbf{K}_{t,j}^{(q)} := \frac{\partial y_t^{(q)}}{\partial x_j^{(q)}} = \begin{pmatrix}
        \dfrac{\partial y_{t,1}^{(q)}}{\partial x_{j,1}^{(q)}} & \cdots & \dfrac{\partial y_{t,1}^{(q)}}{\partial x_{j,D}^{(q)}} \\ \vdots & \ddots & \vdots \\ \dfrac{\partial y_{t,D}^{(q)}}{\partial x_{j,1}^{(q)}} & \dots & \dfrac{\partial y_{t,D}^{(q)}}{\partial x_{j,D}^{(q)}} 
    \end{pmatrix} \in \mathbb R^{D \times D}.
\end{equation}
The Jacobian is computed with the parameters of layer $q$ and all input tokens other than $x_j^{(q)}$ held fixed, and evaluated at the input sequence received by that layer. Additionally, by causality, $\mathbf{K}_{t,j}^{(q)} = \mathbf{0}_{D\times D}$ for $j > t$.

Then, for any layer $q\in \{1, \dots, Q-1\}$, for any subsequent layer $q' \in \{q+1, \dots, Q\}$, the parameters $W^{(q)}$ affect $y^{(q')}$ only through the input sequence to layer $q'$. Since $x_j^{(q')} = y_j^{(q'-1)}$, the chain rule can be written in the matrix form as:
\begin{equation}\label{eq:multilayer_jacobian_recursion}
     \frac{\partial y_t^{(q')}}{\partial W^{(q)}} = \sum_{j=1}^t \frac{\partial y_t^{(q')}}{\partial x_j^{(q')}}\frac{\partial x_j^{(q')}}{\partial W^{(q)}} = \sum_{j=1}^t \mathbf K_{t,j}^{(q')}\frac{\partial y_j^{(q'-1)}}{\partial W^{(q)}}.
\end{equation}

Applying this recursively, we get:
\begin{equation}\label{eq:multi_layer_derived_recurrence_matrix}
    \frac{\partial y_t^{(Q)}}{\partial W^{(q)}} = \sum_{j_{Q-1}=1}^{j_Q}\cdots\sum_{j_q=1}^{j_{q+1}} \left[\mathbf{K}_{j_Q, j_{Q-1}}^{(Q)} \times  \cdots \times \mathbf{K}_{j_{q+1}, j_{q}}^{(q+1)} \right] \frac{\partial y_{j_q}^{(q)}}{\partial W^{(q)}}
\end{equation}
for any $q\in \{1, \dots, Q-1\}$, the matrix product contains only $\mathbf{K}_{t, j_{Q-1}}^{(Q)}$ for the case $q = Q-1$, the $\times$ operator denotes matrix multiplication, and $j_Q=t$. 

Note that equation~\eqref{eq:multi_layer_derived_recurrence_matrix} provides an explicit expression for the parameter Jacobian $\frac{\partial y_t^{(Q)}}{\partial W^{(q)}}$ appearing in~\eqref{eq:multi_layer_loss_gradient_expression}, in terms of the own-layer parameter Jacobians $\frac{\partial y_{j_q}^{(q)}}{\partial W^{(q)}}$ and the input-output Jacobians of the subsequent layers. 

We next use this decomposition to bound the norm of the parameter Jacobian. We use the Frobenius norm $\|\cdot\|_F$ for the flattened parameter Jacobians and the spectral norm $\|\cdot\|_2$ (where for $M \in \mathbb R^{D \times D}$, $\|M\|_2 := \sup_{\substack{v \in \mathbb R^D \\ \|v\|_2=1}}\|Mv\|_2$) for the input-output matrices. Applying the triangle inequality and the fact that $\|AB\|_F \leq \|A\|_2 \|B\|_F$, it follows that:
\begin{align}
    \left\|\frac{\partial y_t^{(Q)}}{\partial W^{(q)}}\right\|_F &= \left\|\sum_{j_{Q-1}=1}^{j_Q}\cdots\sum_{j_q=1}^{j_{q+1}} \left[\mathbf{K}_{j_Q, j_{Q-1}}^{(Q)} \times  \cdots \times \mathbf{K}_{j_{q+1}, j_{q}}^{(q+1)} \right] \frac{\partial y_{j_q}^{(q)}}{\partial W^{(q)}}\right\|_F \nonumber \\ &\leq \sum_{j_{Q-1}=1}^{j_Q}\cdots\sum_{j_q=1}^{j_{q+1}}\left\|\left[\mathbf{K}_{j_Q, j_{Q-1}}^{(Q)} \times  \cdots \times \mathbf{K}_{j_{q+1}, j_{q}}^{(q+1)} \right] \frac{\partial y_{j_q}^{(q)}}{\partial W^{(q)}}\right\|_F \nonumber \\
    & \leq \sum_{j_{Q-1}=1}^{j_Q}\cdots\sum_{j_q=1}^{j_{q+1}}\left[ \prod_{q'=q+1}^{Q} \left\|\mathbf{K}_{j_{q'}, j_{q'-1}}^{(q')}\right\|_2\right]\left\|\frac{\partial y_{j_q}^{(q)}}{\partial W^{(q)}}\right\|_F, \label{eq:output_jacobian_norm_bound}
\end{align}
where $j_Q=t$ and $q\in \{1, \dots, Q-1\}$. 

Next, we bound the terms of this expression separately. Thanks to Assumption~\ref{assum:bounded_layer_inuts} and Definition~\ref{def:multilayer_ssm_stability}, we can apply the single-layer gradient proof argument from Appendix~\ref{appendix:gradient_norm_bound_proof} to each layer $q$, bounding the term $\left\|\frac{\partial y_{j_q}^{(q)}}{\partial W^{(q)}}\right\|_F$.

First, for each $q\in \{1, \dots, Q\}$, let:
\begin{align}
    K_{\Delta, q} &:= \left\|W_\Delta^{(q)}\right\|_2L_{x, q} + \sqrt{D}\log(2), ~~~~
    K_{B,q} := \left\|W_B^{(q)}\right\|_2 L_{x,q}^2, \\ K_{W,q} &:= ND K_{B,q}K_{\Delta,q} \left(K_{\Delta,q} + \sqrt{D} \left\|A^{(q)}\right\|_2L_{x,q}\right)
    \left(\frac{\rho_q}{1-\rho_q}\right) + L_{x,q}^2 \sqrt{N} K_{\Delta,q} + L_{x,q} \sqrt{D} K_{B,q}.
\end{align}

Given these definitions, we apply the single-layer argument from Appendix~\ref{appendix:gradient_norm_bound_proof} for every layer $q \in \{1, \dots, Q\}$. Since the input to any layer $q$ depends only on the parameters of the preceding layers, it remains fixed when differentiating with respect to $W^{(q)}$. It therefore follows that:
\begin{align}
    \left\|h_j^{(q)}\right\|_F &\leq \frac{K_{B,q} K_{\Delta,q}
    \sqrt{ND}}{1-\rho_q} := H_{q}, \label{eq:multi_layer_state_bound} \\ \left\|\frac{\partial h_j^{(q)}}{\partial W^{(q)}}\right\|_F &\leq \frac{K_{W,q}}{1-\rho_q} := J_{q}, \label{eq:multi_layer_state_jacobian_bound}
\end{align}
for every $q \in \{1, \dots, Q\}$ and $j \in \{1, \dots, S\}$. Here, the state Jacobian is flattened as in the single-layer analysis. 

Then, by definition of $C_j^{(q)}$, Assumption~\ref{assum:bounded_layer_inuts} gives:
\begin{equation}
    \left\|C_j^{(q)}\right\|_2 = \left\|W_C^{(q)} x_j^{(q)}\right\|_2 \leq \left\|W_C^{(q)}\right\|_2 L_{x,q}.
\end{equation}

Finally, by applying the same product-rule argument to $y_j^{(q)} = \left(h_j^{(q)}\right)^\top C_j^{(q)}$ as in Appendix~\ref{appendix:gradient_norm_bound_proof}, and using~\eqref{eq:multi_layer_state_bound} and~\eqref{eq:multi_layer_state_jacobian_bound}, we arrive at:
\begin{align}
    \left\|\frac{\partial y_{j}^{(q)}}{\partial W^{(q)}}\right\|_F & \leq \left\|C_j^{(q)}\right\|_2 \left \|\frac{\partial h_j^{(q)}}{\partial W^{(q)}}\right\|_F + \left\|h_j^{(q)}\right\|_F \left\|x_j^{(q)}\right\|_2 \nonumber \\ &\leq\left\|W_C^{(q)}\right\|_2 L_{x,q}J_{q} + H_{q} L_{x,q} = L_{x,q} \left(H_{q} + \left\|W_C^{(q)}\right\|_2J_{q}\right) \label{eq:multi_layer_own_output_jacobian_bound}.
\end{align}
This bound holds for every $q \in \{1, \dots, Q\}$ and $j \in \{1, \dots, S\}$ uniformly over all admissible input sequences. 

Substituting~\eqref{eq:multi_layer_own_output_jacobian_bound} into~\eqref{eq:output_jacobian_norm_bound}, we get:
\begin{equation}\label{eq:output_jacobian_norm_bound_with_single_layer}
    \left\|\frac{\partial y_t^{(Q)}}{\partial W^{(q)}}\right\|_F \leq L_{x,q} \left(H_{q} + \left\|W_C^{(q)}\right\|_2 J_{q}\right) \left(\sum_{j_{Q-1}=1}^{j_Q}\cdots\sum_{j_q=1}^{j_{q+1}}\left[ \prod_{q'=q+1}^{Q} \left\|\mathbf{K}_{j_{q'}, j_{q'-1}}^{(q')}\right\|_2\right] \right)
\end{equation}

Next, we derive a bound on the term:
\begin{equation}\label{eq:backprop_term_additon}
    \sum_{j_{Q-1}=1}^{j_Q}\cdots\sum_{j_q=1}^{j_{q+1}}\left[ \prod_{q'=q+1}^{Q} \left\|\mathbf{K}_{j_{q'}, j_{q'-1}}^{(q')}\right\|_2\right] 
\end{equation}

To derive the bound, for an arbitrary layer $q' \in \{q+1, \dots, Q\}$, we begin by expressing $\mathbf{K}_{t, j}^{(q')}$ in terms of the layer's hidden-state sensitivity. Then, by definition:
\begin{equation}
    \left(\mathbf{K}_{t,j}^{(q')}\right)_{d,d'} = \frac{\partial y_{t,d}^{(q')}}{\partial x_{j,d'}^{(q')}} = \sum_{n=1}^N C_{t,n}^{(q')} \frac{\partial h_{t,n,d}^{(q')}}{\partial x_{j,d'}^{(q')}} + \mathbbm{1}_{j=t} \sum_{n=1}^N h_{t,n,d}^{(q')}\left(W_C^{(q')}\right)_{n,d'}, \label{eq:multi_layer_K_matrix_coordinate_expression}
\end{equation}
for $1 \leq j \leq t \leq S$ and $d, d' \in \{1, \dots, D\}$.

Next, by definition of the spectral norm, we know that:
\begin{equation}\label{eq:multi_layer_K_matrix_spectral_def}
    \left\|\mathbf{K}_{t,j}^{(q')}\right\|_2 = \sup_{\substack{u \in \mathbb R^D \\ \|u\|_2 = 1}} \left\|\mathbf{K}_{t,j}^{(q')}u\right\|_2.
\end{equation}

As such, we first evaluate the value of $\mathbf{K}_{t,j}^{(q')}u$ for some arbitrary unit vector $u \in \mathbb R^D$. Multiplying~\eqref{eq:multi_layer_K_matrix_coordinate_expression} by $u_{d'}$, summing over $d'$, and collecting the output coordinates, we get:
\begin{equation} 
    \mathbf{K}_{t,j}^{(q')}u = \left(\sum_{d'=1}^D u_{d'}\frac{\partial h_t^{(q')}}{\partial x_{j,d'}^{(q')}}\right)^\top C_t^{(q')} + \mathbbm{1}_{j=t} \left(h_t^{(q')}\right)^\top W_C^{(q')}u.
\end{equation}

Note that, since $u$ is a unit vector, the sum in the first term is, by definition, the directional derivative of the hidden state with respect to $x_{j}^{(q')}$ in direction $u$. We denote it as:
\begin{equation}\label{eq:multi_layer_directional_derivative_hidden_state}
    D_{x_j^{(q')}} h_t^{(q')}[u] := \sum_{d'=1}^D u_{d'}\frac{\partial h_t^{(q')}}{\partial x_{j,d'}^{(q')}} \in \mathbb R^{N \times D}.
\end{equation}
As such, it follows that:
\begin{equation}\label{eq:multi_layer_K_matrix_directional_derivative_expression}
    \mathbf{K}_{t,j}^{(q')}u = \left(D_{x_j^{(q')}} h_t^{(q')}[u]\right)^\top C_t^{(q')} + \mathbbm{1}_{j=t} \left(h_t^{(q')}\right)^\top W_C^{(q')}u
\end{equation}

We then bound both terms of the sum in~\eqref{eq:multi_layer_K_matrix_directional_derivative_expression} separately. 

We evaluate the term $\left(D_{x_j^{(q')}} h_t^{(q')}[u]\right)^\top C_t^{(q')}$ first. Recall that by~\eqref{eq:hidden_state_definition_multi_layer}:
\begin{equation*}
    h_t^{(q')} = \bar{A}_t^{(q')} \odot h_{t-1}^{(q')} + \bar{B}_t^{(q')} \operatorname{diag}\left(x_t^{(q')}\right).
\end{equation*}
Since, for fixed layer $q'$ weights, $\bar{A}_t^{(q')}$ and $\bar{B}_t^{(q')}$ depend only on the input $x_t^{(q')}$, their derivatives with respect to $x_j^{(q')}$ vanish when $t\neq j$. Furthermore, $D_{x_j^{(q')}} x_t^{(q')}[u] = \mathbbm{1}_{j=t}u$. As such, applying these results and the product rule, we get:
\begin{align}
    D_{x_j^{(q')}} h_t^{(q')}[u] = \bar{A}_t^{(q')} \odot D_{x_j^{(q')}} h_{t-1}^{(q')}[u] &+ \mathbbm{1}_{j=t}\bigg[\left(D_{x_t^{(q')}} \bar{A}_t^{(q')}[u]\right)\odot h_{t-1}^{(q')} + \nonumber \\ & \left(D_{x_t^{(q')}} \bar{B}_t^{(q')}[u]\right)\operatorname{diag}\left(x_t^{(q')}\right) + \bar{B}_t^{(q')} \operatorname{diag}(u) \bigg],
\end{align}
where we also used the fact that $D_{x_j^{(q')}} \operatorname{diag}\left(x_t^{(q')}\right)[u] = \mathbbm{1}_{j=t}\operatorname{diag}\left(u\right)$. 

Next, we consider three cases. When $0 \leq t < j$, because the hidden state is initialized to $0$ and its evolution is causal, then:
\begin{equation}
    D_{x_j^{(q')}} h_t^{(q')}[u] = \mathbf{0}, \qquad \forall t \in \{0, \dots, j-1\}.
\end{equation}

When, $t=j$, $D_{x_j^{(q')}} h_{t-1}^{(q')}[u] = D_{x_t^{(q')}} h_{t-1}^{(q')}[u] = \mathbf{0}$, so:
\begin{equation}
    D_{x_t^{(q')}} h_{t}^{(q')}[u] = \left(D_{x_t^{(q')}} \bar{A}_t^{(q')}[u]\right)\odot h_{t-1}^{(q')} + \left(D_{x_t^{(q')}} \bar{B}_t^{(q')}[u]\right)\operatorname{diag}\left(x_t^{(q')}\right) + \bar{B}_t^{(q')} \operatorname{diag}(u).
\end{equation}

Finally, for $t > j$, the recurrence becomes:
\begin{equation}
    D_{x_j^{(q')}} h_t^{(q')}[u] = \bar{A}_t^{(q')} \odot D_{x_j^{(q')}} h_{t-1}^{(q')}[u].
\end{equation}

Next, by equation~\eqref{eq:rho_q_relationship_A_max}, which follows from definition~\ref{def:multilayer_ssm_stability}, we obtain:
\begin{equation}\label{eq:recursive_definition_of_D_h_d_x}
    \left\|D_{x_j^{(q')}} h_t^{(q')}[u]\right\|_F \leq \rho_{q'}\left\|D_{x_j^{(q')}} h_{t-1}^{(q')}[u]\right\|_F \leq \rho_{q'}^{t-j} \left\|D_{x_j^{(q')}} h_j^{(q')}[u]\right\|_F, \qquad \forall t > j.
\end{equation}

As a result, we just need to bound $\left\|D_{x_j^{(q')}} h_{t}^{(q')}[u] \right\|_F$ for the case when $t=j$, and  apply the recursive relationship. By the triangle inequality, when $t=j$, it follows that:
\begin{align}
    \left\|D_{x_j^{(q')}} h_{j}^{(q')}[u] \right\|_F \leq &\left\|\left(D_{x_j^{(q')}} \bar{A}_j^{(q')}[u]\right)\odot h_{j-1}^{(q')}\right\|_F \nonumber \\ &+ \left\|\left(D_{x_j^{(q')}} \bar{B}_j^{(q')}[u]\right)\operatorname{diag}\left(x_j^{(q')}\right)\right\|_F + \left\|\bar{B}_j^{(q')} \operatorname{diag}(u)\right\|_F
\end{align}

Then, evaluating it term by term
\begin{align}
    D_{x_j^{(q')}} \bar{A}_j^{(q')}[u] &= D_{x_j^{(q')}} \left(\exp\left(A^{(q')}\left(\Delta_j^{(q')}\right)^\top\right)\right)[u] \nonumber \\ &= \exp\left(A^{(q')}\left(\Delta_j^{(q')}\right)^\top\right) \odot \left[A^{(q')}\left(D_{x_j^{(q')}}\Delta_j^{(q')}[u]\right)^\top\right] \nonumber \\ &= \bar{A}_j^{(q')} \odot \left[A^{(q')}\left(D_{x_j^{(q')}}\Delta_j^{(q')}[u]\right)^\top\right].
\end{align}

Using the definition:
\begin{align}
    D_{x_j^{(q')}}\Delta_j^{(q')}[u] = D_{x_j^{(q')}}\left(\operatorname{Softplus}\left(W_\Delta^{(q')}x_j^{(q')}\right) \right)[u] = \operatorname{Softplus}'\left(W_\Delta^{(q')}x_j^{(q')}\right) \odot \left(W_\Delta^{(q')}u\right).
\end{align}

Next, since $\operatorname{Softplus}'(z) = \frac{1}{1+e^{-z}}\in(0,1)$, we get:
\begin{equation}\label{eq:d_delta_dx_bound}
    \left\|D_{x_j^{(q')}}\Delta_j^{(q')}[u]\right\|_2 \leq \left\|W_\Delta^{(q')}\right\|_2 \|u\|_2 = \left\|W_\Delta^{(q')}\right\|_2 .
\end{equation}
Then, the fact that $\|ab^\top\|_F = \|a\|_2\|b\|_2$ (which follows from the definition of the Frobenius norm), the following holds: 
\begin{align}
    \left\|D_{x_j^{(q')}} \bar{A}_j^{(q')}[u] \right\|_F &\leq  \left\|\bar{A}_j^{(q')}\right\|_{\max} \left\| A^{(q')}\left(D_{x_j^{(q')}}\Delta_j^{(q')}[u]\right)^\top\right\|_F \nonumber \\ &=\left\|\bar{A}_j^{(q')}\right\|_{\max} \left\| A^{(q')}\right\|_2\left\|D_{x_j^{(q')}}\Delta_j^{(q')}[u]\right\|_2 \leq \rho_{q'} \left\| A^{(q')}\right\|_2 \left\|W_\Delta^{(q')}\right\|_2,\label{eq:d_A_x_j_result}
\end{align}
Finally, combining equations~\eqref{eq:d_A_x_j_result} and~\eqref{eq:multi_layer_state_bound} yields:
\begin{equation}\label{eq:d_A_x_h_j_result}
    \left\|\left(D_{x_j^{(q')}} \bar{A}_j^{(q')}[u]\right)\odot h_{j-1}^{(q')}\right\|_F \leq \left\|D_{x_j^{(q')}} \bar{A}_j^{(q')}[u]\right\|_F\left\| h_{j-1}^{(q')}\right\|_F \leq \rho_{q'} \left\| A^{(q')}\right\|_2 \left\|W_\Delta^{(q')}\right\|_2 H_{{q'}}.
\end{equation}

For the second term, from the definition of $\bar{B}_{j}^{(q')}$, and using the product rule, we have:
\begin{equation}
    D_{x_j^{(q')}} \bar{B}_j^{(q')}[u] = \left(W_B^{(q')}u\right)\left(\Delta_j^{(q')}\right)^\top + \left(W_B^{(q')}x_j^{(q')}\right) \left(D_{x_j^{(q')}}\Delta_j^{(q')}[u]\right)^\top
\end{equation}
Then, combining the result of~\eqref{eq:d_delta_dx_bound} with 
\begin{equation}
    \left\|\Delta_{j}^{(q')}\right\|_2 \leq  \left\|W_\Delta^{(q')}x_{j}^{(q')}\right\|_2 + \sqrt{D}\log(2) \leq K_{\Delta,q'},
\end{equation}
we get:
\begin{align}
    \left\|D_{x_j^{(q')}} \bar{B}_j^{(q')}[u]\right\|_F & \leq \left\|W_B^{(q')}u\right\|_2\left\|\Delta_j^{(q')}\right\|_2 + \left\|W_B^{(q')}x_j^{(q')}\right\|_2 \left\|D_{x_j^{(q')}}\Delta_j^{(q')}[u]\right\|_2 \nonumber \\ &\leq \left\|W_B^{(q')}\right\|_2 K_{\Delta,q'} + \left\|W_B^{(q')}\right\|_2 L_{x,q'}\left\|W_\Delta^{(q')}\right\|_2.
\end{align}

Finally, since $\left\|\operatorname{diag}\left(v\right)\right\|_2 = \max_{d}\left|v_d\right|= \left\|v\right\|_{\infty} \leq \left\|v\right\|_2$ (and therefore $\left\|\operatorname{diag}\left(x_j^{(q')}\right)\right\|_2 \leq L_{x,q'}$), the second term is bounded as:
\begin{align}
    \left\|\left(D_{x_j^{(q')}} \bar{B}_j^{(q')}[u]\right)\operatorname{diag}\left(x_j^{(q')}\right)\right\|_F  &\leq \left\|D_{x_j^{(q')}} \bar{B}_j^{(q')}[u]\right\|_F\left\|\operatorname{diag}\left(x_j^{(q')}\right)\right\|_2 \nonumber \\ & \leq \left\|W_B^{(q')}\right\|_2  L_{x,q'} \left(K_{\Delta,q'} + L_{x,q'}\left\|W_\Delta^{(q')}\right\|_2\right) \label{eq:d_B_diag_x_result}
\end{align}

For the remaining term, the bound is as follows:
\begin{align}
    \left\|\bar{B}_j^{(q')} \operatorname{diag}(u)\right\|_F &\leq \left\|\bar{B}_j^{(q')}\right\|_F \left\| \operatorname{diag}(u)\right\|_2 \nonumber \\ &\leq \left\|W_B^{(q')}x_j^{(q')}\right\|_2 \left\|\Delta_j^{(q')}\right\|_2 \left\| u\right\|_2 \nonumber \\ &\leq \left\|W_B^{(q')}\right\|_2 L_{x,q'}K_{\Delta,q'} \label{eq:d_b_u__result}
\end{align}

Combining the three bounds, we get:
\begin{align}
    \left\|D_{x_j^{(q')}} h_{j}^{(q')}[u] \right\|_F \leq \rho_{q'} \left\| A^{(q')}\right\|_2 \left\|W_\Delta^{(q')}\right\|_2 H_{{q'}} + \left\|W_B^{(q')}\right\|_2  L_{x,q'} \left(2K_{\Delta,q'} + L_{x,q'}\left\|W_\Delta^{(q')}\right\|_2\right).
\end{align}

For ease of notation, for each layer $q'$, we define:
\begin{equation}\label{eq:Lambda_x_q_prime}
    \Lambda_{x,q'} := \rho_{q'} \left\| A^{(q')}\right\|_2 \left\|W_\Delta^{(q')}\right\|_2 H_{{q'}} + \left\|W_B^{(q')}\right\|_2  L_{x,q'} \left(2K_{\Delta,q'} + L_{x,q'}\left\|W_\Delta^{(q')}\right\|_2\right).
\end{equation}

Using this notation, we have the following bound:
\begin{equation}
    \left\|D_{x_j^{(q')}} h_{j}^{(q')}[u] \right\|_F \leq \Lambda_{x,q'}, \qquad \|u\|_2 = 1.
\end{equation}

Substituting this bound into the recurrence established in~\eqref{eq:recursive_definition_of_D_h_d_x}, we get:
\begin{equation}\label{eq:final_D_ht_dxj_u}
    \left\|D_{x_j^{(q')}} h_t^{(q')}[u]\right\|_F \leq \rho_{q'}^{t-j} \left\|D_{x_j^{(q')}} h_j^{(q')}[u]\right\|_F \leq \Lambda_{x,q'}\rho_{q'}^{t-j}, \qquad 1 \leq j \leq t \leq S.
\end{equation}
given $\|u\|_2 = 1$, and $\left\|D_{x_j^{(q')}} h_t^{(q')}[u]\right\|_F=0$ for $t < j$.

Returning to the expression for $\mathbf{K}_{t,j}^{(q')}u$ in~\eqref{eq:multi_layer_K_matrix_directional_derivative_expression}, we can bound the first term of the sum using the bound derived in~\eqref{eq:final_D_ht_dxj_u} (and the fact that $\left\|Z^\top c\right\|_2 \leq \|Z\|_F\|c\|_2$) as follows:
\begin{equation}
    \left\|\left(D_{x_j^{(q')}} h_t^{(q')}[u]\right)^\top C_t^{(q')}\right\|_2 \leq \left\|D_{x_j^{(q')}} h_t^{(q')}[u]\right\|_F\left\|C_t^{(q')}\right\|_2 \leq \left\|W_C^{(q')}\right\|_2 L_{x,q'}\Lambda_{x,q'}\rho_{q'}^{t-j}.
\end{equation}

For the second term, we can bound this as:
\begin{equation}
    \left\|\mathbbm{1}_{j=t}\left(h_t^{(q')}\right)^\top W_C^{(q')}u\right\|_2 \leq \mathbbm{1}_{j=t} \left\|h_t^{(q')}\right\|_F\left\|W_C^{(q')}\right\|_2\left\|u\right\|_2 = \mathbbm{1}_{j=t} \left\|h_t^{(q')}\right\|_F\left\|W_C^{(q')}\right\|_2
\end{equation}

Using the triangle inequality, and combining the derived bounds on the terms of the sum in~\eqref{eq:multi_layer_K_matrix_directional_derivative_expression}, we get:
\begin{equation}\label{eq:final_ku_2_norm_bound}
    \left\|\mathbf{K}_{t,j}^{(q')}u\right\|_2 \leq \left\|W_C^{(q')}\right\|_2L_{x,q'}\Lambda_{x,q'}\rho_{q'}^{t-j} + \mathbbm{1}_{j=t} \left\|W_C^{(q')}\right\|_2H_{{q'}}.
\end{equation}

Since the right hand side of~\eqref{eq:final_ku_2_norm_bound} is independent of $u$, and therefore holds uniformly for all unit vectors $u$, taking the supremum over all $u$, we get:
\begin{equation}
    \left\|\mathbf{K}_{t,j}^{(q')}\right\|_2 \leq \left\|W_C^{(q')}\right\|_2L_{x,q'}\Lambda_{x,q'}\rho_{q'}^{t-j} + \mathbbm{1}_{j=t} \left\|W_C^{(q')}\right\|_2H_{{q'}}, \qquad 1 \leq j \leq t \leq S.
\end{equation}

Next, for every fixed position $t$, summing $\left\|\mathbf{K}_{t,j}^{(q')}\right\|_2 $ over $j$ yields:
\begin{align}
    \sum_{j=1}^t\left\|\mathbf{K}_{t,j}^{(q')}\right\|_2  &\leq \sum_{j=1}^t\left(\left\|W_C^{(q')}\right\|_2L_{x,q'}\Lambda_{x,q'}\rho_{q'}^{t-j} + \mathbbm{1}_{j=t} \left\|W_C^{(q')}\right\|_2H_{{q'}}\right) \nonumber \\ &= \left\|W_C^{(q')}\right\|_2L_{x,q'}\Lambda_{x,q'}\sum_{j=1}^t\rho_{q'}^{t-j} + \left\|W_C^{(q')}\right\|_2H_{{q'}} \nonumber \\ 
    &= \left\|W_C^{(q')}\right\|_2L_{x,q'}\Lambda_{x,q'}\frac{1-\rho_{q'}^t}{1-\rho_{q'}} + \left\|W_C^{(q')}\right\|_2H_{{q'}} \nonumber \\ &\leq \left\|W_C^{(q')}\right\|_2\left(H_{{q'}}+ \frac{L_{x,q'}\Lambda_{x,q'}}{1-\rho_{q'}}\right) = \kappa_{q'},
\end{align}
where, for each layer $q'$, we define the uniform layerwise input-output sensitivity bound 
\begin{equation}\label{eq:kappa_definition}
    \kappa_{q'}:=\left\|W_C^{(q')}\right\|_2\left(H_{{q'}}+ \frac{L_{x,q'}\Lambda_{x,q'}}{1-\rho_{q'}}\right) 
\end{equation}

Next, since $\kappa_{q'}$ is independent of $t$, it follows that:
\begin{equation}\label{eq:multi_layer_K_temporal_sum_bound_uniform}
    \max_{1 \leq t \leq S}\sum_{j=1}^t\left\|\mathbf{K}_{t,j}^{(q')}\right\|_2 \leq \kappa_{q'}.
\end{equation}
Since $W$ satisfies the stated stability and bounded-input conditions, the bound holds uniformly over all admissible input-target sequences and token indices. 

Now, returning to~\eqref{eq:backprop_term_additon}, consider the two inner-most sums (for $q \leq Q-2$):
\begin{align}
    \sum_{j_{q+1}=1}^{j_{q+2}}\sum_{j_{q}=1}^{j_{q+1}}\left[\prod_{q'=q+1}^Q \left\|\mathbf{K}_{j_{q'},j_{q'-1}}^{(q')}\right\|_2\right] &= \sum_{j_{q+1}=1}^{j_{q+2}}\sum_{j_{q}=1}^{j_{q+1}}\left\|\mathbf{K}_{j_{q+1},j_{q}}^{(q+1)}\right\|_2\left\|\mathbf{K}_{j_{q+2},j_{q+1}}^{(q+2)}\right\|_2\cdots\left\|\mathbf{K}_{j_{Q},j_{Q-1}}^{(Q)}\right\|_2 \nonumber \\ &= \sum_{j_{q+1}=1}^{j_{q+2}}\left[\sum_{j_{q}=1}^{j_{q+1}}\left\|\mathbf{K}_{j_{q+1},j_{q}}^{(q+1)}\right\|_2\left\|\mathbf{K}_{j_{q+2},j_{q+1}}^{(q+2)}\right\|_2\cdots\left\|\mathbf{K}_{j_{Q},j_{Q-1}}^{(Q)}\right\|_2\right] \nonumber \\ &= \sum_{j_{q+1}=1}^{j_{q+2}}\left\|\mathbf{K}_{j_{q+2},j_{q+1}}^{(q+2)}\right\|_2\cdots\left\|\mathbf{K}_{j_{Q},j_{Q-1}}^{(Q)}\right\|_2\left[\sum_{j_{q}=1}^{j_{q+1}}\left\|\mathbf{K}_{j_{q+1},j_{q}}^{(q+1)}\right\|_2\right] \nonumber \\ &\leq  \sum_{j_{q+1}=1}^{j_{q+2}}\left\|\mathbf{K}_{j_{q+2},j_{q+1}}^{(q+2)}\right\|_2\cdots\left\|\mathbf{K}_{j_{Q},j_{Q-1}}^{(Q)}\right\|_2\kappa_{q+1} \nonumber \\ & = \kappa_{q+1}\sum_{j_{q+1}=1}^{j_{q+2}}\left\|\mathbf{K}_{j_{q+2},j_{q+1}}^{(q+2)}\right\|_2\cdots\left\|\mathbf{K}_{j_{Q},j_{Q-1}}^{(Q)}\right\|_2,
\end{align}
where the inequality follows from~\eqref{eq:multi_layer_K_temporal_sum_bound_uniform}. Since $\kappa_{q+1}$ is independent of all remaining sums, and we can apply the same argument successively to each remaining innermost sum, it follows that:
\begin{equation}\label{eq:nested_sum_bound_backprop_contribution}
    \sum_{j_{Q-1}=1}^{j_Q}\cdots\sum_{j_q=1}^{j_{q+1}}\left[ \prod_{q'=q+1}^{Q} \left\|\mathbf{K}_{j_{q'}, j_{q'-1}}^{(q')}\right\|_2\right]  \leq \kappa_{q+1} \kappa_{q+2} \cdots\kappa_{Q} = \prod_{q'=q+1}^Q \kappa_{q'}.
\end{equation}
for the fixed $W$ and every admissible input-target sequence pair. For $q=Q-1$, there is only one sum, and holds directly from definition: $\sum_{j_{Q-1}=1}^t \left\|\mathbf{K}_{t,j_{Q-1}}^{(Q)}\right\|_2 \leq \kappa_Q$.

Finally, combining the nested-sum bound in~\eqref{eq:nested_sum_bound_backprop_contribution} with~\eqref{eq:output_jacobian_norm_bound} and~\eqref{eq:multi_layer_own_output_jacobian_bound}, we obtain:
\begin{equation}\label{eq:multi_layer_final_output_jacobian_bound}
    \left\|\frac{\partial y_t^{(Q)}}{\partial W^{(q)}}\right\|_F \leq L_{x,q}\left(H_{q}+\left\|W_C^{(q)}\right\|_2J_{q}\right)\prod_{q'=q+1}^Q \kappa_{q'}.
\end{equation}

This bound holds for every $q \in \{1, \dots, Q\}$ and $t \in \{1, \dots, S\}$, where an empty product is interpreted as one. For the $q=Q$ case, the inequality follows directly from the own-layer bound in~\eqref{eq:multi_layer_own_output_jacobian_bound}. 

We now return to the loss gradient in~\eqref{eq:multi_layer_loss_gradient_expression}. Under Assumption~\ref{assum:lipschitz_cont_loss}, applied at the fixed full parameter tuple $W$ to final-layer predictions uniformly over admissible pairs and token indices, we have:
\begin{equation}\label{eq:nabla_ell_y_bound}
    \left\|\nabla_y \ell_t \left(y_t^{(Q)}\right)\right\|_2 \leq L_\ell.
\end{equation}

Finally, taking the Euclidean norm of the vectorized gradient in~\eqref{eq:multi_layer_loss_gradient_expression}, which equals its block Frobenius norm, gives:
\begin{align}
    \left\|\nabla_{W^{(q)}}\mathcal L^{S,(Q)}(W)\right\|_F &= \left\|\frac{1}{S}\sum_{t=1}^S\left(\frac{\partial y_t^{(Q)}}{\partial W^{(q)}}\right)^\top \nabla_y\ell_t\left(y_t^{(Q)}\right)\right\|_2 \nonumber \\ &\leq \frac{1}{S}\sum_{t=1}^S \left\|\nabla_y \ell_t \left(y_t^{(Q)}\right)\right\|_2 \left\|\frac{\partial y_t^{(Q)}}{\partial W^{(q)}}\right\|_F \nonumber \\ &\leq  \frac{1}{S}\sum_{t=1}^S L_{\ell} L_{x,q}\left(H_{q}+\left\|W_C^{(q)}\right\|_2J_{q}\right)\prod_{q'=q+1}^Q \kappa_{q'} \nonumber \\ &= L_{\ell} L_{x,q}\left(H_{q}+\left\|W_C^{(q)}\right\|_2J_{q}\right)\prod_{q'=q+1}^Q \kappa_{q'} \label{eq:multi_layer_loss_gradient_block_bound},
\end{align}
where we use~\eqref{eq:nabla_ell_y_bound}, ~\eqref{eq:multi_layer_final_output_jacobian_bound}, the triangle inequality, and the fact that $\|M^\top v\|_2\leq\|M\|_F\|v\|_2$. The bound is independent of the token index $t$.

Substituting the result of~\eqref{eq:multi_layer_loss_gradient_block_bound} back into~\eqref{eq:desired_bound_on_seq_S}, we get:
\begin{equation}
    \left\|\nabla_{W}\mathcal L^{S,(Q)}(W)\right\|_F \leq L_\ell \left  (\sum_{q=1}^Q L_{x,q}^2\left(H_{q} + \left\|W_C^{(q)}\right\|_2J_{q}\right)^2 \prod _{q'=q+1}^Q \kappa_{q'}^2\right)^{1/2} = \mathcal G_{\text{SSM}}^{(Q)}(W).
\end{equation}

Since $S\geq1$ and the admissible length-$S$ pair $\xi\in\mathcal Z$ were arbitrary, and the right-hand side is independent of $S$ and $\xi$, the bound holds uniformly over all such lengths and admissible pairs for the fixed full parameter tuple $W$, proving Theorem~\ref{thm:ssm_multi_layer_grad_bound}. \qed

\subsection{Auxiliary Lemmas for Multi-Layer SSM Federated Convergence}\label{appendix:multilayer_auxiliary_lemmas}

First, we describe the following setup that will be utilized in the remainder of the proofs of the multi-layer SSM federated convergence. 

Fix a reference full-model parameter tuple $W^r = (W^{r,(1)},\dots,W^{r,(Q)})$. For each layer $q\in\{1, \dots, Q\}$ of the $Q$-layer deep SSM defined in~\ref{appendix:multi_layer_selective_ssm}, choose some radius $R_q > 0$ and define:
\begin{equation}\label{eq:multi_layer_neighborhood_definition}
    \mathcal N_{R_q}^{(q)}\left(W^{r,(q)}\right) := \left\{V^{(q)} : \left\|V^{(q)}-W^{r, (q)}\right\|_F \leq R_q\right\}
\end{equation}
as the layerwise parameter neighborhood of radius $R_q$, centered at the parameter tuple of layer $q$ in the central model $W^{r}$.

Furthermore, define:
\begin{align}
    \mathcal N_{\mathbf{R}_{Q}} &:= \mathcal N_{R_1}^{(1)}\left(W^{r, (1)}\right) \times \dots \times \mathcal N_{R_Q}^{(Q)}\left(W^{r, (Q)}\right) \\
    \mathbf{R}_Q &:= \left(R_1, \dots, R_Q\right).
\end{align}

The radii $R_1, \dots, R_Q$ are chosen such that:
\begin{equation}
    \mathcal N_{\mathbf R_Q} \subseteq \mathcal S_{\boldsymbol\gamma,\boldsymbol\delta}^{(Q)}.
\end{equation}
Furthermore, we assume that Assumption~\ref{assum:bounded_layer_inuts} holds uniformly over this neighborhood with the same finite constants $L_{x,q}$. Furthermore, we let Assumption~\ref{assum:lipschitz_cont_loss} hold at the final-layer predictions $y_t^{(Q)}(W;x)$ with the same finite constants $L_\ell$ and $H_\ell$ throughout this neighborhood.

All stability margins, layer-input bounds, and loss-regularity constants are fixed independently of the parameter tuple $W \in \mathcal N_{\mathbf R_Q}$, the admissible sequence pair, its finite length, and the token position. In particular, $\rho_q = \exp(-\gamma_q\delta_q)<1$ is fixed for each layer.

For each $q\in \{1, \dots, Q\}$, define the parameter envelopes as:
\begin{align}
    M_{X, q} &:= \left\|W_X^{r,(q)}\right\|_2 + R_q \qquad X \in\{\Delta, B, C\}, \label{eq:multi_layer_parameter_envelopes}  \\ 
    M_{A,q}&:= \left\|A^{r,(q)}\right\|_2 + R_q \label{eq:multi_layer_A_envelope}.
\end{align}
By the definition of the layerwise neighborhoods, and the triangle inequality, every $W\in \mathcal N_{\mathbf{R}_{Q}}$ satisfies:
\begin{align}\label{eq:multi_layer_parameter_norm_bounds}
    \left\|W_X^{(q)}\right\|_2 & \leq M_{X,q} \qquad X \in\{\Delta, B, C\},  \nonumber \\ 
    \left\|A^{(q)}\right\|_2  &\leq  M_{A,q}
\end{align}

Given these envelopes, let:
\begin{align}
    M_{K_\Delta, q} & := M_{\Delta,q} L_{x,q} + \sqrt{D}\log(2), \label{eq:multi_layer_M_KDelta} \\ 
    M_{K_B,q} &:= M_{B,q}L_{x,q}^2, \label{eq:multi_layer_M_KB} \\
    M_{K_W,q} &:= ND M_{K_{B},q}M_{K_{\Delta},q}\left(M_{K_{\Delta},q} + \sqrt{D} M_{A,q}L_{x,q}\right)
     \left(\frac{\rho_q}{1-\rho_q}\right) \nonumber \\ &\qquad + L_{x,q}^2 \sqrt{N} M_{K_{\Delta},q} + L_{x,q} \sqrt{D} M_{K_B,q}. \label{eq:multi_layer_M_KW}
\end{align}
These are the layerwise counterparts of the single-layer envelopes introduced in Appendix~\ref{appendix:aux_lemmas_ssm_federated_convergence}, with the neighborhood radius $R$ replaced by $R_q$. 

Then, by the definitions of $K_{\Delta, q}$, $K_{B, q}$, and $K_{W,q}$, it follows that:
\begin{equation}\label{eq:multi_layer_K_envelope_bounds}
    K_{\Delta,q}  \leq M_{K_\Delta,q}, \qquad K_{B,q} \leq M_{K_B,q}, \qquad K_{W,q} \leq M_{K_W,q}
\end{equation}
over $W\in \mathcal N_{\mathbf{R}_Q}$.

Further, define:
\begin{align}
    H_{R_q} &:= \frac{M_{K_B,q} M_{K_{\Delta},q}
    \sqrt{ND}}{1-\rho_q}, \label{eq:multi_layer_H_R_definition} \\ 
    J_{R_q} &:= \frac{M_{K_W,q}}{1-\rho_q}. \label{eq:multi_layer_J_R_definition} \\
    M_{\Lambda_x,q} &:= \rho_q M_{A,q}M_{\Delta,q}H_{R_q} + M_{B,q}L_{x,q}\left(2M_{K_\Delta,q} + L_{x,q}M_{\Delta,q}\right), \label{eq:multi_layer_M_Lambda_x} \\
    M_{\kappa,q} &:= M_{C,q}\left(H_{R_q} + \frac{L_{x,q}M_{\Lambda_x,q}}{1-\rho_q}\right).\label{eq:multi_layer_M_kappa}
\end{align}

By the definitions of $\Lambda_{x,q}$ and $\kappa_q$ in the multi-layer gradient analysis, the following bounds hold
\begin{equation}
    \Lambda_{x,q}(W) \leq M_{\Lambda_x,q},
    \qquad
    \kappa_q(W)\leq M_{\kappa,q}
\end{equation}
for every $W \in \mathcal N_{\mathbf R_Q}$. These bounds follow by replacing each parameter norm by its corresponding envelope, since all factors are nonnegative.

\subsubsection{Local multi-layer SSM smoothness Lemma And Proof}
\begin{lemma}[Local multi-layer SSM smoothness]\label{lem:multilayer_local_smoothness}
Under the neighborhood assumptions and parameter envelopes introduced above, there exists a finite constant $L_{\mathrm{SSM},\mathbf R_Q}^{(Q)}$ defined:
\begin{equation}
    L_{\mathrm{SSM},\mathbf R_Q}^{(Q)} := H_\ell\Theta_Q^2+L_\ell\Omega_Q,
\end{equation}
with $\Theta_Q$ and $\Omega_Q$ defined in~\eqref{eq:Theta_Q_def_lemma} and~\eqref{eq:lem_multilayer_second_sensitivity_recursion} respectively, such that, for every admissible sequence pair $\xi$ of finite length $S$ and every $W,V\in\mathcal N_{\mathbf R_Q}$,
\begin{equation}
    \left\|\nabla_W\mathcal L^{S,(Q)}(W;\xi) - \nabla_W\mathcal L^{S,(Q)}(V;\xi)\right\|_F \leq L_{\mathrm{SSM},\mathbf R_Q}^{(Q)}\|W-V\|_F,
\end{equation}
independent of $S$ and $\xi$.

Furthermore, if each client objective $\mathcal L_k$ is finite and differentiable on $\mathcal N_{\mathbf R_Q}$ and satisfies:
\begin{equation}
    \nabla_W\mathcal L_k(W) = \mathbb E_{\xi\sim\mathcal P_k}\left[\nabla_W\mathcal L^{S,(Q)}(W;\xi)\right],
\end{equation}
then every $\mathcal L_k$, and therefore $f=\sum_k p_k\mathcal L_k$, is also $L_{\mathrm{SSM},\mathbf R_Q}^{(Q)}$-smooth on $\mathcal N_{\mathbf R_Q}$.

$\Theta_Q$ and $\Omega_Q$ have the following recursive definitions:
\begin{align}
    \Theta_0 &:= 0, \qquad \Theta_q : =\left(L_{x,q}^2\left(H_{R_q}+M_{C,q}J_{R_q}\right)^2+M_{\kappa,q}^2\Theta_{q-1}^2\right)^{1/2} \label{eq:Theta_Q_def_lemma}, \\
    \Omega_0 &:= 0, \qquad 
    \Omega_q := \frac{M_{C,q}L_{x,q}\mathcal E_q}{1 - \rho_q} + 2\left(L_{x,q}+M_{C,q}\Theta_{q-1}\right)\left(J_{R_q} + \frac{M_{\Lambda_x,q}\Theta_{q-1}}{1-\rho_q}\right) \nonumber \\ &~~~~~~~~~~~~~~~~~~~~~\qquad + 2H_{R_q}\Theta_{q-1} + M_{\kappa,q}\Omega_{q-1}. \label{eq:lem_multilayer_second_sensitivity_recursion}
\end{align}
where:
\begin{align}
    \mathcal E_q :=& \rho_q H_{R_q} \Bigg[\left(M_{K_\Delta,q} + M_{A,q}\left(L_{x,q} + M_{\Delta,q}\Theta_{q-1}\right)\right)^2 + 2\left(L_{x,q} + M_{\Delta,q}\Theta_{q-1}\right) \nonumber \\
    &+ M_{A,q}\left(\frac{1}{4}\left(L_{x,q} + M_{\Delta,q}\Theta_{q-1}\right)^2 + 2\Theta_{q-1}\right)\Bigg] \nonumber \\
    &+ 2\rho_q\left(M_{K_\Delta,q} + M_{A,q}\left(L_{x,q} + M_{\Delta,q}\Theta_{q-1}\right)\right)\left(J_{R_q} + \frac{M_{\Lambda_x,q}\Theta_{q-1}}{1-\rho_q}\right)\nonumber \\
    & + 2\Theta_{q-1}M_{K_\Delta,q}L_{x,q}\nonumber \\
    & + 2\left(L_{x,q} + M_{B,q}\Theta_{q-1}\right)\left[L_{x,q}\left(L_{x,q} + M_{\Delta,q}\Theta_{q-1}\right) + M_{K_\Delta,q}\Theta_{q-1}\right] \nonumber \\
    & + M_{B,q}L_{x,q}\Bigg[L_{x,q}\left(\frac{1}{4}\left(L_{x,q} + M_{\Delta,q}\Theta_{q-1}\right)^2 + 2\Theta_{q-1}\right) + 2\left(L_{x,q} + M_{\Delta,q}\Theta_{q-1}\right)\Theta_{q-1}\Bigg].\label{eq:lem_multilayer_second_state_forcing}
\end{align}
\end{lemma}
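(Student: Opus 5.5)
We prove the lemma by bounding the Hessian of the fixed-sequence loss uniformly over $\mathcal N_{\mathbf R_Q}$, exactly as in the proof of Lemma~\ref{lem:ssm_local_smoothness}. The key is a layerwise induction on two quantities: the first- and second-order parameter sensitivities of the layer outputs. Fix an admissible $\xi$ of length $S$, a point $W\in\mathcal N_{\mathbf R_Q}$, and unit block perturbations $U,V$ of the full tuple.

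For each layer $q$ and token $t$, we aim to show
\[
\|Dy_t^{(q)}(W)[U]\|_2\le \Theta_q, \qquad \|D^2y_t^{(q)}(W)[U,V]\|_2\le \Omega_q .
\]
Since $x_t^{(q)}=y_t^{(q-1)}$ and $y^{(0)}$ is the parameter-independent input, we get $\|Dx_t^{(q)}[U]\|_2\le\Theta_{q-1}$ and $\|D^2x_t^{(q)}[U,V]\|_2\le\Omega_{q-1}$, with $\Theta_0=\Omega_0=0$.

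\textbf{First-order step.} Split the perturbation into the own-layer part $U^{(q)}$ and the upstream part. The directional derivative of $y^{(q)}_t$ is the own-layer Jacobian applied to $U^{(q)}$, bounded via \eqref{eq:multi_layer_own_output_jacobian_bound} with envelopes by $L_{x,q}(H_{R_q}+M_{C,q}J_{R_q})\|U^{(q)}\|_F$. To it we add $\sum_{j\le t}\mathbf K^{(q)}_{t,j}Dx_j^{(q)}[U]$, bounded using \eqref{eq:multi_layer_K_temporal_sum_bound_uniform} with envelopes by $M_{\kappa,q}\Theta_{q-1}$, where $\Theta_{q-1}$ is the upstream-only sensitivity. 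Because $\|U^{(q)}\|_F^2+\sum_{q'<q}\|U^{(q')}\|_F^2\le1$, Cauchy--Schwarz on the two contributions gives the quadrature form of $\Theta_q$ in \eqref{eq:Theta_Q_def_lemma}. Inductively, $\Theta_{q-1}$ bounds the sensitivity of $y^{(q-1)}$ to the perturbation restricted to layers $<q$.

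\textbf{Second-order step.} Differentiate $h_t=\bar A_t\odot h_{t-1}+\bar B_t\operatorname{diag}(x_t)$ twice, now treating $x_t$ as a function of $W$. This reproduces the recursion of Lemma~\ref{lem:ssm_local_smoothness}, with each single-layer bound replaced by its total-derivative analogue:
\begin{itemize}
\item $D\Delta_t$ is bounded by $L_{x,q}+M_{\Delta,q}\Theta_{q-1}$ (own weights plus the input path through $W_\Delta$).
\item $D^2\Delta_t$ picks up $\tfrac14(\cdot)^2$ plus terms linear in $D^2x_t$ and in the cross $U_\Delta Dx_t[V]$.
\item $Dh_{t-1}$ is bounded by $J_{R_q}+M_{\Lambda_x,q}\Theta_{q-1}/(1-\rho_q)$, combining the parameter-Jacobian bound with the input-Jacobian bound \eqref{eq:final_D_ht_dxj_u} summed geometrically.
\end{itemize}
Collecting the forcing terms of $D^2h_t$ gives $\mathcal E_q$ in \eqref{eq:lem_multilayer_second_state_forcing}. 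Unrolling with $\|\bar A_t\|_{\max}\le\rho_q$ and $D^2h_0=0$ yields $\|D^2h_t\|_F\le\mathcal E_q/(1-\rho_q)$. Differentiating $y_t=h_t^\top W_Cx_t$ twice then gives the terms of $\Omega_q$:
\begin{itemize}
\item $M_{C,q}L_{x,q}\mathcal E_q/(1-\rho_q)$;
\item the cross terms $2(L_{x,q}+M_{C,q}\Theta_{q-1})(\cdots)$ from $D(W_Cx_t)$ times $Dh_t$;
\item $2H_{R_q}\Theta_{q-1}$ from $D W_C\cdot Dx_t$ times $h_t$;
\item $M_{\kappa,q}\Omega_{q-1}$ from the term in which only $D^2x_t$ appears, where the layer acts linearly through $\mathbf K^{(q)}$.
\end{itemize}

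\textbf{Conclusion.} Apply the second-order chain rule to $\ell_t(y_t^{(Q)})$ with Assumption~\ref{assum:lipschitz_cont_loss}. This gives $|D^2\ell_t[U,V]|\le H_\ell\Theta_Q^2+L_\ell\Omega_Q$ uniformly in $t$, $S$ and $\xi$. Averaging over $t$ bounds $\|\nabla^2\mathcal L^{S,(Q)}\|_{\mathrm{op}}$. The product neighborhood is convex, so the fundamental theorem of calculus along segments gives the Lipschitz gradient bound for the fixed-sequence loss. Jensen's inequality with the interchange hypothesis transfers it to each $\mathcal L_k$, and convexity of $\sum_k p_k$ transfers it to $f$.

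The main obstacle is the second-order step. The bookkeeping of every mixed term in $D^2h_t$ requires care: own-parameter versus upstream-input paths, and terms involving $D^2x_t$ versus products of first derivatives. Each term must be matched to the constants in $\mathcal E_q$ with the stated numerical factors. I would organize it by first writing the total derivatives $D\Delta_t$, $DZ_t$ and $D\bar B_t$ as "own plus input" sums, and only then substituting them into the single-layer second-derivative formulas.
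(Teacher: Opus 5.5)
Your proposal is correct and has the same structure as the paper's proof. It uses the same Cauchy--Schwarz induction to obtain the quadrature recursion for $\Theta_q$. It twice-differentiates the state recurrence, with $D^2x$-free forcing $\mathcal E_q$, and unrolls it using $\|\bar A_t^{(q)}\|_{\max}\le\rho_q$. It ends with the same Hessian, fundamental-theorem-of-calculus and Jensen argument.

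The only real difference is how you handle the second input derivative. You split off the exact linear chain-rule term $\sum_{j\le t}\mathbf K^{(q)}_{t,j}D^2x_j^{(q)}[U,V]$ and bound it directly with the already-proved temporal-sum bound $\sum_j\|\mathbf K^{(q)}_{t,j}\|_2\le\kappa_q\le M_{\kappa,q}$. The paper instead carries $M_{\Lambda_x,q}\|D^2x_t^{(q)}\|_2$ through the state recurrence and $M_{C,q}\|D^2x_t^{(q)}\|_2$ through $D^2C_t^{(q)}$. It then passes to $\max_{j\le t}\|D^2x_j^{(q)}\|_2$ and observes that the combined coefficient $M_{C,q}H_{R_q}+M_{C,q}L_{x,q}M_{\Lambda_x,q}/(1-\rho_q)$ equals $M_{\kappa,q}$. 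Your version reuses an existing estimate and makes the origin of $M_{\kappa,q}\Omega_{q-1}$ transparent, while the paper's keeps everything inside one recurrence. Both give exactly the stated $\Omega_q$.

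You should state explicitly that your bound $\|D^2h_t^{(q)}\|_F\le\mathcal E_q/(1-\rho_q)$ applies only to the part of $D^2h_t^{(q)}$ with the $D^2x^{(q)}$ contributions removed. That split is legitimate because the recurrence is linear in $(D^2h,D^2x)$. For the full second derivative the bound does not hold in general when $q\ge2$.
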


\textbf{Proof of Lemma~\ref{lem:multilayer_local_smoothness}:} Let $W \in \mathcal N_{\mathbf R_Q}$ be some fixed arbitrary parameter tuple, and $\xi=(x,y^*)$ be some admissible fixed input-target sequence pair of length $S \geq 1$. Note that, while $x$ and $y^*$ remain fixed throughout the differentiation, the intermediate inputs and outputs $x_t^{(q)}=y_t^{(q-1)}$ depend on the parameters of the preceding layers for $q \geq 2$. 

We introduce:
\begin{equation}
    U = \left(U^{(1)}, \dots, U^{(Q)}\right), \qquad V = \left(V^{(1)}, \dots, V^{(Q)}\right)
\end{equation}
as two arbitrary full-model perturbations such that:
\begin{equation}
    U^{(q)} = \left(U_\Delta^{(q)}, U_B^{(q)}, U_C^{(q)}, U_A^{(q)}\right), \qquad V^{(q)} = \left(V_\Delta^{(q)}, V_B^{(q)}, V_C^{(q)}, V_A^{(q)}\right)
\end{equation}
and:
\begin{equation}
    \|U\|_F^2 := \sum_{q=1}^Q\left\|U^{(q)}\right\|_F^2 = 1, \qquad \|V\|_F^2 := \sum_{q=1}^Q\left\|V^{(q)}\right\|_F^2 = 1.
\end{equation}
Note that each layerwise norm such as $\left\|U^{(q)}\right\|_F$ is defined as the Euclidean combination of the Frobenius norms of its four parameter blocks. As a result, every layerwise direction and every individual parameter-block direction has norm at most one. 

Then, for a twice continuously differentiable function $F$ of the full parameter tuple, we define the first and second directional derivatives by
\begin{align}
    DF(W)[U] &:= \frac{d}{d\epsilon} F(W+\epsilon U) \bigg|_{\epsilon = 0}, \\
    D^2F(W)[U,V] &:= \frac{\partial^2}{\partial\epsilon\partial\zeta}F(W + \epsilon U+\zeta V)\bigg|_{\epsilon = \zeta = 0}.
\end{align}
Unless otherwise specified, all derivatives in the remaining part of this proof are taken with respect to the full parameter tuple $W$.

The first-order chain rule gives
\begin{equation}
    D\left(\ell_t\left(y_t^{(Q)}\right)\right)(W)[U] = \left\langle \nabla_y\ell_t\left(y_t^{(Q)}(W)\right), Dy_t^{(Q)}(W)[U] \right\rangle.
\end{equation}
where we used the notation $\ell_t(y) = \ell_t(y,y_t^*)$. By differentiating this expression in $V$ direction, we get:
\begin{align}
    D^2\left(\ell_t\left(y_t^{(Q)}\right)\right)(W)[U,V] =& \left(Dy_t^{(Q)}(W)[U]\right)^\top\nabla_y^2\ell_t\left(y_t^{(Q)}(W)\right)Dy_t^{(Q)}(W)[V] \nonumber \\ 
    &+ \left\langle\nabla_y\ell_t\left(y_t^{(Q)}(W)\right),D^2y_t^{(Q)}(W)[U,V]\right\rangle.
    \label{eq:multilayer_token_loss_second_derivative}
\end{align}

By the neighborhood-uniform loss-regularity bounds introduced in Assumption~\ref{assum:lipschitz_cont_loss} and the Cauchy-Schwarz inequality, it follows that
\begin{align}
    \Bigg|D^2\left(\ell_t\left(y_t^{(Q)}\right)\right)(W)[U,V]\Bigg| \leq & H_\ell \left\|Dy_t^{(Q)}(W)[U]\right\|_2 \left\|Dy_t^{(Q)}(W)[V]\right\|_2 + L_\ell \left\|D^2y_t^{(Q)}(W)[U,V]\right\|_2. \label{eq:multilayer_token_loss_hessian_reduction}
\end{align}

As such, to bound the second derivative of the loss, it suffices to derive uniform bounds on the first and second directional derivatives of the final-layer output. 

We first establish the first-order output sensitivity bounds needed for the second-order calculation.

For ease of notation, for a full-model direction $U$ we write $U^{(\leq q)}=\left(U^{(1)},\dots,U^{(q)}\right)$ and $U^{(<q)}=(U^{(1)},\dots,U^{(q-1)})$. We further define the following recursion:
\begin{equation}
    \Theta_0 := 0, \qquad \Theta_q : =\left(L_{x,q}^2\left(H_{R_q}+M_{C,q}J_{R_q}\right)^2+M_{\kappa,q}^2\Theta_{q-1}^2\right)^{1/2}
\end{equation}

First, by~\eqref{eq:multi_layer_own_output_jacobian_bound} and the definition of the neighborhood envelopes, we have:
\begin{equation}
    \left\|\frac{\partial y_t^{(q)}}{\partial W^{(q)}}\right\|_F \leq L_{x,q}\left(H_{R_q} + M_{C,q}J_{R_q}\right).
\end{equation}

Then, by~\eqref{eq:multi_layer_K_temporal_sum_bound_uniform}
and the bound $\kappa_q(W) \leq M_{\kappa,q}$, for every
$t \in \{1,\dots,S\}$,
\begin{equation}
    \sum_{j=1}^{t} \left\|\frac{\partial y_t^{(q)}}{\partial x_j^{(q)}}\right\|_2 = \sum_{j=1}^{t}\left\|\mathbf K_{t,j}^{(q)}\right\|_2 \leq \kappa_q(W) \leq  M_{\kappa,q}.
\end{equation}

We next prove by induction on $q$ that
\begin{equation}\label{eq:multilayer_first_sensitivity_bound}
    \left\|Dy_t^{(q)}(W)[U]\right\|_2 \leq \Theta_q\left\|U^{(\leq q)}\right\|_F
\end{equation}
for every $q \in \{1,\dots,Q\}$ and every $t \in \{1,\dots,S\}$.

When $q=1$, the input sequence is independent of $W$, and the output of the first layer depends only on $W^{(1)}$. As such:
\begin{equation}
    Dy_t^{(1)}(W)[U] = \frac{\partial y_t^{(1)}}{\partial W^{(1)}}\operatorname{vec}\left(U^{(1)}\right),
\end{equation}
Using $\|Az\|_2\leq\|A\|_F\|z\|_2$ and $\|\operatorname{vec}(U^{(1)})\|_2=\|U^{(1)}\|_F$ gives
\begin{align}
    \left\|Dy_t^{(1)}(W)[U]\right\|_2 \leq L_{x,1}\left(H_{R_1}+M_{C,1}J_{R_1}\right)\left\|U^{(1)}\right\|_F=\Theta_1\left\|U^{(\leq1)}\right\|_F,
\end{align}
where the equality follows from $\Theta_0=0$ and the definition of $\Theta_1$.

Next, for a fixed $q \geq 2$, Since $U^{(\leq q-1)} = U^{(<q)}$, suppose (the induction hypothesis):
\begin{equation}\label{eq:multilayer_first_induction_hypothesis}
    \left\|Dy_j^{(q-1)}(W)[U]\right\|_2 \leq \Theta_{q-1}\left\|U^{(<q)}\right\|_F, \qquad j\in\{1,\dots,S\}.
\end{equation}

For a fixed $t$, the output $y_t^{(q)}$ depends on the current-layer parameters $W^{(q)}$ and the input tokens $x_1^{(q)}, \dots, x_t^{(q)}$, which depend on the preceding layers through $x_j^{(q)} = y_j^{(q-1)}$. The chain rule therefore gives
\begin{equation}
    Dy_t^{(q)}(W)[U] = \frac{\partial y_t^{(q)}}{\partial W^{(q)}} \operatorname{vec}\left(U^{(q)}\right) + \sum_{j=1}^{t}\frac{\partial y_t^{(q)}}{\partial x_j^{(q)}} Dy_j^{(q-1)}(W)[U].
\end{equation}
The first term accounts for the direct parameter perturbation in layer $q$, while the sum accounts for changes propagated through its input sequence.

Taking norms and applying the established Jacobian bounds and the induction hypothesis yields
\begin{align}
    \left\|Dy_t^{(q)}(W)[U]\right\|_2 \leq& L_{x,q}\left(H_{R_q} + M_{C,q}J_{R_q}\right)\left\|U^{(q)}\right\|_F + \sum_{j=1}^{t}\left\|\frac{\partial y_t^{(q)}}{\partial x_j^{(q)}}\right\|_2\left\|Dy_j^{(q-1)}(W)[U]\right\|_2 \nonumber \\
    \leq& L_{x,q}\left(H_{R_q} + M_{C,q}J_{R_q}\right)\left\|U^{(q)}\right\|_F + M_{\kappa,q}\Theta_{q-1}\left\|U^{(<q)}\right\|_F.
\end{align}
Applying Cauchy-Schwarz to the two terms gives
\begin{align}
    \left\|Dy_t^{(q)}(W)[U]\right\|_2 &\leq \left(L_{x,q}^2\left(H_{R_q} + M_{C,q}J_{R_q}\right)^2 + M_{\kappa,q}^2\Theta_{q-1}^2\right)^{1/2} \times \left(\left\|U^{(q)}\right\|_F^2 + \left\|U^{(<q)}\right\|_F^2\right)^{1/2} \nonumber \\ & = \Theta_q \left\|U^{(\leq q)}\right\|_F,
\end{align}
where the equality follows from the definition of $\Theta_q$ and the block Frobenius norm. This completes the induction.

The same argument applies to $V$. Finally, since $\|U^{(\leq q)}\|_F\leq\|U\|_F = 1$ and $\|V^{(\leq q)}\|_F\leq\|V\|_F=1$, we obtain the following first-derivative bounds:
\begin{equation}\label{eq:multilayer_first_sensitivity_unit}
    \left\|Dy_t^{(q)}(W)[U]\right\|_2\leq \Theta_q, \qquad \left\|Dy_t^{(q)}(W)[V]\right\|_2\leq \Theta_q,
\end{equation}
uniformly over all layers $q \in \{1,\dots,Q\}$ and token positions $t \in \{1,\dots,S\}$.

Finally, applying this result to $q = Q$, we get the resulting first-derivative bounds:
\begin{equation}\label{eq:Q_multilayer_first_sensitivity_unit}
    \left\|Dy_t^{(Q)}(W)[U]\right\|_2\leq \Theta_Q, \qquad \left\|Dy_t^{(Q)}(W)[V]\right\|_2\leq \Theta_Q,
\end{equation}
and substituting back into~\eqref{eq:multilayer_token_loss_hessian_reduction} we get: 
\begin{align}
    \Bigg|D^2\left(\ell_t\left(y_t^{(Q)}\right)\right)(W)[U,V]\Bigg| \leq & H_\ell \Theta_Q^2 + L_\ell \left\|D^2y_t^{(Q)}(W)[U,V]\right\|_2. \label{eq:multilayer_token_loss_hessian_reduction_with_P_Q_for_first_order_derivative}
\end{align}

We next bound the second output derivative term $\left\|D^2y_t^{(Q)}(W)[U,V]\right\|_2$. For any layer $q$, the second-order product rule gives:
\begin{align}
    D^2y_t^{(q)}(W)[U,V] =& \left(D^2h_t^{(q)}(W)[U,V]\right)^\top C_t^{(q)} + \left(h_t^{(q)}\right)^\top D^2C_t^{(q)}(W)[U,V] \nonumber\\
    &+ \left(Dh_t^{(q)}(W)[U]\right)^\top DC_t^{(q)}(W)[V] + \left(Dh_t^{(q)}(W)[V]\right)^\top DC_t^{(q)}(W)[U]. \label{eq:multilayer_output_second_derivative}
\end{align}
We bound each term separately. 

By the first-order bounds already established:
\begin{equation}
    \left\|Dx_t^{(q)}(W)[U]\right\|_2 = \left\|Dy_t^{(q-1)}(W)[U]\right\|_2 \leq \Theta_{q-1},
\end{equation}
vanishing for $q=1$, consistent with $\Theta_0 = 0$. The same holds for $\|Dx_t^{(q)}(W)[V]\|_2$. As a result, by the product rule:
\begin{align}
    \left\|DC_t^{(q)}(W)[U]\right\|_2 &\leq L_{x,q} + M_{C,q}\Theta_{q-1}, \\
    \left\|DC_t^{(q)}(W)[V]\right\|_2 &\leq L_{x,q} + M_{C,q}\Theta_{q-1},
    \\
    \left\|D^2C_t^{(q)}(W)[U,V]\right\|_2 &\leq 2\Theta_{q-1} + M_{C,q}\left\|D^2x_t^{(q)}(W)[U,V]\right\|_2.
\end{align}
Furthermore, by~\eqref{eq:multi_layer_state_jacobian_bound} and~\eqref{eq:final_D_ht_dxj_u},
\begin{align}
    \left\|Dh_t^{(q)}(W)[U]\right\|_F &\leq J_{R_q}  + M_{\Lambda_x,q}\Theta_{q-1} \sum_{j=1}^t\rho_q^{t-j} \leq J_{R_q} +\frac{M_{\Lambda_x,q}\Theta_{q-1}}{1 - \rho_q},
\end{align}
with the same inequality holding for $\left\|Dh_t^{(q)}(W)[V]\right\|_F$.

Since $\left\|C_t^{(q)}\right\|_2 \leq M_{C,q}L_{x,q}$ and $\left\|h_t^{(q)}\right\|_F\leq H_{R_q}$, equation~\eqref{eq:multilayer_output_second_derivative} yields
\begin{align}
    \left\|D^2y_t^{(q)}(W)[U,V]\right\|_2 \leq&  M_{C,q}L_{x,q}\left\|D^2h_t^{(q)}(W)[U,V]\right\|_F + M_{C,q}H_{R_q}\left\|D^2x_t^{(q)}(W)[U,V]\right\|_2 \nonumber\\
    &+ 2\left(L_{x,q} + M_{C,q}\Theta_{q-1}\right)\left(J_{R_q} +\frac{M_{\Lambda_x,q}\Theta_{q-1}}{1 - \rho_q}\right) + 2H_{R_q}\Theta_{q-1}.\label{eq:multilayer_output_second_reduction}
\end{align}

As a result, we need to bound the second derivative of the hidden state. By the definition of $h_t^{(q)}(W)$, differentiating it twice yields:
\begin{align}
    D^2h_t^{(q)}(W)[U,V] = & \bar A_t^{(q)}\odot D^2h_{t-1}^{(q)}(W)[U,V] + D^2\bar A_t^{(q)}(W)[U,V]\odot h_{t-1}^{(q)} \nonumber \\
    &+ D\bar A_t^{(q)}(W)[U] \odot Dh_{t-1}^{(q)}(W)[V] + D\bar A_t^{(q)}(W)[V]\odot Dh_{t-1}^{(q)}(W)[U] \nonumber \\ &+ D^2\left[B_t^{(q)}\left(\Delta_t^{(q)}\odot x_t^{(q)}\right)^\top\right](W)[U,V].
\end{align}

Then, since the absolute values of the first and second derivatives of $\operatorname{Softplus}$ are bounded above by $1$ and $1/4$, respectively, the product and chain rule give:
\begin{align}
    \left\|D\Delta_t^{(q)}(W)[U]\right\|_2 &\leq L_{x,q} + M_{\Delta,q}\Theta_{q-1}, \\
    \left\|D\Delta_t^{(q)}(W)[V]\right\|_2 &\leq L_{x,q} + M_{\Delta,q}\Theta_{q-1}, \\
    \left\|D^2\Delta_t^{(q)}(W)[U,V]\right\|_2 &\leq \frac{1}{4}\left(L_{x,q}+M_{\Delta,q}\Theta_{q-1}\right)^2 + 2\Theta_{q-1} + M_{\Delta,q}\left\|D^2x_t^{(q)}(W)[U,V]\right\|_2.
\end{align}

Then, applying the product rule and the exponential chain rule (per every entry of $A$), and using the established state and first-derivative bounds, we get:
\begin{align}
    \left\|D^2h_t^{(q)}(W)[U,V]\right\|_F \leq & \rho_q\left\|D^2h_{t-1}^{(q)}(W)[U,V]\right\|_F + \mathcal E_q + M_{\Lambda_x,q} \left\|D^2x_t^{(q)}(W)[U,V]\right\|_2, \label{eq:multilayer_second_state_recurrence}
\end{align}
where we collected all the terms independent of the second input derivative in $\mathcal E_q$ defined as
\begin{align}
    \mathcal E_q :=& \rho_q H_{R_q} \Bigg[\left(M_{K_\Delta,q} + M_{A,q}\left(L_{x,q} + M_{\Delta,q}\Theta_{q-1}\right)\right)^2 + 2\left(L_{x,q} + M_{\Delta,q}\Theta_{q-1}\right) \nonumber \\
    &+ M_{A,q}\left(\frac{1}{4}\left(L_{x,q} + M_{\Delta,q}\Theta_{q-1}\right)^2 + 2\Theta_{q-1}\right)\Bigg] \nonumber \\
    &+ 2\rho_q\left(M_{K_\Delta,q} + M_{A,q}\left(L_{x,q} + M_{\Delta,q}\Theta_{q-1}\right)\right)\left(J_{R_q} + \frac{M_{\Lambda_x,q}\Theta_{q-1}}{1-\rho_q}\right)\nonumber \\
    & + 2\Theta_{q-1}M_{K_\Delta,q}L_{x,q}\nonumber \\
    & + 2\left(L_{x,q} + M_{B,q}\Theta_{q-1}\right)\left[L_{x,q}\left(L_{x,q} + M_{\Delta,q}\Theta_{q-1}\right) + M_{K_\Delta,q}\Theta_{q-1}\right] \nonumber \\
    & + M_{B,q}L_{x,q}\Bigg[L_{x,q}\left(\frac{1}{4}\left(L_{x,q} + M_{\Delta,q}\Theta_{q-1}\right)^2 + 2\Theta_{q-1}\right) + 2\left(L_{x,q} + M_{\Delta,q}\Theta_{q-1}\right)\Theta_{q-1}\Bigg].\label{eq:multilayer_second_state_forcing}
\end{align}
The first term in $\mathcal E_q$ comes from the second transition derivative multiplied entrywise by the state:
\begin{equation*}
    D^2\bar A_t^{(q)}(W)[U,V]\odot h_{t-1}^{(q)}.
\end{equation*}
The second comes from the two mixed transition-state terms,
\begin{equation*}
    D\bar A_t^{(q)}(W)[U]\odot Dh_{t-1}^{(q)}(W)[V] + D\bar A_t^{(q)}(W)[V]\odot Dh_{t-1}^{(q)}(W)[U].
\end{equation*}
The remaining three terms come from differentiating the
additive state term
\begin{equation*}
    B_t^{(q)}\left(\Delta_t^{(q)}\odot x_t^{(q)}\right)^\top
\end{equation*}
twice. The contributions involving the second input derivative
are collectively bounded by
\begin{equation*}
    M_{\Lambda_x,q}\left\|D^2x_t^{(q)}(W)[U,V]\right\|_2,
\end{equation*}
where $M_{\Lambda_x,q}$ was defined in~\eqref{eq:multi_layer_M_Lambda_x}.

Since the initial state and its derivatives equal $\mathbf{0}$, unrolling~\eqref{eq:multilayer_second_state_recurrence} and summing the geometric series gives
\begin{equation}
    \left\|D^2h_t^{(q)}(W)[U,V]\right\|_F \leq \frac{\mathcal E_q}{1-\rho_q} + \frac{M_{\Lambda_x,q}}{1-\rho_q}\max_{1\leq j\leq t}\left\|D^2x_j^{(q)}(W)[U,V]\right\|_2.\label{eq:multilayer_second_state_bound}
\end{equation}

Next, we note that since $\left\|D^2x_t^{(q)}(W)[U,V]\right\|_2 \leq \max_{1\leq j\leq t}\left\|D^2x_j^{(q)}(W)[U,V]\right\|_2$, substituting~\eqref{eq:multilayer_second_state_bound} and using this fact, the new coefficient of the $\max_{1\leq j\leq t}\left\|D^2x_j^{(q)}(W)[U,V]\right\|_2$ term becomes:
\begin{equation}
    M_{C,q}H_{R_q} + \frac{M_{C,q}L_{x,q}M_{\Lambda_x,q}}{1-\rho_q} = M_{\kappa,q}.
\end{equation}

Finally, we define a new recursive term $\Omega_q$ as follows:
\begin{align}
    \Omega_0 &:= 0, \nonumber \\
    \Omega_q &:= \frac{M_{C,q}L_{x,q}\mathcal E_q}{1 - \rho_q} + 2\left(L_{x,q}+M_{C,q}\Theta_{q-1}\right)\left(J_{R_q} + \frac{M_{\Lambda_x,q}\Theta_{q-1}}{1-\rho_q}\right) \nonumber \\ &\qquad + 2H_{R_q}\Theta_{q-1} + M_{\kappa,q}\Omega_{q-1}. \label{eq:multilayer_second_sensitivity_recursion}
\end{align}
For $q=1$, second input derivative vanishes. Then, for $q \geq 2$, and using the similar induction argument as in the first-order derivative, as well as the fact $x_j^{(q)}=y_j^{(q-1)}$, the induction yields:
\begin{equation}\label{eq:multilayer_second_sensitivity_bound}
    \left\|D^2y_t^{(q)}(W)[U,V]\right\|_2\leq\Omega_q
\end{equation}
for every layer and token position.
All constants are finite and independent of $W$, $\xi$, $S$, and $t$, because the neighborhood envelopes are fixed, $\rho_q<1$, and $Q$ is finite.

Finally, applying this bound to $q=Q$, and substituting the result into~\eqref{eq:multilayer_token_loss_hessian_reduction} it follows that:
\begin{equation}
    \Bigg|D^2\left(\ell_t\left(y_t^{(Q)}\right)\right)(W)[U,V]
    \Bigg|
    \leq H_\ell\Theta_Q^2+L_\ell\Omega_Q =: L_{\mathrm{SSM},\mathbf R_Q}^{(Q)}.
\end{equation}

Averaging over the token losses preserves this bound. Since $U,V$ are arbitrary unit directions, it is true that:
\begin{equation}
    \left\|\nabla_W^2\mathcal L^{S,(Q)}(W;\xi) \right\|_{\mathrm{op}} \leq L_{\mathrm{SSM},\mathbf R_Q}^{(Q)},
\end{equation}
where the operator norm is induced by the block Frobenius norm.

Since $\mathcal N_{\mathbf R_Q}$ is convex, integrating the Hessian along the segment joining any $W_0, W_1 \in\mathcal N_{\mathbf R_Q}$ gives
\begin{equation}
    \left\|\nabla_W\mathcal L^{S,(Q)}(W_1;\xi) - \nabla_W\mathcal L^{S,(Q)}(W_0;\xi)\right\|_F \leq L_{\mathrm{SSM},\mathbf R_Q}^{(Q)}\|W_1-W_0\|_F.
\end{equation}

Taking expectations, by Assumption~\ref{assum:unbiased_stochastic_gradients} and Jensen's inequality, we get:
\begin{align}
    \left\|\nabla_W\mathcal L_k(W_1) - \nabla_W\mathcal L_k(W_0)\right\|_F &= \left\|\mathbb E_{\xi \sim \mathcal P_k}\left[\nabla_W\mathcal L^{S,(Q)}(W_1;\xi) - \nabla_W\mathcal L^{S,(Q)}(W_0;\xi)\right]\right\|_F \nonumber \\ 
    &\leq \mathbb E_{\xi \sim \mathcal P_k}\left[\left\|\nabla_W\mathcal L^{S,(Q)}(W_1;\xi) - \nabla_W\mathcal L^{S,(Q)}(W_0;\xi)\right\|_F\right] \nonumber \\ &\leq L_{\mathrm{SSM},\mathbf R_Q}^{(Q)}\|W_1-W_0\|_F,
\end{align}
demonstrating that every client objective is $L_{\mathrm{SSM},\mathbf R_Q}^{(Q)}$-smooth on $\mathcal N_{\mathbf R_Q}$.

Finally, since this inequality holds for every client objective, and $p_k\geq0$ and $\sum_kp_k=1$, it also holds for $f = \sum_kp_k\mathcal L_k$, demonstrating that $f$ is also $L_{\mathrm{SSM},\mathbf R_Q}^{(Q)}$-smooth on $\mathcal N_{\mathbf R_Q}$, completing the proof. \qed.

\subsubsection{Multi-layer SSM gradient second-moment constant Lemma and Proof}

\begin{lemma}[Multi-layer SSM gradient second-moment constant]\label{lem:multilayer_ssm_fedavg_g_constant} 
Suppose the Assumptions~\ref{assum:lipschitz_cont_loss} and \ref{assum:bounded_layer_inuts} hold. Let $\mathcal N_{\mathbf{R}_Q}(W^r) \subseteq \mathcal S^{(Q)}_{\boldsymbol{\gamma},\boldsymbol{\delta}}$ be a stable local model neighborhood around the round-$r$ broadcast $Q$-deep model $W^r$. Let:
\begin{equation}
    g(W;\xi) := \nabla_W\mathcal L^{S,(Q)}(W;\xi)
\end{equation}
denote the stochastic sequence-level gradient for a sampled input-target sequence pair $\xi = (x_{1:S}, y_{1:S}^*)$. Then, for every $W \in \mathcal N_{\mathbf{R}_Q}(W^r)$,
\begin{equation}
    \mathbb E_{\xi} \left\|g(W;\xi)\right\|_F^2 \leq \left(G_{\mathrm{SSM},\mathbf{R}_Q}^{(Q)}\right)^2,
\end{equation}
where
\begin{equation}
    G_{\mathrm{SSM},\mathbf{R}_Q}^{(Q)} := L_\ell \left  (\sum_{q=1}^Q L_{x,q}^2\left(H_{R_q} + M_{C,q}J_{R_q}\right)^2 \prod _{q'=q+1}^Q M_{\kappa,{q'}}^2\right)^{1/2} = L_\ell\Theta_Q.
\end{equation}
where $H_{R_q}, J_{R_q}, M_{C,q} \text{ and } M_{\kappa,q}$ are defined in~\eqref{eq:multi_layer_H_R_definition},~\eqref{eq:multi_layer_J_R_definition},~\eqref{eq:multi_layer_parameter_envelopes}, and ~\eqref{eq:multi_layer_M_kappa} respectively.
\end{lemma}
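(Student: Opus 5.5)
The proof mirrors Lemma~\ref{lem:ssm_fedavg_g_constant}. First I obtain a pointwise, sequence-independent bound on $\|g(W;\xi)\|_F$ that holds uniformly over $W\in\mathcal N_{\mathbf R_Q}(W^r)$. Then I square it and take expectations.

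Fix $W\in\mathcal N_{\mathbf R_Q}(W^r)\subseteq\mathcal S^{(Q)}_{\boldsymbol\gamma,\boldsymbol\delta}$ and an admissible $\xi\in\mathcal Z$ of any finite length $S$. Assumptions~\ref{assum:lipschitz_cont_loss} and~\ref{assum:bounded_layer_inuts} hold at $W$ with the neighborhood-uniform constants, so Theorem~\ref{thm:ssm_multi_layer_grad_bound} applies and gives
\begin{equation*}
\|g(W;\xi)\|_F\le L_\ell\Big(\sum_{q=1}^Q L_{x,q}^2\big(H_q+\|W_C^{(q)}\|_2J_q\big)^2\prod_{q'=q+1}^Q\kappa_{q'}^2\Big)^{1/2}.
\end{equation*}
Next I replace each $W$-dependent quantity by its envelope. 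By~\eqref{eq:multi_layer_parameter_norm_bounds} and~\eqref{eq:multi_layer_K_envelope_bounds}, we have $K_{\Delta,q}\le M_{K_\Delta,q}$, $K_{B,q}\le M_{K_B,q}$ and $K_{W,q}\le M_{K_W,q}$. Since the definitions of $H_q$ and $J_q$ are monotone in these nonnegative quantities, it follows that $H_q\le H_{R_q}$ and $J_q\le J_{R_q}$. We also have $\|W_C^{(q)}\|_2\le M_{C,q}$, and $\kappa_{q'}\le M_{\kappa,q'}$ holds as already noted after~\eqref{eq:multi_layer_M_kappa}. All factors are nonnegative, so the right-hand side is monotone in each of them. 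Hence $\|g(W;\xi)\|_F\le G^{(Q)}_{\mathrm{SSM},\mathbf R_Q}$ for every admissible $\xi$, and the bound does not depend on $S$.

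The remaining step is the identity $G^{(Q)}_{\mathrm{SSM},\mathbf R_Q}=L_\ell\Theta_Q$. I prove it by induction on $q$, with the claim
\begin{equation*}
\Theta_q^2=\sum_{p=1}^q L_{x,p}^2(H_{R_p}+M_{C,p}J_{R_p})^2\prod_{p'=p+1}^q M_{\kappa,p'}^2.
\end{equation*}
The base case $q=1$ follows from $\Theta_0=0$. For the inductive step, substitute the hypothesis for $\Theta_{q-1}^2$ into the recursion~\eqref{eq:Theta_Q_def_lemma}. The factor $M_{\kappa,q}^2$ then extends each product by one term, and the new term $L_{x,q}^2(H_{R_q}+M_{C,q}J_{R_q})^2$ appears with an empty product. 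Setting $q=Q$ gives the identity.

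Finally, I square the pointwise bound, which is valid because both sides are nonnegative. Since $\Pr(\xi\in\mathcal Z)=1$, taking expectation over $\xi$ yields $\mathbb E_\xi\|g(W;\xi)\|_F^2\le(G^{(Q)}_{\mathrm{SSM},\mathbf R_Q})^2$.

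The only delicate point is justifying that Theorem~\ref{thm:ssm_multi_layer_grad_bound} applies at every $W$ in the neighborhood. This requires the layer-input bounds $L_{x,q}$ and the stability margins to hold uniformly there. Both are guaranteed by the setup: $\mathcal N_{\mathbf R_Q}\subseteq\mathcal S^{(Q)}_{\boldsymbol\gamma,\boldsymbol\delta}$, and the assumptions are imposed uniformly on this neighborhood. Beyond that, everything reduces to monotone substitution and the inductive unrolling of $\Theta_Q$.
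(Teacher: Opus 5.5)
Your proposal is correct and follows the paper's proof essentially step for step: apply Theorem~\ref{thm:ssm_multi_layer_grad_bound} pointwise at each $W$ in the neighborhood, use monotonicity to replace $H_q$, $J_q$, $\|W_C^{(q)}\|_2$ and $\kappa_{q'}$ by their envelopes $H_{R_q}$, $J_{R_q}$, $M_{C,q}$, $M_{\kappa,q'}$, and then square and take expectations. Your explicit induction showing $G^{(Q)}_{\mathrm{SSM},\mathbf R_Q}=L_\ell\Theta_Q$ from the recursion \eqref{eq:Theta_Q_def_lemma} is a small but useful addition, because the paper only asserts this identity in the statement and does not verify it.
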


\textbf{Proof of Lemma~\ref{lem:multilayer_ssm_fedavg_g_constant}:} By Theorem~\ref{thm:ssm_multi_layer_grad_bound}, under the assumptions of the lemma, for every $W \in \mathcal N_{\mathbf{R}_Q}(W^r) \subseteq \mathcal S^{(Q)}_{\boldsymbol{\gamma},\boldsymbol{\delta}}$, every finite integer $S \geq 1$, and every admissible length-$S$ sequence pair $\xi \in \mathcal Z$, we have:
\begin{align}
    \left\|\nabla_W \mathcal L^{S,(Q)}(W;\xi)\right\|_F \leq  L_\ell \left  (\sum_{q=1}^Q L_{x,q}^2\left(H_{q} + \left\|W_C^{(q)}\right\|_2J_{q}\right)^2 \prod _{q'=q+1}^Q \kappa_{q'}^2\right)^{1/2}. \nonumber
\end{align}
Since $g(W;\xi) = \nabla_W\mathcal L^{S,(Q)}(W;\xi)$, it follows that:
\begin{equation}
    \|g(W;\xi)\|_F \leq  L_\ell \left  (\sum_{q=1}^Q L_{x,q}^2\left(H_{q} + \left\|W_C^{(q)}\right\|_2J_{q}\right)^2 \prod _{q'=q+1}^Q \kappa_{q'}^2\right)^{1/2}.
\end{equation}
Restricting to $W  \in \mathcal N_{\mathbf{R}_Q}(W^r)$, by the local neighborhood bounds defined previously (~\eqref{eq:multi_layer_H_R_definition},~\eqref{eq:multi_layer_J_R_definition},~\eqref{eq:multi_layer_parameter_envelopes}, and ~\eqref{eq:multi_layer_M_kappa}), 
\begin{align}
    H_q\leq H_{R_q},
    \qquad
    \left\|W_C^{(q)}\right\|_2 \leq M_{C,q},
    \qquad
    J_q\leq J_{R_q},
    \qquad
    \kappa_q \leq M_{\kappa,q}.
\end{align}

Therefore, 
\begin{equation}\label{eq:multilayer_g_ssm_definition}
    \left\|g(W;\xi)\right\|_F \leq L_\ell \left  (\sum_{q=1}^Q L_{x,q}^2\left(H_{R_q} + M_{C,q}J_{R_q}\right)^2 \prod _{q'=q+1}^Q M_{\kappa,{q'}}^2\right)^{1/2} = G_{\mathrm{SSM},\mathbf{R}_Q}^{(Q)}.
\end{equation}
Since both values are non-negative, squaring both sides gives:
\begin{equation}
    \|g(W;\xi)\|_F^2 \leq \left(G_{\mathrm{SSM},\mathbf{R}_Q}^{(Q)}\right)^2 = L_\ell^2 \left  (\sum_{q=1}^Q L_{x,q}^2\left(H_{R_q} + M_{C,q}J_{R_q}\right)^2 \prod _{q'=q+1}^Q M_{\kappa,{q'}}^2\right)
\end{equation}

Finally, taking expectation over the sampled sequence $\xi$ preserves the bound:
\begin{equation}
    \mathbb E_\xi\left[\|g(W;\xi)\|_F^2\right] \leq \left(G_{\mathrm{SSM},\mathbf{R}_Q}^{(Q)}\right)^2
\end{equation}
proving the lemma. \qed

\subsubsection{Multi-layer SSM stochastic gradient variance bound Lemma and Proof}
\begin{lemma}[Multi-layer SSM stochastic gradient variance bound]
\label{lem:multilayer_ssm_variance_bound}
Suppose the assumptions of Lemma~\ref{lem:multilayer_ssm_fedavg_g_constant} and Assumption~\ref{assum:unbiased_stochastic_gradients} hold. Let
\begin{equation*}
    \mathcal L(W):=\mathbb E_{\xi\sim\mathcal P}\left[\mathcal L^{S, (Q)}(W;\xi)\right]
\end{equation*}
denote the expected sequence loss over some local data distribution $\mathcal P$, for a $Q$-layer deep SSM parameterized by $W$. Define
\begin{equation*}
    g(W;\xi):=\nabla_W\mathcal L^{S, (Q)}(W;\xi)
\end{equation*}
as the single-sequence stochastic gradient. For a mini-batch
$\Xi_\xi=\{\xi_1,\dots,\xi_\Xi\}$ of $\Xi$ independent sampled sequences, define
\begin{equation}
    g_\Xi(W;\Xi_\xi) := \frac{1}{\Xi}\sum_{j=1}^\Xi g(W;\xi_j).
\end{equation}
Then, for every $W\in\mathcal N_{\mathbf{R}_Q}(W^r)$ it follows that
\begin{equation}
    \mathbb E_{\Xi_\xi}\left[\left\|g_\Xi(W;\Xi_\xi)-\nabla_W\mathcal L(W)\right\|_F^2\right] \leq \left(\sigma_{\mathrm{SSM},\mathbf{R}_Q,\Xi}^{(Q)}\right)^2,
\end{equation}
where
\begin{equation}
   \left(\sigma_{\mathrm{SSM},\mathbf{R}_Q,\Xi}^{(Q)}\right)^2 := \frac{1}{\Xi}\sup_{W\in\mathcal N_{\mathbf R_Q}(W^r)}\left(\mathbb E_{\xi\sim\mathcal P}\|g(W;\xi)\|_F^2 - \|\nabla_W\mathcal L(W)\|_F^2\right). 
\end{equation}
Moreover, by Lemma~\ref{lem:multilayer_ssm_fedavg_g_constant},
\begin{equation}
    \left(\sigma_{\mathrm{SSM},\mathbf{R}_Q,\Xi}^{(Q)}\right)^2 \leq \frac{\left(G_{\mathrm{SSM},\mathbf{R}_Q}^{(Q)}\right)^2}{\Xi}.
\end{equation}
\end{lemma}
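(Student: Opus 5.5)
The proof mirrors that of Lemma~\ref{lem:ssm_variance_bound}; the only change is that the single-layer gradient envelope is replaced by its multi-layer counterpart from Lemma~\ref{lem:multilayer_ssm_fedavg_g_constant}. Fix $W\in\mathcal N_{\mathbf R_Q}(W^r)$. By Assumption~\ref{assum:unbiased_stochastic_gradients}, the gradient of the multi-layer sequence loss is unbiased:
\begin{equation*}
\mathbb E_{\xi\sim\mathcal P}[g(W;\xi)]=\nabla_W\mathcal L(W).
\end{equation*}
The mini-batch consists of $\Xi$ independent draws. Hence the centered terms $Z_j:=g(W;\xi_j)-\nabla_W\mathcal L(W)$ are i.i.d.\ and mean zero. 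They are square integrable, because the pointwise bound~\eqref{eq:multilayer_g_ssm_definition} gives $\|g(W;\xi)\|_F\le G_{\mathrm{SSM},\mathbf R_Q}^{(Q)}$ almost surely, using $\mathcal P(\mathcal Z)=1$.

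Next we expand $\|\frac1\Xi\sum_j Z_j\|_F^2$ using the Frobenius inner product on the vectorized block tuple. Independence and zero mean make every cross term vanish. This leaves
\begin{equation*}
\mathbb E\|g_\Xi-\nabla_W\mathcal L(W)\|_F^2=\frac1\Xi\,\mathbb E_\xi\|g(W;\xi)-\nabla_W\mathcal L(W)\|_F^2 .
\end{equation*}
Expanding the square and using unbiasedness once more gives the bias-variance identity
\begin{equation*}
\mathbb E_\xi\|g-\nabla\mathcal L\|_F^2=\mathbb E_\xi\|g\|_F^2-\|\nabla_W\mathcal L(W)\|_F^2 .
\end{equation*}
The right-hand side is at most its supremum over $\mathcal N_{\mathbf R_Q}(W^r)$, so the first claim follows with the stated $\big(\sigma^{(Q)}_{\mathrm{SSM},\mathbf R_Q,\Xi}\big)^2$.

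For the second claim, drop the nonnegative term $\|\nabla_W\mathcal L(W)\|_F^2$. Then apply Lemma~\ref{lem:multilayer_ssm_fedavg_g_constant}, which gives $\mathbb E_\xi\|g(W;\xi)\|_F^2\le (G^{(Q)}_{\mathrm{SSM},\mathbf R_Q})^2$ uniformly over the neighborhood. Taking the supremum and dividing by $\Xi$ yields the bound $(G^{(Q)}_{\mathrm{SSM},\mathbf R_Q})^2/\Xi$.

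No step is genuinely hard. The one point needing care is that the supremum defining $\sigma^2$ is finite, which we need for the inequality to be meaningful. This follows because the uniform envelope from Lemma~\ref{lem:multilayer_ssm_fedavg_g_constant} depends only on the fixed constants $M_{X,q}$, $L_{x,q}$, $\rho_q$ and $L_\ell$, and not on $W$, $S$ or $\xi$.
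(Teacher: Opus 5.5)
Your proposal is correct and follows essentially the same route as the paper: the mini-batch variance identity from independence and unbiasedness, then the identity $\mathbb E\|g-\nabla_W\mathcal L(W)\|_F^2=\mathbb E\|g\|_F^2-\|\nabla_W\mathcal L(W)\|_F^2$, a supremum over the neighborhood, and finally dropping the nonnegative term and invoking Lemma~\ref{lem:multilayer_ssm_fedavg_g_constant}. The paper leaves implicit the square integrability and the finiteness of the supremum, both of which follow from the pointwise envelope; your explicit remarks on these are welcome.
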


\textbf{Proof of Lemma~\ref{lem:multilayer_ssm_variance_bound}:} By unbiasedness and independence of the sampled sequences, the standard mini-batch variance identity gives:
\begin{align*}
    \mathbb E_{\Xi_\xi}\left[\left\|g_\Xi(W;\Xi_\xi)-\nabla_W\mathcal L(W)\right\|_F^2\right] = \frac{1}{\Xi}\mathbb E_{\xi\sim\mathcal P}\left[\left\|g(W;\xi)-\nabla_W\mathcal L(W)\right\|_F^2\right].
\end{align*}
Using the identity
\begin{equation}
    \mathbb E_{\xi\sim\mathcal P} \left[\left\|g(W;\xi)-\nabla_W\mathcal L(W)\right\|_F^2\right] = \mathbb E_{\xi\sim\mathcal P} \|g(W;\xi)\|_F^2 - \|\nabla_W\mathcal L(W)\|_F^2,
\end{equation}
and taking the supremum over $W\in \mathcal N_{\mathbf{R}_Q}(W^r)$, we obtain
\begin{equation}
    \mathbb E_{\Xi_\xi}\left[\left\|g_\Xi(W;\Xi_\xi)-\nabla_W\mathcal L(W)\right\|_F^2\right] \leq \left(\sigma_{\mathrm{SSM},\mathbf{R}_Q,\Xi}^{(Q)}\right)^2.
\end{equation}

Finally, Lemma~\ref{lem:multilayer_ssm_fedavg_g_constant} gives
\begin{equation*}
    \mathbb E_{\xi\sim\mathcal P}\|g(W;\xi)\|_F^2 \leq \left(G_{\mathrm{SSM},\mathbf{R}_Q}^{(Q)}\right)^2
\end{equation*}
and since $\|\nabla_W\mathcal L(W)\|_F^2\geq 0$,
\begin{equation}
    \left(\sigma_{\mathrm{SSM},\mathbf{R}_Q,\Xi}^{(Q)}\right)^2 \leq \frac{\left(G_{\mathrm{SSM},\mathbf{R}_Q}^{(Q)}\right)^2}{\Xi}.
\end{equation}
This proves the claim. \qed

\subsubsection{Multi-layer SSM client heterogeneity bound under total variation Lemma and Proof}
\begin{lemma}[Multi-layer SSM client heterogeneity bound under total variation]\label{lem:multilayer_ssm_gamma_tv_bound}
Suppose the assumptions of Lemma~\ref{lem:multilayer_ssm_fedavg_g_constant} and Assumption~\ref{assum:unbiased_stochastic_gradients} hold. Let
\begin{equation}
    f(W) := \sum_{k=1}^K p_k \mathcal L_k(W)
\end{equation}
be the global objective, where $p_k \geq 0$ and $\sum_{k=1}^K p_k=1$. Let $\mathcal P_k$ denote the local data distribution of client $k$, and define the global mixture distribution as:
\begin{equation}
    \bar{\mathcal P} := \sum_{k=1}^K p_k \mathcal P_k.
\end{equation}
Define the local client heterogeneity constant
\begin{equation}
    \left(\Gamma_{\mathrm{SSM},\mathbf{R}_Q}^{(Q)}\right)^2:=\sup_{W \in \mathcal N_{\mathbf{R}_Q}(W^r)}\sum_{k=1}^K p_k \left\|\nabla_W \mathcal L_k(W) - \nabla_W f(W) \right\|_F^2.
\end{equation}
Then:
\begin{equation}
    \left(\Gamma_{\mathrm{SSM},\mathbf{R}_Q}^{(Q)}\right)^2 \leq 4\left(G_{\mathrm{SSM},\mathbf{R}_Q}^{(Q)}\right)^2 \sum_{k=1}^K p_k \operatorname{TV}(\mathcal{P}_k, \bar{\mathcal P})^2,
\end{equation}
where
\begin{equation}
    \operatorname{TV}(P,Q):=\sup_A |P(A)-Q(A)|.
\end{equation}
\end{lemma}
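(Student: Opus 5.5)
The plan mirrors the single-layer proof of Lemma~\ref{lem:ssm_gamma_tv_bound}, with the single-layer pointwise gradient envelope replaced by its multi-layer counterpart from Lemma~\ref{lem:multilayer_ssm_fedavg_g_constant}.

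\textbf{Step 1: rewrite the disagreement as a difference of expectations.} By Assumption~\ref{assum:unbiased_stochastic_gradients} (with the multi-layer sequence loss $\mathcal L^{S,(Q)}$), each $\nabla_W\mathcal L_k(W)=\mathbb E_{\xi\sim\mathcal P_k}[g(W;\xi)]$. Linearity of $f=\sum_k p_k\mathcal L_k$ and the definition of the mixture then give $\nabla_W f(W)=\mathbb E_{\xi\sim\bar{\mathcal P}}[g(W;\xi)]$. Hence each client disagreement equals
\[
\nabla_W\mathcal L_k(W)-\nabla_W f(W)=\mathbb E_{\mathcal P_k}[g]-\mathbb E_{\bar{\mathcal P}}[g].
\]

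\textbf{Step 2: invoke the pointwise bound.} By~\eqref{eq:multilayer_g_ssm_definition} in the proof of Lemma~\ref{lem:multilayer_ssm_fedavg_g_constant}, $\|g(W;\xi)\|_F\le G^{(Q)}_{\mathrm{SSM},\mathbf R_Q}$ for every $W\in\mathcal N_{\mathbf R_Q}(W^r)$ and every admissible $\xi\in\mathcal Z$. The key requirement is that this bound is \emph{pointwise on $\mathcal Z$}, not merely in second moment. Since $\mathcal P_k(\mathcal Z)=\bar{\mathcal P}(\mathcal Z)=1$, we may modify $g$ off $\mathcal Z$ without changing either expectation.

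\textbf{Step 3: apply the total-variation inequality for bounded vector-valued functions.} Fix an arbitrary unit direction $v$ in the block Frobenius inner product. The scalar function $\langle v,g\rangle$ is bounded by $G:=G^{(Q)}_{\mathrm{SSM},\mathbf R_Q}$ in absolute value on $\mathcal Z$. The standard scalar bound then gives
\[
|\mathbb E_P\phi-\mathbb E_Q\phi|\le 2\|\phi\|_\infty\operatorname{TV}(P,Q).
\]
One can verify this bound via the Hahn decomposition of $P-Q$: the positive and negative parts each have mass $\operatorname{TV}(P,Q)$. Taking the supremum over $v$ yields
\[
\|\nabla\mathcal L_k-\nabla f\|_F\le 2G\operatorname{TV}(\mathcal P_k,\bar{\mathcal P}).
\]
Measurability of $g$ in $\xi$ follows from smoothness of the recurrence in the inputs.

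\textbf{Step 4: square, weight, and take the supremum.} Squaring, multiplying by $p_k$, and summing gives a right-hand side independent of $W$. Taking the supremum over $\mathcal N_{\mathbf R_Q}(W^r)$ concludes the proof.

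The only step that needs care, and so the main obstacle, is Step 3. It requires the envelope bound to hold uniformly over all of $\mathcal Z$ and over the whole product neighborhood. Otherwise the TV argument, which needs an $L^\infty$ bound rather than an $L^2$ bound, fails. This uniformity is exactly what the neighborhood-uniform constants $L_{x,q}$, $\rho_q$ and $L_\ell$, together with the envelopes $M_{X,q}$, guarantee.
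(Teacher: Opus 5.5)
Your proposal is correct and follows the paper's proof step for step: unbiasedness and the mixture identity turn each client disagreement into $\mathbb E_{\mathcal P_k}[g]-\mathbb E_{\bar{\mathcal P}}[g]$. The pointwise envelope on $\mathcal Z$ from the proof of Lemma~\ref{lem:multilayer_ssm_fedavg_g_constant} then feeds the bounded-function total-variation inequality, and squaring, weighting by $p_k$, summing, and taking the supremum finishes the argument, with your scalarization over unit directions $v$ and the Hahn decomposition merely spelling out the vector-valued TV inequality that the paper invokes without proof.
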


\textbf{Proof of Lemma~\ref{lem:multilayer_ssm_gamma_tv_bound}:} Recall that for each client $k$, the local objective is:
\begin{equation*}
    \mathcal L_k(W) =  \mathbb E_{\xi \sim \mathcal P_k}\left[\mathcal L^{S,(Q)}(W;\xi)\right],
\end{equation*}
and, by Assumption~\ref{assum:unbiased_stochastic_gradients},
\begin{align}
    \nabla_W\mathcal L_k(W) = \mathbb E_{\xi\sim \mathcal P_k}\left[g(W;\xi)\right].
\end{align}
Similarly, since
\begin{equation*}
    f(W) := \sum_{k=1}^K p_k \mathcal L_k(W),
\end{equation*}
we have
\begin{equation}
    \nabla_W f(W) = \sum_{k=1}^Kp_k\nabla_W\mathcal L_k(W) = \sum_{k=1}^Kp_k\mathbb E_{\xi\sim \mathcal P_k}\left[g(W;\xi)\right].
\end{equation}

Next, by the definition of the mixture distribution $\bar{\mathcal P}$, we can write this as:
\begin{equation}
     \nabla_W f(W) = \mathbb E_{\xi \sim \bar{\mathcal P}}\left[g(W;\xi)\right].
\end{equation}
Therefore, it follows that
\begin{equation}
    \nabla_W\mathcal L_k(W)-\nabla_W f(W) = \mathbb E_{\xi\sim \mathcal P_k}[g(W;\xi)] - \mathbb E_{\xi\sim \bar{\mathcal P}}[g(W;\xi)].
\end{equation}

By the pointwise bound~\eqref{eq:multilayer_g_ssm_definition} established in the proof of Lemma~\ref{lem:multilayer_ssm_fedavg_g_constant}, for every $W \in \mathcal N_{\mathbf{R}_Q}(W^r)$ and every admissible sequence pair $\xi \in \mathcal Z$,
\begin{equation*}
    \left\|g(W;\xi)\right\|_F \leq G_{\mathrm{SSM},\mathbf{R}_Q}^{(Q)}.
\end{equation*}
Since $\mathcal P_k(\mathcal Z) = \bar{\mathcal P}(\mathcal Z) = 1$, the total-variation inequality for bounded vector-valued functions gives:
\begin{equation}
    \left\|\mathbb E_{\xi\sim \mathcal P_k}[g(W;\xi)] - \mathbb E_{\xi\sim \bar{\mathcal P}}[g(W;\xi)]\right\|_F \leq 2G_{\mathrm{SSM},\mathbf{R}_Q}^{(Q)}\operatorname{TV}(\mathcal P_k,\bar{\mathcal P}).
\end{equation}
So:
\begin{equation}
    \left\|\nabla_W\mathcal L_k(W)-\nabla_W f(W)\right\|_F^2 \leq 4\left(G_{\mathrm{SSM},\mathbf{R}_Q}^{(Q)}\right)^2\operatorname{TV}(\mathcal P_k,\bar{\mathcal P})^2.
\end{equation}

Multiplying both sides by $p_k$ and summing over all clients gives:
\begin{equation}
    \sum_{k=1}^K p_k \left\|\nabla_W\mathcal L_k(W)-\nabla_W f(W) \right\|_F^2 \leq 4\left(G_{\mathrm{SSM},\mathbf{R}_Q}^{(Q)}\right)^2\sum_{k=1}^K p_k \operatorname{TV}(\mathcal P_k,\bar{\mathcal P})^2.
\end{equation} 

Since the right-hand side of this inequality is independent of $W$, taking the supremum over $W \in \mathcal N_{\mathbf{R}_Q}(W^r)$ we get:
\begin{equation}
    \left(\Gamma_{\mathrm{SSM},\mathbf{R}_Q}^{(Q)}\right)^2 := \sup_{W \in \mathcal N_{\mathbf{R}_Q}(W^r)} \sum_{k=1}^K p_k\left\|\nabla_W \mathcal L_k(W)-\nabla_W f(W)\right\|_F^2 \leq 4\left(G_{\mathrm{SSM},\mathbf{R}_Q}^{(Q)}\right)^2\sum_{k=1}^Kp_k \operatorname{TV}(\mathcal P_k,\bar{\mathcal P})^2.
\end{equation}

This proves the multi-layer SSM client heterogeneity bound. \qed

\subsection{Corollary~\ref{thm:multilayer_fedavg_ssm_convergence}:
FedAvg Multi-Layer SSM Convergence and Proof}

\begin{corollary}[FedAvg convergence for multi-layer selective SSMs]
\label{thm:multilayer_fedavg_ssm_convergence}

Consider the distributed learning framework established in Section~\ref{sec:system_model}, with objective
\begin{equation*}
    f(W):=\sum_{k=1}^K p_k\mathcal L_k(W), \qquad p_k\geq0, \qquad \sum_{k=1}^K p_k=1,
\end{equation*}
where the weights $p_k$ are fixed and
\begin{equation*}
        W=\left(W^{(1)},\ldots,W^{(Q)}\right)
\end{equation*}
parameterizes the selective SSM of finite depth $Q\geq1$ defined in Appendix~\ref{appendix:multi_layer_selective_ssm}.

Each client objective is finite and differentiable on the neighborhoods under consideration, and satisfies
\begin{equation*}
    \nabla_W\mathcal L_k(W) = \mathbb E_{\xi\sim\mathcal P_k} \left[\nabla_W\mathcal L^{S,(Q)}(W;\xi)\right].
\end{equation*}
where $S$ is the length of the sampled sequence pair $\xi$, and all clients use the same sequence-loss map.

Let $W^0$ be deterministic with $f(W^0)<\infty$, and suppose that $f$ is bounded below by a finite constant $f^*$.

Consider full-participation FedAvg for $T$ communication rounds, with $E$ local steps per round, mini-batch size $\Xi$, and constant deterministic local step size $\eta>0$, where $T,E,\Xi \in \mathbb Z_{>0}$:
\begin{align*}
    W_k^{r,0} &= W^r,\\
    W_k^{r,e+1} &= W_k^{r,e}-\eta g_{k,\Xi}^{r,e}, \qquad e=0,\ldots,E-1,\\
    W^{r+1} & =\sum_{k=1}^K p_kW_k^{r,E}.
\end{align*}
The mini-batch gradient is
\begin{equation*}
    g_{k,\Xi}^{r,e} := \frac{1}{\Xi}\sum_{i=1}^{\Xi} \nabla_W\mathcal L^{S_i,(Q)}(W_k^{r,e};\xi_{k,i}^{r,e}),
\end{equation*}
where $S_i$ denotes the length of $\xi_{k,i}^{r,e}$.

Conditional on the complete training history before each local step, each client draws $\Xi$ fresh independent samples from $\mathcal P_k$, and the mini-batches drawn by different clients are conditionally independent.

For fixed deterministic radii $R_q > 0$, define
\begin{align*}
    \mathcal N_{\mathbf R_Q}(W^r) := \prod_{q=1}^Q \mathcal N_{R_q}^{(q)}(W^{r,(q)}), \qquad \mathbf R_Q:=(R_1,\ldots,R_Q).
\end{align*}
    
Suppose that, almost surely, for every round $r<T$, client $k$, and local step $0\leq e\leq E$,
\begin{equation*}
    W_k^{r,e} \in \mathcal N_{\mathbf R_Q}(W^r) \subseteq \mathcal S_{\boldsymbol\gamma,\boldsymbol\delta}^{(Q)}.
\end{equation*}
Suppose also that finite deterministic envelopes $M_{X,q}$ satisfy, almost surely for every round,
\begin{equation*}
    \left\|W_X^{r,(q)}\right \|_2+R_q \leq M_{X,q}, \qquad X\in\{\Delta,B,C,A\}, \qquad q\in\{1,\ldots,Q\},
\end{equation*}
where $W_A^{r,(q)}:=A^{r,(q)}$.

The stability margins $\gamma_q,\delta_q>0$, the layer-input bounds $L_{x,q}$, the loss-regularity constants $L_\ell,H_\ell$, and the parameter envelopes $M_{X,q}$ are deterministic and common across rounds.

Let Assumption~\ref{assum:bounded_layer_inuts} hold uniformly on all these neighborhoods, and for each layer, define
\begin{align*}
    \rho_q:=\exp(-\gamma_q\delta_q)<1.   
\end{align*}
Define:
\begin{align*}
    M_{K_\Delta, q} & := M_{\Delta,q} L_{x,q} + \sqrt{D}\log(2), \\ M_{K_B,q} &:= M_{B,q}L_{x,q}^2, \\
    M_{K_W,q} &:= ND M_{K_{B},q}M_{K_{\Delta},q}\left(M_{K_{\Delta},q} + \sqrt{D} M_{A,q}L_{x,q}\right)
     \left(\frac{\rho_q}{1-\rho_q}\right) \\ &\qquad + L_{x,q}^2 \sqrt{N} M_{K_{\Delta},q} + L_{x,q} \sqrt{D} M_{K_B,q}, \\
     H_{R_q} &:= \frac{M_{K_B,q} M_{K_{\Delta},q}
    \sqrt{ND}}{1-\rho_q}, \\ 
    J_{R_q} &:= \frac{M_{K_W,q}}{1-\rho_q},  \\
    M_{\Lambda_x,q} &:= \rho_q M_{A,q}M_{\Delta,q}H_{R_q} + M_{B,q}L_{x,q}\left(2M_{K_\Delta,q} + L_{x,q}M_{\Delta,q}\right), \\
    M_{\kappa,q} &:= M_{C,q}\left(H_{R_q} + \frac{L_{x,q}M_{\Lambda_x,q}}{1-\rho_q}\right),\\
    \Theta_q &: =\left(L_{x,q}^2\left(H_{R_q}+M_{C,q}J_{R_q}\right)^2+M_{\kappa,q}^2\Theta_{q-1}^2\right)^{1/2}, \qquad \Theta_0 := 0,  \\
    \mathcal E_q :=& \rho_q H_{R_q} \Bigg[\left(M_{K_\Delta,q} + M_{A,q}\left(L_{x,q} + M_{\Delta,q}\Theta_{q-1}\right)\right)^2 + 2\left(L_{x,q} + M_{\Delta,q}\Theta_{q-1}\right) \nonumber \\
    &+ M_{A,q}\left(\frac{1}{4}\left(L_{x,q} + M_{\Delta,q}\Theta_{q-1}\right)^2 + 2\Theta_{q-1}\right)\Bigg] \nonumber \\
    &+ 2\rho_q\left(M_{K_\Delta,q} + M_{A,q}\left(L_{x,q} + M_{\Delta,q}\Theta_{q-1}\right)\right)\left(J_{R_q} + \frac{M_{\Lambda_x,q}\Theta_{q-1}}{1-\rho_q}\right)\nonumber \\
    & + 2\Theta_{q-1}M_{K_\Delta,q}L_{x,q}\nonumber \\
    & + 2\left(L_{x,q} + M_{B,q}\Theta_{q-1}\right)\left[L_{x,q}\left(L_{x,q} + M_{\Delta,q}\Theta_{q-1}\right) + M_{K_\Delta,q}\Theta_{q-1}\right] \nonumber \\
    & + M_{B,q}L_{x,q}\Bigg[L_{x,q}\left(\frac{1}{4}\left(L_{x,q} + M_{\Delta,q}\Theta_{q-1}\right)^2 + 2\Theta_{q-1}\right) + 2\left(L_{x,q} + M_{\Delta,q}\Theta_{q-1}\right)\Theta_{q-1}\Bigg],\\
    \Omega_q &:= \frac{M_{C,q}L_{x,q}\mathcal E_q}{1 - \rho_q} + 2\left(L_{x,q}+M_{C,q}\Theta_{q-1}\right)\left(J_{R_q} + \frac{M_{\Lambda_x,q}\Theta_{q-1}}{1-\rho_q}\right) + 2H_{R_q}\Theta_{q-1} + M_{\kappa,q}\Omega_{q-1}, \\ \Omega_0 &= 0
\end{align*}

Let:
\begin{align*}
    L_{\mathrm{SSM},\mathbf R_Q}^{(Q)} &:= H_\ell\Theta_Q^2+L_\ell\Omega_Q,\\
    G_{\mathrm{SSM},\mathbf R_Q}^{(Q)} &:= L_\ell\Theta_Q = L_\ell \left(\sum_{q=1}^Q L_{x,q}^2\left(H_{R_q}+M_{C,q}J_{R_q}\right)^2\prod_{q'=q+1}^Q M_{\kappa,q'}^2\right)^{1/2},
\end{align*}
define the common deterministic smoothness and gradient bounds on all the considered neighborhoods.

For the client distributions, define
\begin{equation*}
    \bar{\mathcal P} := \sum_{k=1}^Kp_k\mathcal P_k, \qquad \mathcal H_{\mathrm{TV}} := \sum_{k=1}^Kp_k \operatorname{TV}(\mathcal P_k,\bar{\mathcal P})^2,
\end{equation*}
where
\begin{equation*}
    \operatorname{TV}(P,Q) := \sup_{\mathcal A}|P(\mathcal A)-Q(\mathcal A)|.
\end{equation*}

If the step size satisfies
\begin{equation*}
    \eta E L_{\mathrm{SSM},\mathbf R_Q}^{(Q)} \leq\frac{1}{2}, \qquad 12\eta^2\left(L_{\mathrm{SSM},\mathbf R_Q}^{(Q)}\right)^2E(E-1) \leq 1, 
\end{equation*}
then
\begin{align*}
    \frac{1}{T}\sum_{r=0}^{T-1} \mathbb E\|\nabla f(W^r)\|_F^2 \leq& \frac{4(f(W^0)-f^*)}{\eta ET} + \frac{4\eta L_{\mathrm{SSM},\mathbf R_Q}^{(Q)}\left(G_{\mathrm{SSM},\mathbf R_Q}^{(Q)}\right)^2}{\Xi} \sum_{k=1}^Kp_k^2 \nonumber \\
    &+ \frac{6\eta^2\left(L_{\mathrm{SSM},\mathbf R_Q}^{(Q)}\right)^2(E-1)\left(G_{\mathrm{SSM},\mathbf R_Q}^{(Q)}\right)^2}{\Xi} \nonumber \\ &+ 48\eta^2\left(L_{\mathrm{SSM},\mathbf R_Q}^{(Q)}\right)^2E(E-1)\left(G_{\mathrm{SSM},\mathbf R_Q}^{(Q)}\right)^2\mathcal H_{\mathrm{TV}}.
\end{align*}
\end{corollary}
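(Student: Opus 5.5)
The plan is to follow the single-layer proof of Theorem~\ref{thm:fedavg_ssm_convergence} exactly: invoke the generic nonconvex FedAvg bound~\eqref{eq:generic_fedavg_bound_appendix} of \cite{wang2020tackling}, specialized to full participation, $E$ local SGD steps, and aggregation weights equal to the objective weights $p_k$. That bound needs three constants on the region visited by the iterates: a common smoothness constant $L$, a variance bound $\sigma^2$ for each mini-batch estimator, and a heterogeneity bound $\Gamma^2$ on $\sum_k p_k\|\nabla\mathcal L_k(W)-\nabla f(W)\|_F^2$. Each of these is replaced by its multi-layer SSM counterpart, and the resulting expression is simplified.

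\textbf{Smoothness.} By hypothesis, all local iterates $W_k^{r,e}$ lie in $\mathcal N_{\mathbf R_Q}(W^r)\subseteq\mathcal S^{(Q)}_{\boldsymbol\gamma,\boldsymbol\delta}$. This set is a product of Frobenius balls, so it is convex. It therefore contains the convex combination $W^{r+1}=\sum_k p_kW_k^{r,E}$, the segment from $W^r$ to $W^{r+1}$, and every segment between local iterates used in the descent and drift arguments. The deterministic envelopes $M_{X,q}$ are common across rounds. Lemma~\ref{lem:multilayer_local_smoothness}, applied with these envelopes, then gives that every $\mathcal L_k$, and hence $f$, is $L^{(Q)}_{\mathrm{SSM},\mathbf R_Q}$-smooth on each neighborhood with a round-independent constant. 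I set $L=L^{(Q)}_{\mathrm{SSM},\mathbf R_Q}$.

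\textbf{Noise and heterogeneity.} Lemma~\ref{lem:multilayer_ssm_fedavg_g_constant} bounds the second moment by $(G^{(Q)}_{\mathrm{SSM},\mathbf R_Q})^2$, with $G^{(Q)}_{\mathrm{SSM},\mathbf R_Q}=L_\ell\Theta_Q$. The equality of the product form and the recursive form of $\Theta_Q$ follows by unrolling the recursion for $\Theta_q$. The stated conditional sampling structure (fresh independent samples, conditional independence across clients) supplies the hypotheses of Assumption~\ref{assum:unbiased_stochastic_gradients}. Lemma~\ref{lem:multilayer_ssm_variance_bound} then gives $\sigma^2=(G^{(Q)}_{\mathrm{SSM},\mathbf R_Q})^2/\Xi$. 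Lemma~\ref{lem:multilayer_ssm_gamma_tv_bound} gives $\Gamma^2=4(G^{(Q)}_{\mathrm{SSM},\mathbf R_Q})^2\mathcal H_{\mathrm{TV}}$.

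\textbf{Substitution.} Plugging these constants into~\eqref{eq:generic_fedavg_bound_appendix} under the two step-size conditions with $L=L^{(Q)}_{\mathrm{SSM},\mathbf R_Q}$ yields the four displayed terms. The heterogeneity term has coefficient $12\cdot4=48$.

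\textbf{Main obstacle.} The main difficulty is justifying that the generic bound applies when smoothness and bounded variance hold only locally rather than globally. Every smoothness inequality and variance/heterogeneity estimate in the derivation of~\eqref{eq:generic_fedavg_bound_appendix} is evaluated either at some $W_k^{r,e}$ or at $W^r$, or along segments between such points. Almost surely all of these lie in the convex set $\mathcal N_{\mathbf R_Q}(W^r)$. Because the constants are deterministic and uniform across rounds, conditioning on the history at the start of each round and using the tower property preserves every per-round inequality. After that, telescoping over $r$ and using $f(W^T)\ge f^*$ proceeds verbatim.

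If I wanted to avoid relying on the external bound, I would instead reproduce the self-contained argument from the FedProx proof with $\mu=0$. That argument consists of a drift recursion for $\sum_kp_k\mathbb E_r\|W_k^{r,e}-W^r\|_F^2$ followed by a one-round descent inequality, and it gives the same scaling. However, matching the exact constants $4,4,6,48$ requires following the \cite{wang2020tackling} route.
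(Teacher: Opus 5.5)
Your proposal is correct and follows the paper's proof. Both instantiate the generic bound \eqref{eq:generic_fedavg_bound_appendix} with $L=L^{(Q)}_{\mathrm{SSM},\mathbf R_Q}$ from Lemma~\ref{lem:multilayer_local_smoothness}, $\sigma^2=(G^{(Q)}_{\mathrm{SSM},\mathbf R_Q})^2/\Xi$ from Lemma~\ref{lem:multilayer_ssm_variance_bound}, and $\Gamma^2=4(G^{(Q)}_{\mathrm{SSM},\mathbf R_Q})^2\mathcal H_{\mathrm{TV}}$ from Lemma~\ref{lem:multilayer_ssm_gamma_tv_bound}, and both use convexity of the product of balls so that the local constants apply to aggregates and segments. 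One small correction to your closing aside: the self-contained FedProx descent argument with $\mu=0$ yields the smaller constants $3,3,3,24$, which already imply the stated inequality, so the external route is not needed to obtain $4,4,6,48$.
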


\textbf{Proof of Corollary~\ref{thm:multilayer_fedavg_ssm_convergence}:} We apply the federated optimization argument from the proof of Theorem~\ref{thm:fedavg_ssm_convergence} to the full parameter tuple $W=\left(W^{(1)},\ldots,W^{(Q)}\right)$. After vectorization, the Euclidean norm of this tuple is exactly its block Frobenius norm.

The preceding neighborhood estimates remain valid when the reference-dependent parameter envelopes are replaced by the common deterministic upper envelopes $M_{X,q}$, because their defining expressions are nondecreasing in those envelopes.

Lemma~\ref{lem:multilayer_local_smoothness} therefore supplies the common smoothness constant $L=L_{\mathrm{SSM},\mathbf R_Q}^{(Q)}$. 

Lemma~\ref{lem:multilayer_ssm_variance_bound}, applied conditionally on the history before each mini-batch,
supplies the common variance bound $\sigma^2 = \frac{\left(G_{\mathrm{SSM},\mathbf R_Q}^{(Q)}\right)^2}{\Xi}.$

Conditional independence across clients gives the corresponding weighted-noise variance bound $\sigma^2\sum_kp_k^2$.

Lemma~\ref{lem:multilayer_ssm_gamma_tv_bound} supplies the common client-gradient disagreement bound $\Gamma^2 = 4\left(G_{\mathrm{SSM},\mathbf R_Q}^{(Q)}\right)^2\mathcal H_{\mathrm{TV}}.$

Each neighborhood $\mathcal N_{\mathbf R_Q}(W^r)$ is a product of convex balls. Since it contains all local iterates in round $r$, it also contains their weighted aggregate and the segments used in the smoothness argument.

Substituting these constants into \eqref{eq:generic_fedavg_bound_appendix} yields the desired inequality in Corollary~\ref{thm:multilayer_fedavg_ssm_convergence} under the specified step size conditions.
\qed

\subsection{Corollary~\ref{thm:multilayer_fedprox_ssm_convergence}:
FedProx Multi-Layer SSM Convergence and Proof}

\begin{corollary}[FedProx convergence for multi-layer selective SSMs]\label{thm:multilayer_fedprox_ssm_convergence}

Consider the distributed learning framework established in Section~\ref{sec:system_model}, with objective
\begin{equation*}
    f(W) := \sum_{k=1}^K p_k\mathcal L_k(W), \qquad p_k \geq 0, \qquad \sum_{k=1}^K p_k = 1,
\end{equation*}
where the weights $p_k$ are fixed and $W = \left(W^{(1)},\ldots,W^{(Q)}\right)$ parameterizes the selective SSM of finite integer depth $Q \geq 1$ defined in Appendix~\ref{appendix:multi_layer_selective_ssm}.

Each client objective is finite and differentiable on the neighborhoods under consideration and satisfies
\begin{equation*}
    \nabla_W\mathcal L_k(W) =\mathbb E_{\xi\sim\mathcal P_k} \left[\nabla_W\mathcal L^{S,(Q)}(W;\xi)\right],
\end{equation*}
where $S$ is the length of the sampled sequence pair $\xi$, and all clients use the same sequence-loss map. Let $W^0$ be deterministic with $f(W^0) < \infty$, and suppose that $f$ is bounded below by a finite constant $f^*$.

Consider full-participation FedProx for $T$ communication rounds, with $E$ local steps per round, mini-batch size $\Xi$, constant deterministic local step size $\eta > 0$, and constant deterministic proximal coefficient $\mu > 0$, where $T, E, \Xi \in \mathbb Z_{>0}$:
\begin{align*}
    W_k^{r,0} &= W^r,\\
    W_k^{r,e+1} &= W_k^{r,e} - \eta\left(g_{k,\Xi}^{r,e} + \mu(W_k^{r,e}-W^r)\right), \qquad e=0,\ldots,E-1,\\
    W^{r+1} &= \sum_{k=1}^K p_kW_k^{r,E}, \qquad r=0,\ldots,T-1.
\end{align*}
The mini-batch gradient is
\begin{equation*}
    g_{k,\Xi}^{r,e} := \frac{1}{\Xi}\sum_{i=1}^{\Xi} \nabla_W\mathcal L^{S_i,(Q)}\left(W_k^{r,e};\xi_{k,i}^{r,e}\right),
\end{equation*}
where $S_i$ denotes the length of $\xi_{k,i}^{r,e}$. Conditional on the complete training history before each local step, each client draws $\Xi$ fresh independent samples from $\mathcal P_k$, and the mini-batches drawn by different clients are conditionally independent.

For fixed deterministic radii $R_q > 0$, define
\begin{equation*}
    \mathcal N_{\mathbf R_Q}(W^r) := \prod_{q=1}^Q \mathcal N_{R_q}^{(q)}(W^{r,(q)}), \qquad \mathbf R_Q:=(R_1,\ldots,R_Q).
\end{equation*}
Suppose that, almost surely, for every round $r < T$, client $k$, and local step $0\leq e\leq E$,
\begin{equation*}
    W_k^{r,e} \in \mathcal N_{\mathbf R_Q}(W^r) \subseteq \mathcal S_{\boldsymbol\gamma,\boldsymbol\delta}^{(Q)}.
\end{equation*}
Suppose also that finite deterministic envelopes $M_{X,q}$ satisfy, almost surely for every round,
\begin{equation*}
    \left\|W_X^{r,(q)}\right\|_2+R_q\leq M_{X,q}, \qquad X\in\{\Delta,B,C,A\}, \qquad q=1,\ldots,Q,
\end{equation*}
where $W_A^{r,(q)}:=A^{r,(q)}$.

Let Assumption~\ref{assum:bounded_layer_inuts} hold uniformly on all these neighborhoods, and let Assumption~\ref{assum:lipschitz_cont_loss} hold uniformly at
the final-layer predictions. The stability margins $\gamma_q,\delta_q>0$, layer-input bounds $L_{x,q}$, loss-regularity constants $L_\ell,H_\ell$, and parameter envelopes $M_{X,q}$ are deterministic and common across rounds, admissible sequence pairs, finite sequence lengths, and token positions.

For each layer, define
\begin{equation*}
    \rho_q:=\exp(-\gamma_q\delta_q)<1.
\end{equation*}

Let $G_{\mathrm{SSM},\mathbf R_Q}^{(Q)}$ be the common gradient bound defined in Corollary~\eqref{thm:multilayer_fedavg_ssm_convergence}, constructed using the original envelopes $M_{X,q}$.

Assume $0 < \eta\mu \leq 1$, and define
\begin{align*}
    R_{\mu,q} &:= \min\left\{R_q, \frac{G_{\mathrm{SSM},\mathbf R_Q}^{(Q)}}{\mu}\left[1-(1-\eta\mu)^E\right]\right\}, \qquad q=1,\ldots,Q, \\
    \mathbf R_{Q,\mu} &:= (R_{\mu,1},\ldots,R_{\mu,Q}), \\
    M_{X,q}^{\mu} &:= M_{X,q}-R_q+R_{\mu,q}, \qquad X\in\{\Delta,B,C,A\}.
\end{align*}

Using these reduced parameter envelopes, define
\begin{align*}
    M_{K_\Delta,q}^{\mu} &:= M_{\Delta,q}^{\mu}L_{x,q} + \sqrt D\log2,\\
    M_{K_B,q}^{\mu} &:= M_{B,q}^{\mu}L_{x,q}^2, \\
    M_{K_W,q}^{\mu} &:=ND M_{K_B,q}^{\mu}M_{K_\Delta,q}^{\mu}\left(M_{K_\Delta,q}^{\mu} + \sqrt D M_{A,q}^{\mu}L_{x,q}\right)\frac{\rho_q}{1-\rho_q} \\ 
    &\quad + L_{x,q}^2\sqrt N M_{K_\Delta,q}^{\mu} + L_{x,q}\sqrt D M_{K_B,q}^{\mu},\\
    H_{R_{\mu,q}} &:= \frac{M_{K_B,q}^{\mu}M_{K_\Delta,q}^{\mu}\sqrt{ND}}{1-\rho_q},\\
    J_{R_{\mu,q}} &:= \frac{M_{K_W,q}^{\mu}}{1-\rho_q},\\
    M_{\Lambda_x,q}^{\mu} &:= \rho_q M_{A,q}^{\mu}M_{\Delta,q}^{\mu}H_{R_{\mu,q}} + M_{B,q}^{\mu}L_{x,q}\left(2M_{K_\Delta,q}^{\mu} + L_{x,q}M_{\Delta,q}^{\mu}\right),\\
    M_{\kappa,q}^{\mu} &:= M_{C,q}^{\mu}\left(H_{R_{\mu,q}} + \frac{L_{x,q}M_{\Lambda_x,q}^{\mu}}{1 - \rho_q}\right).
\end{align*}
Initialize $\Theta_0^\mu = \Omega_0^\mu = 0$, and recursively define
\begin{align*}
    \Theta_q^\mu &:= \left[L_{x,q}^2\left(H_{R_{\mu,q}} + M_{C,q}^{\mu}J_{R_{\mu,q}}\right)^2 + (M_{\kappa,q}^{\mu})^2(\Theta_{q-1}^\mu)^2\right]^{1/2}, \\
    \mathcal E_q^\mu &:= \rho_qH_{R_{\mu,q}}
    \Bigg[\left(M_{K_\Delta,q}^{\mu} + M_{A,q}^{\mu}\left(L_{x,q} + M_{\Delta,q}^{\mu}\Theta_{q-1}^{\mu}\right)\right)^2 \\
    &\qquad + 2\left(L_{x,q} + M_{\Delta,q}^{\mu}\Theta_{q-1}^{\mu}\right) + M_{A,q}^{\mu}\left(\frac{1}{4}\left(L_{x,q} + M_{\Delta,q}^{\mu}\Theta_{q-1}^{\mu}\right)^2 + 2\Theta_{q-1}^{\mu}\right)\Bigg] \\
    &\quad + 2\rho_q \left(M_{K_\Delta,q}^{\mu} + M_{A,q}^{\mu}\left(L_{x,q} + M_{\Delta,q}^{\mu}\Theta_{q-1}^{\mu}\right)\right)
    \left(J_{R_{\mu,q}} + \frac{M_{\Lambda_x,q}^{\mu}\Theta_{q-1}^{\mu}}{1 - \rho_q}\right)\\
    &\quad + 2\Theta_{q-1}^{\mu}M_{K_\Delta,q}^{\mu}L_{x,q}\\
    &\quad + 2\left(L_{x,q} + M_{B,q}^{\mu}\Theta_{q-1}^{\mu}\right)\left[L_{x,q}\left(L_{x,q} + M_{\Delta,q}^{\mu}\Theta_{q-1}^{\mu}\right) + M_{K_\Delta,q}^{\mu}\Theta_{q-1}^{\mu}\right] \\
    &\quad + M_{B,q}^{\mu}L_{x,q}\Bigg[L_{x,q}\left(\frac{1}{4}\left(L_{x,q} + M_{\Delta,q}^{\mu}\Theta_{q-1}^{\mu}\right)^2 + 2\Theta_{q-1}^{\mu}\right)\\
    &\qquad\qquad + 2\left(L_{x,q} + M_{\Delta,q}^{\mu}\Theta_{q-1}^{\mu}\right)\Theta_{q-1}^{\mu}\Bigg],\\
    \Omega_q^\mu &:= \frac{M_{C,q}^{\mu}L_{x,q}\mathcal E_q^\mu}{1-\rho_q}\\
    &\quad + 2\left(L_{x,q} + M_{C,q}^{\mu}\Theta_{q-1}^{\mu}\right)\left(J_{R_{\mu,q}} + \frac{M_{\Lambda_x,q}^{\mu}\Theta_{q-1}^{\mu}}{1-\rho_q}\right)\\
    &\quad + 2H_{R_{\mu,q}}\Theta_{q-1}^{\mu} + M_{\kappa,q}^{\mu}\Omega_{q-1}^{\mu}.
\end{align*}

Let
\begin{align*}
    L_{\mathrm{SSM},\mathbf R_{Q,\mu}}^{(Q)} &:= H_\ell(\Theta_Q^\mu)^2+L_\ell\Omega_Q^\mu,\\
    L_{\mathrm{prox},\mathbf R_{Q,\mu}}^{(Q)} &:= L_{\mathrm{SSM},\mathbf R_{Q,\mu}}^{(Q)}+\mu,\\
    G_{\mathrm{SSM},\mathbf R_{Q,\mu}}^{(Q)} &:= L_\ell\Theta_Q^\mu \\
    &= L_\ell\left[\sum_{q=1}^Q L_{x,q}^2\left(H_{R_{\mu,q}} + M_{C,q}^{\mu}J_{R_{\mu,q}}\right)^2\prod_{q'=q+1}^Q(M_{\kappa,q'}^\mu)^2\right]^{1/2},
\end{align*}
where empty products equal one.
These are common deterministic smoothness and gradient
bounds on the reduced neighborhoods
\begin{equation*}
    \mathcal N_{\mathbf R_{Q,\mu}}(W^r) := \prod_{q=1}^Q \mathcal N_{R_{\mu,q}}^{(q)}(W^{r,(q)}).
\end{equation*}

For the client distributions, define
\begin{equation*}
    \bar{\mathcal P}:=\sum_{k=1}^Kp_k\mathcal P_k, \qquad \mathcal H_{\mathrm{TV}} := \sum_{k=1}^Kp_k \operatorname{TV}(\mathcal P_k,\bar{\mathcal P})^2,
\end{equation*}
where $\operatorname{TV}(P,Q):=\sup_{\mathcal A}|P(\mathcal A)-Q(\mathcal A)|$, with the supremum taken over measurable sets $\mathcal A$.

If the step size satisfies
\begin{equation*}
    \eta E L_{\mathrm{SSM},\mathbf R_{Q,\mu}}^{(Q)} \leq\frac{1}{2}, \qquad
    12\eta^2\left(L_{\mathrm{prox},\mathbf R_{Q,\mu}}^{(Q)}\right)^2E(E-1) \leq 1,
\end{equation*}
then
\begin{align*}
    \frac{1}{T}\sum_{r=0}^{T-1} \mathbb E\|\nabla f(W^r)\|_F^2 \leq& \frac{3(f(W^0)-f^*)}{\eta ET} \\ 
    &+ \frac{3\eta L_{\mathrm{SSM},\mathbf R_{Q,\mu}}^{(Q)}\left(G_{\mathrm{SSM},\mathbf R_{Q,\mu}}^{(Q)}\right)^2}{\Xi}\sum_{k=1}^Kp_k^2  \\ 
    &+ \frac{3\eta^2\left(L_{\mathrm{prox},\mathbf R_{Q,\mu}}^{(Q)}\right)^2(E-1)\left(G_{\mathrm{SSM},\mathbf R_{Q,\mu}}^{(Q)}\right)^2}{\Xi} \\
    &+ 24\eta^2\left(L_{\mathrm{prox},\mathbf R_{Q,\mu}}^{(Q)}\right)^2 E(E-1)\left(G_{\mathrm{SSM},\mathbf R_{Q,\mu}}^{(Q)}\right)^2\mathcal H_{\mathrm{TV}}.
\end{align*}

Moreover, almost surely, for every round $r < T$, client $k$, layer $q$, and local step $0\leq e\leq E$,
\begin{equation*}
    \left\|W_k^{r,e,(q)}-W^{r,(q)}\right\|_F \leq \min\left\{R_q,\frac{G_{\mathrm{SSM},\mathbf R_Q}^{(Q)}}{\mu}\left[1-(1-\eta\mu)^e\right]\right\}\leq R_{\mu,q}.
\end{equation*}
Consequently, all local iterates and the aggregated model $W^{r+1}$ lie in
\begin{equation*}
    \mathcal N_{\mathbf R_{Q,\mu}}(W^r) \subseteq \mathcal N_{\mathbf R_Q}(W^r) \subseteq \mathcal S_{\boldsymbol\gamma,\boldsymbol\delta}^{(Q)}.
\end{equation*}
\end{corollary}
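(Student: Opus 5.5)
The plan is to transfer the single-layer FedProx argument in the proof of Theorem~\ref{cor:fedprox_ssm_convergence} to the full tuple $W=(W^{(1)},\dots,W^{(Q)})$. After vectorization, the Euclidean norm of this tuple equals its block Frobenius norm. The multi-layer lemmas then replace the single-layer ones. The proof has three stages: a drift bound that shrinks the neighborhoods, a re-instantiation of the SSM constants on the shrunken neighborhood, and a replay of the one-round descent argument with these constants.

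\textbf{Drift bound.} Subtracting $W^r$ from the local update gives $W_k^{r,e+1}-W^r=(1-\eta\mu)(W_k^{r,e}-W^r)-\eta g_{k,\Xi}^{r,e}$. Restricting this identity to layer $q$ yields the same recursion for $W_k^{r,e,(q)}-W^{r,(q)}$ with $g^{(q)}$ in place of $g$. Since $0<\eta\mu\le1$, the triangle inequality applies. The layer-$q$ block of the mini-batch gradient satisfies $\|g^{(q)}\|_F\le\|g\|_F\le G^{(Q)}_{\mathrm{SSM},\mathbf R_Q}$, which follows from the pointwise bound~\eqref{eq:multilayer_g_ssm_definition} averaged over the batch and is valid because every iterate lies in $\mathcal N_{\mathbf R_Q}(W^r)$ by hypothesis. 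Unrolling the recursion gives $\|W_k^{r,e,(q)}-W^{r,(q)}\|_F\le \frac{G}{\mu}[1-(1-\eta\mu)^e]$. Combined with the hypothesis $\le R_q$, this yields the stated $\min$, which is at most $R_{\mu,q}$ because $e\le E$. Each layerwise ball is convex, so the aggregate $W^{r+1}=\sum_k p_kW_k^{r,E}$ and every segment between iterates lie in $\mathcal N_{\mathbf R_{Q,\mu}}(W^r)\subseteq\mathcal N_{\mathbf R_Q}(W^r)\subseteq\mathcal S^{(Q)}_{\boldsymbol\gamma,\boldsymbol\delta}$.

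\textbf{Re-instantiating constants.} On the smaller product of balls, the triangle inequality gives $\|W_X^{(q)}\|_2\le\|W_X^{r,(q)}\|_2+R_{\mu,q}\le M_{X,q}^\mu$. All envelope-derived quantities ($M_{K_\Delta,q},\dots,M_{\kappa,q},\Theta_q,\mathcal E_q,\Omega_q$) are nondecreasing in the envelopes, so the proofs of Lemmas~\ref{lem:multilayer_local_smoothness}, \ref{lem:multilayer_ssm_fedavg_g_constant}, \ref{lem:multilayer_ssm_variance_bound} and \ref{lem:multilayer_ssm_gamma_tv_bound} apply verbatim with $R_q\to R_{\mu,q}$. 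They give, respectively, $L^{(Q)}_{\mathrm{SSM},\mathbf R_{Q,\mu}}$-smoothness of every $\mathcal L_k$ and of $f$, the variance bound $\sigma^2=(G^{(Q)}_{\mathrm{SSM},\mathbf R_{Q,\mu}})^2/\Xi$, and the heterogeneity bound $4(G^{(Q)}_{\mathrm{SSM},\mathbf R_{Q,\mu}})^2\mathcal H_{\mathrm{TV}}$. Adding $\mu$ to the smoothness constant, as in~\eqref{eq:prox_smoothness_equation}, gives the proximal gradient Lipschitz constant $L^{(Q)}_{\mathrm{prox},\mathbf R_{Q,\mu}}$.

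\textbf{One-round descent.} The remaining steps repeat the FedProx proof line by line; that argument never used single-layer structure beyond these constants. Unrolling local updates bounds the averaged drift $\frac1E\sum_e\sum_kp_k\mathbb E_r\|W_k^{r,e}-W^r\|_F^2$. The condition $12\eta^2L_{\mathrm{prox}}^2E(E-1)\le1$ closes this recursion with factor $4/3$. The smoothness descent inequality for $f$, together with the polarization identity, controls the inner-product term, and $\eta EL_{\mathrm{SSM}}\le\frac12$ lets the squared mean-update term be dropped. Conditional independence across clients and conditional unbiasedness across steps yield the noise term $E\sigma^2\sum_kp_k^2$. The second step-size condition yields the coefficient $-\eta E/3$ on $\|\nabla f(W^r)\|_F^2$. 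Telescoping over $r$ with $\mathbb E f(W^T)\ge f^*$ and dividing by $\eta ET/3$ gives the stated bound.

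The main obstacle is the circularity in the first stage: the drift bound uses the gradient bound $G^{(Q)}_{\mathrm{SSM},\mathbf R_Q}$, which holds only on the original neighborhood. This is resolved by the a.s.\ hypothesis that all iterates stay in $\mathcal N_{\mathbf R_Q}(W^r)$, which is why $R_{\mu,q}$ is defined with the original-envelope $G$. A minor point to check is that the common, layer-independent factor $G/\mu$ legitimately bounds each layer block, which holds since a block's norm is at most the full-tuple norm.
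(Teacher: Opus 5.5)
Your proposal is correct and follows the paper's proof essentially step for step. You bound the displacement per layer using the original-envelope bound $G^{(Q)}_{\mathrm{SSM},\mathbf R_Q}$, which is valid by the assumed containment in $\mathcal N_{\mathbf R_Q}(W^r)$, then re-instantiate the multi-layer smoothness, variance, and total-variation lemmas with the reduced envelopes $M_{X,q}^\mu$ via monotonicity, and finally replay the single-layer FedProx drift and descent calculations, exactly as the paper does (your explicit remarks on the circularity and on each layer block being bounded by the full-tuple norm are points the paper uses only implicitly).
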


\textbf{Proof of Corollary~\ref{thm:multilayer_fedprox_ssm_convergence}:} We apply the federated optimization argument from the proof of Theorem~\ref{cor:fedprox_ssm_convergence} to the full parameter tuple $W=\left(W^{(1)},\ldots,W^{(Q)}\right)$. After vectorization, the Euclidean norm of this tuple is exactly its block Frobenius norm.

First, the pointwise multi-layer gradient bound implies $\left\|g_{k,\Xi}^{r,e}\right\|_F \leq G_{\mathrm{SSM},\mathbf R_Q}^{(Q)}$, which also bounds each layer block of the mini-batch gradient. Applying the displacement recursion from the single-layer FedProx
proof separately to each layer, and using $0<\eta \mu\leq 1$, gives
\begin{equation}
    \left\|W_k^{r,e,(q)}-W^{r,(q)}\right\|_F \leq \frac{G_{\mathrm{SSM},\mathbf R_Q}^{(Q)}}{\mu}\left[1-(1-\eta\mu)^e\right].
\end{equation}
Combining this with the assumed containment in the original neighborhood yields the stated bound by $R_{\mu,q}$.

Consequently, all local iterates lie in $\mathcal N_{\mathbf R_{Q,\mu}}(W^r)$. This neighborhood is a product of convex balls, so it also contains their weighted aggregate and the segments used in the smoothness argument.

For every $W\in\mathcal N_{\mathbf R_{Q,\mu}}(W^r)$,
\begin{equation*}
    \left\|W_X^{(q)}\right\|_2 \leq \left\|W_X^{r,(q)}\right\|_2 + R_{\mu,q} \leq M_{X,q}-R_q+R_{\mu,q} = M_{X,q}^{\mu}.
\end{equation*}
The preceding neighborhood estimates therefore apply with the common deterministic envelopes $M_{X,q}^{\mu}$, because their defining expressions are nondecreasing in those envelopes.

Lemma~\ref{lem:multilayer_local_smoothness} supplies the common smoothness constant $L_{\mathrm{SSM},\mathbf R_{Q,\mu}}^{(Q)}$ for each client objective and for $f$. Adding the proximal penalty makes each local proximal objective $L_{\mathrm{prox},\mathbf R_{Q,\mu}}^{(Q)}$-smooth, where
\begin{equation*}
    L_{\mathrm{prox},\mathbf R_{Q,\mu}}^{(Q)} = L_{\mathrm{SSM},\mathbf R_{Q,\mu}}^{(Q)} + \mu.
\end{equation*}

Lemma~\ref{lem:multilayer_ssm_variance_bound}, applied conditionally on the complete history before each mini-batch, supplies
\begin{equation*}
    \sigma^2 = \frac{\left(G_{\mathrm{SSM},\mathbf R_{Q,\mu}}^{(Q)}\right)^2}{\Xi}.
\end{equation*}
The proximal term is deterministic under this conditioning and does not alter the gradient-noise variance. Conditional independence across clients gives the corresponding weighted-noise variance bound $\sigma^2\sum_kp_k^2$.

Lemma~\ref{lem:multilayer_ssm_gamma_tv_bound} supplies the common client-gradient disagreement bound
\begin{equation*}
    \Gamma^2 = 4\left(G_{\mathrm{SSM},\mathbf R_{Q,\mu}}^{(Q)}\right)^2 \mathcal H_{\mathrm{TV}}.
\end{equation*}
The common proximal term cancels when comparing client gradients with their weighted average at the same parameter tuple.

The remaining local-drift and descent calculations in the proof of Theorem~\ref{cor:fedprox_ssm_convergence} depend only on these smoothness, variance, and disagreement bounds, the conditional sampling assumptions, and convexity of the neighborhoods. Applying those calculations with the multi-layer constants above under the state learning-rate conditions and substituting the defined values of $\sigma^2$ and $\Gamma^2$ yields the desired inequality in Corollary~\ref{thm:multilayer_fedprox_ssm_convergence}, completing the proof. 
\qed

\section{Supplementary Corollaries}\label{sec:supplementary_corollaries}
In this section, we provide supplementary corollaries to the theorems presented in the preceding sections and in the main manuscript. These corollaries simplify the preceding results and make explicit their dependence on SSM-specific quantities, including stability, parameter norms, network depth, and related architectural properties.

\subsection{Single-layer convergence rate corollaries}
In this section, we provide convergence rate guarantees for FedAvg~\citep{mcmahan2017communication} and FedProx~\citep{MLSYS2020_1f5fe839} learning frameworks. Every convergence corollary retains the assumptions of its referenced result. Furthermore, we establish and retain the following conventions:
\begin{align}
    F_0 &:=f(W^0)-f^*, \\
    P_2 &:=\sum_{k=1}^K p_k^2, \\
    \mathcal H_{\mathrm{TV}} & :=\sum_kp_k
\operatorname{TV}(\mathcal P_k,\bar{\mathcal P})^2
\end{align}
All $\mathcal O$ expressions denote upper bounds on nonnegative quantities.

Note that Corollary~\ref{cor:supp_single_fedavg} is stated in Section~\ref{sec:theoretical_analysis}, with this section containing its proof. 

\subsubsection{Corollary~\ref{cor:supp_single_fedavg}: Single-layer FedAvg convergence rate Proof}
\textbf{Proof of Corollary~\ref{cor:supp_single_fedavg}:}

By Lemmas~\ref{lem:ssm_local_smoothness} and~\ref{lem:ssm_fedavg_g_constant}:
\begin{equation}
    G_{\mathrm{SSM},R}=\mathcal O\left(\frac{M_R}{(1-\rho)^2}\right), \qquad L_{\mathrm{SSM},R}=\mathcal O\left(\frac{M_R^2}{(1-\rho)^4}\right)
\end{equation}

Substituting these bounds into Theorem~\ref{thm:fedavg_ssm_convergence}, collecting the two local-drift terms, and using the fact that $P_2 \leq 1$, yields the inequality in Corollary~\ref{cor:supp_single_fedavg}. \qed

\subsubsection{Corollary~\ref{cor:supp_single_fedprox}: Single-layer FedProx convergence rate and Proof}
\begin{corollary}[Single-layer FedProx convergence rate]
\label{cor:supp_single_fedprox}
Under the assumptions and step-size conditions of Theorem~\ref{cor:fedprox_ssm_convergence}, given its reduced envelopes $M_X^\mu=M_X-R+R_\mu$, define:
\begin{equation*}
    M_{R_\mu} := (1 + M_B^\mu)(1 + M_C^\mu)(1 + M_\Delta^\mu)(1 + M_\Delta^\mu + M_A^\mu).
\end{equation*}
Then
\begin{equation}
\label{eq:supp_single_fedprox}
    \frac{1}{T}\sum_{r=0}^{T-1} \mathbb E \left\|\nabla f(W^r)\right\|_F^2 = \mathcal O\left(\frac{F_0}{\eta ET} + \frac{\eta M_{R_\mu}^4}{\Xi (1-\rho)^8} + \eta^2 \frac{M_{R_\mu}^2}{(1-\rho)^4}\left(\frac{M_{R_\mu}^2}{(1-\rho)^4}+\mu\right)^2 (E-1) \left(\frac{1}{\Xi}+E\mathcal H_{\mathrm{TV}}\right)
    \right).
\end{equation}
The implicit constant has the same dependencies as in Corollary~\ref{cor:supp_single_fedavg} and is independent of $\mu$. 
\end{corollary}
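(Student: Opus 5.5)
The proof mirrors the proof of Corollary~\ref{cor:supp_single_fedavg}. I would start from the explicit inequality of Theorem~\ref{cor:fedprox_ssm_convergence} and replace its constants by order bounds in the reduced envelope product $M_{R_\mu}$ and in $(1-\rho)^{-1}$. Throughout, $N,D,L_x,L_\ell,H_\ell$ are absorbed into the implicit constant.

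The first step is to track the scaling of the reduced-envelope quantities.
\begin{itemize}
\item Since $M_{K_\Delta}^\mu=\mathcal O(1+M_\Delta^\mu)$ and $M_{K_B}^\mu=\mathcal O(M_B^\mu)$, we have $H_{R_\mu}=\mathcal O\big((1+M_B^\mu)(1+M_\Delta^\mu)/(1-\rho)\big)$.
\item Since $\rho\le 1$, $M_{K_W}^\mu=\mathcal O\big((1+M_B^\mu)(1+M_\Delta^\mu)(1+M_\Delta^\mu+M_A^\mu)/(1-\rho)\big)$, which gives $J_{R_\mu}=\mathcal O\big(M_{R_\mu}/((1+M_C^\mu)(1-\rho)^2)\big)$.
\end{itemize}
Each factor of $M_{R_\mu}$ is at least one, and $M_C^\mu\le 1+M_C^\mu$. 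Hence $H_{R_\mu}+M_C^\mu J_{R_\mu}=\mathcal O\big(M_{R_\mu}/(1-\rho)^2\big)$, and therefore
\[
G_{\mathrm{SSM},R_\mu}=\mathcal O\big(M_{R_\mu}/(1-\rho)^2\big).
\]

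For $L_{\mathrm{SSM},R_\mu}$, the first term $H_\ell L_x^2(\cdot)^2$ is $\mathcal O\big(M_{R_\mu}^2/(1-\rho)^4\big)$. The second term requires bounding $M_C^\mu M_{H_{2,R}}^\mu$. I would handle it as follows.
\begin{itemize}
\item $M_{A_{2,R}}^\mu H_{R_\mu}$ is bounded by a factor $(1+M_\Delta^\mu+M_A^\mu)^2$ times $H_{R_\mu}$.
\item $M_{A_{1,R}}^\mu J_{R_\mu}$ is bounded similarly.
\item After dividing by $(1-\rho)$ and multiplying by $M_C^\mu$, each piece is dominated by $M_{R_\mu}^2/(1-\rho)^4$. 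This uses the fact that a squared product of factors at least one dominates any subproduct.
\end{itemize}
This gives
\[
L_{\mathrm{SSM},R_\mu}=\mathcal O\big(M_{R_\mu}^2/(1-\rho)^4\big), \qquad L_{\mathrm{prox},R_\mu}=\mathcal O\big(M_{R_\mu}^2/(1-\rho)^4+\mu\big).
\]
The $\mu$ in the second bound enters additively with constant one, so the implicit constant does not depend on $\mu$.

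\textbf{Main obstacle.} The step I expect to be hardest is the bookkeeping that shows every monomial in $M_C^\mu M_{H_{2,R}}^\mu$ is dominated by $M_{R_\mu}^2(1-\rho)^{-4}$. The cubic factor $(M_{K_\Delta}+M_AL_x)^2 H_{R_\mu}$ is the critical case. It is covered because $M_{R_\mu}^2$ contains the factors $(1+M_\Delta^\mu)^2(1+M_\Delta^\mu+M_A^\mu)^2$. The worst powers of $(1-\rho)$ arise from $J_{R_\mu}$ and $H_{R_\mu}$ being divided by $(1-\rho)$ once more, giving $(1-\rho)^{-3}$ and $(1-\rho)^{-2}$, both at most $(1-\rho)^{-4}$.

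Finally, I would substitute these bounds into the four terms of Theorem~\ref{cor:fedprox_ssm_convergence}.
\begin{itemize}
\item The first term stays $F_0/(\eta ET)$.
\item The noise term becomes $\eta\, M_{R_\mu}^4(1-\rho)^{-8}P_2/\Xi$. Since $P_2\le1$, this is $\mathcal O\big(\eta M_{R_\mu}^4/(\Xi(1-\rho)^8)\big)$.
\item The two drift terms share the factor $\eta^2L_{\mathrm{prox},R_\mu}^2G_{\mathrm{SSM},R_\mu}^2(E-1)$, multiplied by $1/\Xi$ and $8E\mathcal H_{\mathrm{TV}}$ respectively. I would collect them into
\[
\eta^2\frac{M_{R_\mu}^2}{(1-\rho)^4}\left(\frac{M_{R_\mu}^2}{(1-\rho)^4}+\mu\right)^2(E-1)\left(\frac{1}{\Xi}+E\mathcal H_{\mathrm{TV}}\right).
\]
\end{itemize}
This yields \eqref{eq:supp_single_fedprox}.
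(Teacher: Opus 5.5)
Your proposal is correct and follows the paper's proof. The paper bounds $G_{\mathrm{SSM},R_\mu}=\mathcal O\big(M_{R_\mu}/(1-\rho)^2\big)$ and $L_{\mathrm{SSM},R_\mu}=\mathcal O\big(M_{R_\mu}^2/(1-\rho)^4\big)$, hence $L_{\mathrm{prox},R_\mu}=\mathcal O\big(M_{R_\mu}^2/(1-\rho)^4+\mu\big)$, and then substitutes these into Theorem~\ref{cor:fedprox_ssm_convergence} using $P_2\le 1$. The paper asserts these scaling bounds without bookkeeping; your check of the $M_C^\mu M_{H_{2,R}}^\mu$ term (worst power $(1-\rho)^{-3}$, envelope factors dominated by $M_{R_\mu}^2$) is the detail that justifies them.
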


\textbf{Proof of Corollary~\ref{cor:supp_single_fedprox}:} The single-layer scaling argument, evaluated with $M_X^\mu$, gives $G_{\mathrm{SSM},R_\mu}=\mathcal O\left(\frac{M_{R_\mu}}{(1-\rho)^2}\right)$ and $L_{\mathrm{SSM},R_\mu}=\mathcal O\left(\frac{M_{R_\mu}^2}{(1-\rho)^4}\right)$.

Consequently $L_{\mathrm{prox},R_\mu} = \mathcal O\left(\frac{M_{R_\mu}^2}{(1-\rho)^4}+\mu\right)$. Substituting these results into Theorem~\ref{cor:fedprox_ssm_convergence} and using the fact that $P_2 \leq 1$ proves the bound.
\qed

Note that although $\frac{M_{R_\mu}}{(1-\rho)^2} \leq \frac{M_{R}}{(1-\rho)^2}$ for the same original envelopes, the bound in Corollary~\ref{cor:supp_single_fedprox} also contains additional $\mu$ terms. As such, these bounds expose a trade-off and do not imply that every positive proximal coefficient improves the convergence guarantee. 

\subsection{Multi-Layer SSM Gradient Scaling Bounds}
In this section we provide simplified bounds on the multi-layer gradient, where all per-layer quantities are scaled to a single, common value. For the remainder of this section, we will utilize the following notation: for a fixed model in Theorem~\ref{thm:ssm_multi_layer_grad_bound}, let
\begin{equation}
\label{eq:supp_common_envelopes}
    \begin{aligned}
    L_x &:= \max_{1\leq q\leq Q}L_{x,q},\qquad \rho := \max_{1\leq q\leq Q}\rho_q<1,\\
    M_X &:= \max_{1\leq q\leq Q}\left\|W_X^{(q)}\right\|_2, \qquad X\in\{\Delta,B,C,A\},
    \end{aligned}
\end{equation}
where $W_A^{(q)}:=A^{(q)}$.

For neighborhood-uniform statements, take $M_X:=\max_q M_{X,q}$ using the deterministic envelopes in the multi-layer convergence results. The bounds on $L_x$ and $\rho$ are then uniform on those neighborhoods as well. Note that common upper bounds suffice - the layers do not need to have identical weights or identical stability factors.

Using these common quantities, define
\begin{align}
    M_{K_\Delta} &:= M_\Delta L_x + \sqrt D\log2, \label{eq:supp_common_layer_constants_start}\\
    M_{K_B} &:= M_BL_x^2,\\
    M_{K_W} &:= NDM_{K_B}M_{K_\Delta}(M_{K_\Delta} + \sqrt D M_A L_x) \frac{\rho}{1-\rho}\nonumber\\
        &\quad + L_x^2\sqrt N M_{K_\Delta} + L_x\sqrt D M_{K_B},\\
    H &:= \frac{M_{K_B}M_{K_\Delta}\sqrt{ND}}{1-\rho},\\
    J &:= \frac{M_{K_W}}{1-\rho},\\
    \Lambda &:= \rho M_AM_\Delta H +M_BL_x(2M_{K_\Delta}+L_xM_\Delta),\\
    \kappa &:=M_C\left(H+\frac{L_x\Lambda}{1-\rho}\right). 
    \label{eq:supp_common_layer_constants_end}
\end{align}

\subsubsection{Corollary~\ref{cor:supp_common_gradient}: Multi-layer gradient under common layer bounds and Proof}
\begin{corollary}[Multi-layer gradient under common layer bounds] \label{cor:supp_common_gradient}
Under the assumptions of Theorem~\ref{thm:ssm_multi_layer_grad_bound}, the quantities in \eqref{eq:supp_common_layer_constants_start}-\eqref{eq:supp_common_layer_constants_end} satisfy, for every admissible sequence pair $\xi$ of finite length $S$,
\begin{equation*}
    \left\|\nabla_W\mathcal L^{S,(Q)}(W;\xi)\right\|_F \leq L_\ell L_x(H + M_CJ)\sqrt{\sum_{q=0}^{Q-1}\kappa^{2q}}.
\end{equation*}

Furthermore, the following inequalities hold for different values of $\kappa$:
\begin{equation*}
     \left\|\nabla_W\mathcal L^{S,(Q)}(W;\xi)\right\|_F\leq  L_\ell L_x(H + M_CJ) \begin{cases}
     \frac{1}{\sqrt{1-\kappa^2}}, &0\leq\kappa<1,\\
     \sqrt Q, &\kappa=1,\\
     \sqrt{(\kappa^{2Q}-1)/(\kappa^2-1)}, &\kappa>1.
    \end{cases}
\end{equation*}
\end{corollary}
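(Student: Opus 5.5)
The plan is to start from the bound $\mathcal G_{\mathrm{SSM}}^{(Q)}(W)$ in Theorem~\ref{thm:ssm_multi_layer_grad_bound}. In it we replace every layerwise quantity by its common envelope from \eqref{eq:supp_common_envelopes}–\eqref{eq:supp_common_layer_constants_end}, and then sum a geometric series. The key observation is monotonicity. Each of $K_{\Delta,q}$, $K_{B,q}$, $K_{W,q}$, $H_q$, $J_q$, $\Lambda_{x,q}$ and $\kappa_q$ is built from the nonnegative quantities $\|W_X^{(q)}\|_2$ and $L_{x,q}$ using only sums and products with nonnegative coefficients. It is also built from $\rho_q$ through the factors $\rho_q$, $1/(1-\rho_q)$ and $\rho_q/(1-\rho_q)$, all of which are nondecreasing on $[0,1)$. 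Hence, layer by layer, we get
\[
K_{\Delta,q}\le M_{K_\Delta},\quad K_{B,q}\le M_{K_B},\quad K_{W,q}\le M_{K_W},\quad H_q\le H,\quad J_q\le J,\quad \Lambda_{x,q}\le\Lambda,\quad \kappa_q\le\kappa .
\]
The same argument gives $L_{x,q}(H_q+\|W_C^{(q)}\|_2J_q)\le L_x(H+M_CJ)$. I would verify these inequalities in the order listed, since each is obtained by substituting the previous ones.

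Substituting into $\mathcal G_{\mathrm{SSM}}^{(Q)}(W)$ gives
\[
\|\nabla_W\mathcal L^{S,(Q)}(W;\xi)\|_F\le L_\ell L_x(H+M_CJ)\Big(\sum_{q=1}^Q\kappa^{2(Q-q)}\Big)^{1/2}.
\]
Reindexing with $q'=Q-q$ turns the sum into $\sum_{q'=0}^{Q-1}\kappa^{2q'}$, which is the first claim. The bound holds uniformly in $S$ and $\xi$ because the theorem's bound does and the envelopes do not depend on them.

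The case split is the standard evaluation of a finite geometric sum in $\kappa^2\ge0$.
\begin{itemize}
\item If $\kappa=1$, every term equals $1$, so the sum is $Q$.
\item If $\kappa\ne1$, the sum equals $(\kappa^{2Q}-1)/(\kappa^2-1)$. This is exactly the $\kappa>1$ case.
\item If $0\le\kappa<1$, the finite sum is at most the infinite series $1/(1-\kappa^2)$.
\end{itemize}
Taking square roots gives the three displayed cases.

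The only real obstacle is the monotonicity bookkeeping. We must make sure no quantity enters with a negative sign or a decreasing dependence. In particular $\rho$ appears in the numerator of $\Lambda$ and $K_W$, so we need $\rho_q\le\rho$ to push in the right direction there as well. Since all factors are nonnegative and increasing in $\rho$, it does.
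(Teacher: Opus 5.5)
Your proposal is correct and takes the same route as the paper. Monotonicity of the defining expressions in the nonnegative norm, input, and stability bounds lets you bound each own-layer factor by $L_x(H+M_CJ)$ and each downstream sensitivity $\kappa_{q'}$ by $\kappa$; you then substitute into $\mathcal G_{\mathrm{SSM}}^{(Q)}(W)$ and evaluate the geometric sum $\sum_{q=1}^Q\kappa^{2(Q-q)}$, and your layer-by-layer check (including that $\rho_q\le\rho$ enters only through nondecreasing factors) spells out what the paper asserts in one sentence.
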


\textbf{Proof of Corollary~\ref{cor:supp_common_gradient}:} All defining expressions~\ref{eq:supp_common_layer_constants_start}-~\ref{eq:supp_common_layer_constants_end} are nondecreasing in the nonnegative parameter, input, and stability upper bounds. As a result, each own-layer factor in Theorem~\ref{thm:ssm_multi_layer_grad_bound} is at most $L_x(H + M_CJ)$, and each
downstream sensitivity is at most $\kappa$. Substitution into the inequality from Theorem~\ref{thm:ssm_multi_layer_grad_bound} gives the right-hand side bounded by $L_\ell L_x(H + M_CJ) \left(\sum_{q=1}^Q\kappa^{2(Q-q)}\right)^{1/2}$. Evaluating the geometric sum proves both claimed inequalities. \qed

For common envelopes independent of $Q$, the upper bound is uniform in depth when $\kappa<1$, grows at most as $\sqrt Q$ when $\kappa=1$, and can grow exponentially when $\kappa>1$. Temporal stability $\rho<1$ alone does not imply $\kappa<1$, and an exponentially growing upper bound does not establish that actual gradients grow exponentially.

\subsubsection{Corollary~\ref{cor:supp_gradient_scaling}: Multi-layer gradient rate under common layer bounds and Proof}
\begin{corollary}[Multi-layer gradient rate under common layer bounds]\label{cor:supp_gradient_scaling}
Under the assumptions of Theorem~\ref{thm:ssm_multi_layer_grad_bound}, define
\begin{equation*}
    \mathfrak M_q := \left(1+\|W_B^{(q)}\|_2\right)\left(1+\|W_C^{(q)}\|_2\right)\left(1+\|W_\Delta^{(q)}\|_2\right)\left(1+\|W_\Delta^{(q)}\|_2+\|A^{(q)}\|_2\right).
\end{equation*}
Then, given the $\kappa_q$ defined in that theorem, it holds that:
\begin{equation*}
    \left\|\nabla_W\mathcal L^{S,(Q)}(W;\xi)\right\|_F =\mathcal O\left[ \left(\sum_{q=1}^Q \frac{\mathfrak M_q^2}{(1-\rho_q)^4} \prod_{j=q+1}^Q\kappa_j^2\right)^{1/2}\right].
\end{equation*}
Furthermore, with the common envelopes defined above, let $\mathfrak M := (1+M_B)(1+M_C)(1+M_\Delta)(1+M_\Delta+M_A)$. Then the following holds:
\begin{align*}
    \left\|\nabla_W\mathcal L^{S,(Q)}(W;\xi)\right\|_F &= \mathcal O\left(
    \frac{\mathfrak M}{(1-\rho)^2}\sqrt{\sum_{q=0}^{Q-1}\kappa^{2q}}\right) \\ &= \mathcal O\left(
    \frac{\mathfrak M}{(1-\rho)^2}\begin{cases}
     \frac{1}{\sqrt{1-\kappa^2}}, &0\leq\kappa<1,\\
     \sqrt Q, &\kappa=1,\\
     \sqrt{(\kappa^{2Q}-1)/(\kappa^2-1)}, &\kappa>1.
    \end{cases} \right) .
\end{align*}
The implicit constants depend only on $N,D,L_x,L_\ell$, where $L_ x\geq \max_q L_{x,q}$, and are independent of $Q$, sequence length, parameter norms, and stability factors.
\end{corollary}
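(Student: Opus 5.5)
The plan is to start from the explicit bound of Theorem~\ref{thm:ssm_multi_layer_grad_bound}, namely $\|\nabla_W\mathcal L^{S,(Q)}(W;\xi)\|_F\le L_\ell(\sum_q L_{x,q}^2(H_q+\|W_C^{(q)}\|_2J_q)^2\prod_{q'>q}\kappa_{q'}^2)^{1/2}$. The first step reduces the problem to a single layer: it suffices to show that the own-layer factor satisfies
\[
L_{x,q}\bigl(H_q+\|W_C^{(q)}\|_2 J_q\bigr)\le c\,\frac{\mathfrak M_q}{(1-\rho_q)^2},
\]
with $c$ depending only on $N,D,L_x$. To do this, set $b=\|W_B^{(q)}\|_2$, $c_C=\|W_C^{(q)}\|_2$, $w=\|W_\Delta^{(q)}\|_2$, $a=\|A^{(q)}\|_2$, and replace $L_{x,q}$ by $L_x$. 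Since every constant is nondecreasing in $L_{x,q}$, this replacement is legitimate.

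The second step bounds the pieces in terms of these variables. We have $K_{\Delta,q}\le c(1+w)$ and $K_{B,q}\le c\,b$. Hence $H_q\le c\,b(1+w)/(1-\rho_q)$, which is at most $c\,\mathfrak M_q/(1-\rho_q)$.

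The third step handles $K_{W,q}$, which is the main bookkeeping point. Its first term is
\[
ND\,K_BK_\Delta(K_\Delta+\sqrt D aL_x)\frac{\rho_q}{1-\rho_q}\le c\,b(1+w)(1+w+a)/(1-\rho_q),
\]
using $\rho_q\le1$. The remaining two terms are at most $c(1+w)+c\,b$, and both of these are at most $c(1+b)(1+w)(1+w+a)$. This gives $J_q\le c(1+b)(1+w)(1+w+a)/(1-\rho_q)^2$. Multiplying by $c_C\le 1+c_C$ gives $c_CJ_q\le c\,\mathfrak M_q/(1-\rho_q)^2$. Combined with the $H_q$ bound and $1-\rho_q\le1$, this proves the one-layer claim.

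The obstacle is only checking that each product is dominated by $\mathfrak M_q$. This works because $\mathfrak M_q$ expands to a polynomial with all monomials $b^{i}c_C^{j}(1+w)(1+w+a)$ present with coefficient at least one. Note in particular that $H_q$ contains no $c_C$ factor, yet it is still dominated because $(1+c_C)\ge1$.

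Substituting the one-layer bound into the theorem, while keeping the $\kappa_j$ untouched, gives the first displayed $\mathcal O$ bound, with $L_\ell$ absorbed into the constant.

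For the common-envelope statement, apply Corollary~\ref{cor:supp_common_gradient}. It gives the gradient bound $L_\ell L_x(H+M_CJ)\sqrt{\sum_{q=0}^{Q-1}\kappa^{2q}}$. Repeating the computation above with $M_X$ in place of the layer norms and $\rho$ in place of $\rho_q$ yields $L_x(H+M_CJ)\le c\,\mathfrak M/(1-\rho)^2$. The case split is the closed form of the geometric sum already recorded in Corollary~\ref{cor:supp_common_gradient}.

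Finally, independence of the constants from $Q$, $S$, the norms and $\rho$ follows because $c$ depends only on $N,D,L_x$, and $L_\ell$ enters only as a multiplicative factor.
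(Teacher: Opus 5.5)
Your proposal is correct and follows essentially the same route as the paper. Both bound the own-layer factor as $L_{x,q}\bigl(H_q+\|W_C^{(q)}\|_2J_q\bigr)\le C\,\mathfrak M_q/(1-\rho_q)^2$, substitute this into Theorem~\ref{thm:ssm_multi_layer_grad_bound}, and then repeat the estimate with the common envelopes inside Corollary~\ref{cor:supp_common_gradient}. Your version simply writes out the term-by-term bookkeeping for $K_{\Delta,q}$, $K_{B,q}$ and $K_{W,q}$ (including the use of $\rho_q\le 1$ and $1-\rho_q\le 1$), which the paper compresses into ``the single-layer estimates give''.
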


\textbf{Proof of Corollary~\ref{cor:supp_gradient_scaling}}: The single-layer estimates give $L_{x,q}\left(H_q+\left\|W_C^{(q)}\right\|_2J_q\right) \leq C\mathfrak M_q/(1-\rho_q)^2$ for some common constant $C$. Substituting this into Theorem~\ref{thm:ssm_multi_layer_grad_bound} we obtain the first equality of in Corollary~\ref{cor:supp_gradient_scaling}.

Additionally, substituting $L_x(H + M_CJ)=\mathcal O\left(\mathfrak M/(1-\rho)^2\right)$ into Corollary~\ref{cor:supp_common_gradient} proves the second equality. \qed

\subsection{Multi-layer convergence under common layer bounds}

In the following corollaries and their proof, we utilize the neighborhood-uniform common envelopes defined in~\eqref{eq:supp_common_envelopes} and~\ref{eq:supp_common_layer_constants_start}-~\ref{eq:supp_common_layer_constants_end}, with the same $L_\ell,H_\ell$ throughout all considered neighborhoods. The following reduction replaces the original layer-dependent sensitivity recursions by three finite geometric sums.

We further define the common auxiliary constants
\begin{align}
    d_* &:= L_x + M_\Delta \label{eq:supp_common_second_constants_start}, \\
    b_* &:= \frac{1}{4}d_*^2+2,\\
    c_* &:= M_{K_\Delta} + M_A d_*, \\
    j_* &:= J + \frac{\Lambda}{1-\rho},\\
    \mathcal E_* &:= \rho H(c_*^2+2d_*+M_Ab_*) + 2 \rho c_*j_*+2M_{K_\Delta} L_x\nonumber\\
       &\quad + 2(L_x + M_B)(L_x d_* + M_{K_\Delta}) + M_BL_x(L_xb_* + 2d_*),\\
    B_*& := \frac{M_CL_x\mathcal E_*}{1-\rho} +2(L_x + M_C)j_*+2H. \label{eq:supp_common_second_constants_end}
\end{align}
For $Q\geq1$, define
\begin{align}
    \widehat G_Q &:= L_\ell L_x(H + M_CJ) \sqrt{\sum_{q=0}^{Q-1}\kappa^{2q}}, \label{eq:G_Q_hat}\\
    \widehat L_Q  &:= H_\ell L_x^2(H + M_CJ)^2  \sum_{q=0}^{Q-1}\kappa^{2q} \nonumber\\
    &\quad + L_\ell B_*\Bigg[\sum_{q=0}^{Q-1}\kappa^q + L_x^2(H+M_CJ)^2\sum_{q=1}^Q\kappa^{Q-q}\left(\sum_{j=0}^{q-1}\kappa^{2j}\right)\Bigg].\label{eq:L_Q_hat}
\end{align}

In particular, for $\kappa \neq1$, 
\begin{equation}
    \sum_{q=0}^{Q-1}\kappa^q = \frac{1-\kappa^Q}{1-\kappa}, \qquad \sum_{q=1}^Q\kappa^{Q-q} \left(\sum_{j=0}^{q-1}\kappa^{2j}\right) = \frac{(1-\kappa^Q)(1-\kappa^{Q+1})}{(1-\kappa)(1-\kappa^2)},\qquad \kappa \neq1.
\end{equation}
For $\kappa=1$, the two sums reduce to $Q$ and $Q(Q+1)/2$, respectively. 

\subsubsection{Corollary~\ref{cor:supp_common_smoothness}: Common-layer smoothness and depth dependence and Proof}
\begin{corollary}[Common-layer smoothness and depth dependence] \label{cor:supp_common_smoothness}
Under the neighborhood assumptions of Lemma~\ref{lem:multilayer_local_smoothness}, the constants $G_{\mathrm{SSM},\mathbf R_Q}^{(Q)}$ and $L_{\mathrm{SSM},\mathbf R_Q}^{(Q)}$ defined in Corollary~\ref{thm:multilayer_fedavg_ssm_convergence} satisfy:
\begin{equation*}
    G_{\mathrm{SSM},\mathbf R_Q}^{(Q)} \leq \widehat G_Q,
    \qquad L_{\mathrm{SSM},\mathbf R_Q}^{(Q)} \leq \widehat L_Q.
\end{equation*}
If all common envelopes and regularity constants are fixed independently of $Q$, their depth dependence is
\begin{equation*}
    \left(\widehat G_Q,\widehat L_Q\right)=
    \begin{cases}
      \left(\mathcal O(1),\mathcal O(1)\right), &0\leq\kappa < 1,\\
      \left(\mathcal O(\sqrt Q),\mathcal O(Q^2)\right), &\kappa = 1,\\
      \left(\mathcal O(\kappa^Q),\mathcal O(\kappa^{2Q})\right), &\kappa > 1.
    \end{cases}
\end{equation*}
Here only $Q$ varies: the implicit constants may depend on the fixed common envelopes and on $\kappa$.
\end{corollary}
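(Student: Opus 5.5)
The plan is to compare the recursive definitions of $\Theta_q$ and $\Omega_q$ in Lemma~\ref{lem:multilayer_local_smoothness} term by term with the common-envelope quantities, and then unroll the resulting scalar recursions as geometric sums.

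\textbf{Gradient bound.} All defining expressions are nondecreasing in the layerwise envelopes, so the common constants dominate them: $H_{R_q}\le H$, $J_{R_q}\le J$, $M_{\Lambda_x,q}\le\Lambda$, $M_{\kappa,q}\le\kappa$ and $L_{x,q}\le L_x$. Define $a:=L_x(H+M_CJ)$ and the majorant $\widehat\Theta_q^2:=a^2+\kappa^2\widehat\Theta_{q-1}^2$ with $\widehat\Theta_0:=0$. Induction gives $\Theta_q\le\widehat\Theta_q$, and unrolling yields $\widehat\Theta_q^2=a^2\sum_{j=0}^{q-1}\kappa^{2j}$. Since $G^{(Q)}_{\mathrm{SSM},\mathbf R_Q}=L_\ell\Theta_Q$, this gives $G\le\widehat G_Q$. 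The first term of $L^{(Q)}_{\mathrm{SSM},\mathbf R_Q}=H_\ell\Theta_Q^2+L_\ell\Omega_Q$ is then bounded by $H_\ell a^2\sum_{q=0}^{Q-1}\kappa^{2q}$, which is exactly the first term of $\widehat L_Q$.

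\textbf{Bounding $\mathcal E_q$ and $\Omega_q$ (the main obstacle).} The recursion for $\Omega_q$ is $\Omega_q\le F_q+\kappa\Omega_{q-1}$, where $F_q$ collects the first three terms of \eqref{eq:lem_multilayer_second_sensitivity_recursion}. The difficulty is that $\mathcal E_q$ is a polynomial in $\Theta_{q-1}$, of degree up to two, and the starred constants $d_*,b_*,c_*,j_*$ are written without $\Theta$. My plan is to split every affine factor of the form $\alpha+\beta\Theta_{q-1}$ and bound it by $(\alpha+\beta)\max(1,\Theta_{q-1})$. For example, $L_x+M_\Delta\Theta_{q-1}\le d_*\max(1,\Theta_{q-1})$ and $J+\Lambda\Theta_{q-1}/(1-\rho)\le j_*\max(1,\Theta_{q-1})$. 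I would then collect the total $\Theta$-degree of each monomial and aim for
\[
F_q\le B_*\bigl(1+\Theta_{q-1}^2\bigr).
\]
This is consistent with the form of $\widehat L_Q$, which contains one term with no $\Theta$ and one term of the shape $a^2\sum_j\kappa^{2j}=\widehat\Theta^2$. Checking that the coefficients land exactly inside $\mathcal E_*$ and $B_*$ is the delicate part. The quadratic pieces such as $c_*^2$ and $\tfrac14 d_*^2$ in $b_*$ need $\max(1,\Theta)^2\le1+\Theta^2$. If the stated constants turn out not to absorb some cross term, I would fall back to enlarging the constant by an absolute factor, since only the depth scaling matters downstream.

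\textbf{Unrolling and depth dependence.} With $\Theta_{q-1}^2\le a^2\sum_{j=0}^{q-2}\kappa^{2j}$, unrolling gives
\[
\Omega_Q\le\sum_{q=1}^Q\kappa^{Q-q}F_q\le B_*\Bigl[\sum_{q=0}^{Q-1}\kappa^q+a^2\sum_{q=1}^Q\kappa^{Q-q}\sum_{j=0}^{q-1}\kappa^{2j}\Bigr].
\]
Here I enlarge $q-2$ to $q-1$ in the inner sum to match \eqref{eq:L_Q_hat}. This gives $L^{(Q)}_{\mathrm{SSM},\mathbf R_Q}\le\widehat L_Q$.

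The depth rates follow from the closed forms stated before the corollary. For $\kappa<1$ every sum is bounded. For $\kappa=1$ the sums are $Q$ and $Q(Q+1)/2$, giving $\widehat G_Q=\mathcal O(\sqrt Q)$ and $\widehat L_Q=\mathcal O(Q^2)$. For $\kappa>1$ the dominant terms are $\kappa^{2Q}/(\kappa^2-1)$ inside the square root, giving $\widehat G_Q=\mathcal O(\kappa^Q)$. The double sum is of order $\kappa^{2Q+1}/((\kappa-1)(\kappa^2-1))$, so $\widehat L_Q=\mathcal O(\kappa^{2Q})$, with constants depending on $\kappa$.
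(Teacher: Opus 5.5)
Your proposal is correct and follows the paper's proof essentially step for step: you majorize $\Theta_q$ by a geometric recursion, bound the $\Omega_{q-1}$-free part of the $\Omega_q$ recursion by $B_*(1+\Theta_{q-1}^2)$, unroll, and read off the depth rates from the closed-form geometric sums. Your factorwise bound $\alpha+\beta\Theta_{q-1}\le(\alpha+\beta)\max(1,\Theta_{q-1})$ is equivalent to the paper's observation that this part is a polynomial of degree at most two in $\Theta_{q-1}$ with nonnegative coefficients, whose coefficient sum is its value at $\Theta_{q-1}=1$; that value is exactly $B_*$ (with $\mathcal E_*$ inside it), so you do not need the fallback of enlarging the constants.
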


\textbf{Proof of Corollary~\ref{cor:supp_common_smoothness}}: After replacing the layerwise quantities by their common upper bounds, the first-sensitivity recursion yields, for every $q\geq1$,
\begin{equation}
    \Theta_q^2 \leq L_x^2(H + M_CJ)^2\sum_{j=0}^{q-1}\kappa^{2j}.
\end{equation}
Hence,
\begin{equation}
    G_{\mathrm{SSM},\mathbf R_Q}^{(Q)} =L_\ell\Theta_Q \leq L_\ell L_x(H+M_CJ)\sqrt{\sum_{j=0}^{Q-1}\kappa^{2j}}  =\widehat G_Q.
\end{equation}

For the second-sensitivity recursion, the terms independent of $\Omega_{q-1}$ are bounded by
\begin{equation}
    B_*\left(1+\Theta_{q-1}^2\right),
\end{equation}
This follows because, after replacing the layerwise quantities by their common upper bounds, the terms independent of $\Omega_{q-1}$ form a polynomial of degree at most two in $\Theta_{q-1}$ with nonnegative coefficients. The sum of these coefficients is $B_*$, obtained by setting $\Theta_{q-1}=1$. Since each of $1$, $\Theta_{q-1}$, and $\Theta_{q-1}^2$ is at most $1+\Theta_{q-1}^2$, the displayed bound follows.
Therefore, unrolling the recursion gives
\begin{align}
    \Omega_Q &\leq B_*\sum_{q=1}^Q \kappa^{Q-q}\left(1+\Theta_{q-1}^2\right) \nonumber \\ 
    &\leq B_*\left[\sum_{q=0}^{Q-1}\kappa^q + L_x^2(H + M_CJ)^2 \sum_{q=1}^Q \kappa^{Q-q} \left(\sum_{j=0}^{q-1}\kappa^{2j}\right)\right],
\end{align}
where the first-sensitivity bound was used in the second inequality. Substituting the bounds on $\Theta_Q$ and $\Omega_Q$ into
\begin{equation}
    L_{\mathrm{SSM},\mathbf R_Q}^{(Q)} = H_\ell\Theta_Q^2+L_\ell\Omega_Q,
\end{equation}
we get:
\begin{equation} 
    L_{\mathrm{SSM},\mathbf R_Q}^{(Q)} \leq \widehat L_Q.
\end{equation}
Finally, evaluating the geometric sums gives
\begin{equation}
    \left(\widehat G_Q,\widehat L_Q\right) =    
        \begin{cases}   
        \left(\mathcal O(1),\mathcal O(1)\right), &0\leq\kappa<1,\\
        \left(\mathcal O(\sqrt Q),\mathcal O(Q^2)\right), &\kappa=1,\\
        \left(\mathcal O(\kappa^Q),\mathcal O(\kappa^{2Q})\right), &\kappa>1,
        \end{cases}
\end{equation}
which proves the stated depth dependence. \qed

\subsubsection{Corollary~\ref{cor:supp_common_fedavg}: Multi-layer FedAvg under common layer bounds and Proof}
\begin{corollary}[Multi-layer FedAvg under common layer bounds]
\label{cor:supp_common_fedavg}
Under the assumptions of Corollary~\ref{thm:multilayer_fedavg_ssm_convergence}, let $\widehat G_Q$ and $\widehat L_Q$ be the common upper bounds in~\eqref{eq:G_Q_hat} and~\eqref{eq:L_Q_hat}, evaluated with the neighborhood-uniform common envelopes introduced above. If
\begin{equation*}
    \eta E\widehat L_Q\leq\frac12, \qquad 12\eta^2\widehat L_Q^2E(E-1)\leq1,
\end{equation*}
then
\begin{equation*}
\begin{aligned}
    \frac{1}{T}\sum_{r=0}^{T-1}\mathbb E\left\|\nabla f(W^r)\right\|_F^2 = \mathcal O\Bigg(&\frac{F_0}{\eta ET} + \frac{\eta\widehat L_Q\widehat G_Q^2P_2}{\Xi}\\
    &+\eta^2\widehat L_Q^2\widehat G_Q^2(E-1)\left[\frac1\Xi+E\mathcal H_{\mathrm{TV}}\right]\Bigg).
\end{aligned}
\end{equation*}
The implicit constant is numerical. For common envelopes and loss-regularity constants fixed independently of $Q$, the depth dependence of $\widehat G_Q$ and $\widehat L_Q$ is explicit in~\eqref{eq:G_Q_hat} and~\eqref{eq:L_Q_hat} and Corollary~\ref{cor:supp_common_smoothness}.
\end{corollary}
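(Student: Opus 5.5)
The plan is a direct reduction to Corollary~\ref{thm:multilayer_fedavg_ssm_convergence}, replacing its exact constants by the common upper bounds of Corollary~\ref{cor:supp_common_smoothness}.

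\textbf{Step 1: reduce to the common bounds.} By Corollary~\ref{cor:supp_common_smoothness}, evaluated with the neighborhood-uniform common envelopes, we have
\[
G_{\mathrm{SSM},\mathbf R_Q}^{(Q)}\leq \widehat G_Q
\qquad\text{and}\qquad
L_{\mathrm{SSM},\mathbf R_Q}^{(Q)}\leq \widehat L_Q .
\]
Consequently, the assumed step-size conditions $\eta E\widehat L_Q\leq 1/2$ and $12\eta^2\widehat L_Q^2E(E-1)\leq 1$ imply the step-size conditions of Corollary~\ref{thm:multilayer_fedavg_ssm_convergence}. Both constraints are monotone in $L$, so replacing $L$ by a larger number only tightens them.

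\textbf{Step 2: invoke the exact bound.} Corollary~\ref{thm:multilayer_fedavg_ssm_convergence} then yields its explicit four-term bound with the exact constants $L_{\mathrm{SSM},\mathbf R_Q}^{(Q)}$ and $G_{\mathrm{SSM},\mathbf R_Q}^{(Q)}$.

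\textbf{Step 3: upper-bound term by term.} Each of the four terms is nondecreasing in the nonnegative quantities $L$ and $G$. We may therefore substitute $\widehat L_Q$ and $\widehat G_Q$ directly.
\begin{itemize}
\item The first term, $4F_0/(\eta ET)$, is unchanged.
\item The noise term becomes $4\eta\widehat L_Q\widehat G_Q^2P_2/\Xi$.
\item The two drift terms combine. Factor out $\eta^2\widehat L_Q^2\widehat G_Q^2(E-1)$, which leaves $6/\Xi + 48E\mathcal H_{\mathrm{TV}}$. This is at most $48[1/\Xi+E\mathcal H_{\mathrm{TV}}]$.
\end{itemize}
The resulting implicit constant is purely numerical (here $48$). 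Unlike Corollary~\ref{cor:supp_single_fedavg}, we keep $P_2$ explicit rather than bounding it by one.

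\textbf{The main subtlety.} The only delicate point is that the derivation of Corollary~\ref{thm:multilayer_fedavg_ssm_convergence} uses $L$ both as a smoothness constant and in the step-size constraints. We must check that using the larger constant $\widehat L_Q$ everywhere is legitimate.

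This holds for two reasons. First, any $\widehat L_Q\geq L$ is also a valid smoothness constant on the neighborhoods. Second, $\widehat G_Q^2/\Xi$ and $4\widehat G_Q^2\mathcal H_{\mathrm{TV}}$ remain valid variance and heterogeneity bounds by Lemmas~\ref{lem:multilayer_ssm_variance_bound} and~\ref{lem:multilayer_ssm_gamma_tv_bound}.

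Alternatively, we can apply the generic bound~\eqref{eq:generic_fedavg_bound_appendix} directly with $L=\widehat L_Q$, $\sigma^2=\widehat G_Q^2/\Xi$, and $\Gamma^2=4\widehat G_Q^2\mathcal H_{\mathrm{TV}}$. This sidesteps any monotonicity argument.

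\textbf{Depth dependence.} The closing remark on depth dependence follows immediately from Corollary~\ref{cor:supp_common_smoothness} and the explicit forms \eqref{eq:G_Q_hat} and \eqref{eq:L_Q_hat}.
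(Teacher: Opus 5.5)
Your proposal is correct and matches the paper's proof. Like the paper, you use Corollary~\ref{cor:supp_common_smoothness} to get $G_{\mathrm{SSM},\mathbf R_Q}^{(Q)}\leq\widehat G_Q$ and $L_{\mathrm{SSM},\mathbf R_Q}^{(Q)}\leq\widehat L_Q$, observe that the stated step-size conditions then imply those of Corollary~\ref{thm:multilayer_fedavg_ssm_convergence}, substitute the upper bounds into its four-term bound by monotonicity, and group the two drift terms. Your ``main subtlety'' paragraph (and the alternative via~\eqref{eq:generic_fedavg_bound_appendix}) is not needed for this route, since you apply the corollary with the exact constants and only upper-bound afterwards, but it is harmless.
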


\textbf{Proof of Corollary~\ref{cor:supp_common_fedavg}:} Corollary~\ref{cor:supp_common_smoothness} gives
\begin{equation*}
    G_{\mathrm{SSM},\mathbf R_Q}^{(Q)}\leq\widehat G_Q, \qquad L_{\mathrm{SSM},\mathbf R_Q}^{(Q)}\leq\widehat L_Q.
\end{equation*}
Consequently, the displayed step-size conditions imply those of Corollary~\ref{thm:multilayer_fedavg_ssm_convergence}. Substituting these upper bounds into that corollary and grouping the two local-drift terms proves the stated claim. \qed

\subsubsection{Corollary~\ref{cor:supp_common_fedprox}:
Multi-layer FedProx under common layer bounds and Proof}
\begin{corollary}[Multi-layer FedProx under common layer bounds]\label{cor:supp_common_fedprox}
Under the assumptions of Corollary~\ref{thm:multilayer_fedprox_ssm_convergence}, retain its original radii $R_q$, effective radii $R_{\mu,q}$, and reduced envelopes $M_{X,q}^{\mu} = M_{X,q}-R_q+R_{\mu,q}$.

For this corollary and its proof, evaluate the existing common bounds $\widehat G_Q$ and $\widehat L_Q$ in~\eqref{eq:G_Q_hat} and~\eqref{eq:L_Q_hat} using
\begin{equation*}
\begin{aligned}
    L_x&:=\max_{1\leq q\leq Q}L_{x,q}, \qquad \rho:=\max_{1\leq q\leq Q}\rho_q,\\ 
    M_X&:=\max_{1\leq q\leq Q}M_{X,q}^{\mu}, \qquad X\in\{\Delta,B,C,A\}.
\end{aligned}
\end{equation*}
All auxiliary quantities in
\eqref{eq:supp_common_layer_constants_start}-\eqref{eq:supp_common_layer_constants_end} and \eqref{eq:supp_common_second_constants_start}-\eqref{eq:supp_common_second_constants_end}, including $\kappa$ and $B_*$, are evaluated with these reduced common envelopes. The loss-regularity constants $L_\ell,H_\ell$ are unchanged.

If $0<\eta\mu\leq1$ and
\begin{equation*}
    \eta E\widehat L_Q\leq\frac{1}{2}, \qquad 12\eta^2\left(\widehat L_Q+\mu\right)^2E(E-1)\leq1,
\end{equation*}
then
\begin{equation*}
\begin{aligned}
    \frac{1}{T}\sum_{r=0}^{T-1} \mathbb E\left\|\nabla f(W^r)\right\|_F^2 = \mathcal O\Bigg(&\frac{F_0}{\eta ET} + \frac{\eta\widehat L_Q\widehat G_Q^2P_2}{\Xi}\\
    &+\eta^2(\widehat L_Q+\mu)^2\widehat G_Q^2(E-1)\left[\frac1\Xi+E\mathcal H_{\mathrm{TV}}\right]\Bigg).
\end{aligned}
\end{equation*}
The implicit constant is numerical and independent of $\mu$.
\end{corollary}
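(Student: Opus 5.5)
The plan is to mirror the proof of Corollary~\ref{cor:supp_common_fedavg}, with the FedAvg result replaced by Corollary~\ref{thm:multilayer_fedprox_ssm_convergence}.

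\textbf{Step 1: compare the reduced constants with the common bounds.} I first show that the reduced-neighborhood constants satisfy
\[
G_{\mathrm{SSM},\mathbf R_{Q,\mu}}^{(Q)}\le \widehat G_Q, \qquad L_{\mathrm{SSM},\mathbf R_{Q,\mu}}^{(Q)}\le \widehat L_Q,
\]
with $\widehat G_Q,\widehat L_Q$ evaluated using the reduced common envelopes $\max_q M_{X,q}^\mu$, together with $\max_q L_{x,q}$ and $\max_q\rho_q$. The argument repeats the proof of Corollary~\ref{cor:supp_common_smoothness} with every envelope replaced by its $\mu$-version. Every defining expression ($M_{K_\Delta,q}^\mu$, $H_{R_{\mu,q}}$, $J_{R_{\mu,q}}$, $M^\mu_{\Lambda_x,q}$, $M^\mu_{\kappa,q}$, $\mathcal E_q^\mu$) is nondecreasing in the envelopes, in $L_{x,q}$, and in $\rho_q$. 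Hence each layerwise quantity is dominated by its common counterpart $H$, $J$, $\Lambda$, $\kappa$, $\mathcal E_*$.

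The recursions for $\Theta_q^\mu$ and $\Omega_q^\mu$ then give two bounds:
\[
(\Theta_q^\mu)^2\le L_x^2(H+M_CJ)^2\sum_{j<q}\kappa^{2j},
\qquad
\Omega_Q^\mu\le B_*\sum_q\kappa^{Q-q}\bigl(1+(\Theta_{q-1}^\mu)^2\bigr).
\]
The second uses the same coefficient-sum argument as before, namely a polynomial in $\Theta_{q-1}$ of degree at most two with nonnegative coefficients. Combining these yields the two inequalities above.

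\textbf{Step 2: transfer the step-size conditions.} Since $L_{\mathrm{prox},\mathbf R_{Q,\mu}}^{(Q)}=L_{\mathrm{SSM},\mathbf R_{Q,\mu}}^{(Q)}+\mu\le \widehat L_Q+\mu$, the displayed conditions
\[
\eta E\widehat L_Q\le\tfrac12, \qquad 12\eta^2(\widehat L_Q+\mu)^2E(E-1)\le1
\]
imply the step-size conditions of Corollary~\ref{thm:multilayer_fedprox_ssm_convergence}. Its conclusion therefore applies.

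\textbf{Step 3: substitute and group terms.} I substitute the upper bounds from Step~1 into the four terms of that conclusion; each term is monotone in $L$, $L_{\mathrm{prox}}$ and $G$. I then group the third term, $3\eta^2L_{\mathrm{prox}}^2(E-1)G^2/\Xi$, and the fourth, $24\eta^2L_{\mathrm{prox}}^2E(E-1)G^2\mathcal H_{\mathrm{TV}}$, into $\eta^2(\widehat L_Q+\mu)^2\widehat G_Q^2(E-1)[1/\Xi+E\mathcal H_{\mathrm{TV}}]$. The resulting numerical constant (for example $24$) is independent of $\mu$, which gives the last claim of the statement.

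\textbf{Main obstacle.} The delicate point is Step~1. One must check that the common constants are evaluated with the reduced envelopes $M^\mu$ and the original $\rho$ and $L_x$. One must also confirm that no term in $\mathcal E_q^\mu$ or $\Omega_q^\mu$ decreases when an envelope is increased to its maximum. Every factor is a nonnegative product or sum of the envelopes and of $\rho/(1-\rho)$-type terms, so this monotonicity check should go through term by term.
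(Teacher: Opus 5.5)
Your proposal is correct and follows the paper's proof essentially step for step. You bound the reduced-envelope constants by $\widehat G_Q$ and $\widehat L_Q$ through the common-envelope construction of Corollary~\ref{cor:supp_common_smoothness}, transfer the step-size conditions via $L_{\mathrm{prox},\mathbf R_{Q,\mu}}^{(Q)}\le\widehat L_Q+\mu$, and substitute into Corollary~\ref{thm:multilayer_fedprox_ssm_convergence} while grouping the two local-drift terms; containment in $\mathcal N_{\mathbf R_{Q,\mu}}(W^r)$ is inherited from that corollary. One wording caveat: $\mathcal E_q^\mu$ is not itself bounded by $\mathcal E_*$ once $\Theta_{q-1}^\mu>1$, so the bound that actually does the work is the coefficient-sum bound $B_*\bigl(1+(\Theta_{q-1}^\mu)^2\bigr)$ you invoke next, exactly as in the paper.
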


\textbf{Proof of Corollary~\ref{cor:supp_common_fedprox}:}
The displacement argument in the proof of Corollary~\ref{thm:multilayer_fedprox_ssm_convergence} places all local iterates in
$\mathcal N_{\mathbf R_{Q,\mu}}(W^r)$. This product of convex balls also contains their weighted aggregate and the segments used in the smoothness argument.

Since $M_{X,q}^{\mu}\leq M_X$ for every layer, the common-envelope construction of Corollary~\ref{cor:supp_common_smoothness} gives
\begin{equation*}
    G_{\mathrm{SSM},\mathbf R_{Q,\mu}}^{(Q)} \leq\widehat G_Q, \qquad L_{\mathrm{SSM},\mathbf R_{Q,\mu}}^{(Q)} \leq\widehat L_Q.
\end{equation*}
Therefore,
\begin{equation*}
    L_{\mathrm{prox},\mathbf R_{Q,\mu}}^{(Q)} = L_{\mathrm{SSM},\mathbf R_{Q,\mu}}^{(Q)}+\mu \leq\widehat L_Q+\mu.
\end{equation*}
The displayed step-size conditions consequently imply those of Corollary~\ref{thm:multilayer_fedprox_ssm_convergence}. Substituting these upper bounds into that corollary and grouping the two local-drift terms proves the stated claim. \qed

\subsection{SSM Convergence under Distributed Learning}
In this section we provide and prove Corollary~\ref{cor:supp_one_local_step}: SSM Convergence under Distributed Learning and the corresponding rate corollary - a setting where every client runs only one step of gradient descent between model weights exchange. 

\begin{corollary}[SSM Convergence under Distributed Learning]
\label{cor:supp_one_local_step}
Under the assumptions of Theorem~\ref{thm:fedavg_ssm_convergence}, let $E=1$. If $\eta L_{\mathrm{SSM},R}\leq1/2$, then
\begin{equation*}
    \frac{1}{T}\sum_{r=0}^{T-1} \mathbb E\left\|\nabla f(W^r)\right\|_F^2 \leq\frac{4F_0}{\eta T} + \frac{4\eta L_{\mathrm{SSM},R} G_{\mathrm{SSM},R}^2P_2}{\Xi}.
\end{equation*}

Similarly, under the assumptions of Corollary~\ref{thm:multilayer_fedavg_ssm_convergence} with $E=1$, if $\eta L_{\mathrm{SSM},\mathbf R_Q}^{(Q)}\leq1/2$, then
\begin{equation*}
    \frac{1}{T}\sum_{r=0}^{T-1}\mathbb E\left\|\nabla f(W^r)\right\|_F^2 \leq \frac{4 F_0}{\eta T} + \frac{4\eta L_{\mathrm{SSM},\mathbf R_Q}^{(Q)}\left(G_{\mathrm{SSM},\mathbf R_Q}^{(Q)}\right)^2 P_2}{\Xi}.
\end{equation*}

Both bounds also apply to FedProx with one local step, under the same FedAvg assumptions and step-size conditions,
for every $\mu>0$. No restriction on $\eta\mu$ is needed for this one-step conclusion.
\end{corollary}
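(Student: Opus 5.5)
The plan is to specialize the existing bounds to $E=1$ and check that the step-size conditions and the drift terms collapse.

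\textbf{Single-layer FedAvg.} With $E=1$, the condition $12\eta^2L_{\mathrm{SSM},R}^2E(E-1)\leq1$ in Theorem~\ref{thm:fedavg_ssm_convergence} holds trivially. The condition $\eta E L_{\mathrm{SSM},R}\leq 1/2$ becomes $\eta L_{\mathrm{SSM},R}\leq1/2$. The two drift terms carry the factor $(E-1)$ and vanish. The first two terms reduce to $4F_0/(\eta T)$ and $4\eta L_{\mathrm{SSM},R}G_{\mathrm{SSM},R}^2P_2/\Xi$.

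\textbf{Multi-layer FedAvg.} The same substitution into Corollary~\ref{thm:multilayer_fedavg_ssm_convergence} gives the multi-layer display verbatim.

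\textbf{FedProx with one local step.} Here I would not invoke Theorem~\ref{cor:fedprox_ssm_convergence}, because it requires $\eta\mu\leq1$ and uses $L_{\mathrm{prox}}$. Instead, observe that with $E=1$ the proximal gradient is evaluated at $W_k^{r,0}=W^r$. There $\mu(W_k^{r,0}-W^r)=0$, so the FedProx update coincides exactly with the FedAvg update:
\[
W^{r+1}=W^r-\eta\sum_k p_k g_{k,\Xi}^{r,0}.
\]
The trajectories are therefore identical for every $\mu>0$. All FedAvg hypotheses, including containment in $\mathcal N_R(W^r)$, are the same assumptions on the same iterates, so the FedAvg conclusions transfer unchanged. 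No condition on $\eta\mu$ ever enters.

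\textbf{Main obstacle.} The main subtlety is that the generic bound \eqref{eq:generic_fedavg_bound_appendix}, imported from \cite{wang2020tackling}, must remain valid at $E=1$. I would confirm this directly with a one-step descent argument on the convex neighborhood $\mathcal N_R(W^r)$. By $L_{\mathrm{SSM},R}$-smoothness (Lemma~\ref{lem:ssm_local_smoothness}) and unbiasedness,
\[
\mathbb E_r f(W^{r+1})\leq f(W^r)-\eta\|\nabla f(W^r)\|^2+\frac{\eta^2L}{2}\Big(\|\nabla f(W^r)\|^2+\sigma^2P_2\Big).
\]
The cross-client noise terms vanish by conditional independence. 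Using $\eta L\leq1/2$, the coefficient of $\|\nabla f(W^r)\|^2$ is at most $-3\eta/4$. Telescoping over the $T$ rounds and dividing by $3\eta T/4$ gives constants at or below the displayed ones. Finally, substitute $\sigma^2=G^2/\Xi$ from Lemma~\ref{lem:ssm_variance_bound}, or its multi-layer analogue Lemma~\ref{lem:multilayer_ssm_variance_bound}.
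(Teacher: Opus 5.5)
Your proposal is correct and follows the paper's proof. You specialize Theorem~\ref{thm:fedavg_ssm_convergence} and Corollary~\ref{thm:multilayer_fedavg_ssm_convergence} to $E=1$, so the $(E-1)$ drift terms vanish and the second step-size condition holds automatically. For FedProx you observe that $\mu(W_k^{r,0}-W^r)=0$, so its one-step trajectory coincides with FedAvg's for every $\mu>0$ and no condition on $\eta\mu$ is needed.

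Your extra one-step descent check is not in the paper, but it is valid because the segment $[W^r,W^{r+1}]$ stays in the convex $\mathcal N_R(W^r)$. It also yields sharper constants $\tfrac{4}{3}$ and $\tfrac{2}{3}$ in place of $4$ and $4$, and it removes any reliance on the bound imported from \cite{wang2020tackling} in the case $E=1$.
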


\textbf{Proof of Corollary~\ref{cor:supp_one_local_step}:} Setting $E=1$ in Theorem~\ref{thm:fedavg_ssm_convergence} or Corollary~\ref{thm:multilayer_fedavg_ssm_convergence} eliminates both terms containing $E-1$ and makes the second step-size condition automatic. 

For FedProx, initialization at the broadcast model gives
\begin{equation}
    W_k^{r,1} = W^r-\eta\left( g_{k,\Xi}^{r,0}+\mu(W^r-W^r)\right) = W^r-\eta g_{k,\Xi}^{r,0}.
\end{equation}
As such, for every $\mu>0$, its one-step update and weighted aggregate coincide with those of FedAvg. Applying the corresponding FedAvg result proves the claim. \qed

\begin{corollary}[SSM Convergence Rate under Distributed Learning] \label{cor:distributed_learning_rate}
Under the assumptions of Corollary~\ref{cor:supp_single_fedavg}, let $E=1$. Under $\eta L_{\mathrm{SSM},R}\leq1/2$, let $M_B,M_C,M_\Delta,$ and $M_A$ denote uniform upper bounds on $\left\|W_B\right\|_2,\left\|W_C\right\|_2,\left\|W_\Delta\right\|_2,$ and $\left\|A\right\|_2$, respectively, over the $R$-local neighborhoods $\mathcal N_R(W^r)$ and define:
\begin{equation*}
    M_R := (1 + M_B)(1 + M_C)(1 + M_\Delta)(1 + M_\Delta + M_A).
\end{equation*}
Under these conditions, the global optimization trajectory $\{W^r\}_{r=0}^T$ satisfies
\begin{equation*} 
    \frac{1}{T}\sum_{r=0}^{T-1} \mathbb E \left\|\nabla f(W^r)\right\|_F^2 = \mathcal O\left(\frac{F_0}{\eta ET} + \frac{\eta M_R^4}{\Xi(1-\rho)^8}\right).
\end{equation*}
The quantities $N,D,L_x,L_\ell,$ and $H_\ell$ affect only the implicit constant and do not alter the displayed scaling with the optimization parameters, parameter envelopes, stability, or sequence length.
\end{corollary}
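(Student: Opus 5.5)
The statement is Corollary~\ref{cor:distributed_learning_rate}, the $E=1$ specialization of the single-layer rate. The plan is to start from the exact one-step bound in Corollary~\ref{cor:supp_one_local_step}. With $E=1$ and $\eta L_{\mathrm{SSM},R}\leq 1/2$, that corollary gives
\begin{equation*}
    \frac{1}{T}\sum_{r=0}^{T-1}\mathbb E\left\|\nabla f(W^r)\right\|_F^2 \leq \frac{4F_0}{\eta T} + \frac{4\eta L_{\mathrm{SSM},R}G_{\mathrm{SSM},R}^2P_2}{\Xi}.
\end{equation*}
Its hypotheses are those of Theorem~\ref{thm:fedavg_ssm_convergence} with $E=1$, and these are included in the assumptions of Corollary~\ref{cor:supp_single_fedavg} that we retain. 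Since $E=1$, the first term equals $4F_0/(\eta E T)$, which matches the displayed form. Using $P_2=\sum_k p_k^2\leq 1$ removes the client-weight factor.

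It then remains to show
\begin{equation*}
    G_{\mathrm{SSM},R}=\mathcal O\big(M_R/(1-\rho)^2\big), \qquad L_{\mathrm{SSM},R}=\mathcal O\big(M_R^2/(1-\rho)^4\big),
\end{equation*}
with implicit constants depending only on $N,D,L_x,L_\ell,H_\ell$. These are exactly the scalings invoked in the proof of Corollary~\ref{cor:supp_single_fedavg}; I would take them from there, or reverify them as follows. Every envelope quantity is a polynomial with nonnegative coefficients in $M_B,M_C,M_\Delta,M_A$ and in $\rho\leq 1$, so each factor is controlled by the corresponding factor of $M_R$.

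For $G_{\mathrm{SSM},R}$, first note $M_{K_\Delta}\lesssim 1+M_\Delta$ and $M_{K_B}\lesssim 1+M_B$. This gives $H_R\lesssim (1+M_B)(1+M_\Delta)/(1-\rho)$. Next,
\begin{equation*}
    M_{K_W}\lesssim \frac{(1+M_B)(1+M_\Delta)(1+M_\Delta+M_A)}{1-\rho},
\end{equation*}
so $J_R\lesssim (\cdot)/(1-\rho)^2$. Hence $G_{\mathrm{SSM},R}=L_\ell L_x(H_R+M_CJ_R)\lesssim M_R/(1-\rho)^2$, because $1\leq 1+M_C$ and $1-\rho\leq 1$.

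For $L_{\mathrm{SSM},R}$, the first term is $H_\ell L_x^2(H_R+M_CJ_R)^2\lesssim M_R^2/(1-\rho)^4$. The second term is $L_\ell L_x(M_CM_{H_{2,R}}+2J_R)$, and this is where the real work lies. The concern is that $M_{H_{2,R}}$ contains $M_{A_{2,R}}H_R$, which carries $(M_{K_\Delta}+M_AL_x)^2$, and $M_{A_{1,R}}J_R/(1-\rho)$, which carries an extra power of $(1-\rho)^{-1}$. I would check term by term that each monomial has at most two factors from each group $(1+M_B)$, $(1+M_C)$, $(1+M_\Delta)$, $(1+M_\Delta+M_A)$, and at most four powers of $(1-\rho)^{-1}$.

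For example, consider
\begin{equation*}
    \frac{M_C\,\rho\,(M_{K_\Delta}+M_AL_x)\,J_R}{1-\rho}.
\end{equation*}
This carries $(1+M_C)(1+M_\Delta+M_A)^2(1+M_B)(1+M_\Delta)/(1-\rho)^3$, which is bounded by $M_R^2/(1-\rho)^4$. Similarly, $M_CM_{A_{2,R}}H_R/(1-\rho)$ has degree two in $(1+M_\Delta+M_A)$ and is within $M_R^2/(1-\rho)^2$. I expect this bookkeeping of exponents to be the only delicate step; if some monomial seemed to exceed $M_R^2$, I would absorb it using $1+x\geq 1$ in each factor.

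Substituting both scalings gives
\begin{equation*}
    \eta L_{\mathrm{SSM},R}G_{\mathrm{SSM},R}^2/\Xi=\mathcal O\big(\eta M_R^4/(\Xi(1-\rho)^8)\big),
\end{equation*}
which yields the stated rate. The dependence on sequence length is absent because every constant above is independent of $S$.
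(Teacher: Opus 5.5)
Your proposal is correct and is essentially the paper's argument. The paper sets $E=1$ in Corollary~\ref{cor:supp_single_fedavg}, whose proof substitutes $G_{\mathrm{SSM},R}=\mathcal O(M_R/(1-\rho)^2)$, $L_{\mathrm{SSM},R}=\mathcal O(M_R^2/(1-\rho)^4)$ and $P_2\le 1$ into Theorem~\ref{thm:fedavg_ssm_convergence}; you instead specialize to $E=1$ first via Corollary~\ref{cor:supp_one_local_step} and then apply the same scalings, and your term-by-term check of them (including the $M_C M_{H_{2,R}}$ contributions) is correct and spells out a step the paper only asserts.
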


\textbf{Proof of Corollary~\ref{cor:distributed_learning_rate}:} Setting $E=1$ in Corollary~\ref{cor:supp_single_fedavg} eliminates both terms containing $E-1$, yielding the equality in the Corollary. \qed

\section{Discussion on Additional Distributed Learning Frameworks}\label{appendix:additonal_distributed_learning_frameworks}
The FedAvg and FedProx results (Theorems~\ref{thm:fedavg_ssm_convergence}
and~\ref{cor:fedprox_ssm_convergence}) relate convergence of selective SSM training to recurrent stability, parameter norms, local client drift, and data heterogeneity. In this section, we use these quantities to interpret additional federated learning methods, identifying how their local updates and server aggregation interact with the SSM-specific bounds. While we do not establish convergence guarantees for these methods, the discussion provides a qualitative, theory-guided characterization of their convergence behavior, offering intuition and motivating expectations about their performance under different training conditions. 

The discussion below assumes SGD-style updates, containment in the stated neighborhood, and regularity assumptions. The connection to the AdamW-based Mamba2 experiments is therefore qualitative rather than a direct application of the convergence results.

\textbf{SCAFFOLD.} SCAFFOLD~\citep{pmlr-v119-karimireddy20a} introduces server and client control variates (correction terms), which act as drift-correction terms for local client updates. In its classical SGD form, the local update is
\begin{equation}
    W_{k}^{r,e+1} = W_{k}^{r,e} - \eta\left(g_k(W_{k}^{r,e};\xi_{k}^{r,e}) + c^r - c_k^r\right),
\end{equation}
where $c^r =\sum_k p_k c_k^r$ approximates the global update direction and $c_k^r$ approximates the client-specific drift direction. More generally, allowing a correction scale $\lambda_c > 0$, define the corrected local direction
\begin{equation}
    d_{k}^{r,e} := g_k(W_{k}^{r,e};\xi_{k}^{r,e}) + \lambda_c(c^r-c_k^r).
\end{equation}
The correction term $c^r-c_k^r$ is intended to reduce the component of the local update that points toward the client-specific objective rather than the global objective. 

Assume that the correction term satisfies:
\begin{equation}
    \|c^r-c_k^r\|_F \leq C_R,
\end{equation}
then the corrected direction satisfies: 
\begin{equation}
    \|d_{k}^{r,e}\|_F \leq G_{\mathrm{SSM},R} + \lambda_c C_R := G_{\mathrm{SCAF},R},
\end{equation}
and the corresponding local drift radius can be bounded by
\begin{equation}
    R_{\mathrm{SCAF}} \leq \eta E G_{\mathrm{SCAF},R}.
\end{equation}
While this mechanism does not directly change the SSM smoothness constant $L_{\mathrm{SSM},R}$, and could even increase the worst-case local drift radius through the additional correction term, the main effect of SCAFFOLD is to reduce the component of each client update that is biased toward the local objective rather than the global objective. In other words, SCAFFOLD can be interpreted as modifying the effective heterogeneity term. Specifically, define the corrected heterogeneity measure
\begin{equation}
    \Gamma_{\mathrm{SCAF},R}^{2} := \sup_{W\in\mathcal N_R(W^r)} \sum_{k=1}^{K} p_k \left\|\nabla\mathcal L_k(W) - \nabla f(W) + \lambda_c(c^r-c_k^r)\right\|_F^2.
\end{equation}
When the control terms accurately estimate the client-specific drift directions, the correction $\lambda_c(c^r-c_k^r)$ partially cancels the difference $\nabla \mathcal L_k(W)-\nabla f(W)$. In this case, one expects
\begin{equation}
    \Gamma_{\mathrm{SCAF},R}^{2} < \Gamma_{\mathrm{SSM},R}^{2}.
\end{equation}
This motivates a potentially reduced client disagreement, although a convergence bound must also account for the evolution and accuracy of the control variates.

This does not imply that SCAFFOLD has a smaller drift radius than FedAvg. In fact, in nearly IID settings, the uncorrected client gradients are already approximately aligned with the global update direction. As a result, the reduction from $\Gamma_{\mathrm{SSM},R}^{2}$ to $\Gamma_{\mathrm{SCAF},R}^{2}$ may be limited, while the additional correction term can increase the effective update radius and potentially degrade performance. In contrast, under non-IID data, the heterogeneity correction can provide a meaningful benefit by reducing the statistical drift term, which is amplified in the SSM-specific bound through the architecture-dependent gradient and smoothness quantities.

\textbf{FedAdam.} FedAdam~\citep{reddi2021adaptive} keeps the same FedAvg-style local client training phase, but replaces the direct server averaging step with an adaptive server optimizer. Let
\begin{equation}
    \bar W^{r+1} = \sum_{k=1}^{K}p_k W_{k}^{r,E}
\end{equation}
denote the standard FedAvg aggregated parameters, and define the server pseudo-gradient:
\begin{equation}
    \Delta^r := \bar W^{r+1}-W^r.
\end{equation}
FedAdam tracks moving averages of the server pseudo-gradient and its coordinatewise square as:
\begin{align}
    m^r &= \beta_1 m^{r-1} + (1-\beta_1)\Delta^r,\\
    v^r &= \beta_2 v^{r-1} + (1-\beta_2)(\Delta^r\odot \Delta^r),
\end{align}
and updates the server model coordinatewise as
\begin{equation}
    W^{r+1} = W^r + \alpha_s\frac{m^r}{\sqrt{v^r}+\tau_s},
\end{equation}
where $\alpha_s$ is the server learning rate and $\tau_s > 0$ is the denominator offset. 

Since the client-side training phase is unchanged from FedAvg, FedAdam does not directly reduce the local SSM drift radius. The local client trajectories remain controlled by the same worst-case bound
\begin{equation}
    R_{\mathrm{local}} \leq \eta E G_{\mathrm{SSM},R}.
\end{equation}
The main change is instead introduced at the server: since $\sqrt{v^r}+\tau_s \geq \tau_s$, the server update satisfies the coarse bound
\begin{equation}
    \|W^{r+1}-W^r\|_F \leq \frac{\alpha_s}{\tau_s}\|m^r\|_F.
\end{equation}

As such, even though the adaptive server mechanism can be beneficial when it rescales poorly conditioned coordinates in a useful way, it can also increase the effective global update radius when $\frac{\alpha_s}{\tau_s}$ is large or when the moment estimates are not well aligned with a stable descent direction. 

This can be particularly detrimental for selective SSMs - an aggressive server-side adaptive step can increase the effective neighborhood radius, resulting in larger potential values of $L_{\mathrm{SSM},R}$ and $G_{\mathrm{SSM},R}$, and making the global trajectory more sensitive to non-IID client updates. Compared to FedProx, FedAdam does not explicitly constrain local client drift, and compared to SCAFFOLD, FedAdam does not correct the client heterogeneity term. Instead, it rescales the aggregated update after local drift has already occurred.

\textbf{FedAlign.} The FedAlign baseline we consider is an adaptation of the state-alignment method proposed in~\citep{11099331}. The original FedAlign algorithm was developed for federated system identification with classical linear state-space models of the form
\begin{align}
    \tilde x_{k+1}=\tilde A\tilde x_k+\tilde B u_k,  \qquad \tilde y_k=\tilde C\tilde x_k+\tilde D u_k
\end{align}
where the system matrices $\tilde A,\tilde B,\tilde C,\tilde D$ are fixed model parameters. In that setting, the main alignment issue is that different clients may learn dynamically equivalent SSMs in different state-coordinate bases, so directly averaging their parameters can distort the global system dynamics. FedAlign addresses this by using similarity transformations to map local SSMs into a common parameter basin before aggregation.

Our setting differs from this original SYSID formulation because Mamba-style selective SSMs do not have a single fixed set of state-space matrices applied uniformly across the sequence. Instead, the effective transition and input terms are coupled to learned projections and input-dependent discretization quantities. Therefore, our FedAlign implementation adapts the same state-alignment principle to Mamba2 by aligning coupled SSM state, group, and head-indexed parameter blocks before aggregation.

As such, we can write the adapted FedAlign aggregation as
\begin{equation}
    W^{r+1} = \sum_{k=1}^{K} p_k \operatorname{Align}_k(W_k^{r,E}),
\end{equation}
where $\operatorname{Align}_k$ is the client-specific alignment map applied to the coupled Mamba2 SSM blocks. 

Next, as long as $\operatorname{Align}_k$ is a norm-preserving permutation or matching of coupled coordinates, then 
\begin{equation}
    \left\|\operatorname{Align}_k(W_k^{r,E})-\operatorname{Align}_k(W^r)\right \|_F = \left\|W_k^{r,E}-W^r\right \|_F.
\end{equation}

The identity preserves distance when the same alignment is applied to both models, not distance to the fixed broadcast model $W^r$. FedAlign can reduce aggregation error from mismatched state coordinates, but norm preservation alone does not control local drift, statistical heterogeneity, or stability-sensitive gradient growth in selective SSMs. FedAvg-like behavior is therefore expected when coordinate mismatch is limited. Containment of aligned models requires separate verification.

\textbf{FedDyn.} FedDyn~\citep{acar2021federated} combines proximal regularization with a dynamic gradient correction. In the full-participation, weighted formulation, client $k$ maintains a state $\nu_k^r$, initialized at zero, and optimizes
\begin{equation}
    g_k^r(W) := \mathcal L_k(W) - \langle \nu_k^r,W\rangle_F + \frac{\alpha}{2}\|W-W^r\|_F^2, \qquad \nu_k^{r+1} := \nu_k^r-\alpha(W_k^{r,E}-W^r),
\end{equation}
where $\alpha>0$. Writing $\nu^r:=\sum_k p_k\nu_k^r$ and $\bar W^{r+1}:=\sum_k p_kW_k^{r,E}$, the server updates are $W^{r+1}=\bar W^{r+1}-\nu^{r+1}/\alpha$.

For local SGD, under the stable-neighborhood conditions of Theorem~\ref{cor:fedprox_ssm_convergence}, given the bound $\|\nu_k^r\|_F\leq C_\nu$, and with the learning rate satisfying $0<\eta\alpha\leq1$, the proximal displacement recursion gives:
\begin{equation}
    \max_{e\in \{0, \dots, E\}}\left\|W_k^{r,e}-W^r\right\|_F \leq R_{\mathrm{dyn}} := \min\left\{R,\frac{G_{\mathrm{SSM},R}+C_\nu}{\alpha}\left[1-(1-\eta\alpha)^E\right]\right\}.
\end{equation}
The geometric factor therefore damps accumulated local displacement. While this bound includes $C_\nu$, it ignores cancellation between the local gradient and the dual correction. Accurate tracking can on the other hand produce smaller corrected updates than the inequality suggests, and given a tighter local neighborhood, the SSM parameter envelopes could be reduced.

Furthermore, note that the client-gradient disagreement becomes:
\begin{equation}
    \Gamma_{\mathrm{dyn},R}^{2} := \sup_{W\in\mathcal N_R(W^r)} \sum_k p_k \left\|\nabla\mathcal L_k(W)-\nabla f(W) - (\nu_k^r-\nu^r) \right\|_F^2.
\end{equation}
As a result, when the centered states accurately track the client-gradient differences, this residual can be smaller than $\Gamma_{\mathrm{SSM},R}^{2}$. Therefore, FedDyn combines local-displacement damping with a mechanism for reducing the disagreement highlighted by the FedAvg bound.

However, these benefits compete with the additional local smoothness term $\alpha$ and the server displacement:
\begin{equation}
    \|W^{r+1}-W^r\|_F \leq R_{\mathrm{dyn}} + \frac{\|\nu^{r+1}\|_F}{\alpha}.
\end{equation}

As such, suitable regularization and accurate tracking can make the damping and disagreement beneficial while preserving stable containment, if these improvements outweigh the smoothness constant increase provided the server update also remains within the stable neighborhood. As such, for a well-tuned FedDyn, we expect it to improve on FedAvg, with the benefits being especially highlighted under heterogeneous data conditions through disagreement correction.

\textbf{FedSAM.} FedSAM~\citep{pmlr-v162-qu22a} retains FedAvg aggregation but uses a sharpness-aware local SGD step with two gradient evaluations:
\begin{equation}
    u_k^{r,e} := \mathfrak R_{\mathrm{SAM}}\frac{g_k(W_k^{r,e};\xi_k^{r,e})}{\|g_k(W_k^{r,e};\xi_k^{r,e})\|_F}, \qquad W_k^{r,e+1} = W_k^{r,e} - \eta\,g_k(W_k^{r,e}+u_k^{r,e};\xi_k^{r,e}),
\end{equation}
where $\mathfrak R_{\mathrm{SAM}}>0$ is the perturbation radius, with $u_k^{r,e}=0$ when the first gradient vanishes.

Suppose the local iterates lie in $\mathcal N_R(W^r)$ and the SSM assumptions hold uniformly on $\mathcal N_{R+\mathfrak R_{\mathrm{SAM}}}(W^r) \subseteq\mathcal S_{\gamma,\delta}$. The expanded neighborhood contains the perturbed evaluation points, resulting in the local-displacement bound
\begin{equation}
    R_{\mathrm{SAM}} := \max_{e\in \{0, \dots, E\}}\|W_k^{r,e}-W^r\|_F  \leq \eta E G_{\mathrm{SSM},R+\mathfrak R_{\mathrm{SAM}}}.
\end{equation}
The sequence-wise smoothness argument in the proof of Lemma~\ref{lem:ssm_local_smoothness} also gives
\begin{equation}
    \left\|g_k(W_k^{r,e}+u_k^{r,e};\xi_k^{r,e}) - g_k(W_k^{r,e};\xi_k^{r,e})\right\|_F \leq L_{\mathrm{SSM},R+\mathfrak R_{\mathrm{SAM}}} \mathfrak R_{\mathrm{SAM}}.
\end{equation}
As a result, relative to an SGD step from the same iterate and sample, SAM changes the update by at most $\eta L_{\mathrm{SSM},R+\mathfrak R_{\mathrm{SAM}}} \mathfrak R_{\mathrm{SAM}}$. When the perturbation changes the SSM constants only slightly and its accumulated effect over training remains small, we expect FedSAM to perform similarly compared to FedAvg under both nearly IID and heterogeneous data. Note that this expectation relies on the small-perturbation approximation, and possible generalization benefits from flatter solutions are not quantified by these worst-case bounds.

\textbf{FedExP.} FedExP~\citep{jhunjhunwala2023fedexp} retains the FedAvg local updates but rescales the aggregated update:
\begin{equation}
\begin{aligned}
        W^{r+1} &= W^r + \eta_g^r \sum_{k=1}^K p_k(W_k^{r,E}-W^r), \\ \eta_g^r &:= \max\left\{1,\frac{\sum_{k=1}^K p_k\|W_k^{r,E} - W^r\|_F^2}{2\left\|\sum_{k=1}^K p_k(W_k^{r,E}-W^r)\right\|_F^2 + \epsilon_g}\right\},
\end{aligned}
\end{equation}
where $\epsilon_g > 0$ stabilizes the denominator. By the weighted variance identity, the server scale factor $\eta_g^r$ exceeds $1$ when
\begin{equation}
    \sum_{k=1}^K p_k \left\|W_k^{r,E}-\sum_{j=1}^K p_jW_j^{r,E}\right\|_F^2 > \left\|\sum_{k=1}^K p_k(W_k^{r,E}-W^r)\right\|_F^2 + \epsilon_g.
\end{equation}
As a result, rescaling is activated when disagreement among client updates is sufficiently large relative to the squared norm of their average. Otherwise, $\eta_g^r=1$ and the server update reduces to FedAvg.

For local SGD under the same stable-neighborhood assumptions, neighborhood radius remains $R_{\mathrm{local}}\leq\eta E G_{\mathrm{SSM},R}$, however, the server step satisfies:
\begin{equation}
    \|W^{r+1}-W^r\|_F = \eta_g^r \left\|\sum_{k=1}^K p_k(W_k^{r,E}-W^r)\right\|_F \leq \eta_g^r\eta E G_{\mathrm{SSM},R},
\end{equation}
Since $\eta_g^r\geq1$, this server-step upper bound is no smaller than the corresponding FedAvg bound, although the actual server displacement need not exceed the local radius. Therefore, server scaling may require a larger neighborhood to cover the server update, with larger parameter envelopes entering the SSM gradient and smoothness bounds. Containment of this expanded neighborhood in $\mathcal S_{\gamma,\delta}$ is further not guaranteed. 

The frequency of rescaling depends on disagreement among the client updates, which can reflect both data heterogeneity and stochastic sampling. For IID data, we expect sufficiently similar client updates for the server scaling to remain mostly inactive, resulting in FedAvg-like behavior. However, given heterogeneous data, greater dispersion relative to the average update can activate scaling more often. The resulting server steps also amplify any local-drift error in the averaged update, without correcting client disagreement. For the considered SSM setting, we therefore expect worse performance than FedAvg when these displacement and stability costs outweigh the acceleration benefit. This is a qualitative expectation, not a convergence comparison.

\textbf{DiLoCo-style.} The original DiLoCo framework~\citep{douillard2024diloco} combines AdamW local optimization with a Nesterov momentum server update. Our DiLoCo-style baseline retains the shared local phase and updates the server as follows:
\begin{equation}
\begin{aligned}
    m_o^r &= \beta_o m_o^{r-1} + \sum_{k=1}^K p_k(W_k^{r,E}-W^r), \\
    W^{r+1} &= W^r+\alpha_o\left(\beta_o m_o^r + \sum_{k=1}^K p_k(W_k^{r,E}-W^r)\right),
\end{aligned}
\end{equation}
where $\alpha_o>0$, $\beta_o\in[0,1)$, and $m_o^{-1}=0$. Setting $\beta_o=0$ and $\alpha_o=1$ reduces to FedAvg.

For the SGD-based comparison, the same stable-neighborhood conditions with a common gradient bound $G_{\mathrm{SSM},R}$ across rounds hold. Each averaged client update then has norm at most $\eta E G_{\mathrm{SSM},R}$. 

As such, unrolling the server momentum recursion we get $\|m_o^r\|_F\leq\eta E G_{\mathrm{SSM},R}/(1-\beta_o)$, and therefore
\begin{equation}
    \|W^{r+1}-W^r\|_F \leq  \frac{\alpha_o}{1-\beta_o}\eta E G_{\mathrm{SSM},R}.
\end{equation}
When $\alpha_o/(1-\beta_o)>1$, this upper bound exceeds the corresponding FedAvg bound. Furthermore, unlike FedExP, momentum can accumulate even when clients agree, because it combines averaged updates across rounds. Persistent update directions can increase server displacement, while changes in direction can leave the momentum contribution misaligned with the current averaged update. The server trajectory may therefore require a larger neighborhood and larger SSM parameter envelopes, with stable containment requiring separate verification. The within-round local update rule remains unchanged, and outer momentum does not directly correct client disagreement.

As such, depending on the momentum parameters, the displacement and history effects could outweigh the acceleration benefit of momentum, leading to worse performance than FedAvg, although the effects could potentially balance out. 

\textbf{Summary}. The FedAvg bound (Theorem~\ref{thm:fedavg_ssm_convergence}) connects convergence to $L_{\mathrm{SSM},R}$, $G_{\mathrm{SSM},R}$, and $\mathcal H_{\mathrm{TV}}$ through local displacement, stable containment, and client disagreement. Under the stated assumptions, FedProx (Theorem~\ref{cor:fedprox_ssm_convergence}) and FedDyn's proximal term damp displacement bounds, at a smoothness cost. SCAFFOLD and FedDyn can reduce corrected disagreement when their control states accurately track client-gradient differences, though IID data leave less disagreement to correct. FedAdam, FedExP, and DiLoCo-style retain the local phase but modify server updates through coordinatewise adaptation, disagreement-dependent extrapolation, and momentum, respectively. These updates may require larger stable neighborhoods. FedAlign targets coordinate mismatch through norm-preserving permutations, while FedSAM perturbs gradient evaluation points, so FedAvg-like behavior is expected when coordinate mismatch or accumulated perturbation effects, respectively, are small. These theorem-motivated expectations are collected in Table~\ref{tab:practical_ranking} and compared with observations in Section~\ref{sec:experiments}.

\section{Discussion on Model Merging}\label{appendix:model_mergin_discussion}
\subsection{Model Merging with SSMs Motivation}\label{appendix_model_merging_motivation_subseq}

Section~\ref{sec:system_model} formulates distributed learning as the minimization of the aggregated expected sequence loss in~\eqref{eq:init_global_objective} through repeated local adaptation and parameter aggregation. The optimization proceeds over $T$ communication rounds, each consisting of $E$ local SGD steps followed by aggregation of the updated model parameters. This framework also provides a natural formulation of one-shot model merging~\citep{pmlr-v162-wortsman22a, ilharco2023editing}.

Consider the case of task or domain experts adapted from a common pretrained initialization. This setting shares the central operation studied in our distributed framework: combining parameter updates learned from heterogeneous data. Specifically, let $M$ experts share the same selective SSM architecture, parameterization, and pretrained initialization $W^0$. Expert $m$ adapts to a local dataset $\mathcal D_m$ associated with a distribution $\mathcal P_m$, with all experts using the sequence-loss formulation of Section~\ref{sec:system_model}. To make the connection explicit, suppose each expert starts from $W_m^{0,0}=W^0$ and performs $E$ local SGD steps following the update rule in that section. Denote its final parameters and task vector by
\begin{equation}
    W_m := W_m^{0,E}, \qquad \mathbf T_m := W_m-W^0.
\end{equation}
The independently adapted experts are then combined using a merging operator,
\begin{equation}
    W_{\mathrm{merge}} = \operatorname{Merge}\!\left(\{W_m\}_{m=1}^{M};W^0\right),
\end{equation}
which may use both the expert parameters and the pretrained reference. If the merging operator is chosen to be weighted parameter averaging, the final model parameters are obtained as
\begin{equation}
    W_{\mathrm{avg}} = \sum_{m=1}^{M}p_m W_m = W^0 + \sum_{m=1}^{M}p_m\mathbf T_m, \qquad p_m>0,\quad \sum_{m=1}^{M}p_m=1,
\end{equation}
which reduces exactly to one round of FedAvg with $K=M$ and $T=1$ when the weights in the objective and aggregation are kept the same. 

This demonstrates that averaging task vectors around a shared initialization provides a direct connection between one-shot model merging and the distributed optimization setting considered in this work. More general merging operators retain this sequence of independent adaptation followed by aggregation, although the differences in how the expert updates are combined limit the applicability of the current theoretical results. 

Additionally, practical model merging often involves large pretrained networks and extensive independent fine-tuning. Although our formulation permits pretrained initialization and includes a multilayer extension in Appendix~\ref{appendix:multi_layer_SSM_proofs}, applying the results to complete pretrained architectures and expert-specific optimization protocols requires verifying additional assumptions. Applying our federated convergence results to one-shot merging requires a neighborhood around $W^0$ containing the full independent adaptation trajectories, without intermediate synchronization. This may require a large radius $R$ and correspondingly large parameter envelopes, increasing $L_{\mathrm{SSM},R}$ and $G_{\mathrm{SSM},R}$ and making the bounds loose or potentially uninformative. The required stability and step-size conditions may also become harder to satisfy. When a common stable neighborhood exists, convex averaging preserves containment within it, but stability alone does not guarantee predictive performance across the contributing domains.

Nevertheless, the shared structure of independent adaptation followed by aggregation motivates including model merging as a related setting for interpreting our results. In the remainder of this section, we discuss connections between representative merging methods and our analysis, clarify their applicability and limitations, and highlight potential takeaways for merging selective SSMs. We further provide supplementary model merging results in Appendix~\ref{sec:appendix_mamba2_merging}.

\subsection{Model Merging Methods: Connections to Selective SSM Analysis}\label{appendix:ssm_merging_methods}
For a merged model parameterized by $W_{\mathrm{merge}}$ with $W_{\mathrm{merge}}\in\mathcal N_R(W^0) \subseteq \mathcal S_{\gamma,\delta}$, the uniform regularity conditions of Lemma~\ref{lem:ssm_local_smoothness} imply
\begin{align}
    f\left(W_{\mathrm{merge}}\right) &= f\left(W^0 + \left(W_{\mathrm{merge}} - W^0\right)\right) \nonumber \\ &\leq f\left(W^0\right) + \left\langle\nabla f\left(W^0\right),W_{\mathrm{merge}} - W^0\right\rangle_F + \frac{L_{\mathrm{SSM},R}}{2}\left\|W_{\mathrm{merge}} - W^0\right\|_F^2.\label{eq:ssm_merging_local_loss_control}
\end{align}
Similarly, for a $Q$-layer model with
$W_{\mathrm{merge}}\in\mathcal N_{\mathbf R_Q}(W^0) \subseteq\mathcal S_{\boldsymbol\gamma,\boldsymbol\delta}^{(Q)}$, the assumptions of Lemma~\ref{lem:multilayer_local_smoothness} give
\begin{equation}\label{eq:multilayer_ssm_merging_local_loss_control}
    f\left(W_{\mathrm{merge}}\right)\leq f\left(W^0\right) +\left\langle\nabla f\left(W^0\right),W_{\mathrm{merge}}-W^0\right\rangle_F +\frac{L_{\mathrm{SSM},\mathbf R_Q}^{(Q)}}{2}\left\|W_{\mathrm{merge}}-W^0\right\|_F^2,
\end{equation}
where the inner product and norm are taken over the full parameter tuple, and $L_{\mathrm{SSM},\mathbf R_Q}^{(Q)} = H_\ell\Theta_Q^2 + L_\ell\Omega_Q$ is the existing multi-layer smoothness constant. 

These inequalities highlight important factors that determine the loss bound after merging. Firstly, the alignment of $W_{\mathrm{merge}}-W^0$ with $-\nabla f(W^0)$ indicates whether the update initially moves toward a lower aggregated loss. In addition, the merging operator determines the direction and magnitude of this update - different merging methods might rescale or modify the model parameters during aggregation. Finally, the SSM smoothness constant bounds the error in this local loss prediction and depends on the neighborhood over which it is evaluated. The radius $R$ therefore plays two roles: it specifies the maximum displacement of the model parameters $\left\|W_{\mathrm{merge}}-W^0\right\|_F\leq R$ (or $\left\|W_{\mathrm{merge}}^{(q)}-W^{0, (q)}\right\|_F \leq R_q \forall q\in\{1,\dots, Q\}$) covered by the analysis, and determines the parameter range over which the smoothness bound holds. A larger radius can require larger parameter envelopes $M_X$, leading to larger smoothness constants $L_{\mathrm{SSM},R}$ (or $L_{\mathrm{SSM},\mathbf R_Q}^{(Q)}$) and a looser loss bound. 

This motivates examining how different merging methods affect the merged model's displacement from $W^0$ and the neighborhood needed to contain it. We discuss several representative methods below.

\textbf{Weighted parameter averaging.} If all the experts lie in a common stable neighborhood $\mathcal N_R(W^0)$, their weighted average satisfies
\begin{equation}
    \left\|W_{\mathrm{avg}}-W^0\right\|_F \leq \sum_{m=1}^{M}p_m\left\|W_m-W^0\right\|_F \leq R.
\end{equation}
So an averaging merge operation does not enlarge the neighborhood containing the experts, and the same local smoothness bound applies.  This property also holds for the multi-layer product neighborhood. Furthermore, Lemma~\ref{lem:ssm_gamma_tv_bound} bounds the weighted squared gradient disagreement at a common parameter tuple by $4G_{\mathrm{SSM},R}^2\mathcal H_{\mathrm{TV}}$, showing how data heterogeneity and SSM sensitivity jointly determine this bound.

\textbf{Task Arithmetic.} Task Arithmetic~\citep{ilharco2023editing} scales combined task vectors. In a weighted, normalized form,
\begin{equation}\label{eq:ssm_merging_task_arithmetic}
    W_{\mathrm{TA}}=W^0+\lambda\overline{\mathbf T}, \qquad \overline{\mathbf T}:=\sum_{m=1}^{M}p_m\mathbf T_m, \qquad \lambda\geq0.
\end{equation}
Here $\lambda=1$ gives parameter averaging, while $0\leq\lambda\leq1$ preserves containment in the common neighborhood. Larger scales require a separate containment check. In~\eqref{eq:ssm_merging_local_loss_control}, the directional term scales with $\lambda$ and the quadratic remainder with $\lambda^2$. Thus, scaling controls displacement relative to SSM sensitivity, but shrinking the update can also suppress useful adaptation and cannot recover directions lost through cancellation.

\textbf{TIES-Merging.} TIES-Merging~\citep{NEURIPS2023_1644c9af} trims small task-vector entries, elects each coordinate's sign according to the total retained movement, and averages the nonzero entries agreeing with that sign. The resulting vector is scaled and added to $W^0$. Because coordinates may select different expert subsets, the result need not remain in the experts' common neighborhood. Moreover, raw magnitude and sign do not account for how changes to $A,W_\Delta,W_B,$ and $W_C$ interact through the SSM recurrence. Our bounds therefore motivate sensitivity-informed trimming and conflict resolution, but do not establish that TIES improves stability or reduces the merged model's loss.

\textbf{Practical Takeaways and Limitations.} Overall, the model merging discussion motivates jointly considering expert adaptation, merge strength, and SSM sensitivity. The dependence of the bounds on neighborhood radii, recurrent stability, and interactions between layers motivates considering SSM-specific properties when evaluating and designing merging methods. Additionally, the derived relationships remain conditional on the stated assumptions - the local loss bounds require uniform regularity over a common convex stable neighborhood, while applying the federated convergence results additionally requires containment of the full adaptation trajectories. Extensive fine-tuning may therefore require large parameter envelopes and yield loose bounds. These limitations motivate extending the analysis to complete pretrained architectures, practical optimizers, and general merging operators as a promising direction for future work. We complement this discussion with supplementary model-merging experiments and an analysis of their results in Appendix~\ref{sec:appendix_mamba2_merging}.

\section{Detailed Experimental Setup}\label{appendix:detailed_exp_setup} 
In this section, we describe the experimental setup in detail. The experiments fall into three groups: (i) synthetic dataset validation of the derived theoretical bounds, (ii) federated language modeling with practical selective state-space models on RedPajama, (iii) one-shot model merging supplementary experiments with pretrained Large Mamba2-based models. We detail the datasets, models, as well as training and validation procedures needed for reproducibility.

\subsection{Datasets}

\textbf{Synthetic Validation.} The synthetic datasets are autoregressive token sequences sampled from frozen teacher selective SSMs that implement the recurrence of Section~\ref{sec:system_model} exactly, so that every assumption of Theorem~\ref{thm:ssm_grad_bound} holds for the data-generating model (rather than being assumed to hold). A teacher consists only of the four parameter blocks $W=\left(W_\Delta, W_B, W_C, A\right)$ of a layer (no embedding, convolution, residual path, normalization, MLP, or Transformer components). The inputs are one-hot encoded: $x_t$ is one of the standard basis vectors $e_1,\dots,e_D$ of $\mathbb R^{D}$ for every $t$. This formulation makes the input dimension $D$ also act as the vocabulary size (since there are $D$ different basis vectors). Since $\|x_t\|_2=1$, Assumption~\ref{assum:bounded_inputs} holds with $L_x = 1$.

The teacher SSM is turned into a generative model as follows. For a partially generated sequence $x_1,\dots,x_t$, running it through the recurrence, starting from $h_0=\mathbf 0_{N\times D}$, produces an output $y_t\in\mathbb R^{D}$, which has one coordinate per token in the vocabulary. We therefore treat the $d$-th coordinate $y_{t,d}$ as the logit of the event that the next input is token $d$, i.e., that $x_{t+1}=e_d$. We then take a softmax with temperature $\tau>0$ to convert these $D$ logits into a probability vector over the tokens, and draw the next input as follows:
\begin{equation*}
    \Pr\left[x_{t+1}=e_d \,\middle|\, x_1,\dots,x_t\right]=\left[\operatorname{softmax}\left(y_t/\tau\right)\right]_d=\frac{\exp\left(y_{t,d}/\tau\right)}{\sum_{d'=1}^{D}\exp\left(y_{t,d'}/\tau\right)}, \qquad d=1,\dots,D.
\end{equation*}
Table~\ref{tab:synthetic_generation} summarizes the single-layer dataset generation over different settings.

\begin{table}[h]
\centering
\footnotesize
\caption{Generation settings of the synthetic teacher datasets. Grid entries are searched per configuration in the fixed order given in the text.}
\label{tab:synthetic_generation}
\begin{tabular}{lp{0.55\linewidth}}
\toprule
Setting & Value \\
\midrule
Input/vocabulary dimension $D$ & $\{4,8,16\}$ \\
State dimension $N$ & $\{4,8,16\}$ \\
Stability limit $\rho$ & $\{0.3,0.5,0.7,0.9,0.95,0.99,0.995,0.999\}$ \\
Configurations (datasets) & $72$ \\
Sequence length $S$ & $64$ stored tokens ($16$ burn-in tokens discarded, $80$ generated) \\
Training / test sequences & $4{,}096$ / $1{,}024$ \\
Sampling seeds (train / test) & $1{,}000$ / $1{,}001{,}003$ \\
First input $x_1$ & uniform over $\{e_1,\dots,e_D\}$ \\
Teacher seeds $s$ & $\{0,1,2\}$ (grid) \\
Gain $g$, temperature $\tau$ & grid \\
$W_\Delta$, $W_C$ & i.i.d.\ $\mathcal N(0,0.75^2)$ entries \\
$W_B$ & i.i.d.\ $\mathcal N(0,(0.75\,g)^2)$ entries \\
$A$ & $A_n=-\gamma(1+0.25|z_n|)$, $z_n\sim\mathcal N(0,1)$ \\
Step margin $\delta$ & $\lfloor10^{5}\hat\delta\rfloor/10^{5}$ (realized minimum step of $W_\Delta$) \\
Transition margin $\gamma$ & $-\ln(\rho)/\delta$ \\
Parameter precision & single precision (float32) \\
\bottomrule
\end{tabular}
\end{table}

\textit{Single-layer synthetic datasets.} To generate the single-layer datasets, we instantiate one frozen teacher SSM for every combination of the input dimension $D\in\{4,8,16\}$, the state dimension $N\in\{4,8,16\}$, and the stability limit $\rho\in\{0.3,0.5,0.7,0.9,0.95,0.99,0.995,0.999\}$ (from Definition~\ref{def:ssm_stability}), which gives $3\times3\times 8=72$ different frozen teachers. Each such teacher, with its specific combination of $D$, $N$, and $\rho$, then generates $4{,}096$ training and $1{,}024$ test sequences of length $S=64$ according to the procedure described above.

\textit{Teacher initialization and stability targeting.} For a teacher SSM corresponding to a combination of $D$, $N$, and $\rho$, its weights are drawn with i.i.d. Gaussian entries as follows: $\left(W_B\right)_{n,d} \sim \mathcal N\big(0,(0.75g)^2\big)$,  $\left(W_C\right)_{n,d} \sim \mathcal N\left(0,0.75^2\right)$,  $\left(W_\Delta \right)_{d,d'} \sim\mathcal N\left(0,0.75^2\right)$, where $g$ is a gain multiplier on the input projection described below. The margins $(\gamma,\delta)$ are chosen such that the teacher belongs to $\mathcal S_{\gamma,\delta}$ with $\rho=\exp(-\gamma\delta)$ equal to the specified value while satisfying both constraints of Definition~\ref{def:ssm_stability}. Since the inputs are one-hot, $W_\Delta x_t$ is simply a column of $W_\Delta$, so the time-scale constraint of Definition~\ref{def:ssm_stability} reduces to the entry-wise condition $\operatorname{Softplus}\big(\left(W_\Delta\right)_{d,d'}\big) \geq \delta$ for all $d,d'$. We therefore draw $W_\Delta$ first and set $\delta$ to the smallest step it produces,
\begin{equation*}
    \hat\delta:=\min_{d,d'}\operatorname{Softplus}\left(\left(W_\Delta\right)_{d,d'}\right)=\operatorname{Softplus}\left(\min_{d,d'}\left(W_\Delta\right)_{d,d'}\right), \qquad \delta:=\left\lfloor 10^{5}\,\hat\delta\right\rfloor/10^{5},
\end{equation*}
rounded down so that $\delta \leq \hat\delta$ holds exactly, and then $\gamma := -\ln(\rho)/\delta$ for the designated $\rho$. The parameters of the transition block are then generated as follows:
\begin{equation*}
    A_n = -\gamma \left(1+0.25\,|z_n|\right), \qquad z_n\sim\mathcal N(0,1),\quad n=1,\dots,N,
\end{equation*}
so that $A_n \leq -\gamma$ holds with a multiplicative margin. With this construction, the realized stability limit $\hat\rho :=\max_{n,d,d'}\left(\bar A_t(e_{d'})\right)_{n,d} = \exp(-\hat\gamma\hat\delta)$ with $\hat \gamma:= \min_n(-A_n)$ satisfies $\hat\rho=\rho^{(\hat\delta/\delta)(1+0.25\min_n|z_n|)}$ and tracks $\rho$, including as $\rho\to1$. Across all generated datasets $\rho-\hat\rho\in[0,0.03]$. By construction, $A_n\le-\gamma$, $\operatorname{Softplus}\big((W_\Delta)_{d,d'}\big)\ge\delta$, and $\hat\rho<1$ hold. Since the admissible inputs form the finite set $\{e_1,\dots,e_D\}$, these conditions are also verified exhaustively over all $D$ one-hot inputs before a teacher is accepted. A clamping projection onto $\mathcal S_{\gamma,\delta}$ is available as a safeguard, but it was not active for any of the $72$ accepted teachers.

\textit{Sampling.} Given the frozen teacher with a combination of $D$, $N$, and $\rho$ and the frozen weights constructed as described above, a sequence is generated by drawing $x_1$ uniformly from $\{e_1,\dots,e_D\}$, starting from $h_0=\mathbf 0_{N\times D}$, and sampling $x_{t+1}$ as described above for $79$ further steps. The first $16$ tokens are discarded as burn-in and the remaining $S=64$ tokens are stored as a single sequence. The burn-in aims to reduce the stored prefix's dependence on the first input. The recurrent state at the start of the stored window is not persisted, and every subsequent teacher-forced or learner evaluation restarts from $h_0=\mathbf 0_{N\times D}$ at the first stored token, which is the convention of the theory. 

The training and test splits are sampled with separate fixed seeds ($1{,}000$ and $1{,}001{,}003$), reused in every configuration. Each dataset stores the token indices as 8-bit integers together with the frozen teacher parameters, the generation settings, and the stability certificate, and is never modified afterwards. The learner only sees the stored tokens and never accesses the teacher.

\textit{Acceptance criteria.} A generated candidate dataset is kept only if it passes the fixed criteria of Table~\ref{tab:synthetic_acceptance}, which exclude degenerate teachers (whose sequences collapse into a dominant token, or become deterministic, constant, or uninformative), while ensuring that the next-token task carries predictive information. All diagnostics of the table are computed on the union of the training and test splits, unless stated otherwise. The central quantity is the teacher-forced negative log-likelihood (NLL): the stored tokens of a sequence are fed to the teacher as inputs, starting from $h_0=\mathbf 0_{N\times D}$, instead of re-sampling from it, and at every position $t=1,\dots,S-1$ we record $-\ln\left(\Pr[x_{t+1}\mid x_1,\dots,x_t]\right)$, i.e., minus the log-probability that the teacher (at the sampling temperature $\tau$) assigns to the stored next token. The NLL is the average of these values over all positions and sequences and measures how predictable the generated data are for the very model that produced them. A model that assigns probability $1/D$ to every token has NLL $\ln D$, which serves as the uniform baseline. The informativeness criterion therefore requires the sequences to be predictable beyond chance, while the perplexity criterion $\exp(\mathrm{NLL})\geq1.20$ excludes near-deterministic teachers. The mutual information is estimated from the empirical frequencies of consecutive token pairs. To avoid generating full datasets for clearly degenerate candidates, each candidate is first screened on a small draw of $256$ training and $128$ test sequences, and only if this screen passes is the full dataset generated and checked again against the same thresholds, which are never relaxed.

\begin{table}[h]
\centering
\footnotesize
\caption{Acceptance criteria applied to every synthetic candidate dataset. Statistics are computed on the union of the training and test splits unless noted. The token marginal is the empirical distribution of the $D$ tokens over all stored positions, ignoring position and context. NLL denotes the teacher-forced negative log-likelihood at the sampling temperature $\tau$ with $h_0=\mathbf 0$ per stored sequence.}
\label{tab:synthetic_acceptance}
\begin{tabular}{lp{0.62\linewidth}}
\toprule
Criterion & Threshold \\
\midrule
Global token balance & largest probability in the token marginal $\leq 1/D+0.10$ \\
Marginal entropy & Shannon entropy of the token marginal $\geq 0.90 \ln D$ (the maximum $\ln D$ is attained by the uniform distribution) \\
Per-sequence majority (test split) & for each test sequence, the fraction of its $S$ positions occupied by its most frequent token; the $95$th percentile of this fraction over the test sequences $\leq 1/D+0.20$ \\
Class coverage & each of the $D$ tokens occurs at least once in the training split and at least once in the test split \\
Constant sequences & fraction of sequences whose $S$ tokens are all identical $\le 0.01$ \\
Sequence diversity & number of distinct sequences divided by the number of sequences $\ge 0.95$ \\
Teacher informativeness & $\ln D-\mathrm{NLL}\geq 0.02$, i.e., the teacher beats the uniform predictor \\
No deterministic collapse & teacher perplexity $\exp(\mathrm{NLL})\ge 1.20$ \\
Predictive information & mutual information between consecutive tokens $x_t$ and $x_{t+1}$, estimated from their empirical joint frequencies, $\ge 0.005$ \\
Finiteness & all diagnostics, logits, steps $\Delta_t$, transitions $\bar A_t$, and hidden states are finite (no overflow) \\
Stability & the teacher is certified to lie in $\mathcal S_{\gamma,\delta}$ with realized stability limit $\hat\rho<1$ \\
\bottomrule
\end{tabular}
\end{table}

\textit{Gain and temperature choices.} As $ \rho \to1$, the hidden state of a random teacher accumulates input contributions over a horizon of order $1/(1-\rho)$, so its logits grow and sampling collapses into a few tokens, which violates the balance and entropy criteria. Instead of relaxing the criteria, we search a fixed grid of input-projection gains and sampling temperatures,
\begin{equation*}
\begin{aligned}
    (g,\tau) \in \big\{&(1,0.7),(1,1),(0.5,0.7),(0.5,1),(0.25,0.7),(0.25,1),(0.15,0.7),\\ 
    &(0.15,1),(0.1,0.7),(0.1,1),(0.05,0.7),(0.05,1),(0.03,1)\big\},
\end{aligned}
\end{equation*}
in this order for the teacher seeds $s\in\{0,1,2\}$ (outer loop), and accept the first candidate that passes every criterion in Table~\ref{tab:synthetic_acceptance}.  Reducing $g$ scales $W_B$ (and therefore $\bar B_t$) and the accumulated state without touching $W_\Delta$ or $A$, so the margins $(\gamma,\delta)$ and $\rho$ remain unchanged. Over the $72$ dataset configurations, $18$ were accepted at the first attempt and the longest search took $32$ attempts. Teacher seed $0$ was accepted in $63$ configurations, seed $1$ in $5$, and seed $2$ in $4$. The accepted gains were $g=1$ ($30$ configurations), $0.5$ ($9$), $0.25$ ($15$), $0.15$ ($8$), $0.1$ ($2$), $0.05$ ($6$), and $0.03$ ($2$). A gain below one was required in every configuration with $\rho\geq0.99$ and in only $7$ of the $27$ configurations with $\rho\leq0.7$. The accepted temperatures were $\tau=0.7$ ($39$ configurations) and $\tau=1$ ($33$ configurations). The resulting teachers improve on the uniform NLL by $0.02$ to $0.97$, have a consecutive-token mutual information between $0.008$ and $0.90$ and a teacher perplexity between $1.5$ and $15.7$. The most frequent token has frequency at most $1.76/D$, the normalized marginal entropy is at least $0.94$, and at least $98\%$ of the stored sequences are unique. 

\textbf{Federated language modeling (RedPajama).}
The federated experiments use the SlimPajama-627B~\citep{together2023redpajama} release of RedPajama, available as \texttt{cerebras/SlimPajama-627B}.. Of the seven SlimPajama domains: \texttt{arxiv}, \texttt{book}, \texttt{c4}, \texttt{common\_crawl} (\texttt{cc}), \texttt{github}, \texttt{stackexchange}, \texttt{wikipedia}, we use four-domain combinations to instantiate four federated clients:
\begin{itemize}
\item  Setting 1:\texttt{arxiv}, \texttt{github}, \texttt{stackexchange}, \texttt{wikipedia};
\item  Setting 2:\texttt{arxiv}, \texttt{c4}, \texttt{stackexchange}, \texttt{wikipedia};
\item  Setting 3:\texttt{c4}, \texttt{common\_crawl}, \texttt{github}, \texttt{wikipedia};
\item  Setting 4:\texttt{arxiv}, \texttt{common\_crawl}, \texttt{github}, \texttt{stackexchange}.
\end{itemize}
Due to copyright restrictions, the \texttt{book} domain is no longer accessible, and therefore not used in our experiments.

For every client, we set the maximum number of sequences to \texttt{sequences\_per\_client = 10{,}000}. Each client's dataset is split into 80\% training and 20\% validation data using \texttt{train\_val\_split = 0.8}. Due to the limited number of available sequences in certain domains, the effective per-client budgets are reduced for some settings. The resulting dataset statistics are summarized in Table~\ref{tab:data_splits}.

\begin{table}[h]
\centering
\caption{Effective client train/validation budgets and server validation sizes across heterogeneity settings.}
\begin{tabular}{llrrlll}
\toprule
Setting & $\beta$ & Train cap & Val cap & Train limiter data & Valid limiter data & Server val \\
\midrule
Setting 1 & 0.0 & 7{,}427 & 1{,}857 & \texttt{stackexchange} & \texttt{stackexchange} & 7{,}424 \\
Setting 1 & 0.7 & 7{,}427 & 1{,}857 & \texttt{stackexchange} & \texttt{stackexchange} & 7{,}431 \\
Setting 1 & 1.0 & 7{,}427 & 1{,}857 & \texttt{stackexchange} & \texttt{stackexchange} & 7{,}428 \\
\midrule
Setting 2 & 0.0 & 7{,}367 & 1{,}842 & \texttt{c4} & \texttt{c4} & 7{,}360 \\
Setting 2 & 0.7 & 7{,}367 & 1{,}842 & \texttt{c4} & \texttt{c4} & 7{,}364 \\
Setting 2 & 1.0 & 7{,}367 & 1{,}842 & \texttt{c4} & \texttt{c4} & 7{,}368 \\
\midrule
Setting 3 & 0.0 & 7{,}367 & 1{,}842 & \texttt{c4} & \texttt{c4} & 7{,}360 \\
Setting 3 & 0.7 & 7{,}367 & 1{,}842 & \texttt{c4} & \texttt{c4} & 7{,}364 \\
Setting 3 & 1.0 & 7{,}367 & 1{,}842 & \texttt{c4} & \texttt{c4} & 7{,}368 \\
\midrule
Setting 4 & 0.0 & 7{,}427 & 1{,}857 & \texttt{stackexchange} & \texttt{stackexchange} & 7{,}424 \\
Setting 4 & 0.7 & 7{,}427 & 1{,}857 & \texttt{stackexchange} & \texttt{stackexchange} & 7{,}431 \\
Setting 4 & 1.0 & 7{,}427 & 1{,}857 & \texttt{stackexchange} & \texttt{stackexchange} & 7{,}428 \\
\bottomrule
\end{tabular}
\label{tab:data_splits}
\end{table}

We use $\beta$ to label three domain-allocation regimes:
\begin{itemize}
    \item $\beta = 0.0$ (IID): each client receives an equal mixture over all $K=4$ domains.
    \item $\beta = 1.0$ (pure non-IID): each client receives data exclusively from its primary domain.
    \item $\beta = 0.7$ (mixed): each client receives $0.7$ of its data from its primary domain and $0.1$ from each of the other three.
\end{itemize}

Concretely, with \texttt{sequences\_per\_client = 10{,}000} and an 80/20 train/validation split, each client typically contains 8{,}000 training chunks and 2{,}000 validation chunks. The server held-out evaluation set is the union of all client validation pools, giving approximately 8{,}000 validation chunks for $K=4$ clients.

For each configuration, we use the Hugging Face gpt2 tokenizer (50,257 entries), with EOS reused for padding. The model configuration sets \texttt{vocab\_size} = 50,277 and \texttt{pad\_vocab\_size\_multiple} = 8, yielding 50,280 embedding/output rows.

\textbf{Model Merging.} For model merging, we evaluate on four text classification datasets: AG News~\citep{NIPS2015_250cf8b5}, DBpedia~\citep{lehmann2015dbpedia}, TREC~\citep{li2002learning}, and IMDB~\citep{maas2011learning}. The dataset statistics and evaluation limits are summarized in Table~\ref{tab:causal_lm_tasks}. 

Each task is formulated as causal-LM scoring without a classification head. At evaluation time, we score \texttt{prompt + ` ' + label} for every candidate label and select the label with the highest mean per-token log-probability over the label tokens. We use the mean, rather than the sum, so that labels of different lengths remain comparable. The evaluation pool is sub-sampled with a fixed seed of \texttt{99991} so that the evaluation split is identical across training seeds. The training pool is sub-sampled with the per-run training seed (\texttt{seed} $\in {0,1,2}$) so that the three runs differ both in optimization noise and in their training subsamples.

\begin{table}[h]
\centering
\caption{Task datasets and sequence limits for causal-LM evaluation.}
\begin{tabular}{llrrrr}
\toprule
Dataset & Task & \# Classes & Train cap & Eval cap & \texttt{max\_seq\_len} \\
\midrule
\texttt{ag\_news}    & News topic      & 4  & 40{,}000 & 5{,}000 & 256 \\
\texttt{dbpedia\_14} & Wikipedia topic & 14 & 40{,}000 & 5{,}000 & 256 \\
\texttt{trec}        & Question type   & 6  & $\approx 5{,}500$ (full) & 5{,}000 & 256 \\
\texttt{imdb}        & Sentiment       & 2  & 40{,}000 & 5{,}000 & 256 \\
\bottomrule
\end{tabular}
\label{tab:causal_lm_tasks}
\end{table}

The prompt templates are defined in Table~\ref{tab:prompt_templates}:

\begin{table}[h]
\centering
\caption{Prompt templates and candidate labels used for causal-LM scoring.}
\begin{tabular}{ll}
\toprule
Dataset & Prompt template and labels \\
\midrule
\texttt{ag\_news} & \texttt{Article: \{text\}\textbackslash nCategory:} \\
& Labels: \texttt{World}, \texttt{Sports}, \texttt{Business}, \texttt{Tech} \\
\midrule \texttt{dbpedia\_14} & \texttt{<title-aware template>...:Category:} \\
& Labels: 14-way Wikipedia topic labels \\
\midrule \texttt{trec} & \texttt{Question: \{text\}\textbackslash nType:} \\
& Labels: \texttt{Description}, \texttt{Entity}, \texttt{Abbreviation}, \texttt{Person}, \texttt{Location}, \texttt{Number} \\
\midrule \texttt{imdb} & \texttt{Review: \{text\}\textbackslash nSentiment:} \\
& Labels: \texttt{negative}, \texttt{positive} \\
\bottomrule
\end{tabular}
\label{tab:prompt_templates}
\end{table}

Tokenization uses each model's own tokenizer, namely GPT-2 BPE. Prompts are truncated or padded to \texttt{max\_seq\_len = 256}, with 16 tokens reserved for the label completion and template tail. During tokenization, we construct a boolean \texttt{label\_mask}. This mask is used during fine-tuning to compute the loss only over label-token positions, and during evaluation to average scores only over label tokens.

All datasets are  used under their original licenses and terms of use. 

\subsection{Models} 

\textbf{Synthetic Validation.} The learner SSM has the exact same architecture as the teacher: a single-layer selective SSM as defined in Section~\ref{sec:system_model}, with the four parameter blocks $W=\left(W_\Delta,W_B,W_C,A\right)$ of sizes $D\times D$, $N\times D$, $N\times D$, and $N\times1$, one-hot inputs, and no embedding, convolution, residual path, normalization, MLP, or Transformer components. The output $y_t\in\mathbb R^{D}$ is used directly as the logit vector for next-token  prediction. In every configuration, the learner uses the same input dimension $D$ and state dimension $N$ as the teacher that generated its dataset ($D$ must match the vocabulary), so its parameter count is $D^2+2ND+N$ (Table~\ref{tab:synthetic_param_counts}). The learner never receives the teacher parameters - the only teacher-related quantities it uses are the margins $(\gamma,\delta)$ of its configuration, which define the stability set $\mathcal S_{\gamma,\delta}$ in which it is trained.

\begin{table}[h]
\centering
\caption{Learner parameter counts $D^2+2ND+N$ for the synthetic configurations.}
\label{tab:synthetic_param_counts}
\begin{tabular}{lrrr}
\toprule
 & $N=4$ & $N=8$ & $N=16$ \\
\midrule
$D=4$  & $52$  & $88$  & $160$ \\
$D=8$  & $132$ & $200$ & $336$ \\
$D=16$ & $388$ & $520$ & $784$ \\
\bottomrule
\end{tabular}
\end{table}

\textit{Initialization.} For a learner seed $s\in\{0,1,2\}$, drawn independently of the teacher seed, the projections are initialized with i.i.d. Gaussian entries $\left(W_\Delta\right)_{d,d'},\left(W_B\right)_{n,d},\left(W_C\right)_{n,d}\sim\mathcal N(0,0.1^2)$, and the transition block as
\begin{equation*}
    A_n=-\left(\gamma+\epsilon\right)-0.1\,|z_n|, \qquad z_n\sim\mathcal N(0,1),\quad n=1,\dots,N,
\end{equation*}
with a safety margin $\epsilon=10^{-4}$. The initial parameters are then projected onto $\mathcal S_{\gamma+\epsilon,\delta+\epsilon}$ by clamping $A_n\le-(\gamma+\epsilon)$ and $\left(W_\Delta\right)_{d,d'}\ge\operatorname{Softplus}^{-1}(\delta+\epsilon)$, and membership in $\mathcal S_{\gamma,\delta}$ is verified over all $D$ one-hot inputs, exactly as for the teachers. The initialization scale $0.1$ is much smaller than the teacher's $0.75$, so every learner starts close to a uniform next-token predictor. The same projection is applied after every optimizer update during training, so the learner remains in $\mathcal S_{\gamma,\delta}$ with realized transitions bounded by $\rho$ throughout training. All learner parameters are stored and updated in single precision (\texttt{float32}).

\textbf{Federated Training.} All federated runs train Mamba-2 selective state-space models from scratch using \texttt{mamba\_ssm} 's \texttt{MambaLMHeadModel} with the configuration:
\begin{itemize}
    \item \texttt{d\_model} = 256
    \item \texttt{ssm\_cfg.layer} = \texttt{Mamba2}
    \begin{itemize}
        \item default \texttt{d\_state} = 128
        \item \texttt{d\_conv} = 4
        \item \texttt{expand} = 2
        \item \texttt{headdim} = 64
        \item \texttt{ngroups} = 1
    \end{itemize}
    \item \texttt{vocab\_size} = 50{,}277 \quad
    \item \texttt{pad\_vocab\_multiple} = 8, giving padded vocabulary size $50{,}280$
    \item \texttt{rms\_norm} = \texttt{True}
    \item \texttt{residual\_in\_fp32} = \texttt{True}
    \item \texttt{fused\_add\_norm} = \texttt{True}
    \item \texttt{dtype} = \texttt{float32} \quad (with \texttt{fp16} autocast inside forward pass)
    \item \texttt{depth} $\in \{4, 16\}$ \quad (observed depths: 4 and 16)
\end{itemize}
i.e. we consider custom Mamba2 models of depth $4$ or depth $16$. The parameter counts are summarized in Table~\ref{tab:param_breakdown}.

\begin{table}[h]
\centering
\caption{Parameter breakdown by model depth.}
\begin{tabular}{rcccc}
\toprule
Depth & Total params & SSM (mixer) params & Embedding params & Per-layer mixer \\
\midrule
4  & 14{,}733{,}664 & 1{,}860{,}704 & 12{,}871{,}680 & 465{,}176 \\
16 & 20{,}318{,}848 & 7{,}442{,}816 & 12{,}871{,}680 & 465{,}176 \\
\bottomrule
\end{tabular}
\label{tab:param_breakdown}
\end{table}

Notes: the input/output embedding are shared by the LM head ($50{,}280 \times 256 = 12,871,680$ parameters). The SSM (mixer) params denotes the parameter count restricted to tensors matching \texttt{*.mixer.*}, including \texttt{in\_proj}, \texttt{conv1d}, \texttt{A\_log}, \texttt{D}, \texttt{dt\_bias}, the block-gated \texttt{RMSNorm}, and \texttt{out\_proj}.

\begin{table}[h]
\centering
\caption{Backbones of the depth-$4$ ablation. Layer indices run from $0$ to $3$, every layer not listed as attention is a Mamba2 mixer.}
\label{tab:backbones}
\begin{tabular}{llcr}
\toprule
Backbone & Layout & SSM fraction & Total params \\
\midrule
\texttt{mamba2} & $4\times$ Mamba2 & $1.00$ & $14{,}733{,}664$ \\
\texttt{hybrid75} & attention at $\{2\}$ & $0.75$ & $14{,}531{,}656$ \\
\texttt{hybrid50} & attention at $\{1,3\}$ & $0.50$ & $14{,}329{,}648$ \\
\texttt{hybrid50alt} & attention at $\{0,2\}$ & $0.50$ & $14{,}329{,}648$ \\
\texttt{hybrid25} & attention at $\{0,2,3\}$ & $0.25$ & $14{,}127{,}640$ \\
\texttt{attn0} & $4\times$ attention & $0.00$ & $13{,}925{,}632$ \\
\bottomrule
\end{tabular}
\end{table}

\textbf{Hybrid and Attention Backbones.} The backbone ablation (see the ablations in the Practical Distributed Learning Framework subsection) replaces some or all of the four Mamba2 mixers of the depth-$4$ model by causal multi-head attention layers, keeping \texttt{d\_model} $=256$, the tied GPT-2 embedding and LM head, the RMSNorm layers, and the residual structure unchanged, so that the mixer of every layer is either a Mamba2 block with the configuration above or an attention block. Attention layers are \texttt{mamba\_ssm}'s \texttt{MHA} (Multi-Head Attention) with $8$ heads of dimension $32$, rotary position embeddings over the full head dimension, causal masking, biases on the input (\texttt{qkv}) and output projections, and no feed-forward sublayer. The attention layers are spread evenly through the stack. With $n_{\mathrm{attn}}$ attention layers among the $L=4$ layers, the $j$-th attention layer (counting from $j=0$) is placed at index $\lfloor(j+0.5)\,L/n_{\mathrm{attn}}\rfloor$, which puts a single attention layer in the middle of the stack (index $2$), two attention layers at indices $1$ and $3$, and three at indices $0$, $2$, and $3$, with the remaining layers being Mamba2 mixers. To check that the placement itself does not drive the results, \texttt{hybrid50alt} keeps two attention layers but moves them to indices $0$ and $2$. Table~\ref{tab:backbones} lists the resulting layouts and parameter counts: an attention layer has $263{,}168$ parameters against $465{,}176$ for a Mamba2 mixer, so the total parameter count decreases with the attention fraction. The attention-only model (\texttt{attn0}) serves as the zero-SSM control (since it has no SSM state to align).

\textbf{Model Merging.} The model merging experiments use public Mamba2 checkpoints loaded from Hugging Face with \texttt{MambaLMHeadModel.from\_pretrained}. The results evaluate the 130M, 370M, and 780M checkpoints.

\begin{table}[h]
\centering
\caption{Public Mamba2 checkpoints used for model merging.}
\begin{tabular}{lc}
\toprule
Checkpoint & Total params \\
\midrule
\texttt{state-spaces/mamba2-130m} & 130M \\
\texttt{state-spaces/mamba2-370m} & 370M \\
\texttt{state-spaces/mamba2-780m} & 780M \\
\bottomrule
\end{tabular}
\label{tab:mamba2_merge_checkpoints}
\end{table}

All checkpoints are loaded with \texttt{dtype = bfloat16}. The base state $\theta^0$ is the unmodified pretrained checkpoint and is fixed once at the start of each experiment. The per-task specialists $\theta_m$ are produced by task fine-tuning (details in~\ref{appendix:model_merging_framework}), and saved to the per-experiment directory. 

The pretrained models and the \texttt{mamba\_ssm} module are  used under their original licenses and terms of use.

\subsection{Synthetic Distributed Learning Framework}
For every configuration $(D,N,\rho)$ and learner seed $s\in\{0,1,2\}$, the synthetic pipeline runs four stages on the dataset corresponding to that configuration: (i) centralized training of the learner of the Models subsection, (ii) evaluation of the gradient bound of Theorem~\ref{thm:ssm_grad_bound} at the stored checkpoints, (iii) evaluation of the smoothness constants of Lemma~\ref{lem:ssm_local_smoothness} at the same checkpoints, and (iv) federated training with FedAvg and FedProx, which measures the quantities entering Theorem~\ref{thm:fedavg_ssm_convergence} and Theorem~\ref{cor:fedprox_ssm_convergence}. Stages (ii) and (iii) never train, they only read checkpoints. All stages use the margins $(\gamma,\delta)$ of the configuration. Table~\ref{tab:synthetic_training_schedule} lists the shared settings.

\begin{table}[h]
\centering
\caption{Shared settings of the synthetic centralized and federated runs.}
\label{tab:synthetic_training_schedule}
\begin{tabular}{ll}
\toprule
\textbf{Component} & \textbf{Value} \\
\midrule
Loss & mean softmax cross-entropy over the $S-1$ next-token positions \\
Optimizer & SGD, no momentum, weight decay, or gradient clipping \\
Learning rate $\eta$ & $0.05$ \\
Batch size & $64$ sequences \\
Centralized updates & $2{,}000$ (epochs of $64$ updates, reshuffled every epoch) \\
Checkpoints and evaluation & every $100$ updates; bound and smoothness \\
& checkpoints at updates $0$, $500$, $1{,}000$, $1{,}500$, $2{,}000$ \\
Stability & projection onto $\mathcal S_{\gamma+\epsilon,\delta+\epsilon}$ with $\epsilon=10^{-4}$ \\
& and certificate check after every update \\
Learner seeds & $\{0,1,2\}$ per configuration \\
Precision & training in \texttt{float32}; bound, smoothness, \\
& and certificate evaluation in \texttt{float64} \\
\bottomrule
\end{tabular}
\end{table}

\textbf{Centralized Training.} The learner minimizes the mean teacher-forced next-token cross-entropy over the $S-1$ shifted positions of every training sequence, i.e., the sequence loss of~\eqref{eq:global_loss_definition} with $\ell_t$ the softmax cross-entropy and $y_t^*=x_{t+1}$. SGD uses no momentum, weight decay, or clipping, while projection and epoch reshuffling differ from the analyzed independent-sampling protocol. Each epoch shuffles the $4{,}096$ training sequences with a seed derived from the learner seed and the epoch index and cuts them into batches of $64$, running for $2{,}000$ updates. After every update the parameters are projected onto $\mathcal S_{\gamma+\epsilon,\delta+\epsilon}$ as at initialization, and membership in $\mathcal S_{\gamma,\delta}$ is re-verified over all $D$ one-hot inputs. Checkpoints are written every $100$ updates, and the test NLL, perplexity, and next-token accuracy are evaluated on the $1{,}024$ test sequences at the same frequency, always restarting from $h_0=\mathbf 0_{N\times D}$ at the first stored token. 

\textbf{Gradient-Bound Evaluation.} At the five checkpoints of Table~\ref{tab:synthetic_training_schedule} and at the frozen teacher, every one of the $1{,}024$ test sequences is processed in \texttt{float64}, and the gradient norm of its sequence loss with respect to the four blocks is computed as follows: $\|\nabla_W\mathcal L^S\|_F=\big(\sum_{X\in\{\Delta,B,C,A\}}\|\nabla_{W_X}\mathcal L^S\|_F^2\big)^{1/2}$. It is compared with two constants evaluated at the same parameters. The \emph{configured} bound $\mathcal G_{\mathrm{SSM}}(W)$ is Theorem~\ref{thm:ssm_grad_bound} with its analytic constants, the configured $(\gamma,\delta,\rho)$, $L_x=1$, and $L_\ell=\sqrt2$ (the gradient of the softmax cross-entropy with respect to the logits has norm at most $\sqrt2$). The \emph{tightened} bound $\mathcal G^{\mathrm{tight}}_{\mathrm{SSM}}(W)$ uses realized stability margins, exact one-hot projection bounds, and finite-history sums: the realized margins $\hat\gamma=\min_n(-A_n)$, $\hat\delta$, and $\hat\rho=\exp(-\hat\gamma\hat\delta)$ replace $(\gamma,\delta,\rho)$, $\max_d\|\operatorname{Softplus}(W_\Delta e_d)\|_2$ replaces $K_\Delta$, the largest column norms $\max_d\|W_Be_d\|_2$ and $\max_d\|W_Ce_d\|_2$ replace $\|W_B\|_2$ and $\|W_C\|_2$, and every geometric factor $1/(1-\rho)$ is truncated to $(1-\hat\rho^{S-1})/(1-\hat\rho)$, since a stored sequence contributes at most $S-1$ terms to each history sum. These replacements upper-bound the same one-hot projection and history terms and do not exceed the configured constants. A violation is recorded when the measured norm exceeds the bound by more than a relative tolerance of $10^{-8}$. The utilization is the ratio of the measured norm to the bound. The gradient of the batch-mean loss over all test sequences is evaluated as well and checked against the mean of the per-sequence norms (triangle inequality) and against the bound.

\textbf{Smoothness Evaluation.} At the same checkpoints, $32$ test sequences drawn once with a fixed seed are processed in \texttt{float64}. For every sequence, (a) the operator norm of the Hessian of the sequence loss is estimated by power iteration with exact Hessian-vector products (at most $15$ iterations, stopping when the Rayleigh quotient changes by less than a relative $10^{-9}$) and compared with $L_{\mathrm{SSM},R}$ at $R=0$, (b) the finite-difference ratio $\|\nabla\mathcal L^S(W+hU)-\nabla\mathcal L^S(W)\|_F/h$ with $h=10^{-3}$ is evaluated along three unit directions $U$ (two random and the normalized gradient) and compared with $L_{\mathrm{SSM},h}$ whenever the ball of radius $h$ is guaranteed to lie in $\mathcal S_{\gamma,\delta}$. Between consecutive learner checkpoints $W_a\to W_b$, the per-sequence ratio $\max_i\|\nabla\ell_i(W_b)-\nabla\ell_i(W_a)\|_F/\|W_b-W_a\|_F$ is compared with $L_{\mathrm{SSM},R}$ centered at $W_a$ with $R=\|W_b-W_a\|_F$, and $|\mathcal G_{\mathrm{SSM}}(W_b)-\mathcal G_{\mathrm{SSM}}(W_a)|/\|W_b-W_a\|_F$ with $L_{\mathcal G_{\mathrm{SSM}},R}$ of Lemma~\ref{lem:local_lipschit_SSM_grad_bound}. The loss-regularity constants are $L_\ell=\sqrt2$ and $H_\ell=1/2$.

\textbf{Federated Training.} FedAvg and FedProx are run on the same learner with $K$ clients holding disjoint, equal-size shards of the $4{,}096$ training sequences and weights $p_k=1/K$. Under the IID partition, the sequences are randomly permuted (seeded by the learner seed) and cut into $K$ shards. Under the dominant-token partition, the sequences are first sorted by their most frequent token, then by the fraction of positions that token occupies, and then cut into $K$ shards, so that clients differ systematically in token composition. Every client starts each round from the broadcast model and performs $E$ local SGD updates with the settings of Table~\ref{tab:synthetic_training_schedule}, each on a fresh batch of $64$ sequences drawn with replacement from the client's shard, followed by the same projection and verification as in centralized training. FedProx adds $\mu(W-W^r)$ to every local gradient, taking projected SGD steps on $F_k^r$ from Section~\ref{sec:system_model}. The server forms $W^{r+1}=\sum_kp_kW_k^{r,E}$ in \texttt{float64} and applies the projection once more. All runs start from the centralized learner initialization of the same seed and use $T=60$ rounds. Table~\ref{tab:synthetic_federated_settings} lists the settings: the base setting is run on all $72$ configurations, and the one-factor variations of $K$, $E$, $\mu$, and the partition are run on the $16$ core configurations, namely the $\rho$ axis at $D=N=8$ and all $(D,N)$ pairs at $\rho=0.95$. With three seeds this gives $1{,}200$ federated runs.

\begin{table}[h]
\centering
\caption{Synthetic federated settings. The base setting is run on all $72$ configurations, each variation changes one factor of the base setting and is run on the $16$ core configurations.}
\label{tab:synthetic_federated_settings}
\begin{tabular}{ll}
\toprule
\textbf{Component} & \textbf{Value} \\
\midrule
Algorithms & FedAvg; FedProx\\
Base setting & $K=4$ clients, $E=16$ local updates, IID partition \\
Communication rounds $T$ & $60$ \\
Client-count variation & $K\in\{2,8,32\}$ \\
Local-step variation & $E\in\{1,4,64\}$ \\
Proximal variation (FedProx) & $\mu\in\{0.01,1\}$ \\
Partition variation & dominant-token (non-IID) \\
Evaluation & at round $0$ and every $5$ rounds; $\|\nabla f(W^r)\|_F^2$ at every broadcast \\
Seeds & $\{0,1,2\}$, shared across the settings of a configuration \\
\bottomrule
\end{tabular}
\end{table}

The following quantities are recorded. At every evaluated round, the subset gradient norm $\|\nabla f(W^r)\|_F$ on a fixed subset of $1{,}024$ training sequences is compared with the configured and tightened $\mathcal G_{\mathrm{SSM}}(W^r)$. The client drift $\|W_k^{r,E}-W^r\|_F$ (mean and maximum over clients) and the largest displacement of any local iterate $\|W_k^{r,e}-W^r\|_F$ are compared with the radius $R$. The heterogeneity $\sum_kp_k\|\nabla\mathcal L_k(W^r)-\nabla f(W^r)\|_F^2$, with client gradients over full shards, is compared with $4\,\mathcal G_{\mathrm{SSM}}(W^r)^2\sum_kp_k\,\mathrm{TV}_k^2$ of Lemma~\ref{lem:ssm_gamma_tv_bound}, where $\mathrm{TV}_k$ is the exact total variation between the empirical distribution of client $k$ over whole sequences and the pooled distribution (unigram and bigram variants are recorded for reference but do not enter the lemma). The test NLL of the global model is evaluated on the $1{,}024$ test sequences. Reference radii and envelopes are set from initialization: $R=\eta E\,\mathcal G_{\mathrm{SSM}}(W^0)$ and $M_X=\|W_X^0\|_2+(T+1)R$ for each block $X$, from which $G_{\mathrm{SSM},R}$, $L_{\mathrm{SSM},R}$, and $L_{\mathcal G_{\mathrm{SSM}},R}$ are computed. For FedProx the radius is $R_\mu=\min\left\{R,\,G_{\mathrm{SSM},R}[1-(1-\eta\mu)^E]/\mu\right\}$ with correspondingly reduced envelopes. Beyond the sufficient step-size range, we log iterate containment, projection activity, and the tests $\eta EL\le1/2$ and $12\eta^2(L+\mu)^2E(E-1)\le1$. We compare the right-hand side of Theorem~\ref{thm:fedavg_ssm_convergence} (coefficients $4,4,6,48$) or Theorem~\ref{cor:fedprox_ssm_convergence} (coefficients $3,3,3,24$) with the subset estimate of $\frac1T\sum_{r<T}\|\nabla f(W^r)\|_F^2$, using the full-training initial loss $f(W^0)$ and $f^*=0$.

\subsection{Practical Distributed Learning Framework}
The federated framework uses $K=4$ clients unless stated otherwise (see the client-count ablation) and initializes all clients from the same randomly initialized model $\theta^0$. At the start of every round $r$, the server broadcasts the global model $\theta^r$, each client $k$ runs $E$ local optimization steps on its private data starting from $\theta^r$, and returns its updated parameters $\theta_k^{r,E}$ to a server module that performs algorithm-specific aggregation. After aggregation, the new global model $\theta^{r+1}$ is evaluated on the union of all clients' local validation data (server validation pool). The per-round per-client training perplexities and the server validation perplexity are saved.

\textbf{Shared Training Schedule.} All algorithms are evaluated under the same overall optimization schedule. Across communication rounds, we keep the number of clients, participation rate, local optimizer, batch construction, sequence length, and random seeds fixed. The only differences across methods are the algorithm-specific server or client update rule and the corresponding method-specific hyperparameters. The shared hyperparameters are summarized in Table~\ref{tab:shared_training_schedule}. Local training runs in single precision with mixed-precision (\texttt{fp16}) autocast and dynamic loss scaling, and gradients are clipped after unscaling. FedSAM is the one exception and runs in full \texttt{float32}. A fresh AdamW optimizer is created at the start of every round, so no optimizer state carries across rounds.

\begin{table}[h]
\centering
\caption{Shared training hyperparameters used across all federated algorithms.}
\label{tab:shared_training_schedule}
\begin{tabular}{ll}
\toprule
\textbf{Component} & \textbf{Value} \\
\midrule
Number of clients, $K$ & $4$ \\
Participation rate & $1.0$; all clients participate each round \\
Communication rounds, $T$ & $50$ \\
Local steps per round, $E$ & $\{64, 128, 256\}$ \\
Per-step batch size & $8$ \\
Sequence length, $S$ & $512$ \\
Local optimizer & AdamW, $\beta_1=0.9$, $\beta_2=0.999$, $\epsilon=10^{-8}$ \\
Local weight decay & $0.01$ \\
Local learning rate & $10^{-3}$ \\
Gradient clipping & Global $\ell_2$ norm clipping with threshold $1.0$ \\
Sampling & Infinite \texttt{RandomSampler} with replacement \\
Random seeds & $\{0,1,2\}$ for PyTorch and NumPy \\
\bottomrule
\end{tabular}
\end{table}

The following metrics are saved for every round:
\begin{itemize}
    \item \textbf{Train Perplexity} (per client). During each local step, the client computes the token-level cross-entropy of the next-token objective: the logits at positions $1,\dots,S-1$ are scored against the tokens at positions $2,\dots,S$, with $S=512$. The pre-aggregation training perplexity of client $k$ in round $r$ is $\exp(\overline{\mathrm{CE}})$, where $\overline{\mathrm{CE}}$ is the mean cross-entropy over the $E$ local steps of the round.
    \item \textbf{Validation Perplexity}. After aggregation, the global model $\theta^{r+1}$ is evaluated on the server validation pool. We report $\exp(\overline{\mathrm{CE}})$ with $+\infty$ saturation on overflow.
\end{itemize}

\textbf{Federated Algorithms.} We evaluate nine federated learning algorithms, all sharing the schedule of Table~\ref{tab:shared_training_schedule}. Throughout, $p_k$ denotes the sample-count weight of client $k$, and since every client processes the same number of sequences per round, $p_k=1/K$ in all runs.
\begin{itemize}
    \item \textbf{FedAvg}~\citep{mcmahan2017communication}. After each client completes its $E$ local AdamW updates, the server sets $\theta^{r+1}=\sum_{k=1}^Kp_k\theta_k^{r,E}=: \bar\theta^{r+1}$. This method serves as the main baseline and introduces no algorithm-specific hyperparameters.

    \item \textbf{FedProx}~\citep{MLSYS2020_1f5fe839}. FedProx adds the uniform proximal penalty $\frac{\mu}{2}\|\theta-\theta^r\|_F^2$ to the local objective, which discourages local models from drifting far from the broadcast model. Server aggregation is identical to FedAvg.

    \item \textbf{SCAFFOLD}~\citep{pmlr-v119-karimireddy20a}. SCAFFOLD corrects each client's local update with the difference between a server control variate $c^r$ and the client control variate $c_k^r$: the local gradient is shifted by $\lambda_c\,(c^r-c_k^r)$ with correction scale $\lambda_c$. The control variates are refreshed every $f_c$ rounds with control learning rate $\eta_c$, and their norm is bounded by $\tau_c$. Aggregation of the model parameters is identical to FedAvg.

    \item \textbf{FedAdam}~\citep{reddi2021adaptive}. Clients run the unchanged AdamW local phase. The server applies Adam to the pseudo-gradient $\bar\theta^{r+1}-\theta^r$: it maintains first- and second-moment estimates $m^r$ and $v^r$ with coefficients $(\beta_1^s,\beta_2^s)$ and updates $\theta^{r+1}=\theta^r+\alpha_s\,m^r/(\sqrt{v^r}+\tau_s)$ coordinatewise, with server learning rate $\alpha_s$ and offset $\tau_s$.

    \item \textbf{FedAlign}~\citep{11099331}. FedAlign aligns the Mamba2 state coordinates of the client models before averaging. In the default mode (\texttt{align\_mode = mamba2\_ref}), the $B/C$ state features of every client are matched to the current global model as the reference within each group, applying the same permutation to all coupled parameters. Optionally (\texttt{align\_mamba2\_heads = 1}), heads are matched as well, permuting every head-indexed parameter. After alignment, the client parameters are averaged with the same weights as FedAvg.

    \item \textbf{FedDyn}~\citep{acar2021federated}. Each client keeps a persistent dual state $\nu_k^r$, initialized at zero, and trains the local objective $\mathcal L_k(\theta)-\langle\nu_k^r,\theta\rangle_F+\frac{\alpha}{2}\|\theta-\theta^r\|_F^2$, with the two regularization terms computed in \texttt{float32} outside autocast. After local training, the client sets $\nu_k^{r+1}=\nu_k^r-\alpha(\theta_k^{r,E}-\theta^r)$. The server keeps the average dual state $\nu^{r+1}=\nu^r-\alpha\sum_kp_k(\theta_k^{r,E}-\theta^r)$ and updates $\theta^{r+1}=\bar\theta^{r+1}-\nu^{r+1}/\alpha$. The local optimizer remains AdamW, and the gradient of the regularized objective is clipped like every other local gradient.

    \item \textbf{FedSAM}~\citep{pmlr-v162-qu22a}. Every local step is a first-order sharpness-aware step: the client computes the gradient $g$ at the current iterate $\theta$, perturbs the weights by $u=\mathfrak R_{\mathrm{SAM}}\,g/(\|g\|_F+10^{-12})$ with the norm taken jointly over all trainable parameters, computes the gradient at $\theta+u$, restores $\theta$ exactly, and applies the perturbed gradient (clipped) with AdamW. Aggregation is identical to FedAvg. Because the perturbation must be formed from exact gradients across two backward passes, FedSAM runs without mixed precision at roughly twice the wall-clock cost of the other methods.

    \item \textbf{FedExP}~\citep{jhunjhunwala2023fedexp}. Local training is identical to FedAvg. With the client displacements $\theta_k^{r,E}-\theta^r$ and their weighted average $\sum_kp_k\left(\theta_k^{r,E}-\theta^r\right)$, the server applies $\theta^{r+1}=\theta^r+\eta_g^r \sum_kp_k\left(\theta_k^{r,E}-\theta^r\right)$ with the scaling factor $\eta_g^r=\max\big\{1,\ \sum_kp_k\|\Delta_k^r\|_F^2\big/\big(2\|\Delta^r\|_F^2+\epsilon_g\big)\big\}$, a single scalar shared by all parameters, with norms taken over all floating-point tensors. The factor has no upper cap and is logged every round. We do not average the last two server iterates at evaluation time.

    \item \textbf{DiLoCo-style}~\citep{douillard2024diloco}. Clients run the same AdamW local phase as FedAvg. The server applies an outer Nesterov step to the pseudo-gradient $\bar\theta^{r+1}-\theta^r$: $m_o^r=\beta_om_o^{r-1}+\bar\theta^{r+1}-\theta^r$ and $\theta^{r+1}=\theta^r+\alpha_o(\beta_om_o^r+\bar\theta^{r+1}-\theta^r)$, with $m_o^{-1}=0$, outer learning rate $\alpha_o$, outer momentum $\beta_o$, and per-parameter momentum buffers kept in \texttt{float32}. Unlike DiLoCo, whose workers keep their inner AdamW state across outer steps, the inner optimizer is re-created every round, as for all other methods.
\end{itemize}

\textbf{Hyperparameter Tuning.} For each algorithm, we ran every value of the grid in Table~\ref{tab:fed_hyperparams} at every $(\text{depth},E,\beta,\text{setting})$ configuration and over three seeds.

\begin{table}[h]
\centering
\footnotesize
\caption{Hyperparameter grids of the federated algorithms, run at every configuration and seed. Values shared by all members of a grid are fixed.}
\label{tab:fed_hyperparams}
\begin{tabular}{p{0.16\linewidth} p{0.76\linewidth}}
\toprule
\textbf{Algorithm} & \textbf{Hyperparameters and values} \\
\midrule
FedAvg & none \\
FedProx & $\mu\in\{5{\times}10^{-5},\,10^{-4},\,5{\times}10^{-4},\,10^{-3},\,5{\times}10^{-2}\}$ \\
SCAFFOLD & $f_c\in\{4,8,16\}$; $\eta_c=0.01$; $\lambda_c=0.01$; $\tau_c=0.1$ \\
FedAdam & $\alpha_s\in\{10^{-3},\,5{\times}10^{-3},\,10^{-2}\}$; $\beta_1^s=0.9$; $\beta_2^s=0.99$; $\tau_s=10^{-3}$ \\
FedAlign & \texttt{align\_mode} $=$ \texttt{mamba2\_ref}; \texttt{align\_mamba2\_heads} $\in\{0,1\}$ \\
FedDyn & $\alpha\in\{10^{-3},\,3{\times}10^{-3},\,10^{-2},\,3{\times}10^{-2},\,10^{-1}\}$ \\
FedSAM & $\mathfrak R_{\mathrm{SAM}}\in\{10^{-3},\,3{\times}10^{-3},\,10^{-2},\,5{\times}10^{-2},\,10^{-1}\}$ \\
FedExP & $\epsilon_g\in\{10^{-4},\,3{\times}10^{-4},\,10^{-3},\,3{\times}10^{-3},\,10^{-2}\}$ \\
DiLoCo-style & $\alpha_o\in\{0.1,0.2,0.3,0.4,0.5,0.6,0.7,1.0\}$; $\beta_o=0.9$ \\
\bottomrule
\end{tabular}
\end{table}

The final reported metrics correspond, per configuration, to the hyperparameter value with the lowest validation perplexity averaged over the last $10$ rounds (rounds 41-50) and over the three seeds. Runs that end with a non-finite perplexity are counted as failures and are never replaced.

\textbf{Reporting Metrics.} For every $(\text{depth},E,\beta,\text{setting})$ configuration we report:
\begin{itemize}
    \item \texttt{train\_ppl}: the per-round, per-client training perplexity $\exp(\overline{\mathrm{CE}})$ averaged over rounds and clients, then averaged over seeds (with its standard deviation).
    \item \texttt{val\_ppl}: the server validation perplexity $\exp(\overline{\mathrm{CE}})$ averaged over all $50$ rounds, then over seeds (with its standard deviation).
    \item \texttt{val\_ppl\_last10}: the server validation perplexity averaged over rounds 41--50, then over seeds (with its standard deviation).
\end{itemize}

\textbf{Ablations.} We further add two ablation experiments to the main grid, both at the constant settings $\beta=0.7$, depth $4$, $E=64$, $T=50$ with three seeds and one fixed hyperparameter value per algorithm: FedProx $\mu=10^{-3}$ ($5{\times}10^{-4}$ in the backbone ablation), SCAFFOLD $f_c=8$, FedAdam $\alpha_s=10^{-2}$, FedAlign without head matching (with head matching in the backbone ablation), FedDyn $\alpha=10^{-2}$, FedSAM $\mathfrak R_{\mathrm{SAM}}=10^{-2}$, FedExP $\epsilon_g=10^{-3}$, and DiLoCo-style $(\alpha_o,\beta_o)=(0.7,0.9)$. 

The \emph{client-count ablation} varies $K\in\{4,8,16,32\}$ on all four domain settings. Clients are assigned primary domains round-robin over the four domains of the setting, so $K/4$ clients share each primary domain, and each client draws a fraction $\beta$ of its data from its primary domain and $(1-\beta)/3$ from each of the other three. Since the demand on every domain grows as $K/4$, the per-client budget is capped at the available domain pool divided by this demand: the $7{,}427/1{,}857$ training/validation sequences per client at $K=4$ become approximately $3{,}713/928$, $1{,}856/464$, and $928/232$ at $K=8$, $16$, and $32$ for Settings 1 and 4, and slightly fewer for Settings 2 and 3  (which are limited by by the size of \texttt{c4}).

The \emph{backbone ablation} keeps $K=4$ and Setting 1 and replaces some or all Mamba2 mixers of the depth-$4$ model by attention layers. The six backbones are described in the Models subsection.

\subsection{Model Merging Framework}\label{appendix:model_merging_framework}
The merging framework is a two-phase pipeline. In the first phase, we fine-tune the pretrained base $\theta^0$ on each of the four downstream tasks independently, producing four task specialists $\theta_m$. In the second phase, we merge the specialists with each configured method and evaluate the merged state on every task's evaluation split.

\textbf{Per-Task Local Fine-Tuning}
For each task $m \in \{\texttt{ag\_news},\,\texttt{dbpedia\_14},\,\texttt{trec},\,\texttt{imdb}\}$ the pretrained base $\theta^0$ is fine-tuned on the task's training pool with the causal-LM cross-entropy objective restricted to label-token positions. Specialists are checkpointed at every epoch in "Checkpoint epochs" so that phase 2 can be repeated at multiple training horizons without re-paying the fine-tuning cost. Table~\ref{tab:finetune_recipe} lists the optimization parameters.

\begin{table}[h]
\centering
\caption{Per-task fine-tuning recipe (identical across tasks and seeds).}
\label{tab:finetune_recipe}
\renewcommand{\arraystretch}{1.2}
\begin{tabular}{p{0.32\linewidth} p{0.60\linewidth}}
\hline
\textbf{Component} & \textbf{Value} \\
\hline
Optimizer & AdamW \\

Learning rate & $2\times10^{-4}$ \\

Weight decay & 0.0 \\

Gradient clipping & $\ell_2$ norm, threshold 1.0 \\

Train batch size & 4 for \texttt{mamba2-130m, mamba2-370m}, 8 for \texttt{mamba2-780m} \\

Eval batch size & 4 \\

LR schedule & linear warmup + cosine decay \\

Warmup ratio & 0.05 (5\% of total steps) \\

Total epochs &
4 \\

Checkpoint epochs & $\{1, 2, 4\}$ \\

Compute / storage dtype & \texttt{bfloat16} \\

Seeds per (model, task) & $\{0, 1, 2\}$ \\
Sequence length & 256 tokens \\

\hline
\end{tabular}
\end{table}
The cross-entropy loss is masked to label-token positions only, prompt and template tokens are ignored. The training-pool subsample varies with the seed (so each seed sees a different 40k subset of the underlying task), while the evaluation-pool subsample is held fixed across seeds (eval seed = 99,991) so that all reported accuracies are comparable across seeds and methods.

\subsection{Reproducibility}\label{appendix:reproducibility}
\textbf{Deterministic Data and Seed Handling.} All three frameworks share a single PyTorch + NumPy seed. The federated harness deterministically caches the tokenized RedPajama subsets to a single binary cache file shared across all algorithms, settings, and seeds. The merging harness deterministically samples train and eval pools from each downstream dataset, with the eval seed (99,991) held fixed across all training seeds so that all reported accuracies are directly comparable. 

\textbf{Hardware and Runtime.}
All reported experiments were run on single-GPU workers using NVIDIA A100-40GB and NVIDIA H100 GPUs. The A100-40GB workers were used for prototyping, debugging, and smaller validation runs, as well as the synthetic dataset generation and validation experiments. The remaining main sweeps of federated and model merging experiments were run on the H100 workers. Each run used one GPU worker, with host CPU/RAM used only for data loading, tokenization-cache access, logging, and checkpoint I/O. The tokenized RedPajama subsets were cached once and reused across algorithms, settings, and seeds.

For federated language-modeling experiments, the dominant cost was local client training. Runtime scaled approximately linearly with the number of local steps $E$. In our setup, single federated runs ranged from approximately $30$ minutes for $E=64$ to at most roughly 2 hours for the longest $E=256$ runs on an H100. All federated algorithms used the same base hardware and data pipeline. Their runtime differences were small
relative to the effect of $E$. Naturally, the $16$-layer model scaled the computation compared to the $4$-layer one. 

For model merging, each Phase-1 task-expert fine-tuning configuration required up to roughly 8 hours on a single H100, depending on checkpoint size. 

\section{Additional Experimental Results and Ablation Studies}\label{appendix:additional_experimental_results}

\subsection{Single Layer Synthetic Distributed Learning}\label{appendix:synthetic_results}
Note that we validate the single-layer statements only. The multi-layer bound of Theorem~\ref{thm:ssm_multi_layer_grad_bound} composes the same per-layer constants with the layerwise sensitivities $\kappa_q$ of~\eqref{eq:kappa_definition}: the contribution of layer $q$ is the single-layer constant of that layer (with its parameters adjusted to the multi-layer setting) multiplied by $\prod_{j>q}\kappa_j$ over the layers above it. When $\kappa_j\geq1$, these factors do not reduce the per-layer bounds, but this does not establish greater relative looseness. Useful layers can also contract perturbations. Numerical evaluation is restricted to the single-layer theory.

\textbf{Validity and tightness.} Table~\ref{tab:app_validity_by_rho} reports, per $\rho$, the number of evaluated cells, parameter points and sequences, the violation counts for the configured and tightened bounds (all zero), the median utilization with confidence intervals, the largest utilization and the median realized $\hat\rho$. Table~\ref{tab:app_utilization_by_checkpoint} gives the same statistics along training. Figure~\ref{fig:app_utilization} shows the utilization along training, one curve per $\rho$, and Figure~\ref{fig:app_scatter_ecdf} the tightened-constant companion of Figure~\ref{fig:synthetic_a} together with the full per-sequence utilization distributions: at every $\rho$ the distribution ends at least two orders of magnitude below the bound.

\begin{table}[h]
\centering
\scriptsize
\caption{Validity and tightness of the gradient bound of Theorem~\ref{thm:ssm_grad_bound} by stability limit (single layer). Utilization is the measured per-sequence gradient norm divided by $\mathcal G_{\mathrm{SSM}}$ (conf.) or $\mathcal G^{\mathrm{tight}}_{\mathrm{SSM}}$ (tight.). The row marked $^{*}$ is the one $\rho=0.3$ cell whose realized step $\hat\delta$ differs from the other eight and is grouped separately by the evaluation.}
\label{tab:app_validity_by_rho}
\begin{tabular}{lrrrrlrrr}
\toprule
$\rho$ & cells & points & sequences & viol.\ (conf./tight.) & median util.\ (conf.) & median util.\ (tight.) & max util.\ (tight.) & median $\hat\rho$ \\
\midrule
0.3$^{*}$ & 1 & 18 & 18,432 & 0 / 0 & $7.6\times10^{-5}$ & $7.2\times10^{-4}$ & $8.3\times10^{-3}$ & 0.0602 \\
0.3 & 8 & 144 & 147,456 & 0 / 0 & $2.0\times10^{-6}$ & $1.2\times10^{-4}$ & $4.1\times10^{-3}$ & 0.0345 \\
0.5 & 9 & 162 & 165,888 & 0 / 0 & $1.6\times10^{-6}$ & $3.7\times10^{-5}$ & $5.1\times10^{-3}$ & 0.146 \\
0.7 & 9 & 162 & 165,888 & 0 / 0 & $7.7\times10^{-7}$ & $1.6\times10^{-5}$ & $2.1\times10^{-3}$ & 0.369 \\
0.9 & 9 & 162 & 165,888 & 0 / 0 & $2.3\times10^{-7}$ & $4.1\times10^{-6}$ & $1.0\times10^{-3}$ & 0.746 \\
0.95 & 9 & 162 & 165,888 & 0 / 0 & $9.1\times10^{-8}$ & $2.0\times10^{-6}$ & $5.7\times10^{-4}$ & 0.868 \\
0.99 & 9 & 162 & 165,888 & 0 / 0 & $3.2\times10^{-8}$ & $7.3\times10^{-7}$ & $1.6\times10^{-4}$ & 0.97 \\
0.995 & 9 & 162 & 165,888 & 0 / 0 & $1.2\times10^{-8}$ & $7.6\times10^{-7}$ & $2.4\times10^{-4}$ & 0.982 \\
0.999 & 9 & 162 & 165,888 & 0 / 0 & $5.3\times10^{-10}$ & $6.2\times10^{-7}$ & $1.2\times10^{-4}$ & 0.996 \\
all & 72 & 1,296 & 1,327,104 & 0 / 0 & $1.2\times10^{-7}$ & $3.5\times10^{-6}$ & $8.3\times10^{-3}$ & 0.823 \\
\bottomrule
\end{tabular}
\end{table}

\begin{table}[h]
\centering
\scriptsize
\caption{Utilization along training (single layer).}
\label{tab:app_utilization_by_checkpoint}
\begin{tabular}{lrrrrrr}
\toprule
checkpoint & points & median util.\ (conf.) & median util.\ (tight.) & max util.\ (tight.) & median $\mathcal{G}_{\mathrm{SSM}}$ & median gradient norm \\
\midrule
frozen\_teacher & 216 & $2.7\times10^{-8}$ & $4.0\times10^{-7}$ & $1.8\times10^{-4}$ & $9.0\times10^{6}$ & 0.177 \\
learner\_final & 216 & $1.2\times10^{-7}$ & $4.4\times10^{-6}$ & $3.5\times10^{-3}$ & $6.5\times10^{5}$ & 0.0612 \\
learner\_initialization & 216 & $3.4\times10^{-7}$ & $1.9\times10^{-5}$ & $8.3\times10^{-3}$ & $1.1\times10^{5}$ & 0.0293 \\
learner\_intermediate & 648 & $1.4\times10^{-7}$ & $5.8\times10^{-6}$ & $7.4\times10^{-3}$ & $4.0\times10^{5}$ & 0.0438 \\
\bottomrule
\end{tabular}
\end{table}

\begin{figure}[h]
\centering
\begin{subfigure}[h]{0.48\linewidth}
  \centering\includegraphics[width=\linewidth]{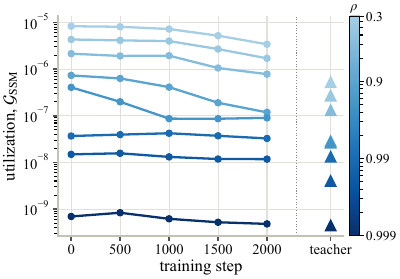}
  \caption{Configured constants}\label{fig:app_utilization_a}
\end{subfigure}\hfill
\begin{subfigure}[h]{0.48\linewidth}
  \centering\includegraphics[width=\linewidth]{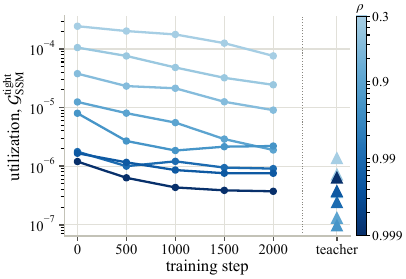}
  \caption{Tightened constants}\label{fig:app_utilization_b}
\end{subfigure}
\caption{Median bound utilization along training, one curve per $\rho$ (colour), with the frozen teacher at the last position. The bound is tightest at initialization and loosest at the frozen teacher at every $\rho$.}
\label{fig:app_utilization}
\end{figure}

\begin{figure}[h]
\centering
\begin{subfigure}[h]{0.48\linewidth}
  \centering\includegraphics[width=\linewidth]{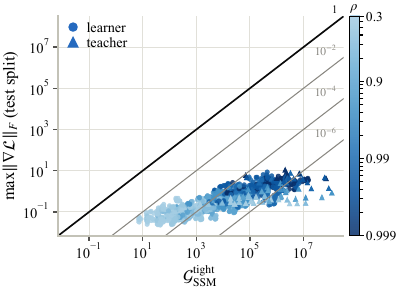}
  \caption{Gradient versus $\mathcal G^{\mathrm{tight}}_{\mathrm{SSM}}$}\label{fig:app_scatter_ecdf_a}
\end{subfigure}\hfill
\begin{subfigure}[h]{0.48\linewidth}
  \centering\includegraphics[width=\linewidth]{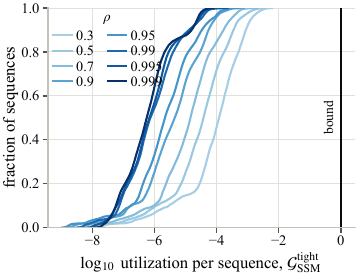}
  \caption{Per-sequence utilization}\label{fig:app_scatter_ecdf_b}
\end{subfigure}
\caption{(\subref{fig:app_scatter_ecdf_a})~Largest measured gradient norm over the test split against the tightened $\mathcal G^{\mathrm{tight}}_{\mathrm{SSM}}$ for all 1,296 parameter points (thin lines mark utilizations $10^{-2}$ to $10^{-6}$). (\subref{fig:app_scatter_ecdf_b})~Empirical distribution of the per-sequence tightened utilization per $\rho$ (one in eight sequences shown), the vertical line is the bound.}
\label{fig:app_scatter_ecdf}
\end{figure}

\textbf{Anatomy of the bound and of the gradient.} Table~\ref{tab:app_block_shares} lists the median share of the squared gradient norm carried by each parameter block at the final learner checkpoint and at the frozen teacher, next to the share of $\mathcal G_{\mathrm{SSM}}$ carried by its recurrent-Jacobian term. 

\begin{table}[h]
\centering
\scriptsize
\caption{Where the measured gradient lives (median share of squared Frobenius norm per parameter block) against the share of the certified bound carried by its recurrent-Jacobian term.}
\label{tab:app_block_shares}
\begin{tabular}{llrrrrr}
\toprule
checkpoint & $\rho$ & $W_\Delta$ & $W_B$ & $W_C$ & $A$ & recurrent share of $\mathcal{G}_{\mathrm{SSM}}$ \\
\midrule
final (step 2000) & 0.3 & 0.148 & 0.428 & 0.431 & 0.000 & 0.994 \\
 & 0.5 & 0.145 & 0.434 & 0.433 & 0.000 & 0.996 \\
 & 0.7 & 0.121 & 0.432 & 0.428 & 0.002 & 0.997 \\
 & 0.9 & 0.044 & 0.369 & 0.415 & 0.039 & 0.999 \\
 & 0.95 & 0.039 & 0.387 & 0.429 & 0.079 & 0.999 \\
 & 0.99 & 0.006 & 0.187 & 0.204 & 0.600 & 1.000 \\
 & 0.995 & 0.005 & 0.096 & 0.122 & 0.780 & 1.000 \\
 & 0.999 & 0.003 & 0.067 & 0.072 & 0.852 & 1.000 \\
frozen teacher & 0.3 & 0.050 & 0.594 & 0.331 & 0.000 & 0.999 \\
 & 0.5 & 0.077 & 0.568 & 0.404 & 0.000 & 0.999 \\
 & 0.7 & 0.065 & 0.548 & 0.350 & 0.003 & 0.999 \\
 & 0.9 & 0.043 & 0.482 & 0.301 & 0.111 & 1.000 \\
 & 0.95 & 0.028 & 0.354 & 0.124 & 0.325 & 1.000 \\
 & 0.99 & 0.002 & 0.393 & 0.015 & 0.581 & 1.000 \\
 & 0.995 & 0.002 & 0.420 & 0.008 & 0.570 & 1.000 \\
 & 0.999 & 0.001 & 0.362 & 0.004 & 0.631 & 1.000 \\
\bottomrule
\end{tabular}
\end{table}

\textbf{Dimension.} Table~\ref{tab:app_scaling_exponents} and Figure~\ref{fig:synthetic_d} give the log-log exponents of the configured bound, the tightened bound and the measured gradient in $D$ (at fixed $N$, seed and $\rho$) and in $N$ at the final learner checkpoint. The certified exponents follow the $N\,D^{2}$-type structure of $K_W$, the measured gradient is flat in $N$ and decreasing in $D$.

\begin{table}[h]
\centering
\scriptsize
\caption{Log-log exponents in the state and vocabulary sizes at the final learner checkpoint (median over one-factor fits with 95\% confidence intervals).}
\label{tab:app_scaling_exponents}
\begin{tabular}{lrrrrrr}
\toprule
$\rho$ & $\mathcal{G}$ in $D$ & $\mathcal{G}^{\mathrm{tight}}_{\mathrm{SSM}}$ in $D$ & $\|\nabla\mathcal{L}\|$ in $D$ & $\mathcal{G}$ in $N$ & $\mathcal{G}^{\mathrm{tight}}_{\mathrm{SSM}}$ in $N$ & $\|\nabla\mathcal{L}\|$ in $N$ \\
\midrule
0.3 & 1.78 [1.71, 2.84] & -0.29 [-1.28, 0.11] & -0.92 [-0.95, 0.41] & 1.96 [1.81, 2.10] & 1.81 [0.82, 2.30] & 0.17 [0.14, 0.22] \\
0.5 & 1.75 [1.48, 2.80] & 0.38 [-0.79, 0.77] & -0.97 [-1.09, 0.34] & 2.05 [1.76, 2.21] & 1.86 [1.21, 2.44] & 0.19 [0.12, 0.29] \\
0.7 & 1.39 [1.30, 2.47] & 0.04 [-0.40, 0.89] & -1.23 [-1.49, -0.07] & 1.70 [0.94, 1.80] & 1.41 [1.32, 1.96] & 0.03 [-0.37, 0.27] \\
0.9 & 1.86 [1.70, 2.19] & 0.48 [0.33, 1.64] & -1.28 [-1.66, -1.09] & 1.32 [0.74, 2.23] & 1.55 [0.30, 3.18] & 0.10 [-0.50, 0.87] \\
0.95 & 1.74 [1.25, 2.01] & 0.31 [0.10, 1.40] & -1.63 [-1.84, -1.58] & 1.09 [0.87, 1.14] & 1.37 [0.28, 1.55] & 0.05 [-0.07, 0.23] \\
0.99 & 1.85 [0.88, 2.23] & 1.05 [0.03, 1.96] & -1.38 [-1.58, -1.27] & 1.08 [0.95, 1.11] & 1.38 [0.75, 1.66] & 0.06 [-0.12, 0.12] \\
0.995 & 2.93 [0.71, 3.40] & 2.09 [0.12, 2.94] & -0.30 [-1.39, -0.21] & 1.70 [0.99, 2.15] & 1.55 [1.10, 2.48] & 0.50 [-0.09, 0.56] \\
0.999 & 2.90 [1.17, 3.85] & 2.33 [0.91, 3.71] & 0.16 [-0.06, 0.45] & 1.27 [0.63, 2.20] & 1.34 [0.81, 2.75] & 0.17 [-0.41, 0.49] \\
\bottomrule
\end{tabular}
\end{table}

\textbf{Realized stability and training.}  Table~\ref{tab:app_training_by_rho} and Figure~\ref{fig:app_training} demonstrate that the bounds use learners that generally outperform uniform prediction and approach the teachers' test Negative Log Likelihood (NLL). At short memory ($\rho$ far away from 1), the teachers' small predictive advantage over uniform makes normalized gap recovery sensitive to small absolute errors, while the learners remain well inside the stability limit. At long memory ($\rho$ approaches 1), learners nearly match the teachers and train close to that limit.
\begin{table}[h]
\centering
\scriptsize
\caption{Centralized learners by stability limit: fraction of runs whose test Negative Log Likelihood (NLL) falls below the uniform baseline, median excess NLL over the teacher, and the fraction of the uniform-to-teacher gap recovered.}
\label{tab:app_training_by_rho}
\begin{tabular}{lrrrrr}
\toprule
$\rho$ & runs & recovered fraction & median excess NLL & median gap recovered [95\% CI] & median final $\hat\rho$ \\
\midrule
0.3 & 27 & 0.93 & 0.0180 & 0.80 [0.52, 0.90] & 0.031 \\
0.5 & 27 & 0.93 & 0.0212 & 0.79 [0.49, 0.82] & 0.134 \\
0.7 & 27 & 0.93 & 0.0203 & 0.81 [0.40, 0.87] & 0.357 \\
0.9 & 27 & 0.93 & 0.0187 & 0.69 [0.64, 0.96] & 0.724 \\
0.95 & 27 & 1.00 & 0.0179 & 0.74 [0.52, 0.99] & 0.855 \\
0.99 & 27 & 1.00 & 0.0027 & 0.95 [0.42, 1.03] & 0.966 \\
0.995 & 27 & 1.00 & 0.0004 & 1.00 [0.75, 1.02] & 0.981 \\
0.999 & 27 & 1.00 & -0.0019 & 1.04 [0.95, 1.05] & 0.996 \\
\bottomrule
\end{tabular}
\end{table}

\begin{figure}[h]
\centering
\begin{subfigure}[h]{0.48\linewidth}
  \centering\includegraphics[width=\linewidth]{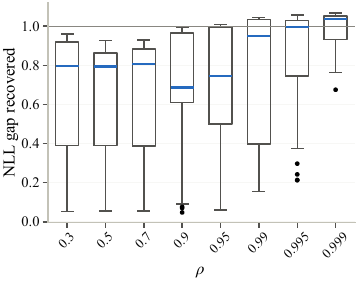}
  \caption{Gap recovered by the learners}\label{fig:app_training_a}
\end{subfigure}\hfill
\begin{subfigure}[h]{0.48\linewidth}
  \centering\includegraphics[width=\linewidth]{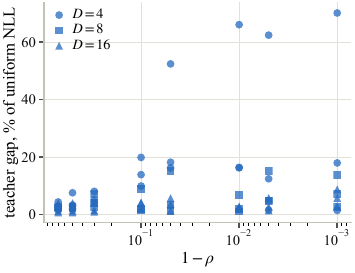}
  \caption{Predictive gap of the teachers}\label{fig:app_training_b}
\end{subfigure}
\caption{(\subref{fig:app_training_a})~Fraction of the uniform-to-teacher NLL gap recovered by the learners, per $\rho$. (\subref{fig:app_training_b})~The teachers' predictive gap over the uniform baseline as a percentage of the uniform NLL, per cell.}
\label{fig:app_training}
\end{figure}

\textbf{Federated ingredients.} Figure~\ref{fig:app_fed_scaling} shows how the quantities that enter Theorem~\ref{thm:fedavg_ssm_convergence} scale with the federated hyperparameters. Client drift, the end-of-round displacement $\|W_k^{r,E}-W^r\|_F$, grows as $E^{0.87}$ while the initialization-based reference radius $R=\eta E\mathcal G_{\mathrm{SSM}}(W^0)$ grows as $E^{1}$ (Figure~\ref{fig:app_fed_scaling_a}). The client disagreement $\sum_k p_k\|\nabla\mathcal L_k(W^r)-\nabla f(W^r)\|_F^2$, the quantity whose supremum $\Gamma^2_{\mathrm{SSM},R}$ Lemma~\ref{lem:ssm_gamma_tv_bound} bounds, grows as $K^{1.48}$ (Figure~\ref{fig:app_fed_scaling_b}) and as $(1-\rho)^{-1.15}$, whereas drift is flat in $\rho$ and $\|\nabla f\|_F^2$ grows only as $(1-\rho)^{-0.2}$ (Figure~\ref{fig:app_fed_scaling_c}), so disagreement grows faster than the squared gradient norm, consistent with the lemma's SSM-sensitivity dependence. Figure~\ref{fig:app_fed_rounds} shows the per-round curves behind these tail means. Figure~\ref{fig:app_heterogeneity} examines the heterogeneity lemma: measured disagreement is $10^{-11}$ to $10^{-14}$ of $4\mathcal G^2_{\mathrm{SSM}}\mathcal H_{\mathrm{TV}}$ (Figure~\ref{fig:app_heterogeneity_a}). The sequence-level squared-TV aggregate is $0.56$ at $K=4$ ($0.25$ to $0.94$ across $K$) and unchanged by the dominant-token partition, whereas a token-level squared-TV analogue rises from $1.6\times10^{-5}$ to $2.8\times10^{-3}$ and disagreement by $226\times$ (Figure~\ref{fig:app_heterogeneity_b}). Thus, the sequence-level term is insensitive to this partition; certifying a token-level or moment-based alternative requires a separate bound.

\begin{figure}[h]
\centering
\begin{subfigure}[h]{0.32\linewidth}
  \centering\includegraphics[width=\linewidth]{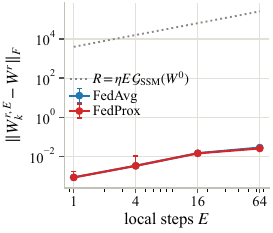}
  \caption{Client drift against $E$}\label{fig:app_fed_scaling_a}
\end{subfigure}\hfill
\begin{subfigure}[h]{0.32\linewidth}
  \centering\includegraphics[width=\linewidth]{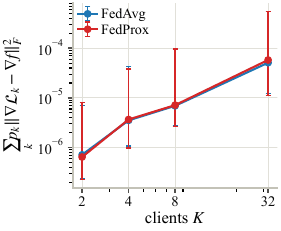}
  \caption{Dissimilarity against $K$}\label{fig:app_fed_scaling_b}
\end{subfigure}\hfill
\begin{subfigure}[h]{0.32\linewidth}
  \centering\includegraphics[width=\linewidth]{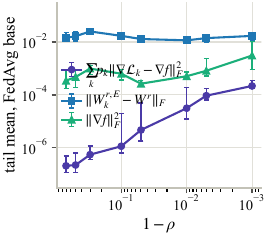}
  \caption{Base setting against $1-\rho$}\label{fig:app_fed_scaling_c}
\end{subfigure}
\caption{Federated scaling directions (medians with bootstrap 95\% intervals). (\subref{fig:app_fed_scaling_a})~End-of-round client drift against $E$ for FedAvg and FedProx on the core cells, with the initialization-based reference radius $R=\eta E\,\mathcal G_{\mathrm{SSM}}(W^0)$. (\subref{fig:app_fed_scaling_b})~Client dissimilarity $\sum_k p_k\|\nabla\mathcal L_k(W^r)-\nabla f(W^r)\|_F^2$ against $K$ on the core cells. (\subref{fig:app_fed_scaling_c})~$\sum_k p_k\|\nabla\mathcal L_k(W^r)-\nabla f(W^r)\|_F^2$, drift and $\|\nabla f\|_F^2$ against $1-\rho$ at the base setting (all 72 cells).}
\label{fig:app_fed_scaling}
\end{figure}

\begin{figure}[h]
\centering
\begin{subfigure}[h]{0.24\linewidth}
  \centering\includegraphics[width=\linewidth]{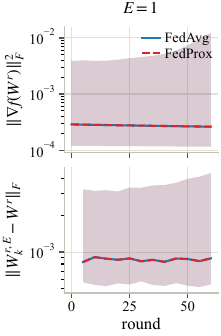}
  \caption{$E=1$}
\end{subfigure}\hfill
\begin{subfigure}[h]{0.24\linewidth}
  \centering\includegraphics[width=\linewidth]{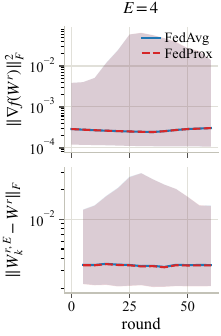}
  \caption{$E=4$}
\end{subfigure}\hfill
\begin{subfigure}[h]{0.24\linewidth}
  \centering\includegraphics[width=\linewidth]{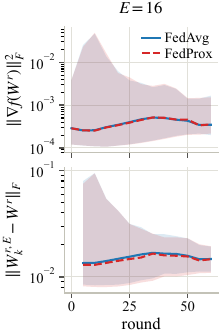}
  \caption{$E=16$}
\end{subfigure}\hfill
\begin{subfigure}[h]{0.24\linewidth}
  \centering\includegraphics[width=\linewidth]{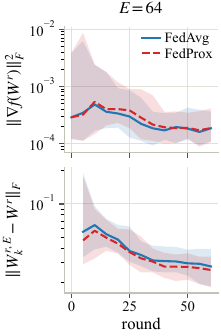}
  \caption{$E=64$}
\end{subfigure}
\caption{Per-round gradient (top) and client drift (bottom) on the core cells for each $E$}
\label{fig:app_fed_rounds}
\end{figure}

\begin{figure}[h]
\centering
\begin{subfigure}[h]{0.48\linewidth}
  \centering\includegraphics[width=\linewidth]{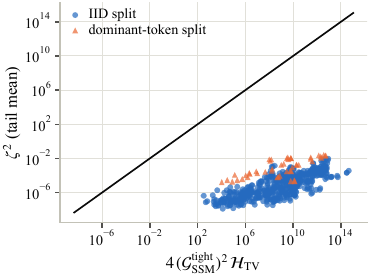}
  \caption{Heterogeneity lemma}\label{fig:app_heterogeneity_a}
\end{subfigure}\hfill
\begin{subfigure}[h]{0.48\linewidth}
  \centering\includegraphics[width=\linewidth]{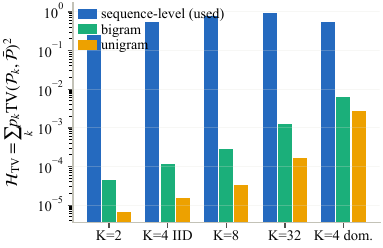}
  \caption{Total-variation terms}\label{fig:app_heterogeneity_b}
\end{subfigure}
\caption{(\subref{fig:app_heterogeneity_a})~Measured client dissimilarity against the certified heterogeneity bound with the tightened constant, for IID and dominant-token splits. (\subref{fig:app_heterogeneity_b})~Sequence-level, bigram and unigram weighted total variation for the client counts and partitions of the bank, only the sequence-level value enters the lemma.}
\label{fig:app_heterogeneity}
\end{figure}

\textbf{What drives the bound.} Table~\ref{tab:app_bound_drivers} and Figure~\ref{fig:app_drivers} give signed covariance attribution across the bank. The recurrent-history term carries at least $99\%$ of the bound, so $\log_{10}\mathcal G_{\mathrm{SSM}}$ is the sum of log-factors $\rho/(1-\rho)^2$, $ND$, $\|W_C\|_2$, $K_B$, $K_\Delta$, and $K_\Delta+\sqrt D\|A\|_2$, plus a residual below $0.1$ decades. Each share is the covariance of a log-factor with log-bound magnitude divided by the latter's variance. The memory factor spans $6.2$ decades and contributes $93\%$. Dimension, readout and step contribute $7\%$, $6\%$ and $4\%$, while the transition factor contributes $-13\%$, offsetting memory growth. This offset helps explain the exponent above $-2$ in Figure~\ref{fig:synthetic_b}. Teacher initialization gives $\|A\|_2=\gamma\sqrt{\sum_n(1+0.25|z_n|)^2}$ with $\gamma=-\ln\rho/\delta$. For log-utilization, memory contributes $102\%$ and measured gradient growth $-18\%$. Recurrence therefore dominates variation in tightness across this bank, not necessarily absolute looseness.

\begin{table}[h]
\centering
\scriptsize
\caption{What drives the bound across the bank. $\log_{10}\mathcal{G}_{\mathrm{SSM}}$ is the sum of the logarithms of the factors of its recurrent-history term (plus a residual for the remaining terms), so the covariance of each factor with $\log_{10}\mathcal{G}_{\mathrm{SSM}}$ divided by its variance is an additive attribution that sums to one over all 1,296 parameter points. The last column attributes the variation of the utilization $\|\nabla\mathcal L\|_F/\mathcal{G}_{\mathrm{SSM}}$ in the same way, with the measured gradient as an additional term. A negative share means the factor moves against the total: $\|A\|_2$ shrinks as $\rho\to1$.}
\label{tab:app_bound_drivers}
\begin{tabular}{lrrrr}
\toprule
factor & range (decades) & std (decades) & share of $\mathrm{Var}(\log_{10}\mathcal{G}_{\mathrm{SSM}})$ & share of $\mathrm{Var}(\log_{10}\,\mathrm{util.})$ \\
\midrule
memory $\rho/(1-\rho)^2$ & 6.2 & 2.06 & 93\% & 102\% \\
dimension $N\,D$ & 1.2 & 0.35 & 7\% & 12\% \\
readout $\|W_C\|_2$ & 1.5 & 0.31 & 6\% & 6\% \\
input $K_B$ & 1.8 & 0.28 & 3\% & 3\% \\
step $K_\Delta$ & 0.8 & 0.18 & 4\% & 6\% \\
transition $K_\Delta+\sqrt{D}\,\|A\|_2$ & 2.0 & 0.50 & -13\% & -11\% \\
remaining terms of $\mathcal{G}_{\mathrm{SSM}}$ & 0.1 & 0.01 & 0\% & 0\% \\
measured $\|\nabla\mathcal{L}\|_F$ & 2.7 & 0.62 & -- & -18\% \\
\bottomrule
\end{tabular}
\end{table}

\begin{figure}[h]
\centering
\begin{subfigure}[h]{0.48\linewidth}
  \centering\includegraphics[width=\linewidth]{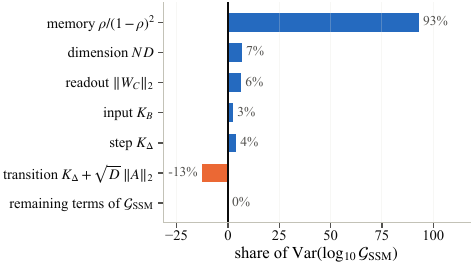}
  \caption{What drives $\mathcal G_{\mathrm{SSM}}$}\label{fig:app_drivers_a}
\end{subfigure}\hfill
\begin{subfigure}[h]{0.48\linewidth}
  \centering\includegraphics[width=\linewidth]{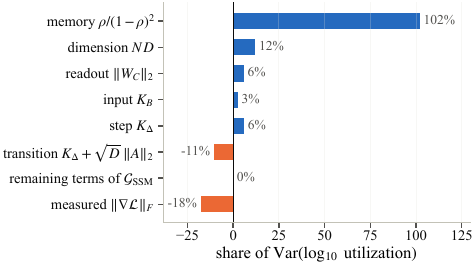}
  \caption{What drives the utilization $\|\nabla\mathcal L\|_F/\mathcal{G}_{\mathrm{SSM}}$}\label{fig:app_drivers_b}
\end{subfigure}
\caption{Additive attribution of the variance of (\subref{fig:app_drivers_a})~$\log_{10}\mathcal G_{\mathrm{SSM}}$ and (\subref{fig:app_drivers_b})~$\log_{10}$ utilization over all 1,296 parameter points to the factors of the bound (and, for the utilization, to the measured gradient). Bars sum to 100\%, a negative bar moves against the total.}
\label{fig:app_drivers}
\end{figure}

\subsection{Practical Distributed Learning}\label{appendix:practical_distributed_learning}
In this subsection we provide supplemental results to the Section~\ref{sec:experiments} Mamba2 federated learning experiments. Every table reports the three metrics defined in Appendix~\ref{appendix:detailed_exp_setup}: the client training perplexity, the server validation perplexity (both averaged over all $50$ rounds), and the server validation perplexity averaged over the last $10$ rounds. Appendix~\ref{appendix:practical_72_settings} evaluates the $9$ federated algorithms in all the $72$ main configurations in details, demonstrating how the different local rounds $E$, model depths, heterogeneity degrees $\beta$, and data compositions affect their performance. The reported standard deviation measures the per-seed average across configurations for 3 seeds (not the perplexity spread across configurations). Appendices~\ref{appendix:practical_client_count} and~\ref{appendix:practical_backbones} provide two ablation studies, on different client counts, and different model backbones respectively. Both use fixed hyperparameters listed in the Ablations paragraph of Appendix~\ref{appendix:detailed_exp_setup}.

\subsubsection{Detailed Results Across the 72 Main Configurations}\label{appendix:practical_72_settings}

Tables~\ref{tab:app_fl_by_beta}-\ref{tab:app_fl_by_depth} report the nine algorithms averaged over the $72$ configurations along each factor of the grid.

\providecommand{\meansd}[2]{#1\,{\scriptsize$\pm$\,#2}}
\providecommand{\bestmeansd}[2]{\textbf{#1}\,{\scriptsize$\pm$\,#2}}
\begin{table}[h]
\centering
\scriptsize
\setlength{\tabcolsep}{3pt}
\renewcommand{\arraystretch}{1.1}
\caption{\textbf{Effect of heterogeneity.} Training, validation, and last-10-round validation perplexity of the nine federated learning algorithms on Mamba2 at every $\beta$, averaged over the $24$ configurations.}
\label{tab:app_fl_by_beta}
\resizebox{\textwidth}{!}{%
\begin{tabular}{lccc ccc ccc}
\toprule
& \multicolumn{3}{c}{$\beta=0$} & \multicolumn{3}{c}{$\beta=0.7$} & \multicolumn{3}{c}{$\beta=1$} \\
\cmidrule(lr){2-4}\cmidrule(lr){5-7}\cmidrule(lr){8-10}
Algorithm & Train & Val & Last-10 & Train & Val & Last-10 & Train & Val & Last-10 \\
\midrule
FedAvg & \meansd{75.4}{0.5} & \meansd{63.7}{0.3} & \meansd{45.4}{0.2} & \meansd{99.3}{0.2} & \meansd{66.7}{0.6} & \meansd{45.8}{0.3} & \meansd{143.5}{0.7} & \meansd{136.5}{3.8} & \meansd{60.7}{0.5} \\
FedProx & \bestmeansd{69.0}{0.2} & \bestmeansd{57.6}{0.3} & \bestmeansd{36.9}{0.4} & \bestmeansd{88.0}{0.4} & \bestmeansd{60.4}{0.4} & \bestmeansd{36.5}{0.2} & \bestmeansd{119.3}{0.4} & \bestmeansd{121.3}{2.9} & \meansd{40.2}{0.3} \\
SCAFFOLD & \meansd{78.3}{0.5} & \meansd{66.3}{0.2} & \meansd{46.9}{0.1} & \meansd{102.4}{0.4} & \meansd{67.3}{0.3} & \meansd{45.2}{0.1} & \meansd{149.0}{0.5} & \meansd{131.6}{3.6} & \meansd{49.7}{0.2} \\
FedAdam & \meansd{88.4}{0.4} & \meansd{164.8}{1.6} & \meansd{53.0}{0.3} & \meansd{112.7}{0.4} & \meansd{160.0}{2.1} & \meansd{50.8}{0.1} & \meansd{157.2}{0.6} & \meansd{355.2}{13.8} & \meansd{61.7}{0.3} \\
FedAlign & \meansd{75.3}{0.4} & \meansd{63.6}{0.2} & \meansd{45.2}{0.0} & \meansd{99.2}{0.4} & \meansd{66.6}{0.3} & \meansd{45.7}{0.2} & \meansd{143.5}{0.5} & \meansd{135.3}{5.4} & \meansd{60.1}{0.4} \\
FedDyn & \meansd{78.9}{0.1} & \meansd{106.4}{2.1} & \meansd{44.1}{0.5} & \meansd{95.5}{0.2} & \meansd{120.1}{3.9} & \meansd{41.6}{0.1} & \meansd{127.1}{0.0} & \meansd{814.4}{50.2} & \bestmeansd{40.0}{0.2} \\
FedSAM & \meansd{75.4}{0.4} & \meansd{63.7}{0.3} & \meansd{45.2}{0.1} & \meansd{99.3}{0.4} & \meansd{66.7}{0.3} & \meansd{45.7}{0.1} & \meansd{144.5}{0.7} & \meansd{135.5}{3.4} & \meansd{59.8}{0.3} \\
FedExP & \meansd{75.3}{0.4} & \meansd{63.6}{0.2} & \meansd{45.2}{0.1} & \meansd{97.6}{0.4} & \meansd{66.5}{0.4} & \meansd{45.2}{0.2} & \meansd{137.8}{0.6} & \meansd{154.9}{7.1} & \meansd{68.9}{0.5} \\
DiLoCo-style & \meansd{72.4}{0.6} & \meansd{66.4}{0.4} & \meansd{46.1}{0.3} & \meansd{96.2}{0.5} & \meansd{71.7}{0.4} & \meansd{46.2}{0.2} & \meansd{140.5}{0.6} & \meansd{202.2}{9.8} & \meansd{61.3}{0.2} \\
\bottomrule
\end{tabular}}
\end{table}

\providecommand{\meansd}[2]{#1\,{\scriptsize$\pm$\,#2}}
\providecommand{\bestmeansd}[2]{\textbf{#1}\,{\scriptsize$\pm$\,#2}}
\begin{table}[h]
\centering
\scriptsize
\setlength{\tabcolsep}{3pt}
\renewcommand{\arraystretch}{1.1}
\caption{\textbf{Effect of local steps.} Training, validation, and last-10-round validation perplexity of the nine federated learning algorithms on Mamba2 at every $E$, averaged over the $24$ configurations.}
\label{tab:app_fl_by_steps}
\resizebox{\textwidth}{!}{%
\begin{tabular}{lccc ccc ccc}
\toprule
& \multicolumn{3}{c}{$E=64$} & \multicolumn{3}{c}{$E=128$} & \multicolumn{3}{c}{$E=256$} \\
\cmidrule(lr){2-4}\cmidrule(lr){5-7}\cmidrule(lr){8-10}
Algorithm & Train & Val & Last-10 & Train & Val & Last-10 & Train & Val & Last-10 \\
\midrule
FedAvg & \meansd{151.2}{0.5} & \meansd{108.5}{1.7} & \meansd{63.9}{0.2} & \meansd{100.9}{0.5} & \meansd{85.5}{1.1} & \meansd{48.7}{0.2} & \meansd{66.1}{0.3} & \meansd{72.9}{2.0} & \meansd{39.2}{0.4} \\
FedProx & \bestmeansd{137.7}{0.3} & \meansd{106.6}{1.7} & \bestmeansd{48.6}{0.2} & \bestmeansd{85.1}{0.4} & \bestmeansd{74.9}{0.9} & \bestmeansd{35.6}{0.4} & \bestmeansd{53.6}{0.3} & \bestmeansd{57.8}{1.0} & \bestmeansd{29.4}{0.2} \\
SCAFFOLD & \meansd{150.8}{0.4} & \bestmeansd{105.6}{1.7} & \meansd{61.0}{0.3} & \meansd{104.5}{0.3} & \meansd{85.3}{0.7} & \meansd{45.4}{0.3} & \meansd{74.5}{0.4} & \meansd{74.3}{2.0} & \meansd{35.4}{0.0} \\
FedAdam & \meansd{167.3}{0.5} & \meansd{261.5}{5.6} & \meansd{69.6}{0.5} & \meansd{113.0}{0.4} & \meansd{217.6}{6.6} & \meansd{53.1}{0.1} & \meansd{78.1}{0.3} & \meansd{200.9}{5.2} & \meansd{42.8}{0.1} \\
FedAlign & \meansd{151.2}{0.5} & \meansd{108.3}{2.1} & \meansd{63.6}{0.2} & \meansd{100.8}{0.4} & \meansd{85.1}{1.7} & \meansd{48.4}{0.3} & \meansd{66.0}{0.3} & \meansd{72.0}{2.3} & \meansd{39.0}{0.2} \\
FedDyn & \meansd{149.6}{0.1} & \meansd{257.9}{5.0} & \meansd{56.2}{0.3} & \meansd{90.6}{0.2} & \meansd{351.7}{43.0} & \meansd{38.8}{0.4} & \meansd{61.3}{0.1} & \meansd{431.3}{48.0} & \meansd{30.7}{0.0} \\
FedSAM & \meansd{151.3}{0.5} & \meansd{109.0}{1.5} & \meansd{63.8}{0.3} & \meansd{101.1}{0.4} & \meansd{85.6}{0.8} & \meansd{48.3}{0.1} & \meansd{66.8}{0.5} & \meansd{71.3}{1.8} & \meansd{38.6}{0.2} \\
FedExP & \meansd{147.1}{0.6} & \meansd{111.7}{1.2} & \meansd{65.3}{0.2} & \meansd{98.6}{0.5} & \meansd{91.5}{1.1} & \meansd{51.9}{0.3} & \meansd{65.0}{0.4} & \meansd{81.8}{5.6} & \meansd{42.1}{0.4} \\
DiLoCo-style & \meansd{144.6}{0.5} & \meansd{140.8}{5.4} & \meansd{63.5}{0.3} & \meansd{97.5}{0.6} & \meansd{113.4}{4.7} & \meansd{49.3}{0.3} & \meansd{67.0}{0.5} & \meansd{86.1}{3.2} & \meansd{40.9}{0.1} \\
\bottomrule
\end{tabular}}
\end{table}

\providecommand{\meansd}[2]{#1\,{\scriptsize$\pm$\,#2}}
\providecommand{\bestmeansd}[2]{\textbf{#1}\,{\scriptsize$\pm$\,#2}}
\begin{table}[h]
\centering
\scriptsize
\setlength{\tabcolsep}{3pt}
\renewcommand{\arraystretch}{1.1}
\caption{\textbf{Effect of the domain assignment.}  Training, validation, and last-10-round validation perplexity of the nine federated learning algorithms on Mamba2 for every domain setting (Setting 1: \texttt{arxiv}, \texttt{github}, \texttt{stackexchange}, \texttt{wikipedia}; Setting 2: \texttt{arxiv}, \texttt{c4}, \texttt{stackexchange}, \texttt{wikipedia}; Setting 3: \texttt{c4}, \texttt{common\_crawl}, \texttt{github}, \texttt{wikipedia}; Setting 4: \texttt{arxiv}, \texttt{common\_crawl}, \texttt{github}, \texttt{stackexchange}), averaged over the $18$ configurations.}
\label{tab:app_fl_by_setting}
\resizebox{\textwidth}{!}{%
\begin{tabular}{lccc ccc ccc ccc}
\toprule
& \multicolumn{3}{c}{Setting 1} & \multicolumn{3}{c}{Setting 2} & \multicolumn{3}{c}{Setting 3} & \multicolumn{3}{c}{Setting 4} \\
\cmidrule(lr){2-4}\cmidrule(lr){5-7}\cmidrule(lr){8-10}\cmidrule(lr){11-13}
Algorithm & Train & Val & Last-10 & Train & Val & Last-10 & Train & Val & Last-10 & Train & Val & Last-10 \\
\midrule
FedAvg & \meansd{45.1}{0.2} & \meansd{53.2}{1.5} & \meansd{25.2}{0.3} & \meansd{107.3}{0.0} & \meansd{96.5}{4.2} & \meansd{51.7}{0.6} & \meansd{140.6}{1.0} & \meansd{91.4}{3.3} & \meansd{56.9}{0.1} & \meansd{131.3}{0.5} & \meansd{114.8}{1.1} & \meansd{68.7}{0.8} \\
FedProx & \bestmeansd{42.3}{0.2} & \bestmeansd{46.4}{0.2} & \bestmeansd{19.0}{0.2} & \bestmeansd{93.8}{0.3} & \bestmeansd{86.4}{0.3} & \bestmeansd{38.9}{0.3} & \bestmeansd{117.7}{0.6} & \bestmeansd{87.3}{3.1} & \bestmeansd{43.0}{0.3} & \bestmeansd{114.6}{0.5} & \bestmeansd{98.9}{1.7} & \bestmeansd{50.5}{0.9} \\
SCAFFOLD & \meansd{45.9}{0.2} & \meansd{51.6}{1.5} & \meansd{22.6}{0.4} & \meansd{112.0}{0.0} & \meansd{95.6}{3.4} & \meansd{47.5}{0.2} & \meansd{146.5}{0.7} & \meansd{93.1}{3.2} & \meansd{55.6}{0.1} & \meansd{135.2}{0.5} & \meansd{113.3}{1.3} & \meansd{63.3}{0.7} \\
FedAdam & \meansd{51.4}{0.1} & \meansd{169.0}{4.3} & \meansd{27.2}{0.2} & \meansd{118.2}{0.4} & \meansd{260.3}{6.6} & \meansd{54.2}{0.2} & \meansd{158.0}{1.1} & \meansd{237.0}{6.4} & \meansd{62.2}{0.2} & \meansd{150.2}{0.2} & \meansd{240.3}{7.6} & \meansd{77.1}{1.3} \\
FedAlign & \meansd{45.0}{0.0} & \meansd{50.8}{1.6} & \meansd{24.8}{0.2} & \meansd{107.2}{0.3} & \meansd{96.7}{3.4} & \meansd{51.4}{0.3} & \meansd{140.6}{0.9} & \meansd{91.8}{2.7} & \meansd{56.7}{0.1} & \meansd{131.3}{0.6} & \meansd{114.6}{1.2} & \meansd{68.5}{0.9} \\
FedDyn & \meansd{46.1}{0.0} & \meansd{339.0}{57.0} & \meansd{20.8}{0.2} & \meansd{101.3}{0.2} & \meansd{502.0}{35.6} & \meansd{42.3}{0.3} & \meansd{127.1}{0.5} & \meansd{248.9}{70.4} & \meansd{47.7}{0.2} & \meansd{127.4}{0.2} & \meansd{297.9}{23.3} & \meansd{56.7}{0.7} \\
FedSAM & \meansd{45.4}{0.2} & \meansd{51.8}{1.6} & \meansd{25.0}{0.2} & \meansd{107.8}{0.3} & \meansd{96.2}{3.6} & \meansd{50.9}{0.2} & \meansd{141.0}{0.9} & \meansd{91.9}{2.2} & \meansd{56.6}{0.1} & \meansd{131.6}{0.5} & \meansd{114.8}{1.2} & \meansd{68.6}{0.9} \\
FedExP & \meansd{43.9}{0.3} & \meansd{60.7}{3.9} & \meansd{27.0}{0.1} & \meansd{104.8}{0.2} & \meansd{105.5}{3.8} & \meansd{55.0}{0.3} & \meansd{137.3}{1.0} & \meansd{97.5}{4.2} & \meansd{60.2}{0.1} & \meansd{128.3}{0.5} & \meansd{116.3}{1.1} & \meansd{70.2}{0.7} \\
DiLoCo-style & \meansd{43.2}{0.3} & \meansd{62.8}{2.5} & \meansd{25.6}{0.3} & \meansd{104.6}{0.2} & \meansd{130.7}{9.2} & \meansd{52.3}{0.3} & \meansd{137.6}{1.1} & \meansd{110.2}{3.2} & \meansd{58.4}{0.5} & \meansd{126.9}{0.5} & \meansd{149.9}{5.4} & \meansd{68.6}{0.8} \\
\bottomrule
\end{tabular}}
\end{table}

\providecommand{\meansd}[2]{#1\,{\scriptsize$\pm$\,#2}}
\providecommand{\bestmeansd}[2]{\textbf{#1}\,{\scriptsize$\pm$\,#2}}
\begin{table}[h]
\centering
\scriptsize
\setlength{\tabcolsep}{3pt}
\renewcommand{\arraystretch}{1.1}
\caption{\textbf{Effect of model depth.} Training, validation, and last-10-round validation perplexity of the nine federated learning algorithms for the depth-$4$ and depth-$16$ Mamba2 models, averaged over the $36$ configurations.}
\label{tab:app_fl_by_depth}
\begin{tabular}{lccc ccc}
\toprule
& \multicolumn{3}{c}{Depth $4$ ($14.7$M)} & \multicolumn{3}{c}{Depth $16$ ($20.3$M)} \\
\cmidrule(lr){2-4}\cmidrule(lr){5-7}
Algorithm & Train & Val & Last-10 & Train & Val & Last-10 \\
\midrule
FedAvg & \meansd{109.8}{0.5} & \meansd{88.8}{0.5} & \meansd{54.8}{0.3} & \meansd{102.3}{0.4} & \meansd{89.1}{2.6} & \meansd{46.4}{0.2} \\
FedProx & \bestmeansd{93.2}{0.4} & \bestmeansd{79.0}{1.3} & \bestmeansd{39.5}{0.4} & \bestmeansd{91.0}{0.3} & \bestmeansd{80.5}{1.2} & \bestmeansd{36.2}{0.1} \\
SCAFFOLD & \meansd{112.8}{0.4} & \meansd{88.0}{1.4} & \meansd{50.1}{0.0} & \meansd{107.0}{0.4} & \meansd{88.8}{2.0} & \meansd{44.5}{0.2} \\
FedAdam & \meansd{122.6}{0.3} & \meansd{235.1}{4.1} & \meansd{59.8}{0.3} & \meansd{116.2}{0.5} & \meansd{218.2}{7.4} & \meansd{50.5}{0.3} \\
FedAlign & \meansd{109.7}{0.5} & \meansd{89.1}{1.9} & \meansd{54.4}{0.2} & \meansd{102.4}{0.4} & \meansd{87.9}{2.2} & \meansd{46.3}{0.3} \\
FedDyn & \meansd{100.2}{0.2} & \meansd{398.3}{43.9} & \meansd{42.8}{0.3} & \meansd{100.8}{0.3} & \meansd{295.7}{15.5} & \meansd{41.0}{0.1} \\
FedSAM & \meansd{110.1}{0.6} & \meansd{89.8}{1.7} & \meansd{54.3}{0.1} & \meansd{102.7}{0.4} & \meansd{87.5}{1.1} & \meansd{46.2}{0.4} \\
FedExP & \meansd{107.5}{0.4} & \meansd{93.1}{2.5} & \meansd{57.6}{0.4} & \meansd{99.6}{0.6} & \meansd{96.9}{2.6} & \meansd{48.5}{0.3} \\
DiLoCo-style & \meansd{108.8}{0.6} & \meansd{112.2}{0.7} & \meansd{56.3}{0.1} & \meansd{97.3}{0.5} & \meansd{114.7}{6.8} & \meansd{46.2}{0.4} \\
\bottomrule
\end{tabular}
\end{table}

\textbf{Comments.} The results averages show a consistent overall ranking. FedProx achieves the lowest mean perplexity in $34$ of the $36$ metric columns. The two exceptions are last-10 validation perplexity at $\beta=1$, where FedDyn is marginally lower ($40.0$ versus $40.2$), and all-round validation perplexity at $E=64$, where SCAFFOLD is lower ($105.6$ versus $106.6$). On the last-10 metric, FedProx, FedDyn, and SCAFFOLD rank first, second, and third, respectively, at every local-step count, domain setting, and model depth in Tables~\ref{tab:app_fl_by_steps}-\ref{tab:app_fl_by_depth}. 

Heterogeneity produces the clearest changes in relative performance. FedProx reduces last-10 perplexity relative to FedAvg by approximately $19\%$, $20\%$, and $34\%$ at $\beta=0$, $0.7$, and $1$, respectively. SCAFFOLD moves from eighth place at $\beta=0$ ($46.9$ versus $45.4$ for FedAvg) to third at $\beta=1$, where it improves on FedAvg by $18\%$. FedDyn narrowly overtakes FedProx at $\beta=1$, whereas FedExP moves from near parity with FedAvg at $\beta\leq0.7$ to last place at $\beta=1$, with $13.5\%$ higher last-10 perplexity. Across the same increase in $\beta$, last-10 perplexity rises by approximately $32$-$34\%$ for FedAvg, FedAlign, FedSAM, and DiLoCo-style, by $52\%$ for FedExP, and by $16\%$ for FedAdam. The corresponding increases are only $9\%$ for FedProx and $6\%$ for SCAFFOLD. FedDyn instead improves by approximately $9\%$, although this comparison uses a separately selected $\alpha$ at each $\beta$.

Increasing the number of local steps strengthens the relative advantages of FedDyn and SCAFFOLD. Their reductions in last-10 perplexity relative to FedAvg grow from approximately $12\%$ and $4.5\%$ at $E=64$ to $22\%$ and $10\%$ at $E=256$, respectively. FedProx maintains a comparatively uniform reduction of $24$-$27\%$ across the three values of $E$. These trends are qualitatively consistent with the role of the $E(E-1)$ drift term in the bounds and the benefits of radius control or heterogeneity correction discussed in our analysis. Naturally, larger $E$ results in longer training, so all nine algorithms achieve approximately $35$-$45\%$ lower last-10 perplexity when $E$ increases from $64$ to $256$ at fixed $T=50$ communication rounds.

The remaining methods generally remain close to, or underperform, FedAvg on the last-10 metric. FedAlign and FedSAM stay within $2\%$ of FedAvg throughout the factor tables, while DiLoCo-style stays within $5\%$. FedAdam has approximately $9\%$ higher last-10 perplexity at every local-step count and model depth, with gaps of approximately $5$-$12\%$ across domain settings.

All-round validation perplexity provides an important contrast to these last-10 results. Despite its strong late-training performance, FedDyn has substantially higher all-round validation perplexity than FedAvg, as does FedAdam. Across the factor tables, their all-round averages are approximately $1.7$-$6.4$ times those of FedAvg, reaching $814.4$ for FedDyn and $355.2$ for FedAdam at $\beta=1$. The round-wise results place these averages in context: after round $1$, FedDyn and FedAdam have validation perplexities approximately $8$ and $9$ times that of FedAvg, respectively, using geometric means across configurations. FedDyn subsequently falls below FedAvg from round $4$ onward and finishes $18\%$ lower, whereas FedAdam remains above FedAvg throughout training and finishes $9\%$ higher.

Domain composition and model depth change the absolute perplexity levels without changing the ordering of the three leading methods. Setting 1 yields the lowest last-10 perplexity and Setting 4 the highest for every algorithm. For FedAvg, these values differ by approximately a factor of $2.7$. FedProx improves on FedAvg by approximately $24$-$26\%$ in each setting. Increasing depth from $4$ to $16$ reduces last-10 perplexity by approximately $15$-$18\%$ for FedAvg, FedAlign, FedSAM, FedExP, DiLoCo-style, and FedAdam, compared with $11\%$ for SCAFFOLD, $8\%$ for FedProx, and $4\%$ for FedDyn. Consequently, the relative improvements over FedAvg narrow from $28\%$ to $22\%$ for FedProx, from $22\%$ to $12\%$ for FedDyn, and from $9\%$ to $4\%$ for SCAFFOLD.

\subsubsection{Client-Count Ablation}\label{appendix:practical_client_count}

Table~\ref{tab:app_fl_clients} reports the client-count ablation described in Appendix~\ref{appendix:detailed_exp_setup}. We consider $K\in\{4,8,16,32\}$ client counts at fixed $\beta=0.7$, depth $4$, and $E=64$, averaged over the four domain settings, with $K/4$ clients sharing each primary domain and every client holding a $4/K$ share of the same domain pools.

\providecommand{\meansd}[2]{#1\,{\scriptsize$\pm$\,#2}}
\providecommand{\bestmeansd}[2]{\textbf{#1}\,{\scriptsize$\pm$\,#2}}
\begin{table}[h]
\centering
\scriptsize
\setlength{\tabcolsep}{3pt}
\renewcommand{\arraystretch}{1.1}
\caption{\textbf{Client-count ablation.} Training, validation, and last-10-round validation perplexity of the nine algorithms with $K\in\{4,8,16,32\}$ clients at $\beta=0.7$, depth $4$, $E=64$, averaged over the four domain settings.}
\label{tab:app_fl_clients}
\resizebox{\textwidth}{!}{%
\begin{tabular}{lccc ccc ccc ccc}
\toprule
& \multicolumn{3}{c}{$K=4$} & \multicolumn{3}{c}{$K=8$} & \multicolumn{3}{c}{$K=16$} & \multicolumn{3}{c}{$K=32$} \\
\cmidrule(lr){2-4}\cmidrule(lr){5-7}\cmidrule(lr){8-10}\cmidrule(lr){11-13}
Algorithm & Train & Val & Last-10 & Train & Val & Last-10 & Train & Val & Last-10 & Train & Val & Last-10 \\
\midrule
FedAvg & \meansd{144.4}{0.3} & \meansd{89.7}{0.9} & \meansd{64.2}{0.4} & \meansd{141.4}{0.4} & \meansd{87.3}{0.5} & \meansd{61.6}{0.2} & \meansd{136.9}{0.6} & \meansd{86.3}{0.7} & \meansd{60.9}{0.3} & \meansd{129.0}{0.3} & \meansd{86.2}{0.6} & \meansd{61.1}{0.2} \\
FedProx & \bestmeansd{131.6}{0.2} & \bestmeansd{82.4}{0.8} & \bestmeansd{49.0}{0.2} & \bestmeansd{129.1}{0.5} & \bestmeansd{80.6}{0.3} & \bestmeansd{47.3}{0.3} & \bestmeansd{125.2}{0.4} & \bestmeansd{79.9}{0.5} & \bestmeansd{46.8}{0.2} & \bestmeansd{118.5}{0.3} & \meansd{80.5}{0.4} & \bestmeansd{47.6}{0.1} \\
SCAFFOLD & \meansd{143.7}{0.2} & \meansd{87.6}{0.9} & \meansd{62.4}{0.4} & \meansd{139.5}{0.5} & \meansd{83.8}{0.5} & \meansd{58.7}{0.2} & \meansd{134.2}{0.7} & \meansd{81.2}{0.6} & \meansd{55.7}{0.4} & \meansd{126.8}{0.4} & \bestmeansd{79.7}{0.5} & \meansd{53.0}{0.2} \\
FedAdam & \meansd{159.9}{0.1} & \meansd{187.1}{2.0} & \meansd{69.5}{0.6} & \meansd{156.2}{0.6} & \meansd{183.0}{1.2} & \meansd{67.0}{0.4} & \meansd{151.3}{0.5} & \meansd{181.7}{2.0} & \meansd{66.1}{0.4} & \meansd{143.7}{0.2} & \meansd{182.8}{1.7} & \meansd{66.5}{0.5} \\
FedAlign & \meansd{144.4}{0.2} & \meansd{89.7}{0.9} & \meansd{64.2}{0.4} & \meansd{141.4}{0.4} & \meansd{87.3}{0.5} & \meansd{61.6}{0.2} & \meansd{136.9}{0.6} & \meansd{86.3}{0.7} & \meansd{60.9}{0.3} & \meansd{129.0}{0.3} & \meansd{86.2}{0.6} & \meansd{61.1}{0.2} \\
FedDyn & \meansd{164.5}{0.4} & \meansd{161.8}{15.9} & \meansd{59.4}{0.5} & \meansd{163.0}{0.7} & \meansd{156.9}{5.7} & \meansd{54.1}{0.4} & \meansd{162.2}{0.2} & \meansd{151.4}{10.3} & \meansd{51.6}{0.4} & \meansd{162.0}{0.2} & \meansd{151.9}{11.1} & \meansd{50.4}{0.3} \\
FedSAM & \meansd{145.1}{0.4} & \meansd{90.3}{0.8} & \meansd{64.3}{0.2} & \meansd{142.2}{0.5} & \meansd{88.1}{0.5} & \meansd{62.1}{0.2} & \meansd{137.5}{0.7} & \meansd{86.9}{0.7} & \meansd{61.2}{0.3} & \meansd{129.8}{0.4} & \meansd{86.8}{0.6} & \meansd{61.5}{0.2} \\
FedExP & \meansd{141.7}{0.1} & \meansd{89.9}{1.0} & \meansd{63.9}{0.7} & \meansd{137.1}{0.5} & \meansd{85.9}{0.4} & \meansd{59.5}{0.5} & \meansd{131.6}{0.8} & \meansd{84.0}{0.8} & \meansd{57.7}{0.4} & \meansd{123.6}{0.5} & \meansd{83.2}{0.5} & \meansd{57.3}{0.4} \\
DiLoCo-style & \meansd{138.7}{0.2} & \meansd{94.6}{1.4} & \meansd{65.8}{0.1} & \meansd{134.0}{0.8} & \meansd{90.2}{0.8} & \meansd{61.0}{0.3} & \meansd{128.3}{0.3} & \meansd{87.4}{0.9} & \meansd{58.0}{0.6} & \meansd{120.8}{0.6} & \meansd{86.5}{0.5} & \meansd{56.5}{0.1} \\
\bottomrule
\end{tabular}}
\end{table}

\textbf{Comments.}  With per-client work held fixed, FedAvg's last-10 validation perplexity decreases by approximately $4\%$ from $K=4$ to $K=8$ ($64.2$ to $61.6$), with little further change for the larger client counts ($60.9$ and $61.1$). The training perplexity continues to decrease, from $144$ at $K=4$ to $129$ at $K=32$, potentially reflecting more frequent reuse of smaller local datasets. The three leading methods in the main grid -FedProx, FedDyn, and SCAFFOLD - remain the strongest on last-10 validation perplexity at every $K$. FedAlign and FedSAM remain within $1\%$ of FedAvg, whereas FedAdam has $8$-$9\%$ higher perplexity.

FedProx maintains a reduction of $22$-$24\%$ compared to FedAvg across the evaluated values of $K$. In contrast, the reductions achieved by SCAFFOLD and FedDyn increase from $3\%$ to $13\%$ and from $7\%$ to $18\%$, respectively, between $K=4$ and $K=32$. DiLoCo-style moves from $3\%$ higher perplexity than FedAvg to $8\%$ lower, while FedExP moves from approximate parity to a $6\%$ reduction. As such, although the leading methods remain unchanged, their relative gains and the ordering of the remaining methods vary with client count.

One possible explanation for these method-dependent trends is that having multiple clients per primary domain at larger $K$ produces a more stable aggregate update than the one-client-per-domain partition at $K=4$. However, this interpretation is not directly tested by the ablation. Increasing $K$ also changes the total client work, local dataset sizes, and data partitioning. These results therefore describe performance under the evaluated client-count protocol at a single $(\beta,\text{depth},E)$ setting, rather than isolating the effect of client count or establishing a particular algorithmic mechanism.

\subsubsection{Model Backbone Ablation}\label{appendix:practical_backbones}

Table~\ref{tab:app_fl_backbones} reports the different model backbones ablation described in Appendix~\ref{appendix:detailed_exp_setup}: the depth-$4$ model with some or all Mamba2 mixers replaced by attention layers (Table~\ref{tab:backbones}), trained on Setting 1 at $\beta=0.7$, $E=64$, and $K=4$.

\providecommand{\meansd}[2]{#1\,{\scriptsize$\pm$\,#2}}
\providecommand{\bestmeansd}[2]{\textbf{#1}\,{\scriptsize$\pm$\,#2}}
\begin{table}[h]
\centering
\scriptsize
\setlength{\tabcolsep}{3pt}
\renewcommand{\arraystretch}{1.1}
\caption{\textbf{Backbone ablation.} Validation and last-10-round validation perplexity of the nine algorithms on the six depth-$4$ backbones of Table~\ref{tab:backbones} (Setting 1, $\beta=0.7$, $E=64$, $K=4$, the second header row gives the fraction of Mamba2 layers).}
\label{tab:app_fl_backbones}
\resizebox{\textwidth}{!}{%
\begin{tabular}{lcc cc cc cc cc cc}
\toprule
& \multicolumn{2}{c}{\texttt{mamba2}} & \multicolumn{2}{c}{\texttt{hybrid75}} & \multicolumn{2}{c}{\texttt{hybrid50}} & \multicolumn{2}{c}{\texttt{hybrid50alt}} & \multicolumn{2}{c}{\texttt{hybrid25}} & \multicolumn{2}{c}{\texttt{attn0}} \\
& \multicolumn{2}{c}{SSM 1.00} & \multicolumn{2}{c}{SSM 0.75} & \multicolumn{2}{c}{SSM 0.50} & \multicolumn{2}{c}{SSM 0.50} & \multicolumn{2}{c}{SSM 0.25} & \multicolumn{2}{c}{SSM 0.00} \\
\cmidrule(lr){2-3}\cmidrule(lr){4-5}\cmidrule(lr){6-7}\cmidrule(lr){8-9}\cmidrule(lr){10-11}\cmidrule(lr){12-13}
Algorithm & Val & Last-10 & Val & Last-10 & Val & Last-10 & Val & Last-10 & Val & Last-10 & Val & Last-10 \\
\midrule
FedAvg & \meansd{43.4}{0.4} & \meansd{30.7}{0.3} & \meansd{41.5}{0.8} & \meansd{26.5}{0.5} & \meansd{43.3}{0.6} & \meansd{29.3}{0.4} & \meansd{52.6}{1.2} & \meansd{31.3}{0.2} & \meansd{54.4}{1.2} & \meansd{32.7}{0.7} & \meansd{69.6}{1.1} & \meansd{42.3}{1.1} \\
FedProx & \bestmeansd{41.9}{0.4} & \bestmeansd{25.4}{0.2} & \meansd{39.8}{0.4} & \bestmeansd{21.0}{0.2} & \meansd{41.4}{0.3} & \bestmeansd{23.6}{0.2} & \meansd{50.4}{0.7} & \bestmeansd{24.7}{0.4} & \bestmeansd{52.2}{1.3} & \bestmeansd{26.8}{0.9} & \bestmeansd{67.7}{1.0} & \bestmeansd{34.8}{0.9} \\
SCAFFOLD & \meansd{42.2}{0.4} & \meansd{29.4}{0.3} & \bestmeansd{39.2}{1.0} & \meansd{22.5}{0.5} & \bestmeansd{41.3}{0.7} & \meansd{25.4}{0.8} & \bestmeansd{50.3}{1.2} & \meansd{27.1}{0.3} & \meansd{52.5}{1.2} & \meansd{29.6}{1.3} & \meansd{68.3}{1.0} & \meansd{41.8}{0.9} \\
FedAdam & \meansd{116.0}{1.1} & \meansd{33.2}{0.2} & \meansd{116.5}{3.4} & \meansd{30.7}{0.5} & \meansd{120.4}{2.8} & \meansd{32.5}{0.5} & \meansd{141.7}{3.9} & \meansd{38.8}{0.2} & \meansd{144.4}{2.3} & \meansd{40.3}{0.6} & \meansd{180.8}{4.6} & \meansd{59.4}{1.1} \\
FedAlign & \meansd{43.4}{0.4} & \meansd{30.7}{0.3} & \meansd{41.7}{1.0} & \meansd{26.7}{0.6} & \meansd{43.3}{0.6} & \meansd{29.3}{0.4} & \meansd{52.5}{1.2} & \meansd{31.2}{0.1} & \meansd{54.3}{1.1} & \meansd{32.7}{0.7} & \meansd{69.6}{1.2} & \meansd{42.0}{0.9} \\
FedDyn & \meansd{76.2}{6.4} & \meansd{29.6}{0.5} & \meansd{68.4}{2.1} & \meansd{28.2}{0.4} & \meansd{69.5}{5.5} & \meansd{27.5}{1.0} & \meansd{114.0}{18.4} & \meansd{34.8}{1.0} & \meansd{101.8}{16.0} & \meansd{35.5}{0.6} & \meansd{132.3}{22.7} & \meansd{41.6}{1.4} \\
FedSAM & \meansd{43.8}{0.4} & \meansd{30.8}{0.3} & \meansd{41.8}{0.8} & \meansd{26.7}{0.6} & \meansd{43.6}{0.7} & \meansd{29.6}{0.5} & \meansd{53.7}{0.9} & \meansd{32.1}{0.1} & \meansd{55.2}{1.4} & \meansd{33.4}{1.2} & \meansd{70.5}{0.9} & \meansd{42.7}{0.4} \\
FedExP & \meansd{43.6}{0.2} & \meansd{30.5}{0.0} & \meansd{41.0}{0.7} & \meansd{25.7}{0.4} & \meansd{43.1}{0.5} & \meansd{28.3}{0.3} & \meansd{51.8}{1.2} & \meansd{30.0}{0.2} & \meansd{53.5}{1.1} & \meansd{30.9}{0.7} & \meansd{68.9}{0.9} & \meansd{40.3}{1.3} \\
DiLoCo-style & \meansd{44.8}{0.2} & \meansd{31.1}{0.4} & \meansd{42.0}{0.9} & \meansd{26.3}{0.5} & \meansd{43.7}{0.4} & \meansd{29.0}{0.4} & \meansd{56.6}{2.6} & \meansd{33.5}{0.9} & \meansd{57.9}{0.9} & \meansd{35.2}{0.5} & \meansd{78.6}{2.5} & \meansd{48.1}{1.2} \\
\bottomrule
\end{tabular}}
\end{table}

\textbf{Comments.} Under FedAvg the backbones order as \texttt{hybrid75} ($26.5$) $<$ \texttt{hybrid50} ($29.3$) $<$ \texttt{mamba2} ($30.7$) $<$ \texttt{hybrid50alt} ($31.3$) $<$ \texttt{hybrid25} ($32.7$) $<$ \texttt{attn0} ($42.3$) on the last-10 validation perplexity. Notably, one attention layer in the middle of the stack improves on the pure Mamba2 model by $14\%$. Attention layers placed first (\texttt{hybrid50alt}, \texttt{hybrid50}) or a mostly-attention stack lose that gain, and the attention-only control is $38\%$ worse than pure Mamba2 at this model size and sequence length. The ranking of the algorithms transfers from the pure-SSM backbone to every hybrid, with two exceptions. FedProx is the best method on all six backbones ($-17$ to $-21\%$ relative to FedAvg), and FedAlign and FedSAM stay within $3\%$ of FedAvg everywhere. SCAFFOLD, in contrast, gains $13$--$15\%$ on the hybrids with two or three Mamba2 layers and $10\%$ on \texttt{hybrid25}, but only $4\%$ on \texttt{mamba2} and $1\%$ on \texttt{attn0}, and FedDyn at its fixed $\alpha$ is unstable across architectures ($-2$ to $-6\%$ on \texttt{mamba2}, \texttt{hybrid50}, and \texttt{attn0}, $+6$ to $+11\%$ on the other three hybrids). The server-step methods are the second exception: their penalty grows as Mamba2 layers are replaced, from $+8\%$ (FedAdam) and $+1\%$ (DiLoCo-style) on \texttt{mamba2} to $+40\%$ and $+14\%$ on \texttt{attn0}, whereas FedExP, neutral on \texttt{mamba2}, is $3$-$6\%$ better than FedAvg on every backbone that contains attention.

\subsection{Model Merging}\label{sec:appendix_mamba2_merging}

This subsection reports the one-shot merging experiments referenced in Appendix~\ref{appendix:model_mergin_discussion}. Following the framework of Appendix~\ref{appendix:model_merging_framework}, the pretrained Mamba2-130M, 370M, and 780M checkpoints (Table~\ref{tab:mamba2_merge_checkpoints}) are fine-tuned separately on AG News, DBpedia, TREC, and IMDB (Table~\ref{tab:causal_lm_tasks}) with the recipe of Table~\ref{tab:finetune_recipe}, and the four specialists $\theta_m=\theta^0+\mathbf T_m$ saved after $1$, $2$, and $4$ epochs are merged by weight averaging ($p_m=1/4$), Task Arithmetic ($\theta^0+\lambda\sum_m\mathbf T_m$), and TIES-Merging (trim each task vector to the fraction $d$ of its largest-magnitude entries, elect signs, average the agreeing entries, and scale the result by $\lambda$). Every merged model is evaluated on all four tasks by causal-LM label scoring, and we report the accuracy averaged over the four tasks (mean $\pm$ standard deviation over three fine-tuning seeds), as well as the accuracies of the pretrained model and the specialists evaluated on their own and on the other tasks. Task Arithmetic and TIES-Merging are shown at the best of the evaluated $\lambda$ and $(\lambda,d)$ for each checkpoint and epoch. Tables~\ref{tab:app_merge_ta} and~\ref{tab:app_merge_ties} give the full sweeps, Table~\ref{tab:app_merge_tasks} the per-task accuracies after four epochs, and Table~\ref{tab:app_merge_prox} a variant of the 130M experiment in which the specialists are fine-tuned with the FedProx proximal term. 

\providecommand{\meansd}[2]{#1\,{\scriptsize$\pm$\,#2}}
\providecommand{\bestmeansd}[2]{\textbf{#1}\,{\scriptsize$\pm$\,#2}}
\begin{table}[h]
\centering
\scriptsize
\setlength{\tabcolsep}{3.5pt}
\renewcommand{\arraystretch}{1.1}
\caption{\textbf{One-shot merging of Mamba2 task specialists: task-averaged accuracy (\%).} For each public checkpoint, the accuracy averaged over AG News, DBpedia, TREC, and IMDB of the pretrained model, of the four specialists on their own task and on the other three tasks, and of the merged model produced by weight averaging, Task Arithmetic, and TIES-Merging from specialists fine-tuned for $1$, $2$, or $4$ epochs.}
\label{tab:app_merge_avg}
\begin{tabular}{llcccl}
\toprule
Checkpoint & Model & $1$ epoch & $2$ epochs & $4$ epochs & Selected $\lambda$ / $(\lambda,d)$ \\
\midrule
\multirow{6}{*}{Mamba2-130M} & Zero-shot (pretrained) & \multicolumn{3}{c}{\meansd{25.7}{0.0}} & -- \\
 & Specialists (own task) & \meansd{81.8}{1.8} & \meansd{88.4}{0.5} & \meansd{89.2}{0.2} & -- \\
 & Specialists (other tasks) & \meansd{23.0}{0.2} & \meansd{23.6}{0.1} & \meansd{23.4}{0.3} & -- \\
 & Weight averaging & \meansd{34.4}{2.1} & \meansd{36.3}{1.7} & \meansd{37.4}{0.6} & -- \\
 & Task Arithmetic & \bestmeansd{53.5}{4.6} & \bestmeansd{61.9}{5.1} & \bestmeansd{61.3}{3.8} & $1$ / $1$ / $0.7$ \\
 & TIES-Merging & \meansd{47.0}{2.0} & \meansd{55.0}{3.9} & \meansd{56.6}{3.6} & $(1.3,0.1)$ / $(1.3,0.1)$ / $(1.3,0.1)$ \\
\midrule
\multirow{6}{*}{Mamba2-370M} & Zero-shot (pretrained) & \multicolumn{3}{c}{\meansd{24.2}{0.1}} & -- \\
 & Specialists (own task) & \meansd{86.6}{2.3} & \meansd{90.7}{0.1} & \meansd{90.8}{0.1} & -- \\
 & Specialists (other tasks) & \meansd{24.8}{0.5} & \meansd{24.8}{0.3} & \meansd{25.4}{0.1} & -- \\
 & Weight averaging & \meansd{45.6}{1.0} & \meansd{49.6}{0.5} & \meansd{50.4}{0.4} & -- \\
 & Task Arithmetic & \meansd{61.1}{2.5} & \bestmeansd{71.0}{1.7} & \bestmeansd{71.8}{1.2} & $0.7$ / $0.7$ / $0.7$ \\
 & TIES-Merging & \bestmeansd{62.6}{1.6} & \meansd{69.8}{2.0} & \meansd{70.9}{1.8} & $(1.3,0.5)$ / $(1.3,0.1)$ / $(1.3,0.1)$ \\
\midrule
\multirow{6}{*}{Mamba2-780M} & Zero-shot (pretrained) & \multicolumn{3}{c}{\meansd{24.7}{0.0}} & -- \\
 & Specialists (own task) & \meansd{82.4}{3.3} & \meansd{89.4}{0.5} & \meansd{90.8}{0.3} & -- \\
 & Specialists (other tasks) & \meansd{24.5}{1.0} & \meansd{24.6}{0.1} & \meansd{24.9}{0.2} & -- \\
 & Weight averaging & \meansd{39.4}{2.8} & \meansd{45.4}{1.4} & \meansd{51.7}{1.5} & -- \\
 & Task Arithmetic & \meansd{59.9}{2.8} & \meansd{61.9}{5.6} & \meansd{71.4}{2.5} & $0.7$ / $0.7$ / $0.7$ \\
 & TIES-Merging & \bestmeansd{60.3}{3.0} & \bestmeansd{64.5}{7.6} & \bestmeansd{72.1}{3.5} & $(1.3,0.1)$ / $(1.3,0.1)$ / $(1.3,0.1)$ \\
\bottomrule
\end{tabular}
\end{table}

\providecommand{\meansd}[2]{#1\,{\scriptsize$\pm$\,#2}}
\providecommand{\bestmeansd}[2]{\textbf{#1}\,{\scriptsize$\pm$\,#2}}
\begin{table}[h]
\centering
\scriptsize
\setlength{\tabcolsep}{3.5pt}
\renewcommand{\arraystretch}{1.1}
\caption{\textbf{Per-task accuracy (\%) of the merged models after 4 specialist epochs.} Columns are the evaluation tasks for each checkpoint, rows are the pretrained model, the specialist fine-tuned on that task, and the three merges.}
\label{tab:app_merge_tasks}
\resizebox{\textwidth}{!}{%
\begin{tabular}{lcccc cccc cccc}
\toprule
& \multicolumn{4}{c}{Mamba2-130M} & \multicolumn{4}{c}{Mamba2-370M} & \multicolumn{4}{c}{Mamba2-780M} \\
\cmidrule(lr){2-5}\cmidrule(lr){6-9}\cmidrule(lr){10-13}
Model & AG News & DBpedia & TREC & IMDB & AG News & DBpedia & TREC & IMDB & AG News & DBpedia & TREC & IMDB \\
\midrule
Zero-shot (pretrained) & \meansd{25.0}{0.0} & \meansd{9.9}{0.0} & \meansd{16.8}{0.0} & \meansd{51.1}{0.2} & \meansd{24.3}{0.0} & \meansd{8.1}{0.0} & \meansd{13.4}{0.2} & \meansd{51.1}{0.1} & \meansd{25.3}{0.0} & \meansd{8.7}{0.0} & \meansd{15.1}{0.1} & \meansd{49.9}{0.1} \\
Specialists (own task) & \meansd{89.5}{0.3} & \meansd{96.4}{0.2} & \meansd{86.7}{0.6} & \meansd{83.9}{0.4} & \meansd{90.8}{0.2} & \meansd{97.6}{0.2} & \meansd{88.7}{0.5} & \meansd{86.0}{0.1} & \meansd{90.8}{0.1} & \meansd{97.9}{0.2} & \meansd{88.4}{1.0} & \meansd{86.1}{0.5} \\
Weight averaging & \meansd{37.9}{2.6} & \meansd{21.6}{3.9} & \meansd{34.9}{5.3} & \meansd{55.3}{0.7} & \meansd{42.4}{2.3} & \meansd{36.8}{0.9} & \meansd{63.5}{4.4} & \meansd{59.1}{5.4} & \meansd{44.4}{0.6} & \meansd{41.1}{3.3} & \meansd{54.5}{1.6} & \meansd{66.8}{3.2} \\
Task Arithmetic & \meansd{64.7}{8.5} & \bestmeansd{53.3}{13.8} & \bestmeansd{61.7}{7.3} & \bestmeansd{65.5}{2.6} & \meansd{75.3}{7.0} & \bestmeansd{74.4}{2.8} & \bestmeansd{65.1}{2.8} & \bestmeansd{72.5}{2.2} & \meansd{80.6}{2.3} & \meansd{65.9}{7.9} & \bestmeansd{57.9}{7.7} & \meansd{81.1}{1.7} \\
TIES-Merging & \bestmeansd{72.2}{8.4} & \meansd{51.7}{17.9} & \meansd{46.2}{13.1} & \meansd{56.2}{0.1} & \bestmeansd{77.3}{6.8} & \meansd{73.4}{3.6} & \meansd{62.9}{3.0} & \meansd{69.9}{0.8} & \bestmeansd{83.4}{1.4} & \bestmeansd{69.5}{8.2} & \meansd{52.4}{11.6} & \bestmeansd{83.1}{1.1} \\
\bottomrule
\end{tabular}}
\end{table}

\providecommand{\meansd}[2]{#1\,{\scriptsize$\pm$\,#2}}
\providecommand{\bestmeansd}[2]{\textbf{#1}\,{\scriptsize$\pm$\,#2}}
\begin{table}[h]
\centering
\scriptsize
\setlength{\tabcolsep}{3.5pt}
\renewcommand{\arraystretch}{1.1}
\caption{\textbf{Task Arithmetic scale sweep: task-averaged accuracy (\%).} $\theta_{\mathrm{TA}}=\theta^0+\lambda\sum_m\mathbf T_m$ over the four specialists, so $\lambda=1/M=0.25$ would be weight averaging and $\lambda=1$ adds every task vector at full length. The 780M sweep evaluated only $\lambda\in\{0.7,1.0\}$.}
\label{tab:app_merge_ta}
\begin{tabular}{llcccccc}
\toprule
Checkpoint & Epochs & $\lambda=0.2$ & $\lambda=0.3$ & $\lambda=0.5$ & $\lambda=0.7$ & $\lambda=1$ & $\lambda=1.3$ \\
\midrule
\multirow{3}{*}{Mamba2-130M} & $1$ & \meansd{31.2}{1.2} & \meansd{38.1}{3.2} & \meansd{48.3}{3.3} & \meansd{52.4}{4.2} & \bestmeansd{53.5}{4.6} & \meansd{51.7}{3.8} \\
 & $2$ & \meansd{32.6}{1.1} & \meansd{41.3}{1.9} & \meansd{55.5}{2.0} & \meansd{61.9}{4.1} & \bestmeansd{61.9}{5.1} & \meansd{55.8}{5.4} \\
 & $4$ & \meansd{32.8}{0.8} & \meansd{42.7}{0.8} & \meansd{55.8}{2.1} & \bestmeansd{61.3}{3.8} & \meansd{57.2}{6.2} & \meansd{49.3}{7.0} \\
\midrule
\multirow{3}{*}{Mamba2-370M} & $1$ & \meansd{40.2}{1.9} & \meansd{49.5}{0.6} & \meansd{59.4}{2.9} & \bestmeansd{61.1}{2.5} & \meansd{58.6}{3.4} & \meansd{54.1}{5.6} \\
 & $2$ & \meansd{43.5}{1.5} & \meansd{54.7}{0.4} & \meansd{67.6}{1.0} & \bestmeansd{71.0}{1.7} & \meansd{68.2}{2.2} & \meansd{59.6}{2.7} \\
 & $4$ & \meansd{44.5}{1.5} & \meansd{55.4}{0.5} & \meansd{68.6}{0.5} & \bestmeansd{71.8}{1.2} & \meansd{66.3}{2.5} & \meansd{54.0}{3.3} \\
\midrule
\multirow{3}{*}{Mamba2-780M} & $1$ & -- & -- & -- & \bestmeansd{59.9}{2.8} & \meansd{58.1}{3.9} & -- \\
 & $2$ & -- & -- & -- & \bestmeansd{61.9}{5.6} & \meansd{57.9}{5.2} & -- \\
 & $4$ & -- & -- & -- & \bestmeansd{71.4}{2.5} & \meansd{69.5}{4.7} & -- \\
\bottomrule
\end{tabular}
\end{table}

\providecommand{\meansd}[2]{#1\,{\scriptsize$\pm$\,#2}}
\providecommand{\bestmeansd}[2]{\textbf{#1}\,{\scriptsize$\pm$\,#2}}
\begin{table}[h]
\centering
\scriptsize
\setlength{\tabcolsep}{3.5pt}
\renewcommand{\arraystretch}{1.1}
\caption{\textbf{TIES-Merging sweep after 4 specialist epochs: task-averaged accuracy (\%).} Rows are the output scale $\lambda$ applied to the task vector, columns the fraction $d$ of largest-magnitude entries kept per task vector. The 780M sweep evaluated only $\lambda=1.3$ with $d\in\{0.1,0.5\}$ in addition to the default $(\lambda,d)=(1.0,0.2)$.}
\label{tab:app_merge_ties}
\begin{tabular}{llcccc}
\toprule
Checkpoint & $\lambda$ & $d=0.1$ & $d=0.2$ & $d=0.3$ & $d=0.5$ \\
\midrule
\multirow{4}{*}{Mamba2-130M} & $0.5$ & \meansd{34.7}{1.5} & \meansd{33.7}{0.9} & \meansd{32.6}{0.8} & \meansd{32.1}{0.8} \\
 & $0.8$ & \meansd{46.1}{1.3} & \meansd{44.0}{1.1} & \meansd{42.2}{1.1} & \meansd{41.2}{1.4} \\
 & $1$ & \meansd{51.7}{1.6} & \meansd{49.6}{1.7} & \meansd{47.9}{1.9} & \meansd{46.9}{1.3} \\
 & $1.3$ & \bestmeansd{56.6}{3.6} & \meansd{54.7}{2.6} & \meansd{53.7}{2.3} & \meansd{52.9}{1.8} \\
\midrule
\multirow{4}{*}{Mamba2-370M} & $0.5$ & \meansd{45.4}{0.6} & \meansd{46.4}{0.3} & \meansd{45.0}{1.5} & \meansd{42.0}{1.3} \\
 & $0.8$ & \meansd{63.0}{1.7} & \meansd{62.6}{0.8} & \meansd{60.9}{1.1} & \meansd{57.5}{1.3} \\
 & $1$ & \meansd{68.0}{1.7} & \meansd{67.8}{1.2} & \meansd{67.1}{1.4} & \meansd{64.7}{1.8} \\
 & $1.3$ & \bestmeansd{70.9}{1.8} & \meansd{70.4}{1.7} & \meansd{70.6}{1.5} & \meansd{70.2}{1.6} \\
\midrule
\multirow{2}{*}{Mamba2-780M} & $1$ & -- & \meansd{67.8}{4.2} & -- & -- \\
 & $1.3$ & \bestmeansd{72.1}{3.5} & -- & -- & \meansd{67.2}{3.3} \\
\bottomrule
\end{tabular}
\end{table}

\providecommand{\meansd}[2]{#1\,{\scriptsize$\pm$\,#2}}
\providecommand{\bestmeansd}[2]{\textbf{#1}\,{\scriptsize$\pm$\,#2}}
\begin{table}[h]
\centering
\scriptsize
\setlength{\tabcolsep}{3.5pt}
\renewcommand{\arraystretch}{1.1}
\caption{\textbf{Weight averaging of proximally fine-tuned specialists (Mamba2-130M): task-averaged accuracy (\%).} Each specialist is fine-tuned with the FedProx local objective $\mathcal L_m(\theta)+\frac{\mu}{2}\|\theta-\theta^0\|_F^2$ for $1$, $2$, or $4$ epochs and the four specialists are merged by weight averaging}
\label{tab:app_merge_prox}
\begin{tabular}{l ccc ccc}
\toprule
& \multicolumn{3}{c}{Specialists (own task)} & \multicolumn{3}{c}{Weight averaging} \\
\cmidrule(lr){2-4}\cmidrule(lr){5-7}
$\mu$ & $1$ epoch & $2$ epochs & $4$ epochs & $1$ epoch & $2$ epochs & $4$ epochs \\
\midrule
$0$ (reference) & \meansd{81.8}{1.8} & \bestmeansd{88.4}{0.5} & \bestmeansd{89.2}{0.2} & \bestmeansd{34.4}{2.1} & \bestmeansd{36.3}{1.7} & \bestmeansd{37.4}{0.6} \\
$10^{-5}$ & \bestmeansd{82.6}{3.0} & \meansd{88.1}{0.9} & \meansd{88.9}{0.6} & \meansd{33.3}{0.9} & \meansd{35.7}{0.6} & \meansd{37.3}{0.6} \\
$10^{-4}$ & \meansd{82.5}{3.5} & \meansd{87.6}{0.3} & \meansd{88.1}{0.3} & \meansd{32.5}{2.0} & \meansd{33.1}{1.1} & \meansd{34.0}{0.8} \\
$10^{-3}$ & \meansd{80.6}{2.6} & \meansd{85.3}{0.6} & \meansd{86.1}{0.1} & \meansd{30.4}{1.1} & \meansd{30.3}{0.5} & \meansd{30.6}{0.5} \\
$10^{-2}$ & \meansd{75.3}{3.1} & \meansd{79.7}{0.9} & \meansd{81.9}{0.4} & \meansd{27.1}{0.7} & \meansd{27.6}{0.1} & \meansd{28.0}{0.1} \\
$10^{-1}$ & \meansd{65.0}{3.3} & \meansd{72.6}{0.9} & \meansd{74.3}{0.5} & \meansd{26.3}{0.3} & \meansd{26.2}{0.2} & \meansd{26.2}{0.3} \\
\bottomrule
\end{tabular}
\end{table}

\textbf{Comments.} After four epochs, specialists average $89$-$91\%$ accuracy on their own tasks but show little transfer to the other tasks. Task Arithmetic and TIES-Merging outperform weight averaging, with the best evaluated merges reaching $61.3$, $71.8$, and $72.1\%$ for the 130M, 370M, and 780M checkpoints, respectively, compared with $37.4$, $50.4$, and $51.7\%$ for averaging (Table~\ref{tab:app_merge_avg}). Larger checkpoints narrow the gap to specialists, although substantial deficits remain, especially on DBpedia and TREC (Table~\ref{tab:app_merge_tasks}). The sweeps favor Task Arithmetic scales of $\lambda=0.7$-$1$, rather than the averaging scale $\lambda=0.25$, and sparse, rescaled TIES updates, with $(\lambda,d)=(1.3,0.1)$ performing best among the four-epoch settings evaluated (Tables~\ref{tab:app_merge_ta} and~\ref{tab:app_merge_ties}). Additionally, fine-tuning with the proximal penalty $\frac{\mu}{2}\|\theta-\theta^0\|_F^2$ reduces four-epoch averaging accuracy from $37.4\%$ at $\mu=0$ to $26.2\%$ at $\mu=10^{-1}$, near the pretrained baseline (Table~\ref{tab:app_merge_prox}). 

\textbf{Relation to the analysis.} These results are consistent with the trade-off in the local loss control~\eqref{eq:ssm_merging_local_loss_control} between the linear alignment term $\langle\nabla f(\theta^0),\theta_{\mathrm{merge}}-\theta^0\rangle_F$ and the quadratic smoothness remainder. Weight averaging preserves neighborhood containment but rescales each task vector by $1/M$. For the 780M model parameters, $\|\sum_m\mathbf T_m\|_F^2/\sum_m\|\mathbf T_m\|_F^2=1.04$-$1.07$, indicating little net alignment among the task vectors. Together with the scale sweeps, this suggests that averaging suppresses useful task-specific adaptation: larger displacements can improve accuracy despite weaker containment guarantees, illustrating why containment alone does not ensure predictive performance (Appendix~\ref{appendix_model_merging_motivation_subseq}). The proximal results suggest a similar trade-off. Limiting specialist drift can also suppress useful adaptation, and, unlike iterative federated learning, one-shot merging has no subsequent rounds to recover it. Finally, the uniform smoothness bound does not explain the larger accuracy gaps on DBpedia and TREC, motivating further investigation of the sensitivity-aware per-block scales discussed in Appendix~\ref{appendix:ssm_merging_methods}.